\documentclass[11pt]{article}

\usepackage{fragkia}

\usepackage{multirow,makecell,colortbl,caption,float}

\makeatletter
\AtBeginEnvironment{algorithm}{\setcounter{AlgoLine}{0}}
\makeatother
\usepackage{subcaption}
\usepackage{booktabs}

\usepackage{wrapfig}
\usepackage[normalem]{ulem}
\usepackage{fancyvrb}
\usepackage{fvextra}
\usepackage{listings}
\usepackage{authblk}
\usepackage{tikz}

\usepackage{amsthm}
\usepackage[nameinlink]{cleveref}
\Crefname{prop}{proposition}{propositions}
\Crefname{remark}{remark}{remarks}
\AtBeginDocument{
    \hypersetup{hypertexnames=false}
}
\hypersetup{colorlinks, citecolor=magenta, linkcolor=magenta}

\newcommand{\Ldim}{\mathrm{Ldim}}
\newcommand{\SCLdim}{\mathrm{SC\text{-}Ldim}}
\newcommand{\WSCLdim}{\mathrm{WSC\text{-}Ldim}}
\newcommand{\spn}{\mathrm{span}}
\newcommand{\Hd}{H_{\mathrm{sub}}}
\newcommand{\gc}{\gamma_c}
\newcommand{\gs}{\gamma_s}
\DeclareMathOperator{\dm}{dim}

\newcommand{\yes}{\textsc{yes}}
\newcommand{\no}{\textsc{no}}

\newcommand{\cO}{\mathcal{O}}

\definecolor{revisiongreen}{rgb}{0.12,0.42,0.22}
\definecolor{zrblue}{rgb}{0.03,0.25,0.50}
\definecolor{zrnote}{rgb}{0.55,0.08,0.35}

\title{Verify to Amplify: Improving Reasoning via Learned Chain-of-Thought Verification}

\date{}

\author{Maria-Florina Balcan$^1$ \\ ninamf@cs.cmu.edu \hfill Avrim Blum$^2$ \\ avrim@ttic.edu\hfill  Zhiyuan Li$^2$\\ zhiyuanli@ttic.edu \hfill  Dravyansh Sharma$^{2,3}$\\dravy@ttic.edu}

\author{\bf Maria-Florina Balcan$^1$ \hfill Avrim Blum$^2$ \hfill Kiriaki Fragkia$^1$ \hfill  Zhiyuan Li$^2$ \hfill  Dravyansh Sharma$^{2,3}$}

\date{%
    $^1$Carnegie Mellon University\\%
    $^2$Toyota Technological Institute at Chicago\\%
    $^3$Northwestern University\\%
    \small \{ninamf, kiriakif\}@cs.cmu.edu, \{avrim, zhiyuanli, dravy\}@ttic.edu
}

\begin{document}

\maketitle

\pagenumbering{roman}

\begin{abstract}
Large Language Models (LLMs) using chain-of-thought reasoning have demonstrated great potential for solving complex reasoning and planning tasks. Despite these advances, LLM-generated outputs remain susceptible to errors, making verification an important component of reliable reasoning systems. Learned verifiers can help increase trust, enforce safety constraints, and ensure alignment with personal preferences, while also providing feedback that can be used to improve the generation process itself. This raises a central challenge: when learned verifiers are used to guide generation, the resulting feedback loop between generator and verifier may induce substantial distribution shift. This issue is particularly salient for process reward models (PRMs), a prominent class of learned verifiers that score or classify individual steps in a chain-of-thought reasoning trace.\looseness-1

Motivated by this challenge, we propose a new online learning framework for learning chain-of-thought verifiers that, given a problem statement and a reasoning trace, check the correctness of each reasoning step given the preceding steps. Highlighting the asymmetric role of soundness errors (accepting an incorrect reasoning step) and completeness errors (flagging a correct step as wrong), we introduce novel notions of dimension that characterize the optimal tradeoff between them. We then show how our learned verifiers can be used to boost the accuracy of a weak generator, or even a weak collection of generators. Under the mild assumption that one of the generators can produce a correct next step with a small success probability, we show how to learn a strong generator with small error and abstention rates. \looseness-1

Our results also allow learning from offline data in settings where queries to an expert verifier can be simulated from a small set of correct reasoning traces. However, we establish a separation between our approach and learning from offline expert demonstrations. In particular, we show that learning from offline demonstrations cannot in general achieve the soundness-completeness guarantees produced by our interactive learning approach.
\end{abstract}
\clearpage

\tableofcontents

\clearpage

\pagenumbering{arabic}

\section{Introduction}
 As large language models (LLMs) become increasingly capable of producing long and sophisticated reasoning
traces using chain-of-thought (CoT), a central bottleneck shifts from generating candidate solutions to reliably verifying them.
As formal verification methods do not directly extend to the diverse areas in which LLMs are deployed,
\emph{learned verifiers}---models that are used to check that LLM-generated outputs satisfy some desirable properties---offer a promising alternative. 
Indeed, augmenting LLMs with learned verifiers 
has already become an integral component of
  breakthroughs such as gold-level performance at the 2025 International Mathematical Olympiad~\citep{shao2025deepseekmath} and solutions to open-ended
  mathematical research problems~\citep{feng2026towards}.  Beyond checking for logical consistency, learned verifiers can also help in ensuring that LLM-generated outputs align with user preferences \cite{li2026aligning}, safety and fairness constraints \cite{hall2026guiding}, or domain-specific objectives ~\citep{ yun-etal-2025-med}. 
  
  Of particular interest are process reward models (PRMs), a prominent class of learned verifiers designed to evaluate not just the final output of a chain-of-thought reasoning trace, but all intermediate reasoning steps as well. 
PRMs designed for checking local correctness of each reasoning step have been used for complete-solution selection~\citep{uesato2022solving,lightman2023let}, inference-time search over partial solutions~\citep{ma2023letsreward,khalifa2025thinkprm}, and post-training \citep{uesato2022solving}. In general, using step-level supervision has been shown to improve reasoning correctness~\citep{uesato2022solving} and to outperform outcome-only supervision~\citep{lightman2023let}.
Despite this practical success, the study of PRMs has remained largely empirical and fundamental theoretical questions remain unanswered. What are the tight performance guarantees of learning verifiers that are capable of checking entire chains-of-thought step by step and that under uncertainty can optimally trade off being overly cautious and overly lenient?
And how can such learned verifiers quantifiably improve the correctness and reliability of chain-of-thought generation at inference time?

In this work, we aim to close this clear theoretical gap. 
We consider settings where reasoning traces are generated adaptively, possibly conditioned on prior feedback from the verifier, and may deviate significantly from any fixed distribution. Motivated by this challenge, we provide a new theoretical framework that models the verifier as an \emph{online learner}. The verifier observes a sequence of (problem statement, reasoning trace) pairs and is tasked to determine whether each step in the reasoning trace is correct.  
This formulation captures settings where the verifier must operate under minimal assumptions on its inputs, such as when the data distribution is shaped by the interaction between  the generator and the verifier. Such settings are not captured by prior learning-theoretic work \cite{balcan2025learning}, which studies chain-of-thought verifiers through a statistical learning lens.\looseness-1

We provide a mistake bound analysis that explicitly distinguishes between \emph{soundness errors} (accepting incorrect reasoning) and \emph{completeness errors} (rejecting correct reasoning), thereby allowing us to characterize the optimal behavior of the verifier under uncertainty. This distinction also captures a fundamental asymmetry in verifier-guided generation for high-stakes applications such as legal and medical domains~\citep{magesh2025hallucination,yun-etal-2025-med}. Accepting incorrect reasoning can cause the system to build on and amplify an error. By contrast, rejecting correct reasoning is often less harmful because the system can simply ask the generator to try again.

We further show that a learned chain-of-thought verifier with separate soundness and completeness guarantees can serve not only as a safeguard for LLM-generated outputs in high-stakes domains, but also as a means of \emph{improving} the inference-time accuracy and reliability of a generator (LLM). \footnote{We note that beyond improving accuracy, learned verifiers can play an important role in helping align a generator to a user’s values.  In particular, rather than logical correctness, the verifier could be learning which actions or statements the user deems acceptable vs improper, and then using this information to guide the generator to producing acceptable sequences.} Specifically, we provide a novel sufficient condition under which training a chain-of-thought verifier on the outputs of a weak, fixed generator (which might adapt to feedback from the verifier) can give a generator-verifier system, that  achieves substantially more accurate reasoning at inference time than the generator alone.\looseness-1

\subsection{Main Contributions} 

{\bf Online learnability of chain-of-thought verifiers.} Our initial focus is to study the online learnability of chain-of-thought verifiers while distinguishing between soundness errors (where the verifier accepts an incorrect reasoning step) and completeness  errors (where the verifier rejects a correct reasoning step). There is a direct tension between these two types of errors. If we want to ensure the soundness of a reasoning trace, we must also be overly conservative, rejecting any reasoning step we are even slightly uncertain of. 
We formalize this trade-off through a natural \emph{verification with a soundness budget} model that interpolates between two extremes: learning a perfectly sound verifier and treating both types of errors equally.  The learner's goal is to minimize the total number of mistakes while adhering to a given budget for soundness mistakes. 
If the budget is zero, then the learner must be perfectly sound, guaranteeing that any accepted reasoning trace is indeed correct. In this model, we ask the following fundamental learnability questions:

\vspace{7pt}

{\centering
\emph{(a) Is there an online chain-of-thought verification algorithm that minimizes the total number of mistakes, while adhering to a soundness mistake budget? \\
(b) Is there a natural notion of dimension that characterizes learnability in this setting?}
}

\vspace{7pt}

We fully resolve these questions in this work. We characterize the optimal tradeoff between soundness and completeness mistakes by introducing a new combinatorial complexity measure, the \textit{soundness-completeness Littlestone dimension}, which we prove precisely captures the complexity of online chain-of-thought verification. We provide matching upper and lower bounds, establishing that our guarantees are tight and instance-independent. We provide a clean proof of this result via an equivalence between two natural verification models: \emph{chain-of-thought verification}, where the learner must provide the location of the first faulty reasoning step, and \emph{prefix verification}, where the learner must only check the correctness of the last step in the prefix of a reasoning trace. This two-way reduction allows us to analyze the simpler prefix verification setting while ensuring that the tight bounds we provide directly apply to chain-of-thought verification as well. Beyond this abstract characterization, we compute the complete soundness-completeness Pareto frontier for several concrete verifier classes. Our examples highlight the need for carefully choosing the soundness budget, as allowing a single additional soundness mistake can eliminate an arbitrarily large number of completeness mistakes.\looseness-1

{\bf Improved performance at inference-time using verification.} Apart from establishing tight learnability results for chain-of-thought verifiers, we also study the consequences of these results for generation. In particular, we ask the following:

\vspace{7pt}

{\centering \emph{(c) How can a CoT  verifier with strong guarantees  improve the reliability of an LLM generator?\looseness-1}\par
}

\vspace{7pt}

\noindent We show that even when a generator has only a small probability of producing a correct next reasoning step, a learned verifier can be used to \emph{exponentially} amplify the probability of producing a fully correct reasoning trace by repeatedly filtering and combining candidate steps from one or more such weak generators. In this way, verifier-guided generation can obtain substantially stronger correctness guarantees than the generators achieve on their own, and can even combine different reasoning strategies across generators to solve problems beyond those that any single generator can solve.
Our result builds on and extends Littlestone's mistake-bound-to-PAC argument~\citep{littlestone:1989} to our setting, where samples are drawn from the stochastic process induced by the dynamic interaction between the generator and verifier. 
We further show how the error and abstention rates of the verifier-assisted generator can be bounded in terms of the soundness and completeness mistake bounds of the verifier. In particular, the rate of incorrect reasoning trace generation is governed by the verifier's soundness error, justifying stronger emphasis on limiting soundness mistakes. 

We further establish the crucial role of interaction in learning verifiers with favorable sample complexity by showing a separation from behavior-cloning. Our learner receives supervision on prefixes induced by its own predictions, whereas under behavior cloning the learner observes only interactions between the generator and the expert verifier. For the same sample size, we show that training the verifier using behavior cloning can increase the soundness error rate by a factor proportional to the VC-dimension of the verifier class. At a high level, we construct a history-dependent generator for which an initial completeness error shifts the generator to a new distribution containing a hidden incorrect step that is essentially absent from the behavior-cloning training data.

{\bf Training with offline data.} Our online learnability results assume that the verifier gets to observe the location of the first faulty step of each reasoning trace, which could be collected offline and presented sequentially to the learner. For improving the performance of a generator at inference time, we also prove that under stronger feedback, our system-level guarantees can be obtained without a human (expert verifier) in the loop, using only offline data (\emph{cf.} Remark \ref{rem:offline}). In particular, to train a verifier that improves a weak generator, it suffices that each problem admits a small set of correct reasoning traces, all of which are provided offline by a gold-standard reasoner. These traces can then be used to answer the queries to the expert verifier during training. Such training data may be more readily available than the online protocol, where the expert verifier needs to identify the location of the first mistake in each adaptively generated trace.

\subsection{Related Work} 
  \citet{balcan2025learning} consider chain-of-thought verification from a statistical (PAC) learning perspective. They show that if the (problem statement, reasoning trace) pairs are drawn from a fixed but unknown distribution,
  then with a sample complexity linear in the VC dimension it is possible to learn a verifier that with high probability correctly predicts the location of the first error in the reasoning trace (if one exists). On the other hand, learning a \emph{perfectly sound} verifier (i.e., one that never accepts an incorrect reasoning trace) requires sample complexity that is linear in the size of the class of verifiers. This lower bound motivates our study of online learning with a flexible tradeoff between soundness and completeness, which we show is sufficient to give performance guarantees for amplifying weak generators, bypassing the hardness result. 

  Our generator-verifier interaction is closely related to \emph{imitation learning}, where a learner’s decisions shape the distribution of states it subsequently encounters. Classical interactive (on-policy) methods such as DAgger address the resulting distribution shift by training on states induced by the learner’s own behavior \citep{ross2011reduction}. More recently, \citet{foster2024behavior} showed that, under suitable realizability and parameter-sharing assumptions, log-loss behavior cloning can match the bounds achieved by interactive methods, indicating that interaction does not help under their aggregate error objectives. We show that there is a qualitatively different separation that matters for learning verifiers. 
  Concretely, we show that on-policy feedback can suppress the  soundness-error component by a factor proportional to the VC dimension, while shifting the remaining error toward completeness mistakes. This asymmetric profile is crucial because soundness and completeness failures can have very different downstream consequences, and shows that interaction can provide a substantial advantage over behavior cloning that is invisible to a symmetric aggregate-error objective.\looseness-1

Our soundness--completeness formulation is also related to online learning with abstention~\citep{sayedi2010trading,zhang2016extended}, and more broadly to reliable learning, where one type of classification error is required to be absent or tightly controlled \citep{kalai2012reliable,kanade2014distribution}. In the online reliable learning setting, the  learner is tasked to either predict a label or abstain in each round, with the goal of minimizing the number of abstentions subject to a mistake budget. Despite the conceptual similarity, our chain-of-thought verification problem differs substantially in its prediction and feedback structure. A key technical contribution is our two-way reduction showing that, despite this structured feedback, chain-of-thought verification is equivalent to prefix verification (Theorems~\ref{thm:red1} and \ref{thm:red2}). Building on this reduction allows us to fully characterize the optimal soundness-completeness tradeoff of chain-of-thought verification using techniques inspired by online learning with abstentions, albeit requiring a problem-specific treatment. Beyond the soundness-completeness Pareto frontier, we also characterize optimal online learning under arbitrary linear costs for the two mistake types and show how these guarantees translate to downstream performance when the learned verifier interacts with a generator.

We discuss a more comprehensive overview of related work and connections to PRMs, prover-verifier interactions, theoretical foundations of post-training, and imitation learning in Section \ref{app:additional-related}.\looseness-1

\subsection{Structure of the Paper} 
In Section \ref{sec:overview} we provide an overview of our main results along with descriptions of our proposed algorithmic procedures and proof techniques. In Section \ref{sec:cot}, we present our main results regarding learning chain-of-thought verifiers in full detail. Specifically, Sections \ref{sec:reduction}-\ref{sec:lscmistakes} establish the tight characterizations for verification with a soundness budget and verification with linear costs, and Section \ref{sec:frontier-examples} computes exact Pareto frontiers for several concrete verifier classes. Section~\ref{sec:boost} shows how our online verifiers can be used to boost history-dependent generators, and Section~\ref{sec:bc-lower-bound} shows that interaction is necessary in general by proving a sample-complexity separation from behavior cloning. 
Finally, we close with a discussion and open questions in Section \ref{sec:conclusion}.

\section{Technical Overview}\label{sec:overview}

In this section, we provide an overview of our results and main technical insights. We start with a formal introduction to our model.\looseness-1

\subsection{Formal Definition of Chain-of-Thought Verification}
\noindent {\bf Problem statements, reasoning traces, and verifiers.} Let $X$ denote a set of problem statements written in natural language. For example, a problem statement could be a mathematical problem such as ``Find $y \geq 1$, such that $\sqrt{y - 1} = 4$'', a list of contributions claimed to be solved in a research paper, or a prompt such as ``Please give me a list of instructions for booking my conference trip''. Let $\Sigma$ denote a set of reasoning steps, where each step consists of a finite sequence of tokens, such as ``We first square both sides of the equation: $\left( \sqrt{y-1} \right)^2 = 4^2$'' or ``Book a United flight from Chicago to Sydney from 12/5 to 12/12''. A reasoning trace $\tau \in \Sigma^*$ is a finite sequence of reasoning steps.\looseness-1

We are interested in using a learned verifier that checks each step against some desirable properties such as mathematical correctness, safety, or preference alignment. 
We denote such a verifier as a function $h: X \times \Sigma^* \to \{\yes,  \no \}$ that, given a problem statement $x$ and a finite sequence of reasoning steps $\tau = (\tau_1, \tau_2, \dots, \tau_\ell)$, outputs $\yes$ if $\tau_\ell$ is a valid inference from $x$ and the preceding steps $\tau_{1:\ell-1}$, assuming the prefix $\tau_{1:\ell-1}$ is correct. Otherwise, $h$ outputs $\no$. In the case where $\tau_{1:\ell-1}$ contains a faulty reasoning step, we allow $h$ to output anything on $(x,\tau_\ell)$. 

Given a problem statement and a reasoning trace $(x, \tau= (\tau_1, \tau_2, \dots, \tau_L))$, we evaluate the correctness of the reasoning trace by ``running'' a verifier $h \in H$ on every prefix:
    \[
    h(x, (\tau_1)),\quad
    h(x, (\tau_1, \tau_2)),\quad
    h(x, (\tau_1, \tau_2, \tau_3)),\;\dots
    \]

\noindent We say that $h$ \emph{accepts} $(x, \tau)$ if 
$h(x,\tau_{1:\ell}) = \yes$ for all $\ell \in [L]$.
Else, we say that $h$ \emph{rejects} $(x, \tau)$ and outputs $ \min\{\, \ell \in[L] : h(x,\tau_{1:\ell})=\no \,\}$ as the location of the first incorrect step. 

Our framework relies on the following natural assumption (``inductive bias" in machine learning): the correctness of each reasoning step can be checked \emph{locally} given the context of the problem statement and preceding steps.\footnote{We note that this inductive bias was also considered in prior work on learning verifiers \cite{balcan2025learning} and learning with chain-of-thought feedback \cite{joshi2025theory}.}
Checking for local correctness here could be viewed as answering questions such as ``Is this step justified given the previous work?” or ``Is this action safe given the current plan?”. Concretely, we assume that there exists a target (unknown) verifier $h^*$ in a class of verifiers $H$ that always agrees with an oracle $\cO: X \times \Sigma^* \mapsto \{\yes, \no \}$ (e.g. a human expert) about the validity of each reasoning step.

\begin{remark}
The goal here is to emulate the behavior of the target verifier $h^* \in H$. The formalization is agnostic to certifying that each reasoning trace makes progress toward answering the given problem statement, which may be important in some applications (e.g. verifying mathematical proofs) more than others (e.g. verifying that no action in a plan is unsafe). Later we discuss how this issue can be addressed under additional assumptions on the human oracle (Remark \ref{rem:oracleassumption}).
\end{remark}

{\bf Online chain-of-thought verification.} In this setting a learner is asked to verify an arbitrary sequence of problem statements and reasoning traces over $T$ timesteps. At each timestep $t \in [T]$ the learner observes a pair $(x^{(t)},\tau^{(t)})$ that consists of a problem statement $x^{(t)} \in X$ and a reasoning trace $\tau^{(t)} \in \Sigma^L$\footnote{For simplicity, we assume that all reasoning traces observed by the learner will have length exactly $L$. Our results also extend to variable length reasoning traces (up to some finite upper bound $L$).}.  
The learner must decide whether $\tau^{(t)}$ is a correct reasoning trace for $x^{(t)}$, and if not, identify the location of the first incorrect step. Formally, at timestep $t$, the learner outputs a label $\hat{y}^{(t)}$ from a label set $\cY = \{1, \dots, L\} \cup \{ \infty \}$. Returning $i \in [L]$ indicates that the location of the first faulty step is $i$, whereas returning ``$\infty$'' indicates that all steps in the reasoning trace are correct. After the learner predicts, the oracle reveals the true label $y^{(t)} = \min\bigl(\{\, i\in[L] : \cO(x^{(t)},\tau^{(t)}_{1:i})=\no \,\}\cup\{\infty\}\bigr)$.

We can think of the learner playing according to an online verification algorithm,
$V_H : \bigl((X\times \Sigma^{L})\times \cY\bigr)^{*}\times (X\times \Sigma^{L}) \to \cY$ which, given the history so far and a new example, uses a class of verifiers, $H$, to output a prediction. We say that an online verification algorithm makes a mistake in round~$t$ if 
$\hat{y}^{(t)} \neq y^{(t)}$. Throughout, we distinguish between the following two types of mistakes:

\begin{enumerate}[itemsep=2pt, topsep=2pt, parsep=2pt, partopsep=2pt]
    \item \textbf{Soundness mistake}, which occurs when the learner predicts the location of the first faulty step ``too late''.
    Formally, the learner makes a soundness mistake in round~$t$ if $ \hat{y}^{(t)} > y^{(t)}$.\looseness-1
    \item \textbf{Completeness mistake},  which occurs when the learner predicts the location of the first faulty step ``too early''. Formally, the learner makes a 
    completeness mistake in round~$t$ if $ \hat{y}^{(t)} < y^{(t)}$.\looseness-1
\end{enumerate}

{\bf Online Prefix Verification.} 
Parts of our technical analysis (e.g., interaction between generator and verifier in \Cref{sec:interaction}) require online verification algorithms that can check a reasoning trace step-by-step. We term this online verification problem \emph{online prefix verification}. At a high-level, in prefix verification, the learner observes a sequence of \emph{prefixes} (partial reasoning traces) and is asked to verify whether the last step of each prefix is a valid inference from the previous reasoning steps. The learner then makes a \emph{prefix-}soundness mistake in the case of false-positive and a \emph{prefix-}completeness mistake in the case of false-negative.

Formally, the learner at timestep $t \in [T]$ is presented with an instance $z_{\text{pfx}}^{(t)} = (x^{(t)}, \tau_{1:\ell}^{(t)}) \in X \times \Sigma^{\leq L}$, with the promise that steps $\tau_{1:\ell-1}$ are correct, that is, $\cO(x^{(t)}, \tau_{1:i}^{(t)}) = \yes$ for all $i \in [\ell-1]$ . The learner must then decide whether step $\tau_\ell$ is correct or not and predict $\hat{y}_{\text{pfx}}^{(t)} = \yes$ or $\hat{y}_{\text{pfx}}^{(t)} = \no$ accordingly. Then the true label $y_{\text{pfx}}^{(t)}$ is revealed to the learner such that $y_{\text{pfx}}^{(t)} = \yes$ if and only if $\cO(x^{(t)}, \tau_{1:\ell}^{(t)}) = \yes$ and $y_{\text{pfx}}^{(t)} = \no$ otherwise.\footnote{One may view prefix verification as a structured binary classification problem, where the ``domain" consists of all prefixes of reasoning traces, where all but the last reasoning step are correct. However, this ``domain'' depends on the target function and would therefore be unknown to the learner. Also we note that our prefix verifier only outputs $\yes$ or $\no$, unlike the interactive verifiers of \cite{amit2024models, amit2025theory} that may pose a new question for the prover. \looseness-1 } In this setting, we will distinguish between the following two types of mistakes: 

\begin{enumerate}[itemsep=2pt, topsep=2pt, parsep=2pt, partopsep=2pt]
    \item \textbf{(Prefix) soundness mistake}, which occurs when the learner accepts an incorrect prefix. Formally, the learner makes a prefix soundness mistake at round~$t$ if $ \hat{y}_{\text{pfx}}^{(t)} = \yes$ but $ y_{\text{pfx}}^{(t)} = \no$.\looseness-1
    \item \textbf{(Prefix) completeness mistake},  which occurs when the learner rejects a correct prefix. Formally, the learner makes a prefix completeness mistake at round~$t$ if $ \hat{y}_{\text{pfx}}^{(t)} = \no$ but $ y_{\text{pfx}}^{(t)} = \yes$.\looseness-1
\end{enumerate}

\subsection{Online Chain-of-Thought Verification}\label{sec:onlineverification}

To highlight the challenge of guaranteeing soundness, we first consider finite classes of verifiers and show an exponential separation between verification algorithms that may incur both soundness and completeness errors and algorithms that are constrained to be sound (i.e. never make a soundness mistake). For the former, a simple majority-based algorithm achieves the optimal mistake bound $O(\log|H|)$, whereas any sound algorithm may incur a linear number of mistakes in the worst case.\looseness-1

\begin{restatable}{proposition}{propcompressed}\label[proposition]{prop:compressed}
    For any finite class of verifiers, $H$,  there exists an online chain-of-thought verification algorithm that makes at most $O(\log |H|)$ mistakes. Moreover, there exists a class of verifiers, $H'$, for which any sound chain-of-thought verification algorithm cannot make fewer than $|H'|-1$ mistakes.\looseness-1
\end{restatable}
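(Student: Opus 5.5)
For the upper bound I would run the Halving algorithm on the version space, adapted to the structured label set $\cY=\{1,\dots,L\}\cup\{\infty\}$. Maintain $V_t\subseteq H$, initially $V_1=H$, consisting of the verifiers consistent with all feedback so far. A verifier $h$ is consistent with a labelled example $(x,\tau,y)$ if the first location at which $h$ outputs $\no$ on the prefixes of $\tau$ (or $\infty$ if there is none) equals $y$.

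On a new instance $(x^{(t)},\tau^{(t)})$, each $h\in V_t$ induces a prediction $h(x^{(t)},\tau^{(t)})\in\cY$. The learner outputs the plurality value among these predictions. The target $h^*$ agrees with $\cO$ on every prefix whose preceding steps are correct, so $h^*$ outputs exactly $y^{(t)}$ and is never eliminated. After the label is revealed, set $V_{t+1}=\{h\in V_t: h(x^{(t)},\tau^{(t)})=y^{(t)}\}$.

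A plurality vote over $L+1$ labels only guarantees removal of a $1/(L+1)$ fraction per mistake, which gives $O(L\log|H|)$. To get $O(\log|H|)$ I would instead vote sequentially. For $\ell=1,2,\dots$, predict that step $\ell$ is correct iff a majority of the verifiers in $V_t$ that accepted steps $1..\ell-1$ also accept step $\ell$, and stop at the first $\ell$ where the majority says $\no$.

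If the prediction $\hat y$ is wrong, consider the smaller of $\hat y$ and $y$. At that step the majority of the surviving sub-population disagreed with the truth. Every verifier in that majority has a predicted label different from $y$: either its first $\no$ is too early, or it accepts past $y$. Verifiers that diverged at an earlier step are also wrong. So at least half of $V_t$ is eliminated, which gives at most $\log_2|H|$ mistakes.

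The delicate point is checking the nesting. The sub-population at step $\ell$ consists of those in $V_t$ that agreed with the true label on all earlier steps, and everything outside it is wrong as well. This is where I expect the main care to be needed, though the argument is routine.

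For the lower bound I would take $L=1$ and a domain $X=\{x_1,\dots,x_n\}$. Let $H'=\{h_1,\dots,h_n\}$, where $h_i$ accepts the single step on $x_i$ only and rejects everywhere else. The adversary presents $x_1,x_2,\dots$ in some order. A sound algorithm cannot accept $x_j$ unless every $h\in V_t$ accepts it. While two or more verifiers remain consistent, the remaining ones disagree on each unseen $x_j$, so the algorithm must reject. The adversary then reveals the label $\yes$, and this is a completeness mistake.

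Concretely, the adversary fixes the target to be the last point queried. It queries $x_1,\dots,x_{n-1}$ in turn with label $\no$. Each time at least two candidates remain, so a sound learner rejects, and rejecting is correct for these points. That ordering therefore gives no mistakes, so I would reverse the roles.

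Let $h_i$ reject on $x_i$ only and accept everywhere else. The adversary queries $x_1,\dots,x_{n-1}$ and declares all of them correct, with the target being $h_n$. At each query $x_j$ the version space still contains $h_j$, which rejects. If the learner accepted, the adversary could answer $\no$ with target $h_j$, causing a soundness mistake. So a sound learner must reject, and the true label is $\yes$. Each of these rounds is a completeness mistake, which totals $n-1=|H'|-1$ mistakes.
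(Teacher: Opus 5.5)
Your proposal is correct and follows essentially the paper's route: a halving-style stepwise majority vote gives the $O(\log|H|)$ bound, and a singleton-rejector class gives the $|H'|-1$ lower bound, since each forced completeness mistake eliminates only one verifier. The differences are minor. The paper votes over the whole version space at each prefix, which makes your nesting check unnecessary, though your vote over the verifiers that accepted steps $1,\dots,\ell-1$ also removes at least half of $V_t$ per mistake. Likewise, your $L=1$ instance with distinct problem statements is just a variant of the paper's single-$x$ class whose verifiers each reject one full trace (and it coincides with the class $R_n$ of the frontier section).
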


\noindent Formal arguments are presented in Section \ref{app:exp-gap-finite}, where we further show that these bounds are tight. We note that depending on $H$, it may be possible to obtain sharper mistake bounds for learning sound verifiers (see Example \ref{example:puzzle} and further concrete examples in Appendix \ref{app:examples}).\looseness-1

\Cref{prop:compressed} highlights a sharp transition and motivates a more refined study of the trade-off between soundness and completeness mistakes, which we present in subsequent sections.

\subsubsection{Chain-of-Thought Verification with a Soundness Mistake Budget}
We will be interested in analyzing the trade-off between soundness and completeness mistakes in chain-of-thought verification when the learner is allowed to make a small number of soundness mistakes, $k \in \N$. Specifically, the learner's objective is to minimize the number of overall mistakes subject to adhering to the soundness mistake budget, $k$. As motivation, we observe that even with $k=1$, the verifier class used to show the $\Omega(|H|)$ lower bound in Proposition \ref{prop:compressed} can now be learned with no more than $1$ mistake overall (\emph{cf.} Proposition \ref{prop:rn} for a detailed proof).

We now introduce a new variant of the familiar notion of mistake trees from online learning (\emph{cf.} Appendix \ref{app:mistake-tree} for relevant background). Our combinatorial construction is conceptually related to mistake trees for online learning with abstention~\citep{sayedi2010trading,zhang2016extended}, but address a fundamentally different prediction problem and feedback model. Existing mistake-tree characterizations therefore do not directly apply to our problem, and it is not immediate how guarantees for one setting should translate to the other. To exploit the connection with binary online learning, we must first establish that online chain-of-thought verification is equivalent, in the relevant mistake-bound sense, to online prefix verification, where the learner only decides whether the final step of a correct prefix is valid (Theorem \ref{thm:red1}, together with the converse reduction of Theorem \ref{thm:red2}). We summarize this result below in \Cref{mainthm}. Only through this two-way reduction can we obtain a full characterization of the optimal soundness--completeness tradeoff for chain-of-thought verification.\looseness-1

{\bf Soundness-Completeness (SC)-mistake tree.} An SC-mistake tree  is a rooted binary tree, where internal nodes are examples in $X \times \Sigma^{\leq L}$. Every internal node, $z$, has exactly two children, $z_s$ and $z_c$. There are two types of edges: $(z,z_s)$ is a straight edge labeled $\no$ and $(z,z_c)$ is a curvy edge labeled $\yes$. We say that an SC mistake tree is \emph{shattered} by a verifier class $H$ if for any root-to-leaf path that traverses nodes $z_1, z_2,  \dots, z_d$ there exists a verifier $h \in H$ such that the label of edge $(z_i, z_{i+1})$ is $h(z_i)$ for all $i \leq d$.

\begin{definition}
    We say that an SC mistake tree is $(k, m)$-difficult if every root-to-leaf path that has at most $k$ straight edges is of length at least $m$.
\end{definition}
For example, \Cref{fig:sctree} shows a construction of an SC-mistake tree that is $(0, 2)$- and $(1, 1)$-difficult. For this example, the verifier class is $H=\{ h_i: \{x\} \times \{0,1\}^4\to \{0,1\} \; | \; i\in[4], \; h_i(\tau)=\tau_i \underset{\substack{j\in[4]\setminus\{i\}}}{\wedge}\neg \tau_j \}$. We have $X = \{x\}$ and $\Sigma = \{0, 1\}$, interpreting the output $0$ as $\no$ and $1$ as $\yes$.

\begin{wrapfigure}[13]{r}{0.3\textwidth} 
  \centering
  \includegraphics[width=0.28\textwidth]{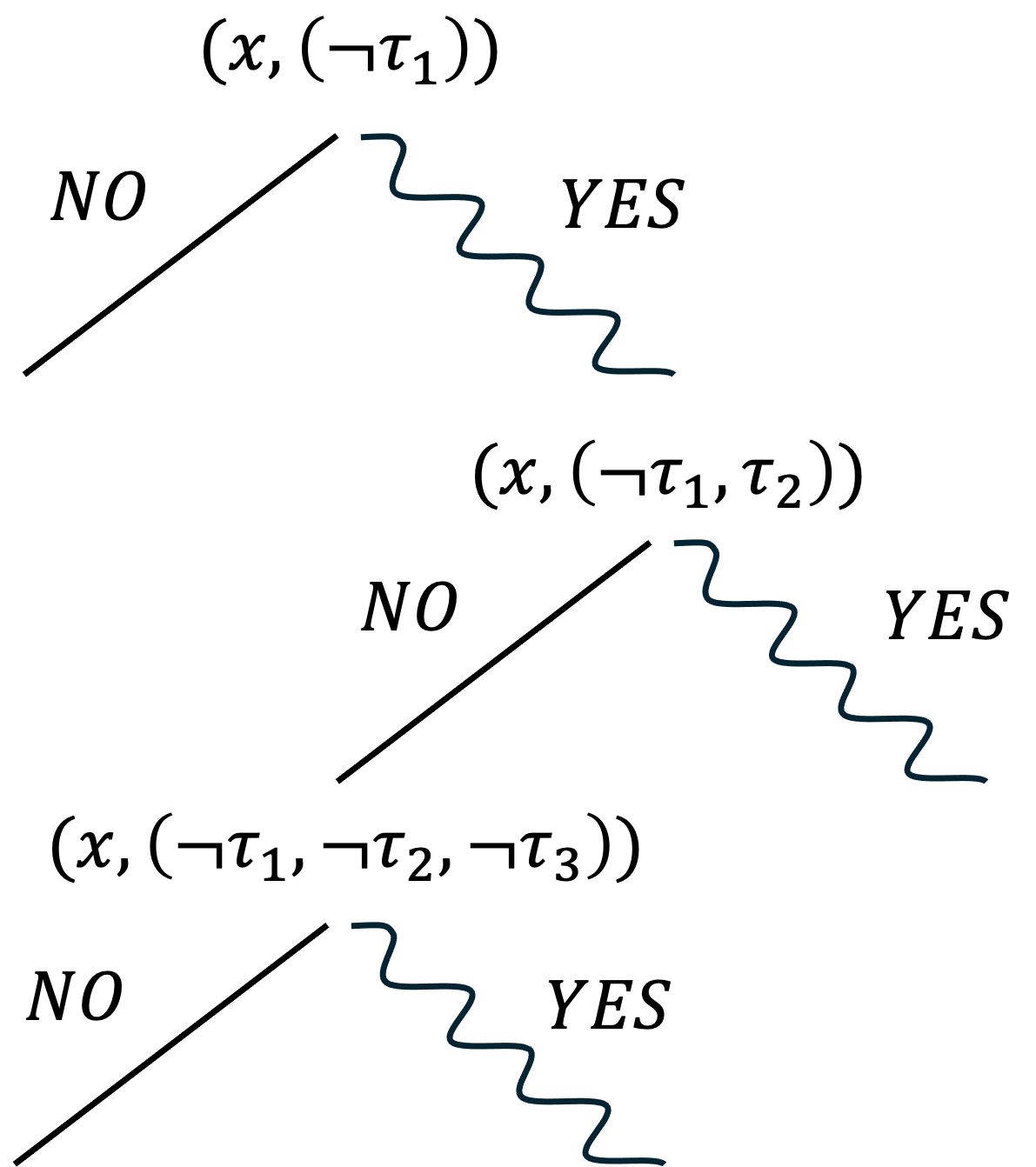}
  \caption{An SC-mistake tree.}\label{fig:sctree}
\end{wrapfigure}

\begin{definition}[SC-Littlestone dimension]
    For a class of verifiers, $H$, and a soundness mistake budget, $k \in \mathbb{N}$, the SC-Littlestone dimension is defined as $\text{SC-Ldim}(H, k) :=\sup \{m \in \N : \text{there exists a} \; (k, m)\text{-difficult mistake tree for}\;  H\}.$\looseness-1
\end{definition}
If $H = \varnothing$, then $\text{SC-Ldim}(H, k) = 0$ for all $k \in \N$. If for every $m \in \N$ there exists a $(k, m)$-difficult mistake tree for  $H$, then we let $\text{SC-Ldim}(H, k) = \infty$. We note that for a finite class of verifiers, we have the following bound on the SC-Littlestone dimension.

\begin{proposition}\label[proposition]{prop:finiteH}
    For a finite class of verifiers, $H$ with $|H|=n$, we have that $\text{SC-Ldim}(H,0)\leq n-1$ and more generally $\text{SC-Ldim}(H,k)\leq O\!\left(n^{1/(k+1)}\right)$. For $k\ge \lfloor \log_2 n\rfloor-1$, $\text{SC-Ldim}(H,k)\leq \lfloor \log_2 n\rfloor$.\looseness-1
\end{proposition}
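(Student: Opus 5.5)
We work directly with a shattered SC-mistake tree and count the verifiers that survive along its paths. Fix a $(k,m)$-difficult tree shattered by $H$, with $|H|=n$. We may prune it so that every root-to-leaf path with at most $k$ straight edges has length exactly $m$, and we stop a path once it has $k+1$ straight edges.

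For a node $v$, let $H_v\subseteq H$ be the verifiers consistent with the edge labels on the path from the root to $v$. At an internal node $z$ the two children split $H_v$ into the verifiers answering $\no$ (straight child) and those answering $\yes$ (curvy child). These two sets are disjoint, and shattering makes both of them nonempty.

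Define $f(j,m)$ as the minimum $|H_v|$ that is compatible with a subtree which still has $j$ straight edges of budget left and must survive $m$ more levels. We aim for the recurrence
\[
f(j,m)\ \ge\ f(j-1,m-1)+f(j,m-1),
\]
with base cases $f(j,0)=1$ (shattering forces nonempty leaves) and $f(0,m)\ge m+1$. The base case $f(0,m)\ge m+1$ comes from the all-curvy path of length $m$: each of its nodes has a nonempty straight child, and these $m$ straight children are pairwise disjoint from each other and from the final curvy leaf.

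The recurrence then gives $f(k,m)\ge\sum_{i=0}^{k+1}\binom{m}{i}$, or a similar sum. For $k=0$ this yields $n\ge m+1$, so $\SCLdim(H,0)\le n-1$. For general fixed $k$ it yields $n\ge\binom{m}{k+1}\gtrsim (m/(k+1))^{k+1}$, so $m=O(n^{1/(k+1)})$ up to a factor depending only on $k$.

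For the third claim: once $k\ge\lfloor\log_2 n\rfloor-1$, every path of length $\lfloor\log_2 n\rfloor+1$ is within the straight-edge budget. A $(k,m)$-difficult tree with $m>\lfloor\log_2 n\rfloor$ would therefore contain a complete binary tree of depth $\lfloor\log_2 n\rfloor+1$ shattered by $H$. That tree has more than $n$ leaves, each with a distinct nonempty set of consistent verifiers, which contradicts $|H|=n$. Equivalently, $\SCLdim(H,k)\le\Ldim(H)\le\log_2 n$.

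The step I expect to be the main obstacle is justifying the recurrence. Difficulty only constrains paths that use at most $k$ straight edges. After the budget is exhausted, a straight subtree may be a bare leaf. So the straight child of a node with budget $j$ must be bounded using $f(j-1,\cdot)$ with exactly one fewer level, and the pruning must ensure the counts are taken at the correct depths. I would set this up as an induction on $(j,m)$ that uses only the disjointness of the children's version spaces and nonemptiness at leaves.
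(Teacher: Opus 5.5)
Your argument for the first two claims is correct and follows the paper's route. The paper derives the proposition from the leaf-counting bound $\SCLdim(H,k)\le\max\{m:\sum_{i=0}^{k+1}\binom{m}{i}\le n\}$. Your recurrence $f(j,m)\ge f(j-1,m-1)+f(j,m-1)$, with $f(j,0)=1$ and $f(-1,\cdot)=1$ (equivalently your base case $f(0,m)\ge m+1$), solves to exactly $\sum_{i=0}^{j+1}\binom{m}{i}$. The step you flagged is routine. Take a node with $j$ units of budget left. Below its straight child, any continuation with at most $j-1$ further straight edges keeps the whole path within budget $k$, so that subtree is $(j-1,m-1)$-difficult. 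Disjointness and nonemptiness of the two children's version spaces then give the recurrence.

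The third claim is where the proposal fails. Write $\ell=\lfloor\log_2 n\rfloor$. Your sentence ``every path of length $\ell+1$ is within the straight-edge budget'' is false when $k=\ell-1$. Such a path may carry $\ell$ or $\ell+1$ straight edges, so the nodes it reaches may be leaves, and you do not get a complete shattered tree of depth $\ell+1$.

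This cannot be repaired, because the claim itself is false at that threshold. By Proposition~\ref{prop:rn}, $\SCLdim(R_3,0)=2>1=\lfloor\log_2 3\rfloor$, even though $k=0\ge\lfloor\log_2 3\rfloor-1$. More generally, consider the tree in which every path stops at depth $\ell+1$ or at its $\ell$-th straight edge. It is $(\ell-1,\ell+1)$-difficult and has $\sum_{i=0}^{\ell}\binom{\ell+1}{i}=2^{\ell+1}-1$ leaves. Label its internal nodes by distinct examples and take one verifier per leaf. This gives a class with $n=2^{\ell+1}-1$ and $\SCLdim(H,\ell-1)\ge\ell+1$. The binomial bound does give the claim at $k=\ell-1$ whenever $n<2^{\ell+1}-1$, but nothing can rescue $n=2^{\ell+1}-1$.

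What your argument does prove is the corrected statement for $k\ge\ell$. Then every node at depth at most $\ell$ is reached with at most $\ell\le k$ straight edges. It must therefore be internal, since otherwise it would end a within-budget path shorter than $m\ge\ell+1$. The resulting complete tree of depth $\ell+1$ has $2^{\ell+1}>n$ leaves with disjoint nonempty version spaces, a contradiction.

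Also drop the remark ``equivalently $\SCLdim(H,k)\le\Ldim(H)$''. Lemma~\ref{lem:general-shape}(b) gives the reverse inequality for every $k$, and equality holds only once $k\ge\Ldim(H)$ (part (c)).
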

\noindent The proof of \Cref{prop:finiteH} involves showing that $\text{SC-Ldim}(H,k) \leq \max\Bigl\{m\in\mathbb{N}:\ \sum_{i=0}^{k+1}\binom{m}{i}\le n\Bigr\}$ and follows similarly to that of \cite{zhang2016extended} so we omit it.

In the main result of this section, we show that the SC-Littlestone dimension fully characterizes learnability of chain-of-thought verifiers with a soundness mistake budget.

\begin{theorem}\label{mainthm}
    Consider a class of verifiers, $H$, a soundness mistake budget $k \in \N$ and $m \in \N$ such that $\text{SC-Ldim}(H, k) = m$. Then the following hold:
    \begin{enumerate}[label=(\alph*), itemsep=2pt, topsep=2pt, parsep=2pt, partopsep=2pt]
        \item There exists an online chain-of-thought verification algorithm that guarantees at most $k$ soundness mistakes and at most $m$ mistakes overall on any realizable sequence of examples.
        \item For any deterministic chain-of-thought verification algorithm that guarantees at most $k$ soundness mistakes, there exists a realizable sequence of examples that forces the algorithm to make at least $m$ mistakes overall.
    \end{enumerate}
\end{theorem}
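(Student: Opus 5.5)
The plan is to go through online prefix verification, using the two-way reduction summarized before the statement (Theorems~\ref{thm:red1} and~\ref{thm:red2}). Prefix verification is a binary online problem over prefixes $z\in X\times\Sigma^{\le L}$. There a soundness mistake is a false $\yes$, and a completeness mistake is a false $\no$. The SC-mistake tree speaks exactly this language: straight edges are labeled $\no$, and following the true label $\no$ when the learner said $\yes$ is a soundness mistake. So I will first prove both parts (a) and (b) for prefix verification. Then I will transfer them: Theorem~\ref{thm:red1} turns a prefix verifier into a chain-of-thought verifier with no more soundness and no more total mistakes, and Theorem~\ref{thm:red2} turns a chain-of-thought verifier into a prefix verifier, so the lower bound carries over.

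\textbf{Upper bound for prefix verification.} I will use an SOA-style algorithm indexed by the remaining soundness budget $j$. The version space is $V$, and $D(V,j)=\SCLdim(V,j)$, with $D(\varnothing,\cdot)$ handled as $-1$/$0$. Given a prefix $z$, let $V_\yes=\{h\in V:h(z)=\yes\}$ and $V_\no$ be the complement. The learner predicts $\yes$ only when a soundness error is affordable ($j\ge1$) and $D(V_\no,j-1)<D(V,j)$; otherwise it predicts $\no$. The potential argument needs two facts.
\begin{itemize}
\item If $j\ge1$, then $D(V_\yes,j)<D(V,j)$ or $D(V_\no,j-1)<D(V,j)$. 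If both were at least $D(V,j)$, gluing the two trees under root $z$ (curvy edge to the $V_\yes$ tree, straight edge to the $V_\no$ tree) gives a $(j,D(V,j)+1)$-difficult tree, because a path through the straight edge spends one straight edge.
\item If $j=0$, then $D(V_\yes,0)<D(V,0)$ whenever $V_\no\neq\varnothing$, by the same gluing, since the straight branch is immediately fine.
\end{itemize}
With $j=0$ the learner always says $\no$, so it never makes a soundness error. A $\no$-prediction that turns out wrong (truth $\yes$) shrinks $V$ to $V_\yes$ and lowers $D$. The subtle case is $j\ge1$ with the learner saying $\no$ but $D(V_\yes,j)\ge D(V,j)$, which looks like it could happen. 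I will define the rule more carefully to avoid it: predict $\no$ exactly when $D(V_\yes,j)<D(V,j)$, and otherwise predict $\yes$. The first fact then guarantees $D(V_\no,j-1)<D(V,j)$ in the second case. Each mistake decreases the pair potential $D(V,j)$, the budget only drops on soundness mistakes, and realizability keeps $V$ nonempty. So the total number of mistakes is at most $m$, with at most $k$ soundness mistakes.

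\textbf{Lower bound.} Take a $(k,m)$-difficult tree shattered by $H$. The adversary walks down the tree and always reveals the label opposite to the learner's prediction. It then follows that edge; shattering makes the sequence realizable. While the learner still has soundness budget, we follow its forced path. If the learner predicts $\yes$ with its budget exhausted, the adversary answers $\no$, a soundness error, which the learner cannot afford, so the learner must say $\no$ and the adversary goes curvy. Every step is therefore a mistake, and the path has at most $k$ straight edges (one per soundness mistake). By difficulty, it has length at least $m$. I must also ensure each queried node $z$ carries the promise that its earlier steps are correct. I expect this to be the main obstacle: arbitrary tree nodes need not satisfy the promise under the realizing $h$. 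I plan either to argue that nodes can be relabeled into valid prefixes, or to rely on Theorem~\ref{thm:red2}, whose embedding into full chain-of-thought traces presumably handles the promise, to convert the prefix adversary into a chain-of-thought adversary.
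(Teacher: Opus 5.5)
Your route is the paper's: a potential-function version of \Cref{alg:sc-soa}/\Cref{thm:scub} composed with \Cref{thm:red1} for (a), and the tree adversary of \Cref{thm:sclb} composed with \Cref{thm:red2} for (b). The upper bound is correct. It uses the same gluing fact as the paper's induction on $m+k$; you test $D(V_\yes,j)<D(V,j)$, whereas the paper compares $D(V_\yes,k)$ with $D(V_\no,k-1)$. Two small repairs are needed. First, fix the convention $D(\varnothing,\cdot)=-1$: with $0$, a singleton $V$ that rejects $z$ gets a $\yes$ and hence a soundness mistake. Second, drop ``with $j=0$ the learner always says $\no$'': if all of $V$ accepts, saying $\no$ yields completeness mistakes that never shrink $V$. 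Your refined rule already handles both cases once $D(\varnothing,\cdot)=-1$.

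The genuine gap is the step you leave open in (b), and neither of your plans closes it. \Cref{thm:red2} does not supply the promise; it consumes it. It feeds the chain-of-thought algorithm the label $\ell$ or $\ell+1$ on the padded trace precisely because $\tau_{1:\ell-1}$ is assumed correct. On a node that violates the promise under the verifier that eventually plays $\mathcal{O}$, this feedback is not realizable, so the algorithm's soundness guarantee says nothing.

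Relabeling nodes cannot work either, because with SC-mistake trees over arbitrary nodes, part (b) is false. Take $X=\{x\}$, $L=2$, $\Sigma=\{F,a_1,\dots,a_n\}$. Let $H$ consist of all $h$ with $h(x,(F))=\no$, $h(x,(a_i))=\yes$, $h(x,(a_i,\sigma))=\no$ iff $\sigma=F$, and $h(x,(F,\sigma))$ arbitrary. Then $F$ meets the hypothesis of \Cref{thm:red2}, and every $h\in H$ induces the same chain-of-thought labels, so some deterministic algorithm makes no mistakes at all. Yet $H$ shatters a complete tree of depth $n$ on the nodes $(x,(F,a_i))$, so $\SCLdim(H,k)\ge n$ for every $k$ by \Cref{lem:general-shape}(b). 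Those nodes never satisfy the promise, since every $h$ rejects $(x,(F))$. The paper's proof composes \Cref{thm:sclb} with \Cref{thm:red2} without checking the promise, so it has the same hole. You located a real problem, but your proposal does not resolve it.

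The fix is to build the promise into shattering. Require that the verifier realizing each root-to-leaf path accept every proper prefix of every node on that path; this is the partial-class convention of the appendix. Then the realizer of the adversary's final path can serve as $\mathcal{O}$, every queried prefix satisfies the promise, and \Cref{thm:red2} transfers the bound. That transfer also needs the always-rejected token $F$, a hypothesis missing from both the statement and your transfer step.

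Your upper bound survives this stronger definition with one change. On a query $z=(x,\tau_{1:\ell})$, the learner should first restrict $V$ to hypotheses accepting $(x,\tau_{1:j})$ for all $j<\ell$, as the promise permits, before splitting into $V_\yes$ and $V_\no$. Every glued tree is then promise-respecting, and your potential argument goes through unchanged. Since \Cref{thm:red1} only updates on promise-satisfying prefixes, part (a) transfers as before.
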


{\bf Algorithmic approach.} We give an overview of our main algorithm for online chain-of-thought verification with a soundness mistake budget (\Cref{alg:cot-sc-soa}) that we use to prove \Cref{mainthm}(a). 
Our algorithm's prediction rule is driven by the SC-Littlestone dimension. Specifically, given a problem-trace pair $(x, (\tau_1, \dots, \tau_L))$, the learner examines each prefix, $(\tau_1, \dots, \tau_\ell)$ for $\ell \in [L]$. For each step $\ell$, the learner uses the SC-Littlestone dimension as a potential function in order to compare the worst-case residual difficulty of the two possible actions: (1) certifying that the prefix up to step $\ell$ is correct, or (2) declaring that the trace's $\ell$-th step is faulty. If wrong, prediction (2) would preserve the current soundness budget $k$ (completeness mistake), while prediction (1) would consume one unit of budget (soundness mistake). Accordingly, at each step $\ell$ of the reasoning trace, the algorithm evaluates the SC-Littlestone dimension with budget $k$ of the subspace of verifiers that predict $\yes$ at step $\ell$ and the SC-Littlestone dimension with budget $k-1$ of the subspace of verifiers that predict $\no$ at step $\ell$. The algorithm then predicts failure at the first index where the former option is at least as favorable as the latter; if no such index exists, it predicts $\infty$. When 
the soundness mistake budget $k=0$, the learner can no longer afford predicting a faulty step ``too late'', and therefore defaults to predicting the first position at which some verifier predicts failure.

After the index of the first faulty step is revealed, the learner updates the version space in the natural way. If it predicted a failure too late, then it made a soundness mistake: the trace had a faulty step earlier, so the learner restricts to verifiers that reject the true failing prefix and decrements the remaining soundness budget. If it predicted failure too early, then it made a completeness mistake: the predicted prefix was in fact still correct, so the learner restricts to verifiers that accept that prefix.\looseness-1

{\bf Proof Overview.} Our main technical insight for the upper bound is an \emph{online-to-online reduction} (\Cref{thm:red1}) from online chain-of-thought verification to online \emph{prefix} verification (summarized in \Cref{alg:red1}). We then construct an online prefix verification algorithm (\Cref{alg:sc-soa}) and prove a mistake bound based on the SC-Littlestone dimension (\Cref{thm:scub}). \Cref{thm:red1,thm:scub} establish the upper bound of \Cref{mainthm}(a). In the case of a completeness mistake, the online prefix verification algorithm does not observe whether the reasoning trace indeed contained a later error. 
This would suggest that online prefix verification may be a strictly harder problem than chain-of-thought verification. Nonetheless, the optimality of our approach comes from showing an online-to-online reduction in the opposite direction. Namely, we show that (under a mild assumption) an online chain-of-thought verification algorithm can be transformed to an online prefix verification algorithm (\Cref{thm:red2}). This together with a lower SC-Littlestone dimension based lower bound on prefix verification (\Cref{thm:sclb}) establishes the tight lower bound of \Cref{mainthm}(b). 

{\bf What can the Pareto frontier look like?} \Cref{mainthm} gives an abstract characterization, but the resulting frontier can behave very differently across verifier classes. In Section~\ref{sec:frontier-examples} we compute several exact examples. 

\subsubsection{Chain-of-Thought Verification with a Linear Cost Objective}

Motivated by downstream applications that may have a well-defined relative cost for each type of mistake, in this section we study the trade-off between soundness and completeness mistakes via a linear cost objective. Specifically, the learner incurs cost $\gamma_s \in \R_{\geq 0}$ for every soundness mistake and cost $\gamma_c \in \R_{\geq 0}$ for every completeness mistake. The learner's objective is to minimize the linear cumulative cost $\gamma_s M_s + \gamma_c M_c$, where $M_s, M_c$ denotes the number of soundness and completeness mistakes (respectively) that the learner makes over $T$ timesteps. As in \Cref{sec:tradingsoundnesscompleteness}, we hope to analyze this setting by defining the right notion of a mistake tree that will capture the optimal loss achievable by an online chain-of-thought verification algorithm. However, here the loss incurred by the learner depends on the \emph{real-valued} cost for each mistake, which must be properly embedded into the tree structure. We now introduce the notion of a WSC-mistake tree that augments the SC-mistake tree definition by including weighted edges designed to handle the added complexity of the linear cost objective.\looseness-1

{\bf Weighted Soundness-Completeness (WSC)-mistake tree.} A WSC-mistake tree is a rooted binary tree, where internal nodes are examples in $X \times \Sigma^{\leq L}$. Every internal node, $z$, has exactly two children: $z_s$ and $z_c$. There are two types of edges: $(z,z_s)$ is a straight edge labeled $\no$ and has weight $w(z,z_s) = \gamma_s$, whereas $(z,z_c)$ is a curvy edge labeled $\yes$ and has weight $w(z, z_c) = \gamma_c$. We say that a WSC mistake tree is \emph{shattered} by a verifier class $H$ if for any root-to-leaf path that traverses nodes $z_1, z_2, z_3, \dots, z_d$ there exists a verifier $h \in H$ such that the label of edge $(z_i, z_{i+1})$ is $h(z_i)$ for all $i \leq d$.\looseness-1

\begin{definition}[WSC-Littlestone dimension]
    For a class of verifiers, $H$, soundness mistake cost $\gamma_s \in \R_{\geq 0}$ and completeness mistake cost $\gamma_c \in \R_{\geq 0}$, the WSC-Littlestone dimension, $\text{WSC-Ldim}(H, \gamma_s, \gamma_c)$\footnote{For brevity, in the remainder of the paper we drop the dependence on $\gamma_s, \gamma_c$ and write $\text{WSC-Ldim}(H)$ instead.}, is defined as the supremum $W \in \R$ such that there exists a WSC mistake tree, where all root-to-leaf paths have cumulative weight at least $W$.
\end{definition}
If $H = \varnothing$, then $\text{WSC-Ldim}(H) = 0$. If for all $W \in \R_{\geq 0}$ there exists a WSC-mistake tree shattered by $H$, where all root-to-leaf paths have weight at least $W$, then we say that $\text{WSC-Ldim}(H) = \infty$. \looseness-1

We show that the WSC-Littlestone dimension characterizes the optimal cumulative loss among all deterministic chain-of-thought verification algorithms, which is the main result of this section.

\begin{theorem}\label{mainthm2}
    Consider a class of verifiers, $H$, a soundness mistake cost $\gamma_s \in \R_{\geq 0}$ and a completeness mistake cost $\gamma_c \in \R_{\geq 0}$. Then the following hold:
    \begin{enumerate}[label=(\alph*), itemsep=2pt, topsep=2pt, parsep=2pt, partopsep=2pt]
        \item There exists an online chain-of-thought verification algorithm that achieves at most $\text{WSC-Ldim}(H)$ cumulative loss on any realizable sequence of examples.
        \item For any deterministic chain-of-thought verification algorithm there exists a realizable sequence of examples that forces a cumulative cost of at least $W$ for every $W< \text{WSC-Ldim}(H)$.\looseness-1
    \end{enumerate}
\end{theorem}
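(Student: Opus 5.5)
We follow the same route as for \Cref{mainthm}: pass through online prefix verification via the two-way reduction, and work with WSC-mistake trees in place of SC-mistake trees.

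\textbf{Upper bound (a).} We first note that the reduction of \Cref{thm:red1} preserves costs. Given an online prefix verification algorithm, the chain-of-thought learner feeds it the prefixes $\tau_{1:1},\tau_{1:2},\dots$ in order. It predicts the first index $\ell$ at which the prefix learner says $\no$, or $\infty$ if there is none. After $y^{(t)}$ is revealed, it hands the prefix learner the labels of the prefixes it actually queried; all of these are legal prefix instances, since every earlier prefix is correct. The cases are as follows.
\begin{itemize}
\item A soundness mistake ($\hat y>y$) corresponds to exactly one prefix-soundness mistake, at position $y$.
\item A completeness mistake ($\hat y<y$) corresponds to exactly one prefix-completeness mistake, at position $\hat y$.
\item A correct prediction incurs no prefix mistakes.
\end{itemize}
Hence the weighted costs coincide, and it suffices to build a prefix algorithm with cumulative cost at most $\WSCLdim(H)$.

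For this we use a weighted SOA. Let $V$ be the current version space, and let $V^{\yes}_z$ and $V^{\no}_z$ be the verifiers in $V$ that output $\yes$ and $\no$ on $z$. On instance $z$, predict $\yes$ if $\gs+\WSCLdim(V^{\no}_z)\le \gc+\WSCLdim(V^{\yes}_z)$, and predict $\no$ otherwise. We then update $V$ to the verifiers consistent with the revealed label. The key lemma, proved from the definition by grafting trees, is
\[
\min\bigl\{\gs+\WSCLdim(V^{\no}_z),\ \gc+\WSCLdim(V^{\yes}_z)\bigr\}\le \WSCLdim(V).
\]
To see this, suppose both quantities exceed $\WSCLdim(V)$ by some $\epsilon>0$. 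Take near-optimal trees for $V^{\no}_z$ and $V^{\yes}_z$ and hang them below a root $z$, along the straight and curvy edges respectively. The resulting tree is shattered by $V$ and every path has weight greater than $\WSCLdim(V)$, a contradiction. This argument needs some care with suprema, so we take trees within $\epsilon/2$ of optimal, and with the empty-class convention.

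With the lemma in hand, the potential $\WSCLdim(V)$ drops by at least the cost of each mistake and never increases on correct rounds. Since it is nonnegative, the total cost is at most the initial value. If $\WSCLdim(H)=\infty$ the claim is vacuous.

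\textbf{Lower bound (b).} Fix $W<\WSCLdim(H)$ and take a shattered WSC-tree whose paths all have weight greater than $W$. Against a deterministic prefix learner, the adversary walks down the tree. At node $z$ it presents $z$ and reveals the label opposite to the learner's prediction, so each step costs exactly the weight of the edge taken. Shattering guarantees that a consistent $h\in H$ exists for the whole path, so the sequence is realizable. The prefix promise must also hold, namely that the previous steps in $z$ are correct under the target. We handle this as in \Cref{thm:sclb}: the tree nodes are chosen so that the realizing $h$ accepts all proper prefixes, which is the mild assumption used in \Cref{thm:red2}. Finally, we transfer the lower bound to chain-of-thought learners via \Cref{thm:red2}. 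That reduction converts any deterministic chain-of-thought algorithm into a prefix algorithm with the same soundness and completeness mistake counts, hence the same weighted cost, so a forcing sequence for the prefix algorithm yields one for the original.

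\textbf{Main obstacle.} The main obstacle is checking that \Cref{thm:red2} preserves each mistake type individually rather than just the total count. If it only gives inequalities of the form $M_s^{\mathrm{pfx}}\le M_s$ and $M_c^{\mathrm{pfx}}\le M_c$, these already suffice, because both weights $\gs,\gc$ are nonnegative. I would re-examine that reduction's simulation to confirm that a prefix-completeness mistake is only charged when the chain-of-thought algorithm predicts too early.
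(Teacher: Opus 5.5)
Your route is the paper's. For (a) you use \Cref{thm:red1} plus a weighted SOA whose potential is $\WSCLdim$ of the version space, which is \Cref{alg:SCL-soa} and \Cref{thm:wscl-ub} with the same grafting inequality. For (b) you use the tree-walking adversary of \Cref{thm:scllb}, pulled back through \Cref{thm:red2}. You are also right that the per-type inequalities $M_s^{\mathrm{pfx}}\le M_s$ and $M_c^{\mathrm{pfx}}\le M_c$ suffice, since $\gs,\gc\ge 0$; the proof of \Cref{thm:red2} gives them round by round.

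However, one step fails as written. The grafting inequality $\min\{\gs+\WSCLdim(V^{\no}_z),\,\gc+\WSCLdim(V^{\yes}_z)\}\le\WSCLdim(V)$ holds only when both $V^{\yes}_z$ and $V^{\no}_z$ are nonempty. If one side is empty, the edge leading to it has no realizer, so the grafted tree is not shattered. With the paper's convention $\WSCLdim(\varnothing)=0$, the inequality is then simply false, and your prediction rule inherits the error. Take $H=\{h\}$, so $\WSCLdim(H)=0$, and an instance $z$ with $h(z)=\yes$ and $\gs>\gc>0$. Your rule compares $\gs+0$ with $\gc+0$, predicts $\no$, pays $\gc$, and leaves $V$ unchanged. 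The adversary can repeat $z$ indefinitely, and this happens precisely in the regime $\gs>\gc$ that motivates the weighted objective. The fix is what the paper does: when all consistent verifiers agree on $z$, predict their common value (loss $0$, version space unchanged). Invoke the grafting inequality only when both restrictions are nonempty.

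Two smaller points. First, in your version of \Cref{thm:red1} you cannot hand back all queried prefixes. After a soundness mistake, the prefixes beyond position $y$ violate the promise and their labels are not revealed by $y$. Moreover, for a general prefix learner, feeding the correctly predicted prefixes first can change its state, so its online prediction on the mistaken prefix need not match the query-time one. The paper avoids both issues by updating the prefix learner only on the single mistaken prefix. For your weighted SOA the batch update happens to be harmless: after the round, the version space is contained in the query-time restriction to the revealed label of the mistaken prefix, and $\WSCLdim$ is monotone.

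Second, your claim that the tree nodes ``are chosen'' to respect the prefix promise is not justified. The tree witnessing $\WSCLdim(H)$ is given to you, and the token-$F$ hypothesis of \Cref{thm:red2} is about padding, not about tree nodes. The paper's proof of \Cref{thm:scllb} does not address the promise either. The clean way to obtain it is the partial-concept formalization in the appendix: there, shattering requires each realizer to be defined on every node of its path, which means it accepts every proper prefix of that node.
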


In Sections \ref{sec:reduction} and \ref{sec:lscmistakes}, we provide a formal proof of this result, which is in the spirit of prior results in online realizable regression \cite{10.5555/3666122.3668058}. We summarize our algorithm in Algorithm \ref{alg:cot-wsc-soa}.\looseness-1

\subsection{Improving Weak Generators with Online Verification}\label{sec:interaction}

We now present an application of our online verification guarantees to the problem of improving a weak generator, or even a weak collection of generators, into a highly-accurate generator.  For this section, we assume our verifier-learning algorithm operates in the prefix verification model, where given a prefix $\tau_{1:\ell}$ such that $\tau_{1:\ell-1}$ is correct, it must output whether or not $\tau_\ell$ is correct.
Underlying these results is a generalization of classic mistake-bound-to-PAC conversions that involves repeated applications of the online algorithm to variations of a given instance and crucially uses the fact that our online learning guarantees apply to arbitrary sequences and not just to i.i.d. data. Specifically, the goal is to obtain a PAC-style guarantee for the stochastic process induced by the generator-verifier interaction. In this process, a problem statement is first drawn from a fixed underlying distribution over problems. The subsequent reasoning traces seen by the verifier are then generated adaptively through the interaction between the generator and the verifier, depending on the verifier’s previous decisions and the generator’s subsequent refinements. Thus, the relevant PAC guarantee is with respect to the full, induced process: over fresh sampled problem statements and the adaptive traces produced by the generator-verifier system on those problems.

We will first present a simplified version of a generator, whose output reasoning steps depend only on prefixes. Section~\ref{sec:boost} extends our results to \emph{history-dependent} generators, whose distribution over the next candidate step may depend on the full {\it interaction transcript} for the given problem instance.\looseness-1 

Formally, we define a generator as follows. For any problem instance $x \in X$, $G$ produces a first step $\tau_1$ of a reasoning trace from some distribution $G(x)$. Given $x$ and a first step $\tau_1$, $G$ produces a second step $\tau_2$ from some distribution $G(x,\tau_1)$. In general, given an instance $x$ and a partial reasoning trace $\tau_1, ..., \tau_\ell$ for $\ell<L$, $G$ selects a next step $\tau_{\ell+1}$ from a distribution $G(x,\tau_1, ..., \tau_\ell)$.  

We say that generator $G$ is ``$\alpha$-good'' for an instance $x$ if it has at least an $\alpha$ probability of producing a correct next reasoning step conditioned on the previous steps being correct, where correctness is defined with respect to {\em the same} labeling oracle $\mathcal{O}$ used for training the verifier.

\begin{definition}[$\alpha$-good generator]
    We say that a generator $G$ is {\it $\alpha$-good} for a problem $x \in X$ with respect to oracle $\cO$ if for all $0\leq \ell<L$ and all correct partial reasoning traces $\tau_1, ..., \tau_\ell$ (i.e., $\cO(x,\tau_{1:\ell'})=\yes$ for all $\ell'\leq \ell$) produced by $G$, we have $\Pr_{\tau_{\ell+1} \sim G(x,\tau_1,...,\tau_\ell)}[\cO(x,\tau_1,...,\tau_{\ell+1})=\yes] \geq \alpha.$\looseness-1
\end{definition}
This definition naturally extends to a distribution $D$ over problem instances as follows.
\begin{definition}[$(\alpha, \gamma)$-good generator]
    We say that a generator is \emph{$(\alpha,\gamma)$-good} for a distribution $D$ over problems in $X$ and with respect to oracle $\cO$, if it is $\alpha$-good for at least a $\gamma$ probability mass of the problems under $D$, that is, $\Pr_{x\sim D}[G\text{ is }\alpha\text{-good on }x]\ge\gamma$.
\end{definition}
\begin{figure}[t]
    \centering
    \includegraphics[height=1.5in]{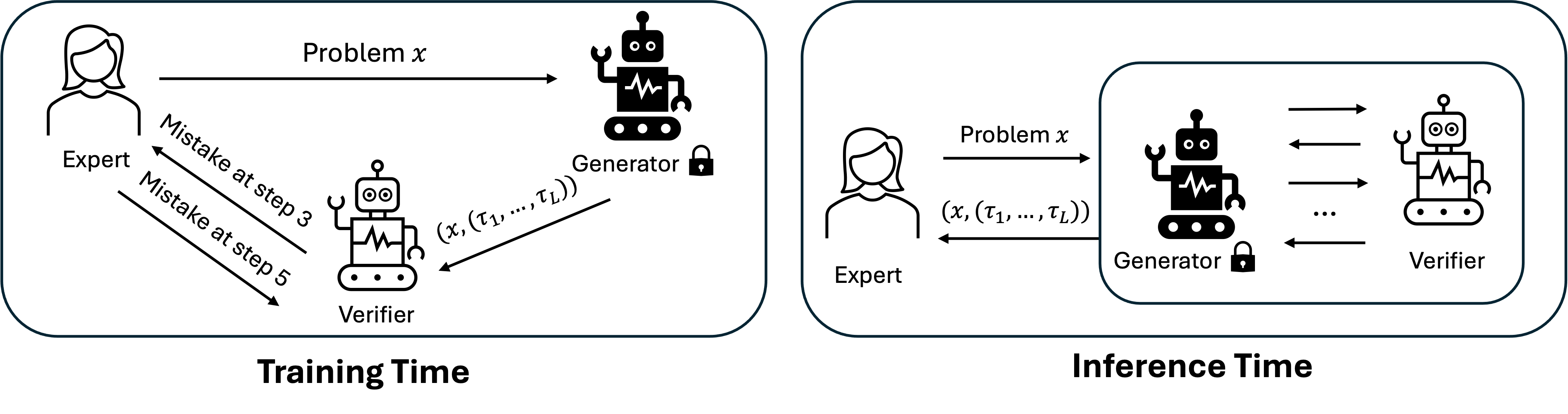}
    \caption{An illustration of the interaction between the fixed generator and the learned verifier at training and inference time.}\label{fig:pvdiagram}
\end{figure}

\begin{remark}\label{rem:oracleassumption}
    In the context of our results, the notion of an $\alpha$-good generator implies that given a problem $x \in X$, any correct prefix $\tau_1, ..., \tau_\ell$ generated by the generator can be extended to a fully correct reasoning trace $\tau_1, ..., \tau_L$. Thus, this notion is most relevant for settings where the oracle ${\cal O}$ is checking both for logical consistency and progress. For example, in grading a student's homework, each step of the argument should not be fallacious {\em and} should make progress towards the problem's solution.  In a planning domain, each step of the plan should both be feasible and helpful.
    We note that an $(\alpha,\gamma)$-good generator would require only that a $\gamma$ fraction of $x$'s to satisfy this property.\looseness-1
\end{remark}

We observe that if a generator is $\alpha$-good for instance $x$, then it might still have only an $\alpha^L$ probability of producing a fully correct reasoning trace for $x$, which is exponentially small in the length of the trace. Nevertheless, we show that a learned verifier can amplify this \emph{exponentially}, boosting any such weak generator into one that produces correct reasoning traces with high probability.\looseness-1

In fact, we can weaken our condition further by considering a {\em set} of generators ${\cal G} = \{G_1, ..., G_N \}$ to be $\alpha$-good for a problem instance $x$ if, for each step $\ell$, {\em at least one} of the generators $G_i \in {\cal G}$ has probability  $\geq \alpha$ of producing a correct next step, with perhaps different generators for different steps. For instance, this could model a setting where solving an instance $x$ requires first doing some calculus, then a proof by induction, and then a geometric argument, where different generators specialize in different kinds of reasoning.
\begin{definition}[good {\em set} of generators]
    We say that a {\em set} of generators ${\cal G}=\{G_1,...,G_N\}$ is {\it $\alpha$-good} for a problem instance $x \in X$ with respect to correctness oracle $\cO$ if for all $0\leq \ell<L$, for all correct partial reasoning traces $\tau_1, ..., \tau_\ell$ (i.e., $\cO(x,\tau_{1:\ell'})=\yes$ for all $\ell'\leq \ell$) generated by ${\cal G}$, there exists $i\in [N]$ such that $ \Pr_{\tau_{\ell+1} \sim G_i(x,\tau_1,...,\tau_\ell)}[\cO(x,\tau_1,...,\tau_{\ell+1})=\yes] \geq \alpha.$
    ${\cal G}$ is $(\alpha,\gamma)$-good for distribution $D$ if it is $\alpha$-good for at least a $\gamma$ probability mass of problems under $D$.
\end{definition}

We now present the main result of this section and outline our algorithmic procedure and proof techniques. Formal arguments are presented in \Cref{sec:boost}.

\begin{theorem}(Informal; \Cref{{thm:wrapper}})
 Under the assumption that $\cO=h^*\in H$ (there exists a perfect verifier in $H$), we can use an online verification algorithm $V_H$ together with an $(\alpha,\gamma)$-good set of generators ${\cal G}$ and, using a limited number of draws of problem instances $x \sim D$ and a limited number of calls to the labeling oracle $\cO$, learn a highly-accurate generator $\vhp$.\footnote{In the language of \citet{rohatgi2026taming}, we are learning a verifier that can successfully boost ${\cal G}$ using action-level rejection sampling. However, we are only aiming to generate correct reasoning traces, not to sample from the conditional.\looseness-1}  Specifically, with probability $1-\delta$, the generator $\vhp$ has the property that:\looseness-1
\begin{enumerate}[itemsep=2pt, topsep=2pt, parsep=2pt, partopsep=2pt, leftmargin=*]
    \item Given a problem instance $x$, $\vhp$ outputs either ``I don't know'' or a reasoning trace $\vtau$.
    \item The probability over $x\sim D$ that $\vhp$ outputs  ``I don't know'' is at most $(1-\gamma)+\epsilon_c + \epsilon_s + \delta$.\looseness-1
    \item The probability over $x\sim D$ that $\vhp$ outputs an incorrect reasoning trace is at most $\epsilon_s$,
\end{enumerate}
where $\epsilon_c$ and $\epsilon_s$ are functions of the completeness and soundness mistake bounds (respectively) of the verification algorithm $V_H$.  
In particular, if $V_H$ is a fully-sound verification algorithm then $\epsilon_s = 0$.
\end{theorem}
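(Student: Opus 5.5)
The plan is to build $\vhp$ as a verifier-filtered step-by-step sampler and to analyze its training with a mistake-bound-to-PAC conversion in the style of \citet{littlestone:1989}, adapted to the process induced by the generator--verifier interaction.

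\textbf{The inference-time generator.} Given $x$ and a current accepted prefix $\tau_{1:\ell}$, for each $G_i\in{\cal G}$ we draw up to $r=\lceil \alpha^{-1}\ln(NL/\delta')\rceil$ candidate next steps from $G_i(x,\tau_{1:\ell})$. Each candidate is submitted to the current prefix verifier $V_H$ (in the prefix model, whose promise holds as long as the accepted prefix is correct). We append the first candidate it accepts. If no candidate across all generators is accepted, we output ``I don't know''. After $L$ steps we output the trace.

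\textbf{Training.} We run the same procedure on fresh draws $x\sim D$, but now we query $\cO$ on every prefix that $V_H$ evaluates. Whenever $V_H$ errs, we feed the correct label back to $V_H$ as an online example. Because the online guarantee holds for arbitrary adaptively chosen sequences, the total number of prefix-soundness mistakes over the whole training run is at most $M_s$, and the total number of prefix-completeness mistakes is at most $M_c$. The prefix mistake bounds are taken from \Cref{thm:scub} (or the linear-cost analogue), via \Cref{thm:red1}/\Cref{thm:red2}.

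We partition training into consecutive blocks of $B=\lceil \epsilon^{-1}\ln(M/\delta)\rceil$ problems each, where $M=M_s+M_c$. We freeze the verifier state whenever a block passes with no mistake on any of its problems, and output the frozen state. Since at most $M$ blocks can contain a mistake, some block among the first $M+1$ is clean. A standard union bound over at most $M+1$ frozen hypotheses then gives the following. With probability $1-\delta$, the returned state has probability at most $\epsilon$, over $x\sim D$ and the generator's coins, of making a soundness mistake during the run on $x$, and likewise for completeness mistakes. Tracking the two types separately, with separate thresholds, yields $\epsilon_s$ and $\epsilon_c$ in terms of $M_s$ and $M_c$. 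If the verifier is fully sound, then $M_s=0$, no block can fail for soundness, and $\epsilon_s=0$.

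\textbf{Bounding the two error rates.} An incorrect trace can only be output if some accepted step was wrong, which is a soundness mistake on that run. Hence the probability of an incorrect output is at most $\epsilon_s$.

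For abstention, suppose $x$ is $\alpha$-good. On a run with no verifier mistakes, every accepted prefix is correct, and the verifier accepts exactly the correct candidates. At each step, some $G_i$ produces a correct candidate with probability at least $1-(1-\alpha)^r\ge 1-\delta'/(NL)$. A union bound over the $L$ steps shows abstention on such $x$ happens with probability at most $\delta'$ plus the probability of a verifier mistake, which is at most $\epsilon_c+\epsilon_s$. Adding the $1-\gamma$ mass of problems that are not good, and setting $\delta'=\delta$, gives the bound $(1-\gamma)+\epsilon_c+\epsilon_s+\delta$.

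\textbf{Main obstacle.} The delicate step is making the ``clean block implies low error'' argument valid even though the examples fed to $V_H$ are generated adaptively by the verifier's own decisions, and hence are not i.i.d. The resolution I would pursue is to make the test event per-problem. Conditioned on the frozen state, each fresh $x$ together with its run is an independent draw from the process induced by that fixed state. The verifier does not update unless a mistake occurs, so a clean block really consists of $B$ i.i.d. trials under a fixed hypothesis.

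One subtlety needs care: for this to hold, $V_H$ must query $\cO$ on every prefix it evaluates, including rejected candidates. Otherwise completeness mistakes on rejected correct candidates would go undetected. The number of oracle calls is then $O(NrL)$ per problem, times $B(M+1)$ problems, and this is the ``limited number of calls'' in the statement.
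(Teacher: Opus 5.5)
Your proposal is correct for the informal statement and uses the paper's architecture. The shared pieces are:
\begin{itemize}
\item the same verifier-filtered, step-by-step rejection sampler with a timeout of $\lceil\alpha^{-1}\ln(NL/\delta)\rceil$ tries;
\item oracle labels on every candidate the verifier judges during training;
\item the observation that an incorrect output implies a soundness mistake on a promised prefix;
\item the decomposition of abstention into bad instances, verifier mistakes, and timeouts on good instances.
\end{itemize}

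\textbf{Where you differ: the online-to-batch step.} The paper (\Cref{thm:wrapper}, \Cref{alg:vhp}) uses Littlestone's two-stage conversion. It processes $S_1$ to produce at most $M_s+M_c$ candidate verifiers. A martingale Chernoff argument, which charges soundness errors against $M_s$ and completeness errors against $M_c$ \emph{separately}, shows some candidate has $err^s\le\frac{\epsilon}{2}\frac{M_s}{M_s+M_c}$ and $err^c\le\frac{\epsilon}{2}\frac{M_c}{M_s+M_c}$. It then selects on a fresh $S_2$ using two separate empirical thresholds. You instead output the state at the first mistake-free block, with a union bound over at most $M+1$ blocks. This is more elementary: there is no martingale argument and no validation phase. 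Your coupling observation handles the adaptivity cleanly: a block is clean precisely when $B$ independent runs under its initial state are all mistake-free.

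\textbf{What your route costs.} As you set it up, updates are immediate and a block certifies a state only if it is entirely clean. So the block test certifies only the union event, at a single rate $\epsilon$, and your sentence about ``separate thresholds'' is not delivered by that mechanism. For the informal statement this is enough. Take $\epsilon_s=\epsilon_c=\epsilon$, and $\epsilon_s=0$ when $M_s=0$. The latter holds because any soundness error of the frozen state on a promised prefix would be a soundness mistake of $V_H$ on a realizable continuation of its training sequence.

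However, it loses the asymmetry the paper uses downstream. With $M_s=1$ and $M_c=\Theta(n)$, as in \Cref{sec:bc-lower-bound}, your scheme gives only $\epsilon_s=\widetilde O(n/m)$, which would not separate it from the behavior-cloning lower bound. Lengthening blocks to $1/\min(\epsilon_s,\epsilon_c)$ does not help, since there can still be $\Theta(n)$ blocks.

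\textbf{How to recover the paper's rates.} You can get them, up to a logarithm, inside your framework (for $M_s,M_c\ge 1$):
\begin{itemize}
\item freeze the state for the whole block;
\item test soundness over a window of $\epsilon_s^{-1}\ln\frac{2(M+1)}{\delta}$ runs and completeness over a window of $\epsilon_c^{-1}\ln\frac{2(M+1)}{\delta}$ runs;
\item end the block at the first in-window failure and update only on that mistake.
\end{itemize}
Then at most $M_s$ blocks end by a soundness failure and at most $M_c$ by a completeness failure. The total is $O\bigl((\frac{M_s}{\epsilon_s}+\frac{M_c}{\epsilon_c}+\frac{1}{\min(\epsilon_s,\epsilon_c)})\log\frac{M}{\delta}\bigr)$ problems.

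\textbf{Small repairs.}
\begin{itemize}
\item Within a training run, feed $V_H$ only mistakes on prefixes satisfying the promise. After the first soundness mistake in a run the accepted prefix is incorrect, so ``errs'' is undefined for later queries and the bound $M$ need not cover updates on them. The paper updates only on the first mistake per example.
\item Use $\ln\frac{M+1}{\delta}$ rather than $\ln\frac{M}{\delta}$ in the union bound over $M+1$ blocks.
\item Drawing $r$ candidates per generator in sequence relies on the generators being memoryless. For the history-dependent version in \Cref{thm:wrapper}, sample all $N$ generators in each round, as the paper does, since the good generator index may change with the transcript.
\end{itemize}
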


{\bf Algorithmic Approach and Proof Overview.} Given an online verification algorithm $V_H$, a set of $N$ generators ${\cal G}$, a value $\alpha$ such that ${\cal G}$ is $(\alpha,\gamma)$-good for some $\gamma>0$, and access to samples from $D$ and to the oracle $\cO$ for training, the algorithm to create $\vhp$ proceeds as follows.\looseness-1

First, as in the classic mistake-bound-to-PAC conversion of Littlestone \citep{littlestone:1989}, we draw two sets of examples $S_1,S_2$ from $D$, using $S_1$ to generate a series of candidate verifiers and $S_2$ to test and select from them.  However, unlike the classic reduction, processing an example $x$ will involve multiple calls to the verification algorithm and the labeling oracle.  Specifically, given $x \in S_1$, we run all generators $G_j\in {\cal G}$ on $x$ to generate candidate first-steps of a reasoning trace, and ask $V_H$ if any of them were correct. If all candidate first steps $\tau_{1j}$ were deemed incorrect, then we re-run all generators in ${\cal G}$ to generate new candidate first steps, repeating this process until either we have found a first-step deemed correct by $V_H$ or we time out after $O(\frac{1}{\alpha}\log \frac{NL}{\delta})$ tries, concluding that $x$ must not have been in the good set for ${\cal G}$. 
If we found at least one first step $\tau_1$ deemed correct by $V_H$, we fix it and now run all generators $G_j\in {\cal G}$ to generate candidate second-steps, and so on, repeating this process until either we time out at some step $\ell$ or we reach step $L$ with a completed reasoning trace. If we produce a completed trace, we ask the labeling oracle to verify it, and charge the verifier with a soundness mistake (making progress) if any step in that trace is incorrect.  On the other hand, if we time out, we ask the labeling oracle to check the verifier's responses, and update the verifier on its first mistake (completeness or soundness) if it made one.  Once we have completed processing an example, which might involve updating the verifier, we then move on to the next $x\in S_1$.\looseness-1

Once we have completed processing $S_1$, we then test all verifiers produced on $S_2$.  As usual, we may assume without loss of generality that $V_H$ is a ``conservative'' algorithm, only changing its current verifier when a mistake is made.  So, if $V_H$ has total mistake-bound $M$, we have at most $M$ candidate verifiers to test. One additional difference with the standard online-to-PAC conversion is that we care separately about soundness and completeness errors, so will need to account for this in the analysis.\looseness-1

Finally, we use the final candidate verifier to create the generator $\vhp$ by processing each new example $x$ as above, but without the labeling oracle (just assuming our verifier is accurate).  If our verifier times out, we output ``I don't know''; else we output the produced reasoning trace.\looseness-1

{\bf The need for interaction.} The above procedure trains the verifier on prefixes generated under its own evolving predictions. We now show that this on-policy supervision cannot in general be replaced by behavior cloning on interactions generated under the perfect verifier if we want to maintain good soundness/completeness tradeoffs. At a high level, this is because at inference time early completeness mistakes made by the verifier may lead to soundness mistakes later on. However, training using demonstrations from the perfect verifier will never expose such events occurring.\footnote{\cite{viano2026doesonpolicyinteractionhelp} show a different kind of imitation learning problem where interaction also provably helps.}\looseness-1

\begin{theorem}[Informal; \Cref{thm:bc-lower-bound}]
There is a verifier class $H$ with VC dimension $d$ and generator such that any possibly randomized and improper behavior-cloning learner trained on $m\gtrsim d$ expert verifier transcripts must incur either constant completeness error or soundness error $\Omega(d/m)$. In contrast, interactive training on learner-induced transcripts achieves, up to logarithmic and confidence factors, $\epsilon_s=\widetilde O(1/m)$ and $\epsilon_c=\widetilde O(d/m)$.
\end{theorem}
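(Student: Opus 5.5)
The theorem has two halves, and I would prove them separately: a lower bound for every behavior-cloning learner, and a matching upper bound for interactive training.

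\textbf{Lower bound.} I would build a history-dependent generator whose trajectory forks on the verifier's first decision.

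\begin{itemize}
\item \emph{Problems and verifier class.} Problems are drawn uniformly from $d$ ``probe'' instances $x_1,\dots,x_d$. The verifier class $H$ is indexed by a hidden bit vector $b\in\{0,1\}^d$, plus a fixed set of first steps.
\item \emph{Generator, first step.} On $x_i$ the generator proposes a first step $\sigma_i$, which is always correct.
\item \emph{Branch on acceptance.} If $\sigma_i$ is accepted, the generator continues with a benign correct step. This is the only behaviour the expert $h^*$, which always accepts $\sigma_i$, ever produces.
\item \emph{Branch on rejection (a completeness error).} If $\sigma_i$ is rejected, the generator retries. Its next proposal is a ``hidden'' step $\rho_i$ whose correctness is $b_i$.
\item \emph{Making $\rho_i$ hard to reach under the expert.} I would also let expert transcripts reach $\rho_i$ through some rare event of probability about $1/m$ per instance. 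Alternatively, I would embed the $d$ hidden steps as the $d$ shattered points of a VC-dimension-$d$ class over a distribution whose mass on each point is about $1/m$.
\item \emph{Counting unseen coordinates.} With $m$ expert transcripts, a constant fraction of the $b_i$ coordinates are never observed.
\end{itemize}

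The learner then faces a dilemma. If it accepts every $\sigma_i$, it never reaches $\rho_i$, so this instance needs a second lever. Concretely, I would make the generator stochastically produce either $\sigma_i$ or a variant $\sigma_i'$ whose correctness is also determined by $b_i$ (a standard Ehrenfeucht-style construction), so that a learner that never rejects must guess on unseen coordinates. Formally the dichotomy is as follows.

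\begin{itemize}
\item A learner with completeness error $o(1)$ must accept the relevant correct first steps. Then on each unseen coordinate it either accepts $\rho_i$ blindly or rejects, and rejecting incurs a completeness cost.
\item Averaging over a uniformly random $b$ (Yao's principle, to handle randomized and improper learners) gives soundness error $\Omega(d/m)$.
\item The key step is a coupling argument: the expert transcripts have the same distribution under $b$ and under $b$ flipped on unseen coordinates. So the learner's output is independent of those bits, and it errs with probability $1/2$ on each one.
\end{itemize}

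\textbf{Upper bound.} I would invoke \Cref{thm:wrapper}, which gives $\epsilon_s,\epsilon_c$ in terms of the soundness and completeness mistake bounds $k$ and $M$ via the Littlestone-style online-to-PAC conversion. Choose the soundness budget $k=1$. A sound-plus-one-budget online verifier for this class then has total mistakes $M=\mathrm{SC\text{-}Ldim}(H,1)=O(d)$, which is computed directly for the construction, as in Proposition~\ref{prop:rn}. The conversion's soundness error scales like $k\log(1/\delta)/m$ and its completeness error like $M\log(M/\delta)/m$, yielding $\widetilde O(1/m)$ and $\widetilde O(d/m)$ respectively.

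\textbf{Main obstacle.} The hard part is designing the class so that three things hold at once:
\begin{enumerate}
\item the hidden steps are information-theoretically invisible under expert play;
\item the class still admits an $O(d)$ online mistake bound with soundness budget $O(1)$, so a single soundness mistake must reveal the whole structure (e.g.\ a shared ``key'');
\item the VC dimension stays $d$.
\end{enumerate}
I would first try a key-based class in which all $\rho_i$ share one correctness parameter. A single soundness mistake then identifies the parameter, while the offline data never exposes it.
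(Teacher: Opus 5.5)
\textbf{Verdict: there is a genuine gap in the lower-bound half.} Your upper-bound half matches the paper in outline: plug $M_s=1$ and $M_c=\Theta(d)$ into \Cref{thm:wrapper}. However, it presupposes a concrete class that you never fix. The lower-bound half fails because nothing in your construction forces a behavior-cloning learner into the dichotomy ``constant completeness error or $\Omega(d/m)$ soundness error.''

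Consider the learner that accepts every $\sigma_i$ (always correct, and accepted in every expert transcript) and rejects every step whose label it has not seen.
\begin{itemize}
\item As you note yourself, accepting $\sigma_i$ means $\rho_i$ is never reached.
\item On the unseen variants $\sigma_i'$ this learner pays completeness error of total probability $O(d/m)$. As far as your generator is specified, it pays no soundness error.
\item Rejecting a hidden step that is proposed only after a completeness error adds nothing to $\epsilon_c$. Such a rejection occurs only on runs that already contain a completeness error.
\item Worse, you tie $\rho_i$'s correctness to the same bit $b_i$ that governs $\sigma_i'$. So if rejecting $\sigma_i'$ also sends the generator to $\rho_i$, a false rejection (which means $b_i=1$) is followed by a correct $\rho_i$. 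The failure mode can never manufacture a soundness error.
\end{itemize}
Your coupling argument shows the learner errs with probability $1/2$ on each unseen coordinate, but not which kind of error. A learner that resolves all uncertainty by rejecting gets $\epsilon_s=0$ and $\epsilon_c=O(d/m)$, so the claimed bound fails for this construction.

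\textbf{The missing idea is a needle-in-a-haystack layer that makes blanket rejection of the failure-mode step cost a constant.} The paper uses $H_\beta=R_{n,b}\times R_{n'}$.
\begin{itemize}
\item The failure-mode step is the single incorrect point $\sigma^\star$, hidden uniformly in $\Sigma'$ with $|\Sigma'|=n'\ge 8m$. In normal mode the generator draws from $\Sigma'$ with probability $1-\rho\ge 7/8$.
\item $\sigma^\star$ is unseen with probability at least $7/8$ and is exchangeable with the unseen correct points of $\Sigma'$. Hence any $\hat h$ either rejects half of the unseen part of $\Sigma'$, giving constant completeness error, or accepts $\sigma^\star$ with probability at least $1/2$.
\item Only in the latter case does the ambiguous layer $\Sigma$ give the dichotomy. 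There a random $\beta n$-subset is incorrect, with total sampling mass $\rho=n/(8m)$. Accepting unseen $\Sigma$ points is a soundness error of order $\beta\rho$.
\item Rejecting them instead is a completeness error of order $(1-\beta)\rho$. This triggers failure mode, which outputs $\sigma^\star$ with probability $1/2$, giving a soundness error of order $\rho$.
\end{itemize}

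This also corrects your requirement that one soundness mistake reveal the whole structure. It need only reveal the needle $\sigma^\star$, while $\Sigma$ is learned soundly with $n-b$ completeness mistakes. That gives $\SCLdim(H_\beta,1)=n-b+1$ at VC dimension $b+1$ (\Cref{prop:two-scale-frontier}), which is exactly what the wrapper needs.
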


\begin{proof}[Proof Sketch]
Let $\Sigma$ and $\Sigma'$ be disjoint, with $|\Sigma|=n$ and $|\Sigma'|\gg m$. A target verifier rejects a random constant fraction of $\Sigma$ and one hidden point $\sigma^\star\in\Sigma'$. This class has VC and Littlestone dimension $\Theta(n)$ and admits an online learner with soundness and completeness mistake bounds $M_s=1$ and $M_c=O(n)$. The generator samples from $\Sigma$ with probability $\rho=\Theta(n/m)$ and otherwise from $\Sigma'$. However, after the verifier rejects a correct candidate, it enters a ``failure mode" and proposes $\sigma^\star$ with constant probability. Because the expert verifier never makes such a mistake, behavior-cloning data contains only ``normal-mode" samples, leaving most labels in $\Sigma$ and the identity of $\sigma^\star$ unseen. A learned verifier must therefore either reject many unseen points of $\Sigma'$, causing constant completeness error, or accept $\sigma^\star$ with constant probability. In the latter case, accepting many unseen points of $\Sigma$ directly causes soundness error $\Omega(\rho)$, while rejecting many instead causes a completeness mistake with probability $\Omega(\rho)$, triggers the ``failure mode", and again produces soundness error $\Omega(\rho)=\Omega(n/m)$. Yao's minimax principle covers randomized and improper learners. In contrast, interactive training avoids this failure by obtaining oracle labels on the trajectories induced by its own predictions. Substituting $M_s=1$ and $M_c=O(n)$ into \Cref{{thm:wrapper}} yields the desired rates.\looseness-1
\end{proof}
\begin{remark}\label{rem:offline}
\textbf{Training the verifier without online access to oracle queries.} Our primary goal in studying \emph{online} verification is to improve a weak generator that may \emph{adapt} to the verifier's feedback on problem statements drawn from a fixed distribution. We note that in some cases, this can be achieved using only offline access to correct reasoning traces. Consider the setting where for a given problem statement $x \in X$, there is only a small set of correct reasoning traces $\vtau^*(x) = \left\{
    \tau\in\Sigma^{L} :
    \cO(x,\tau_{1:\ell})=\yes \text{ for all } \ell\in[L]
    \right\}$. Then under Remark \ref{rem:oracleassumption} (i.e., for every prefix $\rho_{1:\ell} \in \Sigma^{\leq L}$, $\cO(x,\rho_{1:\ell})=\yes \; \Longleftrightarrow \; \exists \tau\in\vtau^*(x) \text{ such that } \rho_{1:\ell} = \tau_{1:\ell}$), a dataset $S=\left\{(x^{(i)},\vtau^*(x^{(i)}))\right\}_{i=1}^N$ can be used to answer the oracle queries made while training (in
\Cref{alg:process example,alg:test hypothesis}) on the sampled problem statements in $S$: a queried prefix is labeled $\yes$ if and only if it is a prefix of some trace in $\vtau^*(x^{(i)})$. Consequently, whenever the number of sampled instances satisfies the sample requirement of the online-to-PAC conversion (\emph{cf.} \Cref{thm:wrapper}),
the verifier can be learned online on arbitrary reasoning traces given by the generator using only an offline dataset, which might be easier to obtain in practice. We note that this condition is similar to that considered by \citet{balcan2025learning}, where a gold-standard reasoner can output a small number of correct reasoning traces. This setting does not contradict the behavior-cloning lower bound of Section~\ref{sec:bc-lower-bound} because the offline traces are used to answer oracle queries on prefixes adaptively induced by the learner and the generator. Thus, the labels may come from an offline dataset, but training remains interactive in this sense.\looseness-1
\end{remark}

\section{Online Chain-of-Thought Verification}\label{sec:cot}

We now begin presenting our results and proofs in full detail. This section provides a resolution to the overarching questions (a) and (b) highlighted in the introduction.

\subsection{Warm-up: Sound vs. Non-sound Chain-of-Thought Verification}\label{app:exp-gap-finite}

We start our analysis with a warm-up that highlights a clear distinction in terms of optimal mistake bounds between sound and non-sound chain-of-thought verification.

\begin{restatable}{proposition}{finiteclassbounds}\label{thm:finiteclassbounds}
    For any finite class of verifiers, $H$,  there exists an online verification algorithm that makes at most $O(\log |H|)$ mistakes. Moreover, there exists a class of verifiers such that any online learning algorithm must make at least $\Omega(\log |H|)$ mistakes in expectation.
\end{restatable}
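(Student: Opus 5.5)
For the upper bound I would run a halving algorithm directly in the chain-of-thought model. The learner maintains the version space $V^{(t)} \subseteq H$ of verifiers consistent with all labels revealed so far, starting from $V^{(1)} = H$. Given $(x^{(t)},\tau^{(t)})$, every $h \in V^{(t)}$ induces a label $y_h \in \cY$, namely the first index $\ell$ with $h(x^{(t)},\tau^{(t)}_{1:\ell})=\no$, or $\infty$ if there is none. The learner scans $\ell = 1,2,\dots,L$ and asks whether a strict majority of the surviving verifiers have $y_h = \ell$ among those with $y_h \ge \ell$. It predicts the first such $\ell$, and predicts $\infty$ if none exists.

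The key step is to check that every mistake removes at least half of the current version space. Let $\hat y$ be the prediction and $y$ the revealed label, and write $A_\ell = \{h \in V : y_h \ge \ell\}$.

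\textbf{Case $\hat y < y$.} The prediction $\hat y$ was chosen because a majority of $A_{\hat y}$ had $y_h = \hat y$. All of these verifiers are inconsistent with $y$, so we must still bound $|A_{\hat y}|$ against $|V|$. At every earlier index $\ell' < \hat y$ at most half of $A_{\ell'}$ stopped, which only gives $|A_{\hat y}| \ge |V|/2^{\hat y - 1}$, and this is too weak.

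\textbf{Fix.} I would change the rule to a weighted-majority style prediction: predict the median of the multiset $\{y_h : h \in V\}$ under the order $1 < 2 < \dots < L < \infty$. If $y > \hat y$, then every $h$ with $y_h \le \hat y$ is inconsistent, and since $\hat y$ is the median these are at least half of $V$. Symmetrically, if $y < \hat y$, every $h$ with $y_h \ge \hat y$ is inconsistent, again at least half of $V$.

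Consistency here deserves a remark. A verifier $h$ is consistent with label $y$ only if $y_h = y$, because $h^*$'s label on the trace is exactly the oracle's $y$. Verifiers whose behaviour after the first faulty step differs are still consistent, but they share the same $y_h$, so the argument is unaffected.

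By realizability $h^* \in V^{(t)}$ always, so $|V^{(t)}| \ge 1$. Each mistake at least halves $|V^{(t)}|$, so there are at most $\log_2|H|$ mistakes. I expect the main subtlety to be exactly this choice of median rather than a per-step majority; the per-step version fails as shown above.

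For the lower bound I would embed standard binary online learning of a finite class with Littlestone dimension $\log_2|H|$. Take $L=1$, so that $\cY = \{1,\infty\}$ is just the binary label $\no/\yes$ on a single step. Let $X = \{x_1,\dots,x_d\}$ and $\Sigma = \{\sigma\}$, and let $H = \{h_b : b \in \{0,1\}^d\}$ with $h_b(x_i,\sigma) = b_i$, so $|H| = 2^d$.

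Draw the target $b$ uniformly at random and present $x_1,\dots,x_d$ in order. Each label $b_i$ is independent of everything the learner has seen, so any, possibly randomized, learner errs at each round with probability $1/2$. This gives expected mistakes $d/2 = \Omega(\log|H|)$. The sequence is realizable by construction, which completes the proposition.
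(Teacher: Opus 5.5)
Your proof is correct. The upper bound uses the same halving principle as the paper. The lower bound uses a different construction that is equally valid.

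\textbf{Upper bound.} The paper scans prefixes and rejects at the first $\ell$ where a majority of the \emph{entire} current version space fails to accept $(x,\tau_{1:\ell})$. On a completeness mistake it discards the at-least-half that rejected the correct prefix $\hat y$. On a soundness mistake it discards the at-least-half that accepted the faulty prefix $y<\hat y$. Your median over the induced labels $y_h$ is a different aggregation rule: it treats the task as multiclass prediction over the ordered set $1<\cdots<L<\infty$ and never inspects per-prefix outputs beyond $y_h$. Both rules yield $\log_2|H|$.

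One correction: your claim that the per-step conditional majority fails is wrong. When $\hat y<y$, every $h$ with $y_h<\hat y$ is also inconsistent with $y$. The discarded set therefore has size at least $(|V|-|A_{\hat y}|)+|A_{\hat y}|/2\ge|V|/2$. When $\hat y>y$, the survivors are exactly those with $y_h=y$. There are at most $|A_y|/2\le|V|/2$ of them, because step $y$ did not trigger a prediction. So that rule works too, and the switch to the median was unnecessary, though harmless.

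\textbf{Lower bound.} The paper fixes a single prompt and takes $H=\{h_b:b\in\{0,1\}^L\}$ on binary traces of length $L$, so $\log_2|H|=L$. Its adversary advances the first-error location by a random one or two positions per round, forcing a mistake with probability $1/2$ in roughly $L/2$ rounds. You instead take $L=1$ with $d$ prompts, which is the textbook binary lower bound for $\{0,1\}^d$.

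Your version is simpler and handles randomized learners cleanly by averaging over the random target $b$. The paper's version shows that the $\Omega(\log|H|)$ cost already arises from the length of a single chain of thought, which is more specific to this setting.
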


\begin{proof}
    We consider the majority vote verification algorithm, which given example $(x,\tau_{1:L})$ sequentially verifies each subtrace $(x, \tau_{1:\ell})$. It accepts the entire reasoning trace only if a majority of verifiers consistent with the history so far (initially the full set $H$) accept each sub-trace $(x,\tau_{1:\ell})$, and else rejects at the first step where the majority fails to accept. Each time there is a mistake, we discard all inconsistent verifiers. We claim that this algorithm suffers at most $\log |H|$ mistakes. Indeed, if we make a soundness mistake and accept an incorrect $(x,\tau_{1:\ell})$ (either accept $(x,\tau)$ or say the first mistake is after step $\ell$), it must be that at least half of the surviving verifiers accept $(x,\tau_{1:\ell})$ and we can discard them. Similarly, if we wrongly mark $(x,\tau_{1:\ell})$ as incorrect, we again discard at least half the remaining consistent verifiers. The perfect verifier $h^*$ never makes a mistake and is therefore never discarded. Thus, the number of consistent verifiers after $M$ mistakes is at most $|H|/2^M$ and at least one. This implies $M\le \log |H|$ always.

    For the lower bound, consider the setting, where $X = \{x\}, \Sigma = \{0, 1\}$, and a correct proof is an unknown length-$L$ binary sequence, $b^*$. Then we can define the class of verifiers, $H = \{h_b : b \in \{0, 1\}^L\}$, where
    \begin{align*}
        h_b(x, \tau_{1:\ell}) = \begin{cases} 
            \yes \;\; \text{if} \; \tau_\ell = b_\ell \\ 
            \no \;\;\;\; \text{if} \; \tau_\ell \neq b_\ell
        \end{cases}
    \end{align*}
    The adversary can then force $\Omega(L
    ) = \Omega(\log|H|)$ mistakes in expectation as follows. Set $y^{(0)}$ = 1. At timestep $t$ present $\tau^{(t)} = (b^*[:y^{(t-1)}], \mathbf{1}_{L-t})$. Then set the location of the first incorrect step $y^{(t)} = y^{(t-1)}+1$ or $y^{(t)} = y^{(t-1)}+2$ with probability $1/2$ each. Then any verification algorithm will make a mistake with probability at least $1/2$ for at least $L/2$ timesteps.
\end{proof}

\begin{restatable}{proposition}{finiteclassboundssound}\label{thm:finiteHsound}
    For any finite class of verifiers, $H$, there is a sound online verification algorithm that makes at most  $|H|-1$ mistakes. Moreover, there exists a class of verifiers, $H$, for which any sound verification algorithm cannot make fewer than $|H|-1$ mistakes.
\end{restatable}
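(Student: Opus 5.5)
The plan has two parts: an upper bound via a most-cautious version-space algorithm, and a lower bound via a class of "single-rejection" verifiers.

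\textbf{Upper bound.} I would use the most conservative version-space verifier. Keep the set $V\subseteq H$ of verifiers consistent with all feedback so far; initially $V=H$. Given $(x,\tau_{1:L})$, scan $\ell=1,\dots,L$ and predict the first $\ell$ at which \emph{some} $h\in V$ outputs $\no$ on $(x,\tau_{1:\ell})$. If there is no such $\ell$, predict $\infty$.

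Soundness follows from realizability. Since $h^*\in V$ always, and $h^*$ agrees with $\cO$ on every prefix whose earlier steps are correct, $h^*$ rejects at exactly the true index $y$. Hence the prediction $\hat y$ satisfies $\hat y\le y$, so no soundness mistake is ever made.

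For the mistake count, suppose a completeness mistake occurs, i.e. $\hat y<y$. Then the prefix $\tau_{1:\hat y}$ is correct, and so is every shorter prefix. The verifier $h\in V$ that rejected at $\hat y$ therefore disagrees with $\cO$ on a prefix all of whose earlier steps are correct. Since $h^*$ agrees with $\cO$ there, $h\neq h^*$, and $h$ is removed from $V$. So each mistake removes at least one verifier, and $h^*$ is never removed. This gives at most $|H|-1$ mistakes.

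\textbf{Lower bound.} I would take $X=\{x\}$, $\Sigma=\{\sigma_1,\dots,\sigma_n\}$, $L=1$, and $H=\{h_1,\dots,h_n\}$, where $h_i$ rejects exactly $\sigma_i$. In this class a sound algorithm must reject $\sigma_j$ unless it is certain that $\sigma_j$ is correct.

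The adversary keeps a set $U$ of indices still consistent with the history, starting from $U=[n]$. While $|U|\ge 2$, it presents a trace $\tau=(\sigma_j)$ with $j\in U$ and argues as follows:
\begin{itemize}
\item If the algorithm accepts $\sigma_j$, the adversary could make $h_j$ the target, since it is consistent with the history. That would be a soundness mistake, so a sound algorithm must output $\no$ (label $1$) on every $j\in U$ while $|U|\ge 2$.
\item The adversary then declares $\sigma_j$ correct, which is consistent with any $h_i$, $i\in U\setminus\{j\}$. This is a completeness mistake, and $j$ is removed from $U$.
\end{itemize}
This continues for $n-1$ rounds, until $|U|=1$, and the history stays realizable by the remaining $h_i$.

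\textbf{Main obstacle.} The subtle point is how "sound" should be read against an adaptive adversary. It must mean sound against every realizable continuation, so accepting any $\sigma_j$ with $j\in U$ and $|U|\ge 2$ can be punished. I would state this worst-case definition explicitly. For randomized algorithms, soundness with probability one forces the same deterministic behavior on these rounds, so the same argument applies.
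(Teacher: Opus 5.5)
Your proposal is correct and follows the paper's proof. The upper bound uses the same most-cautious version-space rule: reject at the first prefix that some consistent verifier rejects, so every mistake is a completeness mistake that eliminates at least one verifier other than $h^*$. Your lower bound is the paper's single-rejection construction specialized to $L=1$ (the paper takes $h_\sigma$ to reject exactly one full trace $\sigma\in\Sigma^L$, and the adversary always reveals $\infty$), and your remark that soundness with probability one forces rejection whenever $|U|\ge 2$ also extends it to randomized sound algorithms, which the paper does not treat.
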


\begin{proof}
    A sound verification algorithm must accept $(x, \tau_i)$ only if {\it all} verifiers consistent with the history so far accept. If we incorrectly reject a correct trace $(x,\tau_i)$, then at least one of the consistent verifiers (other than $h^*$) rejects it and we discard all such verifiers. Thus, the size of consistent verifiers decreases by at least one on each mistake, so we can make no more than $|H|-1$ mistakes.

    For the lower bound, consider a setting with a single problem statement $X = \{x\}$ and $\Sigma$ such that for any $t \in \N$ the learner is given an example $(x, \tau_1^{(t)}, \dots, \tau_L^{(t)}) \in \{x\} \times \Sigma^L$. We let $\tau^* = (\tau_1^*, \dots, \tau_L^*) \in \Sigma^L$ be the only incorrect reasoning trace for problem instance $x$ and define the class of verifiers, $H = \{h_\sigma: \{x\} \times \Sigma^* \mapsto \{\yes, \no \} , \sigma \in \Sigma^L \; | \; h_\sigma(x, \tau) = \; \no \; \text{iff} \; \tau = \sigma\}$, 
    such that $h_{\tau^*} \in H$ is the only correct verifier. We now consider any sound verification algorithm and an adversary that chooses the realizable sequence of examples. For any reasoning trace $\tau_{1:L}^{(t)}$ the adversary chooses, there will always be one verifier $h^{(t)} \in H$ consistent with the examples seen so far that predicts $\no$ on $\tau_{1:L}^{(t)}$. Therefore, the sound verifier will always predict $L$ as the first faulty step for any such instance. An adversary will then reveal the true label, $\infty$, and $h^{(t)}$ will be the only verifier removed from the set of verifiers consistent with the history so far at this timestep. Therefore, the adversary can guarantee at least $|H|-1$ mistakes against the sound verifier. 
\end{proof}

Depending on  $H$, it may be possible to obtain sharper mistake bounds for learning sound verifiers compared to \Cref{thm:finiteHsound}. 
For example, in the following  (adapted from \cite{balcan2025learning}), we show that there is a sound online verification algorithm that achieves a $O(\log |H|)$ mistake bound. We provide additional examples in Appendix \ref{app:examples}.

\begin{example}[Sound verification of discrete puzzles] We consider the problem of verifying a discrete puzzle given an online sequence of (possibly incorrect) solutions to the puzzle. These puzzles can be encoded as a directed graph where the nodes correspond to puzzle states and edges define reasoning steps that correspond to moves or transitions between the states. The precise rules of the puzzle are encoded by some $h^*$  (to be learned)  from a class $H$ of verifiers defined over the graph. 
As a concrete example, consider a generalized {\it river-crossing} puzzle. The rules that will govern $h^*$ are as follows. A farmer must transport $n$  items across a river using a boat that only holds a limited number of them, subject to constraints on which items can be left alone with each other.  We model the puzzle as a directed graph $G=(V,E)$.  A state $s\in V$ is a vector $s\in\{0,1\}^{n+1}$ giving the river bank of each item and the farmer, where $0$ is the left bank and $1$ is the right bank. A directed edge $(s,s')\in E$ exists iff the move from $s$ to $s'$ satisfies all the constraints. The start and goal states are $s_0=(0, \ldots, 0)$ and $s_g=(1, \ldots, 1)$.
A solution is a valid path from $s_0$ to $s_g$. Now consider prompts $x=(V,E_0)$ where $E_0\subseteq E$ is a set of edges revealed in the prompt, and let a reasoning trace be a sequence of vertices, $\tau \in V^*$. We define a class of verifiers $H_L = \{h_{\tilde{E}}: \tilde{E} \in E\}$ for a fixed maximum length of traces $L$.
For any hidden edge set $\tilde{E}\subseteq E$, define a verifier $h_{\tilde{E}}$ that outputs $\yes$ on $(x=(V, E_0), \tau_{1:t})$ if 
(1) $t=1 \text{ and } \tau_1=s_0$, (2) $2\le t\le L-1 \text{ and } (\tau_{t-1},\tau_t)\in E_0\cup \tilde{E}$, or (3) $t=L,\; (\tau_{L-1},\tau_L)\in E_0\cup \tilde{E},\text{ and } \tau_L=s_g$. Otherwise it outputs $\no$. We will show that there is a sound online verification algorithm with a mistake bound of $|E^*|\le|E|=\log |H|= \log(2^{|E|})$. The online verifier maintains a list $\overline{E}$ of edges, initially empty, and accepts proofs for which the paths consist of edges from the set $\overline{E}\cup E_0$, where $E_0$ is the set of edges in the problem statement. Each time the verifier makes a mistake, it must have rejected an edge $e'$ in $E^*\setminus \overline{E}$, and it can simply update $\overline{E}$ to $\overline{E}\cup \{e'\}$.\label[example]{example:puzzle}
\end{example}

\subsection{Two-way Online-to-Online Reduction to Verifying Prefixes}\label{sec:reduction}
 In this section, we show that online chain-of-thought verification reduces to the problem of verifying \emph{prefixes} of reasoning traces. At a high level, in this setting the learner observes a sequence of prefixes of varying length and must predict whether the last step of those prefixes is correct.

We explicitly construct a chain-of-thought verification algorithm from a prefix verification algorithm $\cA_{\mathrm{pfx}}$, as shown in \Cref{alg:red1}.  Given a labeled example $\bigl((x,\tau_{1:\ell}), y\bigr)$ where $(x,\tau_{1:\ell}) \in X \times \Sigma^{\le L}$ and $y \in \{ \yes,\no \}$, the call $\cA_{\mathrm{pfx}}.\textsc{Update}((x,\tau_{1:\ell}), y)$ updates the internal parameters of $\cA_{\mathrm{pfx}}$. 

\begin{theorem}\label{thm:red1}
Let $H$ be a class of verifiers. Suppose there exists an online algorithm $\cA_{\mathrm{pfx}}$ for the prefix online verification problem that for class $H$ achieves soundness and completeness mistake bounds $M_s$ and $M_c$, respectively. Then there exists an online algorithm $\cA$ for the chain-of-thought verification problem with the same soundness and completeness mistake bounds.
\end{theorem}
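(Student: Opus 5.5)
The plan is to build $\cA$ by running $\cA_{\mathrm{pfx}}$ step by step along each trace, so that every chain-of-thought mistake of $\cA$ is charged to exactly one prefix mistake of $\cA_{\mathrm{pfx}}$ of the same type. The wrapper only ever feeds $\cA_{\mathrm{pfx}}$ labeled examples that satisfy the prefix-verification promise (all earlier steps correct).

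\textbf{Prediction.} Given $(x^{(t)},\tau^{(t)})$, $\cA$ queries $\cA_{\mathrm{pfx}}$ on $(x^{(t)},\tau^{(t)}_{1:1}), (x^{(t)},\tau^{(t)}_{1:2}),\dots$ in order, without updating its state during these queries. It stops at the first $\ell$ where $\cA_{\mathrm{pfx}}$ answers $\no$ and predicts $\hat y^{(t)}=\ell$. If no such $\ell$ exists, it predicts $\infty$.

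\textbf{Update.} After the true label $y^{(t)}$ is revealed, $\cA$ updates $\cA_{\mathrm{pfx}}$ on at most one example, in three cases.
\begin{itemize}
\item If $\hat y^{(t)}=y^{(t)}$, no update is made; $\cA_{\mathrm{pfx}}$ is conservative on this round in the sense that it made no mistake that we feed it.
\item If $\hat y^{(t)}>y^{(t)}$ (a soundness mistake), $\cA$ calls $\textsc{Update}((x^{(t)},\tau^{(t)}_{1:y^{(t)}}),\no)$. This is a legal prefix instance, because steps $1,\dots,y^{(t)}-1$ are correct by definition of $y^{(t)}$. On this prefix $\cA_{\mathrm{pfx}}$ had answered $\yes$, since $y^{(t)}<\hat y^{(t)}$, so this is a prefix soundness mistake.
\item If $\hat y^{(t)}<y^{(t)}$ (a completeness mistake), $\cA$ calls $\textsc{Update}((x^{(t)},\tau^{(t)}_{1:\hat y^{(t)}}),\yes)$. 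This prefix is legal and its last step is correct, because $\hat y^{(t)}<y^{(t)}$. On it $\cA_{\mathrm{pfx}}$ answered $\no$, so this is a prefix completeness mistake.
\end{itemize}

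\textbf{Key technical point.} The guarantee for $\cA_{\mathrm{pfx}}$ is stated over sequences where it predicts on every presented instance and then sees the label. Here, however, we query it many times and only reveal a label on one query per round. I would handle this by arguing that the sequence actually fed to $\cA_{\mathrm{pfx}}$ is the subsequence of mistaken prefixes. Each of these is a realizable labeled example consistent with $h^*$, since $\cO$ agrees with $h^*$ on correct prefixes.

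The answers $\cA_{\mathrm{pfx}}$ gives on unlabeled queries are exactly the predictions it would make if that prefix were the next instance. This holds because we never update between queries, which we may assume by taking $\cA_{\mathrm{pfx}}$ deterministic with state depending only on past labeled examples. So for the adversarial sequence consisting of just the mistaken prefixes, $\cA_{\mathrm{pfx}}$ errs on each of them with the matching type. Its bounds $M_s$ and $M_c$ then bound the number of soundness and completeness mistakes of $\cA$.

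If one insists that $\cA_{\mathrm{pfx}}$ may be non-conservative, the fed sequence is still a valid online sequence on which it errs every round, so the argument is unchanged. The main obstacle is stating this carefully: the reduction relies on $\cA_{\mathrm{pfx}}$'s mistake bound holding for arbitrary realizable sequences, and omitting correctly-predicted queries must not affect its future behavior, which the no-update-between-queries convention ensures.
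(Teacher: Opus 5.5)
Your proposal is correct and follows the paper's proof (Algorithm~\ref{alg:red1}). You predict the first prefix that $\cA_{\mathrm{pfx}}$ rejects. On a mistake you feed back $((x,\tau_{1:\hat y}),\yes)$ or $((x,\tau_{1:y}),\no)$, and each of these is a promise-respecting, oracle-consistent prefix mistake of the same type. Your remark that unlabeled queries leave $\cA_{\mathrm{pfx}}$'s state unchanged is a useful addition. It is what makes the fed subsequence a realizable sequence on which $\cA_{\mathrm{pfx}}$ errs every round, a point the paper leaves implicit.
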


\begin{proof}
    \sloppy We explicitly construct a chain-of-thought verification algorithm from a prefix verification algorithm $\cA_{\mathrm{pfx}}$, as shown in \Cref{alg:red1}.  
    We observe that algorithm $\cA$ makes a completeness mistake at timestep $t$, only if $\cA_{\mathrm{pfx}}(x^{(t)},\tau^{(t)}_{1:\hat{y}^{(t)}}) =\no$, when $\cO(x^{(t)},\tau^{(t)}_{1:\hat{y}^{(t)}}) = \yes$. In that case, Step \ref{step:comp} guarantees that  $\cA_{\mathrm{pfx}}$ also makes a completeness mistake at that timestep. Similarly, algorithm $\cA$ makes a soundness mistake at timestep $t$, only if $\cA_{\mathrm{pfx}}(x^{(t)},\tau^{(t)}_{1:y^{(t)}}) =\yes$, when $\cO(x^{(t)},\tau^{(t)}_{1:y^{(t)}}) = \no$. In that case, Step \ref{step:sound} guarantees that $\cA_{\mathrm{pfx}}$ also makes a soundness mistake at that timestep.

    Since the feedback passed into $\cA_{\mathrm{pfx}}$ is always consistent with the oracle $\cO$, the sequence of examples used to update $\cA_{\mathrm{pfx}}$ remains realizable by $H$, which concludes the proof.

    \begin{algorithm}[H]
\caption{Constructing $\cA$ using $\cA_{\mathrm{pfx}}$}\label{alg:red1}
\DontPrintSemicolon
\LinesNumbered
\SetNoFillComment
\SetKwComment{tcp}{\(\triangleright\) }{}

\KwIn{prefix chain-of-thought verification algorithm $\cA_{\mathrm{pfx}}$}

\For{$t \in [T]$}{
    Observe example $z^{(t)}=\bigl(x^{(t)}, \tau^{(t)}=(\tau^{(t)}_1,\ldots,\tau^{(t)}_L)\bigr)$\;
    
    Predict
    $\hat{y}^{(t)}=\min\bigl(\{\ell\in[L]:\cA_{\mathrm{pfx}}(x^{(t)},\tau^{(t)}_{1:\ell})=\no\}\cup\{\infty\}\bigr)$\;
    
    Receive correct label $y^{(t)}\in[L]\cup\{\infty\}$\;
    
    \uIf(\tcp*[f]{Completeness mistake}){$\hat{y}^{(t)} < y^{(t)}$}{
        $\cA_{\mathrm{pfx}}.\text{{\sc Update}}((x^{(t)},\tau^{(t)}_{1:\hat{y}^{(t)}}), \yes)$\nllabel{step:comp}\;
    }
    \uElseIf(\tcp*[f]{Soundness mistake}){$\hat{y}^{(t)} > y^{(t)}$}{
        $\cA_{\mathrm{pfx}}.\text{{\sc Update}}((x^{(t)},\tau^{(t)}_{1:y^{(t)}}), \no)$\nllabel{step:sound}\;
    }
}
\end{algorithm}
\end{proof}

We highlight that in the reduction used to prove \Cref{thm:red1}, when algorithm $\cA$ makes a completeness mistake $\hat y<y$, $\cA_{\mathrm{pfx}}$ is updated but only on the prematurely rejected prefix $(x, \tau_{1:\hat y})$, which is labeled $\yes$, and the actual location of the first faulty step $y$ is not passed to  $\cA_{\mathrm{pfx}}$ (Step \ref{step:comp}). 
This would suggest that prefix chain-of-thought verification is more difficult than chain-of-thought verification. However, somewhat surprisingly, via the following result we show that that is not the case. Specifically, under a mild assumption on the verifier class (there exists a reasoning step that every verifier in the class rejects) we give a reduction in the opposite direction.

\begin{theorem}\label{thm:red2}
Let $H \subseteq \{\yes, \no\}^{X \times \Sigma^*}$ be a class of verifiers, where there exists $F \in \Sigma$ such that for every $h \in H$, every $x \in X$, and every correct prefix $\tau_{1:\ell-1}$, $h(x, (\tau_{1:\ell-1}, F)) = \no$. Suppose there exists an online algorithm $\cA$ for the online verification model that for class $H$ achieves soundness and completeness mistake bounds $M_s$ and $M_c$, respectively. Then there exists an online algorithm $\cA_{\mathrm{pfx}}$ for the prefix chain-of-thought verification model with the same soundness and completeness mistake bounds.
\end{theorem}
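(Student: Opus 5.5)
The plan is to simulate $\cA$ on padded traces. Given a prefix instance $(x,\tau_{1:\ell})$ whose first $\ell-1$ steps are promised correct, $\cA_{\mathrm{pfx}}$ builds a full-length chain-of-thought example
\[
z=\bigl(x,(\tau_1,\dots,\tau_\ell,F,\dots,F)\bigr)\in X\times\Sigma^L ,
\]
padding with $L-\ell$ copies of the universally rejected step $F$. (When $\ell=L$ there is no padding.) It feeds $z$ to $\cA$, obtains $\hat y\in[L]\cup\{\infty\}$, and predicts $\yes$ iff $\hat y>\ell$, and $\no$ iff $\hat y\le\ell$.

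The padding pins down the true label of $z$. Because $\tau_{1:\ell-1}$ is correct, $y(z)\ge \ell$. If $\tau_\ell$ is incorrect, then $y(z)=\ell$. If $\tau_\ell$ is correct, then $\tau_{1:\ell}$ is a correct prefix, and the hypothesis gives $h^*(x,(\tau_{1:\ell},F))=\no$. Since $h^*$ agrees with $\cO$, this yields $y(z)=\ell+1$, or $\infty$ when $\ell=L$. So after $\cA_{\mathrm{pfx}}$ receives the prefix label, it knows $y(z)$ exactly and passes $(z,y(z))$ to $\cA$ as its feedback. To keep $\cA$'s internal state consistent, $\cA_{\mathrm{pfx}}$ does this every round, whether or not a mistake occurred.

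The labels fed to $\cA$ are the oracle labels on legitimate full traces, so the sequence seen by $\cA$ is realizable by $H$ and $\cA$'s bounds $M_s,M_c$ apply.

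Next, map mistakes. A prefix soundness mistake means $\tau_\ell$ is wrong, so $y(z)=\ell$, yet the prediction was $\yes$, so $\hat y>\ell=y(z)$. That is a soundness mistake of $\cA$. A prefix completeness mistake means $\tau_\ell$ is correct, so $y(z)=\ell+1$ (or $\infty$), yet the prediction was $\no$, so $\hat y\le\ell<y(z)$. That is a completeness mistake of $\cA$. Each prefix mistake of a given type is therefore charged to a distinct round in which $\cA$ makes a mistake of the same type, and the bounds transfer.

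I expect the main obstacle to be justifying the label $y(z)=\ell+1$ in the correct case. This needs the assumption to apply to the oracle, that is, to $h^*$, on the specific correct prefix $\tau_{1:\ell}$, which is exactly how the hypothesis on $F$ is phrased. A second subtlety is that $\cA$'s mistake bound is stated for realizable sequences of full traces of length exactly $L$. Padding keeps the length at $L$, and when $\ell=L$ the label $\infty$ is the correct one. I would also note that $\cA$ may make mistakes on rounds where the prefix prediction was right, for example $\hat y$ much larger than $\ell+1$. Such extra mistakes only help the accounting, since we need an injection from prefix mistakes into $\cA$-mistakes, not a bijection.
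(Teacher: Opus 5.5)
Your proposal is correct and matches the paper's proof essentially step for step. The paper's Algorithm~\ref{alg:red2} also pads the prefix with $F$ and predicts $\no$ iff $\cA$'s output is at most $\ell$. It feeds $\cA$ the label $\ell$, $\ell+1$, or $\infty$ according to the prefix label on every round, and it uses the same mistake-type correspondence and the same realizability argument.
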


\begin{proof}
    Given $\cA$ we will construct algorithm, $\cA_{\mathrm{pfx}}$ as shown in \Cref{alg:red2}.

\begin{algorithm}[t]
\caption{Constructing $\cA_{\mathrm{pfx}}$ using $\cA$}\label{alg:red2}
\DontPrintSemicolon
\LinesNumbered
\SetNoFillComment
\SetKwComment{tcp}{\(\triangleright\) }{}

\KwIn{chain-of-thought verification algorithm $\cA$}

\For{$t \in [T]$}{
    Observe example
    $z_{\mathrm{pfx}}^{(t)}=\bigl(x^{(t)}, \tau^{(t)}=(\tau^{(t)}_1,\ldots,\tau^{(t)}_\ell)\bigr)\in X\times \Sigma^{\le L}$\;

    Construct
    $z^{(t)}=\bigl(x^{(t)},(\tau^{(t)}_1,\ldots,\tau^{(t)}_\ell, F,\ldots,F)\bigr)\in X\times \Sigma^{L}$
    
    Compute $\hat{y}^{(t)} \leftarrow \cA(z^{(t)})$\;
    
    \eIf{$\hat{y}^{(t)} \le \ell$}{
        Predict $\hat{y}_{\mathrm{pfx}}^{(t)}=\no$\;
    }{
        Predict $\hat{y}_{\mathrm{pfx}}^{(t)}=\yes$\;
    }
    
    Receive correct label $y_{\mathrm{pfx}}^{(t)}\in\{\yes,\no\}$\;
    
    \eIf{$y_{\mathrm{pfx}}^{(t)}=\no$}{
        $\cA.\text{{\sc Update}}(z^{(t)}, \ell)$\nllabel{step:sound2}\;
    }{
        \eIf{$\ell< L$}{
        $\cA.\text{{\sc Update}}(z^{(t)}, \ell+1)$\nllabel{step:comp2-prefix}\;
    }{
        $\cA.\text{{\sc Update}}(z^{(t)}, \infty)$\nllabel{step:comp2-full}\;
    }
    }
}
\end{algorithm}

Now suppose that at some timestep $t$, $\cA_{\mathrm{pfx}}$ made a prefix soundness mistake, that is $\hat{y}_{\mathrm{pfx}}^{(t)} = \yes$ but $y_{\mathrm{pfx}}^{(t)} = \no$. Then it must have been that $\cA(z^{(t)}) > \ell$ and Step \ref{step:sound2} guarantees that $\cA$ will make a soundness mistake at this timestep too.

Suppose now that at some timestep $t$, $\cA_{\mathrm{pfx}}$ made a prefix completeness mistake, that is $\hat{y}_{\mathrm{pfx}}^{(t)} = \no$ but $y_{\mathrm{pfx}}^{(t)} = \yes$. Then it must have been that $\cA(z^{(t)}) \leq \ell$. Since we know that steps $\tau^{(t)}_1, \dots, \tau^{(t)}_{\ell-1}$ are correct, and that token $F$ is rejected by every verifier in the $H$, it must be that $\cO(z^{(t)}) = \ell+1$ and Steps \ref{step:comp2-prefix} and \ref{step:comp2-full} guarantee that $\cA$ will also make a completeness mistake at this timestep. 

Notice that in any case, the feedback provided to $\cA$ is always consistent with the oracle $\cO$, so the sequence of examples that $\cA$ observes remains realizable by the class, $H$.
\end{proof}

\remark{\Cref{thm:red1} and \Cref{thm:red2} establish the equivalence between chain-of-thought verification and prefix verification. Therefore, in what is to follow we will mostly focus on the latter, dropping the term ``prefix'' when it is clear from context.}

\subsection{Prefix Verification with a Soundness Mistake Budget}\label{sec:tradingsoundnesscompleteness}

In this section, we study the trade-off between soundness and completeness mistakes in prefix verification when the learner is allowed to make a small number of soundness mistakes, $k \in \N$. Specifically, the learner aims to minimize the number of overall mistakes while respecting the soundness mistake budget, $k$.

We recall the definition of an SC-mistake tree and argue that it fully  describes the strategy of an adversary against a deterministic prefix verification algorithm. Specifically, the adversary can ask the learner to verify internal nodes of the tree starting from the root. Say that for an internal node, $z$, the learner outputs $\no$. Then the adversary reveals the correct label, $(z, z_c) = \yes$ (forcing the learner to make a completeness mistake) and follows the curvy downward edge. If the learner outputs $\yes$, then the adversary reveals the true label $(z,z_s) = \no$ (forcing the learner to make a soundness mistake). Since the learning algorithm is deterministic, the interaction between learner and adversary forms a sequence of labeled examples in a path down the SC-mistake tree, where every curvy edge in the path corresponds to a completeness mistake and every straight edge corresponds to a soundness mistake that the learner made.

With the above interpretation of a shattered SC mistake tree, it follows that the SC-Ldim$(H, k)$ lower bounds the number of overall mistakes of any deterministic online verification algorithm that guarantees at most $k$ soundness mistakes.

\begin{restatable}{theorem}{sclb}\label{thm:sclb}
 Consider a class of verifiers, $H$ and any $k \in \N$. There exists a strategy of an adversary that defines a realizable sequence of reasoning traces, such that any deterministic algorithm that guarantees at most $k$ soundness mistakes can be forced to make at least SC-Ldim$(H, k)$ mistakes overall.
\end{restatable}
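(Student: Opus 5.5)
We would prove the statement by the standard adversary argument on a shattered SC-mistake tree, as sketched in the paragraph just before the theorem. Let $m=\text{SC-Ldim}(H,k)$; if $m=0$ there is nothing to show. Fix a deterministic prefix verification algorithm $\cA_{\mathrm{pfx}}$ that guarantees at most $k$ soundness mistakes on every realizable sequence. If $m$ is finite, take a $(k,m)$-difficult SC-mistake tree shattered by $H$. If $m=\infty$, run the argument below with a $(k,m')$-difficult tree for each finite $m'$, which yields unboundedly many mistakes.

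\textbf{Adversary strategy.} The adversary starts at the root $z_1$ and presents it to the learner. At each internal node $z_i$, the learner's deterministic prediction fixes the next node. If the learner outputs $\no$, the adversary reveals $\yes$ and moves to $z_c$ along the curvy edge; this is a completeness mistake. If the learner outputs $\yes$, the adversary reveals $\no$ and moves to $z_s$ along the straight edge; this is a soundness mistake. The adversary stops upon reaching a leaf, or earlier if it chooses. Every round is therefore a mistake, and straight edges correspond exactly to soundness mistakes.

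\textbf{Realizability.} Let $z_1,\dots,z_d$ be the path traversed, extended to a leaf. By shattering there is some $h\in H$ with $h(z_i)$ equal to the label of edge $(z_i,z_{i+1})$ for all $i$. Hence the revealed labels agree with $h$, and taking $\cO=h$ makes the sequence realizable. One subtlety is that prefix verification requires the promise that $\tau_{1:\ell-1}$ is correct under $\cO$. We handle this by interpreting the internal nodes as valid prefix-verification instances, meaning their prefixes are correct for the verifiers consistent with that path. We expect this to be implicit in the definition of shattering for SC trees, or to be enforceable by restricting trees to such nodes.

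\textbf{Counting.} The algorithm guarantees at most $k$ soundness mistakes on every realizable sequence, and the adversary's sequence is realizable along any branch. So the path it induces has at most $k$ straight edges, and this holds even when the path is followed all the way to a leaf. By $(k,m)$-difficulty, such a root-to-leaf path has length at least $m$. Since every edge on it is a mistake, the learner makes at least $m=\text{SC-Ldim}(H,k)$ mistakes.

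\textbf{Main obstacle.} The delicate point is the budget logic. We must argue that the path actually reaches a leaf with at most $k$ straight edges, rather than the algorithm ``violating'' its guarantee. This follows because any violation would contradict the hypothesis on the realizable sequence defined by that branch. The prefix-correctness promise, discussed under realizability, is the other point needing care.
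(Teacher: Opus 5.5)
Your proposal is correct and is essentially the paper's own argument. The adversary walks the deterministic learner down a shattered $(k,m)$-difficult SC-mistake tree, always revealing the label opposite to the prediction, so curvy edges are completeness mistakes and straight edges are soundness mistakes; realizability comes from shattering, and at most $k$ straight edges together with $(k,m)$-difficulty force at least $m$ mistakes. Your extra handling of $m=\infty$ and of the prefix-correctness promise goes beyond what the paper writes; under the appendix's partial-concept convention a verifier's label is defined only on promised prefixes, so shattering already guarantees the promise.
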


\begin{proof}
    By definition of the SC-Littlestone dimension, there must exist a $(k, m)$-difficult mistake tree for $H$. We consider a deterministic learner and a strategy of the adversary that presents the reasoning trace examples in the internal nodes of the tree, starting from the root. The adversary then reveals the true label of the instance: if the learner predicted the first faulty trace to be at some step $i = \no$ then the true label is set to $v_c = \yes$, otherwise it is set to $v_s = \no$. The adversary then follows the downward edge that corresponds to the true label. Since the  tree is full, the adversary can guarantee a mistake at every node. Since the learner is deterministic, her interaction with the adversary follows a root-to-leaf path, every curvy edge of which corresponds to a completeness mistake and every straight edge corresponds to a soundness mistake. The path must have at most $k$ straight edges, since the learner guarantees at most $k$ soundness mistakes. Since the tree is $(k, m)$-difficult, we also have that every such path has length at least $m$, which concludes the proof.
\end{proof}
We now present \Cref{alg:sc-soa}, which we show that for any soundness mistake budget $k \in \N$ guarantees at most SC-Ldim$(H, k)$ overall mistakes. For a class of verifiers, $H$, example $(x, \tau) \in X \times \Sigma^{\leq L}$, and label $y \in \{ \yes, \no \}$, we write $H[((x, \tau), y)] := \{ h \in H \; | \; h(x, \tau) = y\}$.

\SetAlgoNoEnd
\begin{algorithm}[t]
\caption{Prefix verification with at most $k$ soundness mistakes}
\label{alg:sc-soa}

\DontPrintSemicolon
\SetAlgoLined
\LinesNumbered
\SetNoFillComment
\SetKwComment{tcp}{\(\triangleright\) }{}
\SetKwInOut{KwIn}{Input}

\KwIn{ class of verifiers $H$; soundness mistake budget $k$.}

$H^{(1)} \gets H$\tcp*[r]{set initial version space}

\For{$t \in [T]$}{
  Observe example
  $z^{(t)}=\bigl(x^{(t)},\tau^{(t)}=(\tau^{(t)}_1,\ldots,\tau^{(t)}_{\ell^{(t)}}) \bigr) \in X \times \Sigma^{\leq L}$\;

Compute $\cY^{(t)} = \{ h(z^{(t)}) \; | \; h \in H^{(t)}\}$

  \uIf(\tcp*[f]{all verifiers accept}){$\cY^{(t)}= \{ \yes \} $}{Predict $\hat{y}^{(t)} \gets \yes $\nllabel{step:predictL+1}
  }
\uElseIf(\tcp*[f]{no budget or all verifiers reject}){$k=0 \textbf{ \emph{or} } \yes \not\in \cY^{(t)}$}{
    Predict $\hat{y}^{(t)} \leftarrow \no $\tcp*[r]{must reject}\nllabel{step:predictL}
  }
  \Else{
    Compute $m_c \gets \text{SC-Ldim}\!\left(H^{(t)}[(z^{(t)},\yes)],\,k\right)$

    Compute $m_s \gets \text{SC-Ldim}\!\left(H^{(t)}[(z^{(t)}, \no)],\,k-1\right)$
    
    \If(\tcp*[f]{minimize SC-Ldim of future version space}){$m_c \le m_s$}{
    Predict $\hat{y}^{(t)} \gets \no $\;
    }
    \Else{
      Predict $\hat{y}^{(t)} \gets \yes $\;
    }
    \nllabel{step:cases-sc}
  }

  Receive correct label $y^{(t)} \in \{\yes, \no \}$\;

  $H^{(t+1)} \gets H^{(t)}[(z^{(t)},y^{(t)})]$\tcp*[r]{update version space}

  \If{$\hat{y}^{(t)}=\yes \ \textbf{\emph{and}} \ y^{(t)} = \no $}{
    $k \gets k-1$\tcp*[r]{update soundness mistake budget}
  }
}
\end{algorithm}

\begin{restatable}{theorem}{scub}\label{thm:scub}
    Consider a class of verifiers, $H$, and a soundness mistake budget $k \in \N$ such that SC-Ldim$(H, k) = m$. Then \Cref{alg:sc-soa} guarantees at most $k$ soundness mistakes and at most $m$ mistakes overall on any realizable sequence of examples.
\end{restatable}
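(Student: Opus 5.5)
The argument is a potential-function (SOA-style) analysis. The potential is the pair $(k^{(t)}, \text{SC-Ldim}(H^{(t)},k^{(t)}))$, where $k^{(t)}$ is the remaining budget at round $t$. I will prove the following invariant by induction: whenever the algorithm errs, the potential $\text{SC-Ldim}(H^{(t)},k^{(t)})$ drops by at least one. Moreover, a soundness mistake is only possible when $k^{(t)}\ge 1$, and it consumes one unit of budget. Since SC-Ldim is nonnegative and the version space always contains $h^*$ by realizability, this gives at most $m$ mistakes overall and at most $k$ soundness mistakes.

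Two preliminary facts are needed. First, monotonicity: if $H'\subseteq H$ then $\text{SC-Ldim}(H',k)\le \text{SC-Ldim}(H,k)$, and $\text{SC-Ldim}(H,k-1)\le \text{SC-Ldim}(H,k)$. The second holds because a $(k,m)$-difficult tree is also $(k-1,m)$-difficult. Second, the key combination lemma: for $z$ on which $H$ disagrees and $k\ge 1$, we have $\min\{\text{SC-Ldim}(H[(z,\yes)],k),\ \text{SC-Ldim}(H[(z,\no)],k-1)\}\le \text{SC-Ldim}(H,k)-1$. To prove it, suppose both quantities are at least $m'$. Take a $(k,m')$-difficult tree $T_c$ shattered by $H[(z,\yes)]$ and a $(k-1,m')$-difficult tree $T_s$ shattered by $H[(z,\no)]$. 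Root a new tree at $z$, hang $T_s$ off the straight ($\no$) edge and $T_c$ off the curvy ($\yes$) edge. Shattering holds by construction. Now consider any root-to-leaf path with at most $k$ straight edges. If it goes straight first, its continuation in $T_s$ has at most $k-1$ straight edges and so has length at least $m'$. If it goes curvy first, its continuation in $T_c$ has at most $k$ straight edges and length at least $m'$. Either way the path has length at least $m'+1$, so the new tree is $(k,m'+1)$-difficult, which gives the lemma. I also need the cases where one side is empty: SC-Ldim of the empty class is $0$, and when $H$ is unanimous on $z$ no mistake can occur, by realizability.

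Next I check the algorithm case by case. In line \ref{step:predictL+1} all verifiers accept, so no mistake is possible. In line \ref{step:predictL}, either all verifiers reject (no mistake), or $k=0$ and the algorithm predicts $\no$. Then the only possible error is a completeness mistake with $y=\yes$, and I need $\text{SC-Ldim}(H[(z,\yes)],0)\le \text{SC-Ldim}(H,0)-1$. This follows the same way as the lemma: a tree for $H[(z,\yes)]$ can be extended with the root $z$ along the curvy edge. The straight child just needs any leaf, and a path that goes straight uses one straight edge, which exceeds $k=0$, so it is exempt from the difficulty requirement. I expect this $k=0$ boundary case, and its interaction with the convention for trees of depth zero, to be the main fiddly point. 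In the main branch, the algorithm predicts $\no$ iff $m_c\le m_s$, so the feared update always lands on the side achieving $\min(m_c,m_s)$. A completeness mistake moves to $H[(z,\yes)]$ with budget unchanged and potential $m_c=\min\le m-1$. A soundness mistake moves to $H[(z,\no)]$ with the budget decremented and potential $m_s=\min\le m-1$.

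On correct rounds the version space shrinks while the budget is unchanged, so by monotonicity the potential does not increase. Finally, a soundness mistake is never made when $k=0$, because the algorithm then always rejects on disagreement. So the budget never goes negative and at most $k$ soundness mistakes occur. Combining these steps, the potential starts at $m$, drops by at least one per mistake, and never goes below $0$. This yields the bound of $m$ total mistakes.
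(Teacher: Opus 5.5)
Your proof is correct and is essentially the paper's argument. The paper inducts on $m+k$ and, at the first mistake, uses exactly your gluing construction (root $z$, a $(k-1,\cdot)$-difficult tree under the straight edge and a $(k,\cdot)$-difficult tree under the curvy edge) to show that the surviving version space has dimension at most $m-1$ at the appropriate budget; your potential argument just unrolls that induction. One slip, harmless because you never use it: since a $(k,m)$-difficult tree is also $(k-1,m)$-difficult, the correct inequality is $\SCLdim(H,k-1)\ge\SCLdim(H,k)$, the reverse of what you wrote (cf.\ Lemma~\ref{lem:general-shape}(a)).
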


\begin{proof}

\begin{figure}[t]
    \centering
    \includegraphics[height=2in]{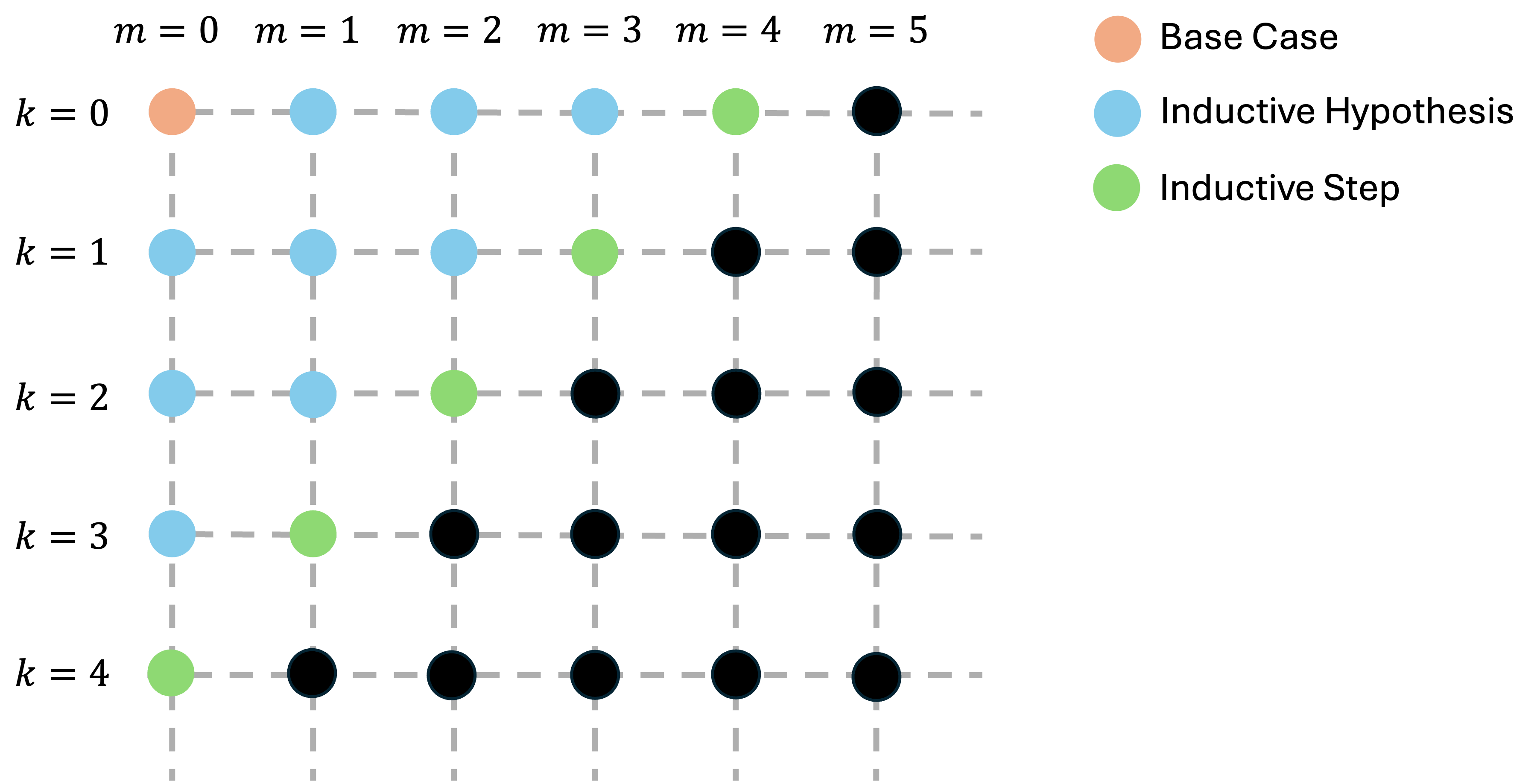}
    \caption{Visual representation of induction on $m+k$ used to prove \Cref{thm:scub}. }\label{fig:induction}
\end{figure}

    We proceed by  induction on $m + k$.
    
    \emph{Base case 1.} For $m = 0$ and $k \geq 0$, there does not exist a $(k, m)$-difficult SC mistake tree, so every $h \in H$ must predict unanimously on every instance. Therefore, Lines \ref{step:predictL+1} and \ref{step:predictL} of \Cref{alg:sc-soa} guarantee that the algorithm predicts correctly on every instance.
    
    \emph{Base case 2.} For $k=0$ and $m \geq 0$, we proceed with induction on $m$. We have already proved the base case for $k=m=0$. We now assume that the statement holds for SC-Ldim$(H, 0) = m-1$ and we will prove it for SC-Ldim$(H, 0)=m$. Since $k=0$, \Cref{alg:sc-soa} will always reject every instance. Consider the first timestep $t$, where the algorithm makes a mistake. Then it must be that $\text{SC-Ldim}(H^{(t)}[(z^{(t)}, \yes)], 0) \leq m-1$, as otherwise, we would be able to construct a $(0, m+1)$ difficult tree. From timestep $t+1$ and onwards the adversary can only present sequences that are realizable by $H^{(t)}[(z^{(t)}, \infty)]$ and by the inductive hypothesis the desired statement holds.
    
    \emph{Inductive Hypothesis.} We consider $k\geq 1$ and $m\geq 1$ and assume that the desired statement holds for all $k' \leq k, m' \leq m$, and $k'+m' \leq k+m-1$. We consider a class of verifiers, where $\text{SC-Ldim}(H)=m$. Consider the first timestep, $t$ where the algorithm makes a mistake. Then there must exist a verifier in $H^{(t)}$ that predicts $\yes$ and one that predicts $\no$.
    
    \emph{Inductive Step.} We first consider the case, where the predicted label is $\hat{y}^{(t)} = \yes$. It suffices to argue that $\text{SC-Ldim}(H^{(t)}[(z^{(t)}, \no)], k-1) \leq m-1$. For the sake of contradiction, assume the opposite. Then by Line \ref{step:cases-sc} of \Cref{alg:sc-soa}, we know that $m \leq \text{SC-Ldim}(H^{(t)}[(z^{(t)}, \no)], k-1) \leq \text{SC-Ldim}(H^{(t)}[(z^{(t)}, \yes)], k)$. This implies that there exists a $(k, m+1)$-difficult tree, which is a contradiction. When the predicted label is $\hat{y}^{(t)} = \no$, it suffices to prove that $\text{SC-Ldim}(H^{(t)}[(z^{(t)}, \yes)], k) \leq m-1$, which follows similarly to the argument above.
    From timestep $t+1$ and onwards the adversary is constrained to only presenting sequences of examples that are realizable by $H^{(t)}[(z^{(t)}, y^{(t)})]$ and since $\text{SC-Ldim}(H^{(t)}[(z^{(t)}, y^{(t)})]) \leq m-1$ for any $y^{(t)}$, the desired statement follows from the inductive hypothesis.
\end{proof}

\subsection{Examples: The Pareto Frontier of Concrete Verifier Classes}\label{sec:frontier-examples}

Theorem~\ref{mainthm} identifies the optimal total mistake bound under a soundness
budget $k$ with the combinatorial quantity $\SCLdim(H,k)$. In this section we compute the \emph{entire} Pareto
frontier $k\mapsto\SCLdim(H,k)$ for several natural verifier
classes. For an intersection-closed class of verifiers $H$ such as the discrete-puzzle
verifiers of \Cref{example:puzzle}, the frontier is flat and equal to the optimum $\log|H|$. So, the optimal algorithm already makes zero soundness mistakes. The classes below display 
other forms the frontier can take. Throughout, $\Ldim(H)$ denotes the Littlestone dimension
 of $H$ viewed as binary classifiers over
$X\times\Sigma^{\le L}$, and $H[(z,y)]:=\{h\in H\mid h(z)=y\}$ as in
Section~\ref{sec:tradingsoundnesscompleteness}. We first record three elementary facts that apply to all
frontiers.

\begin{lemma}\label{lem:general-shape}
For every class of verifiers $H$ and every $k\ge 0$:
\begin{enumerate}[label=(\alph*),itemsep=1pt,topsep=2pt]
\item $\SCLdim(H,k+1)\le\SCLdim(H,k)$;
\item $\SCLdim(H,k)\ge\Ldim(H)$;
\item if $\Ldim(H)=\ell<\infty$ and $k\ge\ell$, then $\SCLdim(H,k)=\ell$.
\end{enumerate}
\end{lemma}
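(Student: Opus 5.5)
Each of the three parts can be checked directly on the combinatorial definitions, without going through the online algorithms.

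For (a), I would take any $(k+1,m)$-difficult SC-mistake tree shattered by $H$. A root-to-leaf path with at most $k$ straight edges also has at most $k+1$ straight edges, so it has length at least $m$. The same tree is therefore $(k,m)$-difficult. Taking suprema over $m$ gives $\SCLdim(H,k+1)\le\SCLdim(H,k)$, and the convention for $\infty$ carries over.

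For (b), I would take a Littlestone mistake tree of depth $d\le\Ldim(H)$ shattered by $H$, that is, a complete binary tree in which every root-to-leaf path has length $d$. I would read it as an SC-mistake tree by declaring the $\no$-child edge straight and the $\yes$-child edge curvy. Shattering is the same condition in both settings. Every root-to-leaf path has length $d$ regardless of how many straight edges it has, so the tree is $(k,d)$-difficult for every $k$. Hence $\SCLdim(H,k)\ge d$, and taking the supremum gives $\SCLdim(H,k)\ge\Ldim(H)$, including the infinite case.

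For (c), by (b) it is enough to show $\SCLdim(H,k)\le\ell$ when $k\ge\ell$. I would suppose there is a $(k,\ell+1)$-difficult tree shattered by $H$ and derive a contradiction. The plan is to extract a complete subtree of depth $\ell+1$, which would give a Littlestone tree of depth $\ell+1$ and contradict $\Ldim(H)=\ell$. To do this, I would restrict to nodes at depth at most $\ell+1$. Any path from the root of length at most $\ell+1$ has at most $\ell+1$ edges. The delicate case is a leaf reached after exactly $j\le\ell$ edges: that path has at most $\ell\le k$ straight edges, so $(k,\ell+1)$-difficulty forces its length to be at least $\ell+1$, a contradiction. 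Therefore every node at depth at most $\ell$ is internal, and truncating at depth $\ell+1$ yields a complete binary tree of depth $\ell+1$ that is still shattered. Shattering is inherited by truncation, because each truncated path extends to a full root-to-leaf path, and some $h\in H$ is consistent with that whole path.

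The step that needs the most care is this counting in (c). It uses $k\ge\ell$ exactly: paths of length at most $\ell$ automatically respect the straight-edge budget, so the difficulty condition rules out every early leaf. I would also handle the edge cases $\ell=0$ and an empty $H$ explicitly against the stated conventions.
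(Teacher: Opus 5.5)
Your proof is correct, and parts (a) and (b) coincide with the paper's arguments. For (c) you take a different route. The paper argues algorithmically. The Standard Optimal Algorithm makes at most $\ell$ mistakes in total, hence at most $\ell\le k$ soundness mistakes. It is therefore a deterministic learner that respects the budget while making at most $\ell$ mistakes overall, and the adversary lower bound of Theorem~\ref{thm:sclb} then forces $\SCLdim(H,k)\le\ell$. You instead work directly with trees. Because $k\ge\ell$, every path to a node of depth at most $\ell$ automatically respects the straight-edge budget, so $(k,\ell+1)$-difficulty rules out leaves at those depths. Truncating at depth $\ell+1$ then leaves a complete shattered mistake tree of depth $\ell+1$, contradicting $\Ldim(H)=\ell$. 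The paper's version is a one-line consequence of machinery it has already built. It also gives the operational reading that an unconstrained optimal learner automatically respects any budget $k\ge\Ldim(H)$. Yours is self-contained and purely combinatorial, needing neither the SOA guarantee nor Theorem~\ref{thm:sclb}. It also exhibits the obstructing Littlestone tree explicitly as a subtree of the given SC-mistake tree, and it handles the infinite cases uniformly via suprema.
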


\begin{proof}
(a) A root-to-leaf path with at most $k$ straight edges has at most $k+1$
straight edges, so every $(k+1,m)$-difficult tree is $(k,m)$-difficult.

(b) A complete mistake tree of depth $\Ldim(H)$ shattered by $H$ is an
SC-mistake tree all of whose root-to-leaf paths have length exactly
$\Ldim(H)$; it is therefore $(k,\Ldim(H))$-difficult for every $k$.

(c) By (b) it suffices to show $\SCLdim(H,k)\le\ell$. The Standard
Optimal Algorithm \cite{littlestone1988learning} makes at most $\ell$ mistakes in
total on any realizable sequence, hence in particular at most
$\ell\le k$ soundness mistakes, so it is a deterministic algorithm that
guarantees at most $k$ soundness mistakes and at most $\ell$ mistakes
overall. By Theorem~\ref{thm:sclb}, $\SCLdim(H,k)\le\ell$.
\end{proof}

Lemma~\ref{lem:general-shape} ahows that every frontier is nonincreasing,
bounded below by $\Ldim(H)$, and equal to $\Ldim(H)$ from budget
$\Ldim(H)$ onward. The examples below are therefore entirely about the
regime $0\le k<\Ldim(H)$. We note that part (c) slightly refines the last
statement of \Cref{prop:finiteH}. For any finite $H$ and at soundness budget $\Ldim(H)$ the frontier becomes equal to $\Ldim(H)$, which in turn
equals $\lfloor\log_2 |H|\rfloor$ for the hardest classes but
could be much smaller. 

\subsubsection{Singleton rejectors}\label{sec:sc-singleton}

We begin with singleton rejectors, which is a  class with which we prove the $|H|-1$ lower bound for sound
verification in Proposition~\ref{thm:finiteHsound}. Here we show that once a single soundness mistake is
allowed, this class becomes
learnable with a single mistake overall. Specifically, consider a set $D=\{\sigma_1,\dots,\sigma_n\}$ of $n\ge 2$ distinct single-step
examples $\sigma_i=(x_i,\tau_{1,i})\in X\times\Sigma$, and let
$R_n=\{h_\sigma:\sigma\in D\}$, where $h_\sigma(z)=\no\iff z=\sigma$.

\begin{proposition}\label{prop:rn}
$\Ldim(R_n)=1$, and
$\SCLdim(R_n,k)=
\begin{cases}
n-1, & k=0,\\
1, & k\ge 1
\end{cases}.$
\end{proposition}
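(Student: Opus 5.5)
The plan is to prove the three claims separately, each with an explicit tree or algorithm, and to use \Cref{lem:general-shape} and \Cref{thm:sclb} to convert algorithms into upper bounds on $\SCLdim$.

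\emph{Step 1: $\Ldim(R_n)=1$.} For the lower bound, take a single node $\sigma_1$. Its $\no$-child is realized by $h_{\sigma_1}$, and its $\yes$-child by $h_{\sigma_2}$, which is available since $n\ge 2$. For the upper bound, consider the algorithm that predicts $\yes$ on every example until it first errs. Any error must occur on the unique rejected point $\sigma^*$, and that error reveals the target, so the algorithm makes at most one mistake in total.

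\emph{Step 2: the case $k\ge 1$.} The bound $\SCLdim(R_n,k)\ge \Ldim(R_n)=1$ follows from \Cref{lem:general-shape}(b). The matching upper bound $\SCLdim(R_n,k)\le 1$ for $k\ge1=\Ldim(R_n)$ is exactly \Cref{lem:general-shape}(c). Equivalently, the always-$\yes$ learner from Step 1 makes at most one mistake, and that mistake is a soundness mistake, so \Cref{thm:sclb} applies directly.

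\emph{Step 3: the case $k=0$.} For the upper bound $\SCLdim(R_n,0)\le n-1$, \Cref{thm:finiteHsound} supplies a sound algorithm making at most $|R_n|-1=n-1$ mistakes, and \Cref{thm:sclb} then gives the bound. I expect the main work to lie in the lower bound. I would build a ``caterpillar'' tree. The root is $\sigma_1$; its straight ($\no$) child is a leaf, realized by $h_{\sigma_1}$. Its curvy ($\yes$) child is $\sigma_2$, whose straight child is a leaf and whose curvy child is $\sigma_3$, and so on down to $\sigma_{n-1}$, whose two children are both leaves.

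To check shattering, observe that the path taking the straight edge at $\sigma_j$ after $\yes$ at $\sigma_1,\dots,\sigma_{j-1}$ is realized by $h_{\sigma_j}$. The all-curvy path through $\sigma_1,\dots,\sigma_{n-1}$ is realized by $h_{\sigma_n}$. The only root-to-leaf path with zero straight edges is the all-curvy one, which has length $n-1$, so the tree is $(0,n-1)$-difficult. The one point to be careful about is that paths containing a straight edge may be short, which is permitted by the definition since those paths have one straight edge and $k=0$. Note also that $n\ge2$ ensures the tree is nonempty.
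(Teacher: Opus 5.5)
Your proof is correct. The lower bounds and the $k\ge 1$ case match the paper's: the same depth-one tree, the same caterpillar with spine $\sigma_1,\dots,\sigma_{n-1}$ realized by $h_{\sigma_j}$ and $h_{\sigma_n}$, and \Cref{lem:general-shape}(c). You differ from the paper in the two upper bounds. You exhibit learners and convert their mistake profiles into dimension bounds via Littlestone's lower bound and \Cref{thm:sclb}. The learners are the ``accept until the first error'' learner and the sound version-space learner.

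The paper instead argues directly on shattered trees. For $\Ldim$, it notes that $R_n[(z,\no)]$ is $\{h_z\}$ or empty, so the $\no$-subtree of any root cannot have depth $\ge 1$. For $\SCLdim(R_n,0)$, it looks at the all-$\yes$ spine $z_1,\dots,z_m$ of a $(0,m)$-difficult shattered tree. Each straight-edge realizer is forced to be $h_{z_t}$, so the $z_t$ are distinct points of $D$. One further point is needed for the all-$\yes$ realizer, giving $m\le n-1$. Your route is shorter given the machinery already available. The paper's route is self-contained and isolates the spine argument, which it reuses verbatim for \Cref{prop:additive}(iii).

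Two small points to fix. First, in Step 2 you call it the ``always-$\yes$'' learner. A literally always-$\yes$ learner errs every time $\sigma^*$ recurs; you mean the Step 1 learner, which switches to $h_{\sigma^*}$ after its first error. Second, \Cref{thm:finiteHsound} is stated for chain-of-thought verification, while $\SCLdim$ and \Cref{thm:sclb} live in the prefix model. You cannot transfer the result via \Cref{thm:red2}, because $R_n$ has no universally rejected token $F$. However, the version-space argument works verbatim for prefix queries: accept iff every consistent verifier accepts, and each completeness mistake eliminates at least one non-target verifier. You should state this directly rather than cite the chain-of-thought result.
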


\begin{proof}
\emph{$\Ldim(R_n)=1$.} 
We can use any $z\in D$ to label a complete shattered tree of
depth one. Conversely, for any example $z$ the set
$R_n[(z,\no)]$ is $\{h_z\}$ if $z\in D$ and empty otherwise. Therefore, a mistake tree of depth $2$ cannot be shattered by $H$.

\emph{$\SCLdim(R_n,0)\ge n-1$.} Consider a tree with nodes
$\sigma_1,\dots,\sigma_{n-1}$ such that the curvy edge at $\sigma_j$ leads to
$\sigma_{j+1}$ (to a leaf for $j=n-1$), and each straight edge child is a leaf.
The only root-to-leaf path with no straight edge is that of
length $n-1$, so the tree is $(0,n-1)$-difficult. The path
$\yes^{\,j-1}\no$ is realized by $h_{\sigma_j}$, which accepts
$\sigma_1,\dots,\sigma_{j-1}$ and rejects $\sigma_j$, and the all-$\yes$
path is realized by $h_{\sigma_n}$.\looseness-1

\emph{$\SCLdim(R_n,0)\le n-1$.} Consider any
$(0,m)$-difficult SC-mistake tree shattered by $R_n$. We show $m\le n-1$.
Let the all-$\yes$ path of the tree traverse internal nodes $z_1,\dots,z_m$. For each $t\le m$, the verifier consistent with the path of curvy edges to $z_t$ followed by a
straight edge is $h^{(t)}\in R_n$ with
$h^{(t)}(z_t)=\no$ and $h^{(t)}(z_s)=\yes$ for all $s<t$. The first
condition forces $h^{(t)}=h_{z_t}$, so $z_t\in D$. The second forces
$z_t\neq z_s$ for $s<t$. Hence $z_1,\dots,z_m$ are distinct elements of
$D$. Moreover, the all-$\yes$ path itself has a consistent verifier 
$h_{\sigma^*}$ with $\sigma^*\in D\setminus\{z_1,\dots,z_m\}$. Thus
$m\le|D|-1=n-1$.

$\SCLdim(R_n,k) = 1$ for $k\ge 1$. Follows from Lemma~\ref{lem:general-shape}(c) with $\Ldim(R_n)=1$.
\end{proof}

\subsubsection{Layered product verifiers}\label{sec:sc-layered}

Consider a setting where generating a solution to a problem naturally
decomposes into intermediate steps, where each step needs to be
verified against different criteria. For example, automated hiring
involves a multi-stage application process, where each component of an
application requires different criteria of evaluation (a CV requires a
keyword similarity checking model vs.\ a coding interview must be
checked for correctness). Similarly, in automated itinerary planning,
different components of the plan require different types of checks
(lodging requires checking availability of nearby hotels vs.\ activity
selection requires aligning with user preferences). Concretely, we
consider \emph{structured verifiers} that can be decomposed into a
product of step-specific \emph{mini-verifiers},
$H = H_1 \times H_2 \times \cdots \times H_L$, where each mini-verifier
$h_i \in H_i$ verifies the $i$th step of a reasoning trace, i.e.,
$h_i \colon X \times \Sigma^i \to \{\textsc{yes}, \textsc{no}\}$. Then
every verifier $h \in H$, where $h \colon X \times \Sigma^L \to
\{\yes, \no\}$, can be described as a list of $L$ mini-verifiers
$(h_1, \ldots, h_L)$:
\[
  h\bigl(x, \tau = (\tau_1, \ldots, \tau_L)\bigr)
  = \min \bigl\{\, i \in [L] : h_i\bigl(x, (\tau_1, \ldots, \tau_i)\bigr)
    = \textsc{no} \,\bigr\}.
\]
We argue that when $|H_i| \le n$, there exists a sound verifier that
makes at most $(n-1)L$ completeness (false-negative) mistakes. Consider
the algorithm that outputs the first step $i \in [L]$ for which at
least one function in $H_i$ that is consistent with the history so far
outputs \textsc{no}. Then for every completeness mistake the learner
can discard at least one mini-verifier. Therefore the learner can make
at most $(n-1)L$ completeness mistakes. We formalize this argument
below, extend it to any soundness budget $k$, and complement it with a
lower bound argument.

Another way to view the structured-verifier setup is to notice that it has two properties, (a)
the example space splits into disjoint parts on which the verifier
coordinates act independently, and, (b) part $i$
consists of the length-$i$ prefixes. We
 isolate the first part as a separate definition as it implies useful structure for $\Ldim$ and $\SCLdim$.

\begin{definition}[Layered product]
\label{def:layered-product}
Let $X\times \Sigma^{L} = Z_1 \sqcup \cdots \sqcup Z_L$ be
a partition of the domain  into disjoint \emph{layers}, and for each
$i \in [L]$ let
$H_i \subseteq \{\yes, \no\}^{Z_i}$ be a
class of functions on layer $i$. The \emph{layered product}
$H = H_1 \times \cdots \times H_L$ is the class of functions
$h = (h_1, \ldots, h_L)$ acting layerwise: $h(z) = h_i(z)$ for
$z \in Z_i$.
\end{definition}

More generally, the layers may be any partition of the example
space, with no sequential interpretation.
We first make a useful observation for this general setting. The Littlestone dimension is additive and the SC-Littlestone dimension is superadditive in terms of corresponding dimensions of the classes $H_i$.

\begin{restatable}{proposition}{additive}\label{prop:additive}
Let $H=H_1\times\cdots\times H_L$ be a layered product as defined above. Then,
\begin{enumerate}[label=(\roman*),itemsep=1pt,topsep=2pt]
\item $\Ldim(H)=\sum_{i=1}^L\Ldim(H_i)$,
\item $\SCLdim(H,k)\ge\sum_{i=1}^L\SCLdim(H_i,k)$ for every $k\ge 0$,
\item $\SCLdim(H,0)=\sum_{i=1}^L\SCLdim(H_i,0)$,
\end{enumerate}
where $\Ldim(H_i)$ and $\SCLdim(H_i,\cdot)$ are computed over trees whose nodes lie in layer ${Z}_i$.
\end{restatable}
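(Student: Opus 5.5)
The plan is to handle the lower bounds in (i) and (ii) by a concatenation construction, and the upper bounds in (i) and (iii) algorithmically via \Cref{thm:sclb} and Littlestone's characterization.

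\emph{Lower bounds by concatenating trees.} Fix, for each $i$, a mistake tree $T_i$ with nodes in $Z_i$ shattered by $H_i$. For (ii), $T_i$ is $(k,m_i)$-difficult with $m_i=\SCLdim(H_i,k)$; for (i), it is complete of depth $\Ldim(H_i)$. Build a tree $T$ for $H$ by hanging a copy of $T_2$ at every leaf of $T_1$, a copy of $T_3$ at every leaf of each such $T_2$ copy, and so on.

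Shattering follows from the product structure. A root-to-leaf path of $T$ splits into segments, one per layer. The $i$-th segment is a root-to-leaf path of $T_i$, so some $h_i\in H_i$ realizes it. Since the layers are disjoint, the tuple $(h_1,\dots,h_L)$ realizes the whole path.

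For (ii), take a path of $T$ with at most $k$ straight edges. Each segment also has at most $k$ straight edges, so its length is at least $m_i$ by difficulty of $T_i$, and the total length is at least $\sum_i m_i$. This gives (ii). For (i), path lengths simply add, giving $\Ldim(H)\ge\sum_i\Ldim(H_i)$. One caveat: if some $H_i$ has dimension $0$, we just skip that layer. If some dimension is infinite, the same construction applies with arbitrarily large finite trees.

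\emph{Upper bounds by running optimal algorithms layerwise.} For (i), run Littlestone's SOA for $H_i$ on the examples falling in $Z_i$. The sequence restricted to layer $i$ is realizable by $h^*_i$, so SOA errs at most $\Ldim(H_i)$ times there. The total is at most $\sum_i\Ldim(H_i)$, and since $\Ldim(H)$ equals the optimal deterministic mistake bound, this gives the upper bound.

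For (iii), run \Cref{alg:sc-soa} with budget $0$ on each layer separately. By \Cref{thm:scub}, each copy makes no soundness mistakes and at most $\SCLdim(H_i,0)$ mistakes in total. The combined algorithm is therefore deterministic, sound, and makes at most $\sum_i\SCLdim(H_i,0)$ mistakes. By \Cref{thm:sclb}, any such algorithm is forced to make at least $\SCLdim(H,0)$ mistakes, so $\SCLdim(H,0)\le\sum_i\SCLdim(H_i,0)$.

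\emph{Main obstacle.} The delicate point is why equality is claimed only at $k=0$. For $k\ge 1$ the layerwise algorithm would need to split the budget $k$ across layers, and the adversary could spend all $k$ soundness mistakes in a single layer. So additivity can fail, and (ii) is stated only as an inequality. At $k=0$ there is no budget to split, which is exactly why the layerwise argument gives equality.

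I would also check one technical point: that \Cref{thm:sclb} and \Cref{thm:scub} apply verbatim to classes on an arbitrary domain $Z_i$. They should, since their proofs use only the tree and version-space structure and nothing specific to prefixes.
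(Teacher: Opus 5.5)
Your proof is correct. The lower bounds are exactly the paper's: concatenate the trees layer by layer, and for (ii) count straight edges segment by segment.

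Where you differ is in the upper bounds of (i) and (iii). You argue algorithmically. You run an optimal learner independently on each layer (SOA for (i), Algorithm~\ref{alg:sc-soa} with $k=0$ for (iii)). You then invoke the adversary lower bounds for $H$ (Littlestone's for (i), Theorem~\ref{thm:sclb} for (iii)). The paper instead argues directly on trees:
\begin{itemize}
\item For (i), it uses the potential $\Phi(V)=\sum_i\Ldim(V_i)$ on the product version space. At every internal node, $\Phi$ drops by at least one along a suitably chosen edge, so no path of a complete shattered tree can be longer than $\Phi(H)$.
\item For (iii), it takes the all-$\yes$ spine $z_1,\dots,z_m$ of a $(0,m)$-difficult tree shattered by $H$ and splits it by layer. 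The layer-$i$ nodes form a caterpillar that is $(0,m_i)$-difficult and shattered by $H_i$. The realizers are $h^{(t)}_i$ for the straight exits and $h^{(\infty)}_i$ for the spine.
\end{itemize}

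Your route is shorter and modular. It is the same style the paper itself uses in Lemma~\ref{lem:general-shape}(c) and in the upper bound of Proposition~\ref{prop:wsc-additive}. Its cost is that it depends on Theorems~\ref{thm:scub} and~\ref{thm:sclb} holding for classes on an arbitrary domain $Z_i$. As you note, this holds. Moreover, only the easy $k=0$ case of Theorem~\ref{thm:scub} is needed: there the learner simply accepts iff every consistent verifier accepts.

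The paper's tree argument is self-contained and makes the $k=0$ structure explicit. A $(0,m)$-difficult tree is governed by its single all-$\yes$ spine. That spine is precisely the sequence of completeness mistakes your layerwise sound learner would be forced into, and it decomposes cleanly across layers.
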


Part (iii) shows that the sound learner from above, which makes at
most $(n-1)L$ completeness mistakes when $|H_i|\le n$, is exactly optimal
precisely when every mini-class is
a singleton-rejector class.  The full
frontier can be computed exactly, and it interpolates \emph{linearly}
between the two endpoints (see Example \ref{ex:structured-singleton-rejectors}), where each unit of soundness budget buys exactly
$n-2$ mistakes. We present the proof of Proposition \ref{prop:additive} in Section \ref{sec:omitted-proofs} in the appendix.

We next give a two-layer product where any algorithm minimizing the total number of mistakes will make all soundness mistakes, but where one can reduce the number of soundness mistakes down to 1 with only a slight penalty in total mistakes; however, reducing soundness mistakes to 0 requires a huge penalty.  This class is used later in Section \ref{sec:bc-lower-bound} to show a lower bound for behavior cloning.

Let $\Sigma$ and $\Sigma'$ be disjoint sets of single-step reasoning traces, with $|\Sigma|=n$ and $|\Sigma'|=n'$, where $n'$ may be arbitrarily larger than $n$. Fix $\beta\in(0,1/2)$, say $\beta\approx 1/3$, such that $b:=\beta n$ is an integer. This example is a layered product in the sense of Definition~\ref{def:layered-product}, with layers $Z_1=X\times\Sigma$ and $Z_2=X\times\Sigma'$. On the first layer, consider the class
\[ R_{n,b} := \{h_S:S\subseteq\Sigma,\ |S|=b\}, \qquad h_S(\tau)=\no \Longleftrightarrow \tau\in S. \] On the second layer, consider the singleton-rejector class $R_{n'}=\{r_{\sigma^\star}:\sigma^\star\in\Sigma'\}$, where $r_{\sigma^\star}(\tau)=\no$ if and only if $\tau=\sigma^\star$. Their layered product is \[ H_\beta := R_{n,b} \times R_{n'} = \left\{ h_{S,\sigma^\star} \;\middle|\; S\subseteq\Sigma,\ |S|=b,\ \sigma^\star\in\Sigma' \right\}, \] where $h_{S,\sigma^\star}(\tau)=\no$ if and only if $\tau\in S\cup\{\sigma^\star\}$.

Thus, every verifier rejects exactly $b$ traces from the first layer and exactly one trace from the second, potentially much larger, layer. The two layers create qualitatively different sources of uncertainty. On $\Sigma$, a perfectly sound learner may reject all $n-b$ correct traces before identifying the unknown set $S$. On $\Sigma'$, perfect soundness may require rejecting all $n'-1$ correct traces before identifying $\sigma^\star$, but allowing one soundness mistake eliminates those mistakes.

\begin{restatable}{proposition} {twoscalefrontier}\label{prop:two-scale-frontier} Suppose $n\ge 2b+1$, $n'\ge n-2b+2$. For the layered product $ H_\beta$ defined above, $\Ldim(H_\beta)=b+1$. Moreover, \[ \SCLdim( H_\beta,k) = \begin{cases} (n-b)+(n'-1), & k=0,\\ (n-b)+1, & 1\leq k\leq b,\\ b+1, & k\geq b+1. \end{cases} \] \end{restatable}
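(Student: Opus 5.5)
The proposal handles the four claims in turn: $\Ldim(H_\beta)$ via \Cref{prop:additive}(i), then the $k=0$ row, the $k\ge b+1$ row, and the middle regime $1\le k\le b$.

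\textbf{Littlestone dimension.} By \Cref{prop:additive}(i), $\Ldim(H_\beta)=\Ldim(R_{n,b})+\Ldim(R_{n'})=\Ldim(R_{n,b})+1$, using \Cref{prop:rn}. So it suffices to show $\Ldim(R_{n,b})=b$.
\begin{itemize}
\item Upper bound: the algorithm that accepts every element of $\Sigma$ not already known to lie in $S$ only errs by accepting an element of $S$. Each such mistake reveals a new element of $S$, so it makes at most $b$ mistakes.
\item Lower bound: take a complete depth-$b$ tree whose nodes along every path are distinct elements of $\Sigma$. A path with $j$ \no\ edges and $r$ \yes\ edges, where $j+r\le b$, is realized by some $b$-subset $S$ because $n\ge 2b$.
\end{itemize}

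\textbf{The extreme regimes.} The row $k\ge b+1$ is exactly \Cref{lem:general-shape}(c) with $\Ldim(H_\beta)=b+1$.

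For $k=0$, \Cref{prop:additive}(iii) and \Cref{prop:rn} give $\SCLdim(H_\beta,0)=\SCLdim(R_{n,b},0)+(n'-1)$, so I would show $\SCLdim(R_{n,b},0)=n-b$.
\begin{itemize}
\item Upper bound: the sound learner rejects unless all consistent verifiers accept. Each of its completeness mistakes certifies a new element of $\Sigma\setminus S$. Once all $n-b$ such elements are certified, $S$ is determined, so it makes at most $n-b$ mistakes.
\item Lower bound: mimic the $(0,n-1)$ chain from \Cref{prop:rn}. Take $n-b$ distinct elements of $\Sigma$ linked by curvy edges, with every straight child a leaf.
  \begin{itemize}
  \item The all-\yes\ path is realized by $S$ equal to the remaining $b$ elements.
  \item A path that turns straight at step $j$ is realized by any $S$ that contains the $j$-th element and avoids the first $j-1$; such $S$ exists since $n\ge 2b+1$.
  \end{itemize}
\end{itemize}
Then \Cref{thm:sclb} and \Cref{thm:scub} (via \Cref{mainthm}) give the matching value.

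\textbf{Middle regime $1\le k\le b$, upper bound.} Run two learners side by side. On $Z_2$, always accept until $\sigma^\star$ is revealed; this costs at most one soundness mistake and no other mistakes. On $Z_1$, run the sound learner above, which makes at most $n-b$ mistakes, all completeness. The combined learner makes at most $1\le k$ soundness mistakes and at most $(n-b)+1$ mistakes in total. By \Cref{thm:sclb}, this gives $\SCLdim(H_\beta,k)\le (n-b)+1$.

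\textbf{Middle regime, lower bound (main obstacle).} Superadditivity is useless here, because $\SCLdim(R_{n,b},k)=b$ by \Cref{lem:general-shape}(c). So I would build a $(k,n-b+1)$-difficult tree directly, branching first on fresh elements of $\Sigma$.

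Track a node by $(j,r)$: $j$ straight edges (revealed elements of $S$) and $r$ curvy edges (revealed elements outside $S$) so far.
\begin{itemize}
\item While $j<k$ and $r<n-b$, query a fresh element of $\Sigma$ and branch both ways.
\item If $r$ reaches $n-b$ while $j<k\le b$, the path has length $n-b+j$. If $j=0$, append one node $\sigma\in\Sigma'$; either answer is realizable, so the length reaches $n-b+1$.
\item If $j$ reaches $k$, the budget is exhausted and only all-curvy continuations matter. There are two cases.
  \begin{itemize}
  \item If $k<b$, $S$ is still ambiguous. Continue with a curvy chain through the $n-r-j$ unrevealed elements of $\Sigma$, stopping one short of the end (or extending into $Z_2$). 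This yields $n-b-r$ further mistakes, for total length at least $k+(n-b)\ge n-b+1$.
  \item If $k=b$, $S$ is fully determined, so append instead a curvy singleton-rejector chain in $\Sigma'$ of length $n-2b+1-r$.
  \end{itemize}
\end{itemize}
The $k=b$ case with $r=0$ is the tight one: the chain needs $n-2b+1\le n'-1$, which is exactly the hypothesis $n'\ge n-2b+2$.

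The main work is checking shattering carefully at every branch. One must confirm that a $b$-subset and a $\sigma^\star$ consistent with each path always exist; this uses $n\ge 2b+1$ and keeps the final $\Sigma'$ straight branch a leaf. One must also confirm that every path with at most $k$ straight edges has length at least $n-b+1$. Combining this tree with \Cref{thm:sclb} gives the lower bound in the middle regime.
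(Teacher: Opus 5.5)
Your proposal is correct and follows the paper's proof essentially step for step (additivity for $\Ldim$, \Cref{prop:additive}(iii) for $k=0$, the reject-on-$\Sigma$/accept-on-$\Sigma'$ learner for the middle upper bound, the $k<b$ versus $k=b$ split with $n'\ge n-2b+2$ used exactly at $k=b$, and \Cref{lem:general-shape}(c) for $k\ge b+1$); the only real difference is that you encode the paper's middle-regime adversary as an explicit $(k,n-b+1)$-difficult tree, which bounds $\SCLdim$ directly instead of implicitly going through \Cref{thm:scub}. Two small fixes: the curvy chain in your $k<b$ branch must have exactly $n-b-r$ nodes (the all-\yes{} path has to leave $b-k$ unrevealed points to complete $S$), not stop ``one short'' of all $n-r-k$ unrevealed points; and your aside that superadditivity is useless holds only at $k=b$, since for $1\le k<b$ the same construction gives $\SCLdim(R_{n,b},k)=n-b$, so \Cref{prop:additive}(ii) already yields $(n-b)+1$ there.
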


We present the proof of Proposition \ref{prop:two-scale-frontier} in Section \ref{app:examples} in the appendix.

\subsubsection{Subspace verifiers}\label{sec:sc-subspace}

We will now look at the subspace verifier class from Example~B.5 of
\citealt{balcan2025learning}, an infinite class for which the frontier turns out to be
maximally sharp. The class is \emph{not learnable} for every soundness budget below
its Littlestone dimension. Let $d\ge 2$ and
$d'\ge d-1$ be integers, let $X=\R^{d\times d'}$, $\Sigma=\R^{d}$, and
\[
\Hd=\bigl\{h_v:v\in\R^{d}\setminus\{0\}\bigr\},\qquad
h_v(x,\tau_{1:\ell})=\yes\iff\tau_\ell\in\spn(x,v),
\]
where $\spn(x,v)$ is the span of the columns of $x$ together with
$v$. Note $h_v=h_{cv}$ for any scalar $c\neq 0$.

\begin{restatable}{theorem}{subspace}\label{thm:subspace}
For $d\ge 2$ and $d'\ge d-1$,
\[
\SCLdim(\Hd,k)=
\begin{cases}
\infty, & 0\le k\le d-2,\\[2pt]
d-1, & k\ge d-1.
\end{cases}
\]
In particular, $\Ldim(\Hd)=d-1$, and the transition occurs exactly at
$k=\Ldim(\Hd)$. 
\end{restatable}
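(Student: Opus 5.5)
The plan is to prove the three claims separately: $\Ldim(\Hd)=d-1$; $\SCLdim(\Hd,k)=\infty$ for $k\le d-2$; and the case $k\ge d-1$, which then follows from Lemma~\ref{lem:general-shape}(c). Throughout I will use one elementary reformulation. For a node $z=(x,\tau_{1:\ell})$, write $U=\spn(x)$ and $W=\spn(x,\tau_\ell)$. If $\tau_\ell\in U$, every $h_v$ says $\yes$, so the node is useless. If $\tau_\ell\notin U$, then $h_v(z)=\yes$ iff $v\in W\setminus U$. In particular, a $\yes$ label forces $v\in W$, and a $\no$ label forces $v\in U$ or $v\notin W$.

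\emph{Upper bound $\Ldim(\Hd)\le d-1$.} I will exhibit a deterministic learner making at most $d-1$ mistakes. Since any algorithm's mistake bound upper-bounds the Littlestone dimension (the adversary argument of Theorem~\ref{thm:sclb} with unlimited budget), this suffices. The learner maintains a subspace $L\ni v^*$, initially $L=\R^d$. On node $z$ it predicts $\yes$ iff $\tau_\ell\notin U$, $L\subseteq W$ and $L\not\subseteq U$; otherwise it predicts $\no$ (or $\yes$ in the trivial case $\tau_\ell\in U$). The updates are as follows.
\begin{itemize}
\item On a completeness mistake, the true label is $\yes$, so $v^*\in W$ while $L\not\subseteq W$ (the case $L\subseteq U$ is impossible because $v^*\in L\setminus U$). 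Update $L\gets L\cap W$.
\item On a soundness mistake, $L\subseteq W$ and the label is $\no$, so $v^*\in U$ while $L\not\subseteq U$. Update $L\gets L\cap U$.
\end{itemize}
In both cases $\dim L$ strictly decreases. Since $v^*\neq 0$, we always have $\dim L\ge 1$, so there are at most $d-1$ mistakes.

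\emph{Lower bounds via hyperplane queries.} The key gadget: since $d'\ge d-1$, we may take $x$ whose columns span an arbitrary hyperplane $U$, and any $\tau\notin U$. Then $W=\R^d$, so $h_v(z)=\no$ iff $v\in U$. Each such query therefore rejects exactly the directions in one hyperplane. Consider a current constraint set of the form ``$v\in L$, $v$ outside finitely many hyperplanes'' with $\dim L=j$.
\begin{itemize}
\item A straight edge with hyperplane $U$ (chosen with $L\not\subseteq U$) moves to $L\cap U$, of dimension $j-1$.
\item A curvy edge only removes $L\cap U$ from $L$.
\end{itemize}
If $j\ge 2$, we can choose infinitely many hyperplanes $U_1,U_2,\dots$ with pairwise distinct traces $L\cap U_i$, and both children of each query remain realizable. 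This holds because a nonzero subspace over $\R$ is never a finite union of proper subspaces, applied to $L$ and to each $L\cap U_i$ of dimension $\ge 1$.

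For $\Ldim(\Hd)\ge d-1$, I will build a complete tree of depth $d-1$ recursively using these queries. Each straight edge lowers $\dim L$ by one, and curvy edges keep it, so along any path $\dim L\ge d-(d-1)=1$ and every leaf is realized. For $\SCLdim(\Hd,k)=\infty$ with $k\le d-2$, I will build, for each $m$, a $(k,m)$-difficult tree:
\begin{itemize}
\item from each node, run a curvy chain of length $m$ through distinct hyperplanes;
\item hang off each chain node a straight edge that restarts the construction inside $L\cap U_i$ with budget reduced by one;
\item once a path has used $k+1$ straight edges, terminate it at a leaf.
\end{itemize}
Any path with at most $k$ straight edges has $\dim L\ge d-k\ge 2$ at every node, so the curvy chains are available and have length $m$. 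Realizability of every root-to-leaf path follows from the finite-union fact above.

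Finally, the case $k\ge d-1$ follows from Lemma~\ref{lem:general-shape}(c) with $\Ldim(\Hd)=d-1$. I expect the main obstacle to be the upper bound on $\Ldim$. One has to handle correctly the case $v^*\in\spn(x)$, where a ``$\yes$'' vote by the span constraint is nonetheless wrong. That is exactly why the learner's prediction rule includes the condition $L\not\subseteq U$, and why the soundness update intersects with $U$ rather than $W$.
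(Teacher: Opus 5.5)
Your proposal is correct and follows essentially the paper's route. You recast each query as a flag (your $U\subset W$) and build the lower-bound trees from hyperplane queries whose traces cut the current subspace: a complete tree of depth $d-1$, and, for $k\le d-2$, curvy chains of distinct hyperplanes whose straight edges drop the dimension by one. These are the paper's Lemmas~\ref{lem:obstacle} and~\ref{lem:difficult}, and you finish with Lemma~\ref{lem:general-shape}(c). Your explicit learner for $\Ldim(\Hd)\le d-1$ is just the algorithmic form of the dimension-drop induction in Lemma~\ref{lem:ldim-ub}.

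One detail should be stated explicitly, as the paper does with its obstacle family $\mathcal A$. Each newly chosen hyperplane's trace on the current $L$ must differ from the other traces in its own chain. It must also differ from the traces of every hyperplane excluded by earlier curvy edges, or a straight child can be empty. The same ``infinitely many hyperplanes versus finitely many exclusions'' argument you already use supplies this.
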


At a high-level, we first show (Lemma \ref{lem:normal}) that after rescaling, every query is equivalent to asking whether $v\in T\setminus S$
where $S\subset T$ and $\dim(T)=\dim(S)+1$. If $T\neq\mathbb{R}^d$, a $\yes$ answer places $v$ in the lower-dimensional space $T$. If $T=\mathbb{R}^d$, a $\no$ answer places $v$ in the hyperplane $S$. Thus, when $k\ge d-1$, the learner can ensure that every mistake reduces by at least one the dimension of the space in which $v$ may lie. Our construction (Lemma \ref{lem:difficult}) shows that the adversary can keep this reduction to one dimension per mistake. Therefore, $\Ldim(\Hd)=d-1$.

For $0\le k\le d-2$, consider a query asking whether $v$ lies outside a hyperplane $W$. As long as some possible $v$ lies in $W$, a sound learner must answer $\no$. If this answer is incorrect, the learner eliminates only the candidates in $W$, while the remaining candidates still span the full space. The adversary can therefore repeat such queries indefinitely. To reduce the dimension, the learner must answer $\yes$ and risk a soundness mistake. If such a mistake occurs, then $v\in W$, and Lemma \ref{lem:ldim-ub} reduces the problem to the same problem in one fewer dimension. Hence, each allowed soundness mistake reduces the dimension by exactly one.

This also explains the update $S \leftarrow S \cap\operatorname{span}(x,\tau_1)$
in Example B.5 of \cite{balcan2025learning}: each false acceptance reduces the dimension by one. Our lower bound shows that this rate is optimal. The complete proof appears in Section \ref{sec:subspace}.

\subsection{Prefix Verification with Linear Cost Objective}\label{sec:lscmistakes}

We show that any deterministic verification algorithm using $H$ must suffer cumulative loss at least $\text{WSC-Ldim}(H)$. 

\begin{restatable}{theorem}{scllb}\label{thm:scllb}
For any verifier class, $H$, and any deterministic learning algorithm, there exists a realizable sequence such that the learner must suffer cumulative loss at least $W$ for every $W < \text{WSC-Ldim}(H)$.
\end{restatable}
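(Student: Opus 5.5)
The plan mirrors the proof of \Cref{thm:sclb}, with path length replaced by cumulative edge weight. Fix any $W<\text{WSC-Ldim}(H)$. By definition of the WSC-Littlestone dimension as a supremum, there is a WSC-mistake tree shattered by $H$ whose root-to-leaf paths all have cumulative weight at least $W' > W$. This covers both the finite and the infinite case, since when $\text{WSC-Ldim}(H)=\infty$ such a tree exists for every $W$. Here one checks that the supremum is approached: any $W$ strictly below it is exceeded by some witnessing tree. This is the only place where strictness of $W<\text{WSC-Ldim}(H)$ is needed, because the supremum itself may not be attained. A subtle point is whether such trees must be finite. If shattering is only required along finite root-to-leaf paths, I would assume the witnessing tree is finite, or truncate it at a depth where every path has already accumulated weight above $W$. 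When $\gamma_s,\gamma_c>0$ this truncation is possible. If one of the costs is zero, I would restrict to finite trees, which I take to be implicit in the definition.

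Given this tree, the adversary strategy is the same as before. Starting at the root, the adversary presents the prefix example $z$ at the current internal node to the deterministic learner. If the learner predicts $\no$, the adversary reveals $\yes$ and moves along the curvy edge $(z,z_c)$. The learner has then made a completeness mistake and incurs cost $\gamma_c=w(z,z_c)$. If the learner predicts $\yes$, the adversary reveals $\no$ and moves along the straight edge $(z,z_s)$, so the learner incurs the soundness cost $\gamma_s=w(z,z_s)$. Because the learner is deterministic, this interaction traces a single root-to-leaf path $z_1,\dots,z_d$. The learner's cumulative loss equals the total weight of that path, which is at least $W'>W$.

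It remains to verify realizability. Shattering provides some $h\in H$ with $h(z_i)$ equal to the label of the edge $(z_i,z_{i+1})$ for all $i$, so the revealed labels are consistent with $h$, which serves as the target. I also need the prefix promise to hold: each presented prefix must have all but its last step correct under $h$. I would handle this as \Cref{thm:sclb} implicitly does, treating $h$ as the oracle on the presented nodes. Alternatively, I would note that by the reductions (\Cref{thm:red2}, or directly in the chain-of-thought model via the padding token $F$) the lower bound transfers to chain-of-thought verification. Thus the only genuine obstacle is the supremum/finite-tree issue from the first step, and the rest is bookkeeping.
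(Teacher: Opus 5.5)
Your proposal is correct and matches the paper's proof: take a WSC-mistake tree shattered by $H$ whose root-to-leaf paths all have weight at least $W$, have the adversary follow the edge that contradicts each deterministic prediction (incurring $\gamma_s$ or $\gamma_c$), and use shattering for realizability. Your remarks on the supremum, finiteness of the tree, and the prefix promise cover points the paper's proof does not discuss. The paper simply takes a tree with minimum path weight at least $W$ directly, which suffices without $W'>W$. So these remarks add care but do not change the route.
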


\begin{proof}
Fix $W < \mathrm{WSC\text{-}Ldim}(H)$. By definition, there exists a WSC mistake tree $\cT$ shattered by $H$ such that every root-to-leaf path in $\cT$ has cumulative weight at least $W$.
We consider a deterministic learning algorithm, $V_H$ and we define an adaptive adversary that interacts with $V_H$ as follows. Starting at the root of $\cT$, at each internal node the adversary presents the label of the node $(x,\tau)$ to the learner and observes the learner’s prediction $\hat{y} \in \cY$. At every node, the adversary can choose an outgoing edge inconsistent with the learner’s prediction and induce loss equal to the edge's weight. To see this, we consider the following cases. 
    
    First, we consider the case where the learner's prediction is $\hat{y} = \yes$. From the definition of a WSC mistake tree, we know that there exists a straight edge with weight $\gamma_s$ and label $\no$ that the adversary can choose to force the learner to make a soundness mistake and incur cost $\gamma_s$. We now consider the case, where the learner's prediction is $\hat{y} = \no$. Again, from the definition of a WSC-mistake tree, there must exist a curvy edge with weight $\gamma_c$ and label $\yes$ that the adversary can choose to force the learner to make a completeness mistake and incur cost $\gamma_c$. The adversary proceeds by presenting the example to which the chosen outgoing edge leads in the tree $\cT$ and repeats the process of finding the corresponding true label.
     
    The learner is deterministic, so this sequence of examples will correspond to a root-to-leaf path in $\cT$ and each edge will correspond to a mistake, where the loss of the learner is at least the weight of the edge. 
    By definition of $\cT$, however, every root-to-leaf path has cumulative weight at least $W$. Lastly, since $\cT$ is shattered by $H$, there must exist an $h \in H$ that is consistent with the sequence of examples presented to the learner, which ensures realizability.
\end{proof}

 We next present and analyze the performance of \Cref{alg:SCL-soa}. At a high-level, given a problem statement and a reasoning trace, the algorithm predicts the first faulty step that minimizes the worst-case sum of the immediate loss and the complexity of the future version space, as captured by the WSC-Littlestone dimension. We denote the immediate loss for a learner's prediction $\hat{y}$ and a true prediction $y$ as $\ell(\hat{y}, y) \in \{0, \gamma_s, \gamma_c\}$.

\begin{algorithm}[t]
\caption{Prefix verification with linear cost objective}\label{alg:SCL-soa}
\DontPrintSemicolon
\LinesNumbered
\SetNoFillComment
\SetKwComment{tcp}{\(\triangleright\) }{}
\KwIn{class of verifiers $H$, mistake costs $\gamma_s, \gamma_c \in \mathbb{R}_{\ge 0}$.}

$H^{(1)} \leftarrow H$\tcp*[r]{set initial version space}

\For{$t \in [T]$}{
    Observe example $z^{(t)} = \bigl(x^{(t)}, \tau^{(t)} = (\tau_1^{(t)}, \ldots, \tau_{\ell^{(t)}}^{(t)})\bigr)$\;

    Compute $\cY^{(t)} = \{ h(z^{(t)}) \; | \; h \in H^{(t)} \}$\tcp*[r]{set of possible outputs}
    
    \eIf(\tcp*[f]{all consistent verifiers agree}){$|\cY^{(t)}|=1$}{
        Predict $\hat{y}^{(t)} \in \cY^{(t)}$\;
    }{
        Compute $m_c = \gamma_c + \mathrm{WSC\text{-}Ldim}\bigl(H^{(t)}[(z^{(t)}, \yes)]\bigr)$\;

        Compute $m_s = \gamma_s + \mathrm{WSC\text{-}Ldim}\bigl(H^{(t)}[(z^{(t)}, \no)]\bigr)$\;

        \If(\tcp*[f]{minimize WSC-Ldim of future version space}){$m_c \le m_s$}{
            Predict $\hat{y}^{(t)} \gets \no $\;
            }
        \Else{
            Predict $\hat{y}^{(t)} \gets \yes $\;
            }
    \nllabel{step:cases-wsc}
  }
    }

    Receive correct label $y^{(t)} \in \cY$\;
    
    Set $H^{(t+1)} \leftarrow H^{(t)}[(z^{(t)}, y^{(t)})]$\tcp*[r]{update version space}
\end{algorithm}

\begin{restatable}{theorem}{sclsoaub}
    For any class of verifiers, $H$, \Cref{alg:SCL-soa} achieves cumulative loss at most $\text{WSC-Ldim}(H)$ on any realizable sequence of examples.\label{thm:wscl-ub}
\end{restatable}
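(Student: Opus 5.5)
The plan is to show that $\Phi(H^{(t)}) := \text{WSC-Ldim}(H^{(t)})$ acts as a potential. I will prove that for every round $t$,
\[
\ell(\hat y^{(t)},y^{(t)}) + \text{WSC-Ldim}\bigl(H^{(t+1)}\bigr) \le \text{WSC-Ldim}\bigl(H^{(t)}\bigr).
\]
Telescoping over $t=1,\dots,T$ then bounds the cumulative loss by $\text{WSC-Ldim}(H^{(1)})-\text{WSC-Ldim}(H^{(T+1)})\le \text{WSC-Ldim}(H)$, since the dimension is nonnegative. Realizability guarantees that $h^*$ survives in every $H^{(t)}$, so the version spaces are never empty. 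If $\text{WSC-Ldim}(H)=\infty$ the statement is vacuous, so I assume it is finite.

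The per-round inequality is trivial when $|\cY^{(t)}|=1$: the prediction is correct, the loss is $0$, and $H^{(t+1)}=H^{(t)}$. The core step is the case $|\cY^{(t)}|=2$, where both restrictions $H_{\yes}:=H^{(t)}[(z^{(t)},\yes)]$ and $H_{\no}:=H^{(t)}[(z^{(t)},\no)]$ are nonempty. Here I will prove the key combination lemma:
\[
\text{WSC-Ldim}(H^{(t)}) \ge \min\bigl\{\gamma_c+\text{WSC-Ldim}(H_{\yes}),\ \gamma_s+\text{WSC-Ldim}(H_{\no})\bigr\}.
\]
The proof of the lemma is a tree-gluing argument. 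Fix any $\epsilon>0$. Take a WSC tree $\cT_{\yes}$ shattered by $H_{\yes}$ whose root-to-leaf paths all have weight at least $\text{WSC-Ldim}(H_{\yes})-\epsilon$, and an analogous tree $\cT_{\no}$ for $H_{\no}$. If a class has dimension $0$, I use the empty (leaf-only) tree. Build a new tree with root $z^{(t)}$, curvy child $\cT_{\yes}$ and straight child $\cT_{\no}$. It is shattered by $H^{(t)}$: a path through the curvy edge is realized by some $h\in H_{\yes}$, which also labels the root $\yes$ as required, and symmetrically for the straight edge. Every root-to-leaf path has weight at least the minimum above minus $\epsilon$. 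Letting $\epsilon\to 0$ gives the lemma.

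The inequality then follows from the algorithm's choice. Suppose the algorithm predicts $\no$, so $m_c\le m_s$ and the lemma gives $\text{WSC-Ldim}(H^{(t)})\ge m_c$. If the true label is $\yes$, the learner pays $\gamma_c$ and $H^{(t+1)}=H_{\yes}$, so loss plus new potential equals $m_c\le \text{WSC-Ldim}(H^{(t)})$. If the true label is $\no$, the loss is $0$ and $H^{(t+1)}=H_{\no}\subseteq H^{(t)}$. Monotonicity of WSC-Ldim under taking subclasses (a tree shattered by a subclass is shattered by the class) gives the inequality. The case of predicting $\yes$, where $m_s<m_c$, is symmetric and charges $\gamma_s$ against $m_s$.

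I expect the main obstacle to be making the tree-gluing step rigorous given the supremum-based definition. Three points need care: attaining the sup only up to $\epsilon$, handling zero-dimensional or trivial subtrees (leaf children), and ensuring that gluing trees of unequal depth still yields a valid full binary tree. If that becomes awkward, the fallback is to argue by contradiction: if both $m_c$ and $m_s$ exceeded $\text{WSC-Ldim}(H^{(t)})$ by some $\delta>0$, the glued tree would witness a dimension exceeding it, which is a contradiction.
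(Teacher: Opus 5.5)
Your proposal is correct and follows essentially the same route as the paper. The paper also proves the per-round inequality $\ell(\hat y^{(t)},y^{(t)})+\WSCLdim(H^{(t+1)})\le\WSCLdim(H^{(t)})$ from monotonicity and the tree-gluing bound $\WSCLdim(H^{(t)})\ge\min\{\gc+\WSCLdim(H_{\yes}),\,\gs+\WSCLdim(H_{\no})\}$, then does the same case split on the algorithm's choice and telescopes; the only difference is that it fixes $W$ below the minimum instead of using your $\epsilon$-slack, which is equivalent. Your worry about gluing subtrees of unequal depth is moot, since WSC-mistake trees need not be complete and leaves may occur at any depth.
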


\begin{proof}
We will show that for every timestep $t$,
\begin{equation}
    \ell(\hat{y}^{(t)}, y^{(t)}) \leq \text{WSC-Ldim}(H^{(t)}) - \text{WSC-Ldim}(H^{(t+1)}).\label{eq:wsc}
\end{equation}
Summing over $t$ gives $\ell(\hat{y}^{(t)}, y^{(t)}) \leq \text{WSC-Ldim}(H) - \text{WSC-Ldim}(H^{(T)})\le \text{WSC-Ldim}(H)$ as desired.

Fix $t$ and write $z = z^{(t)}$, $A = \text{WSC-Ldim}(H^{(t)}[(z,\textsc{yes})])$, and $B = \text{WSC-Ldim}(H^{(t)}[(z,\textsc{no})])$. We use two elementary facts about the WSC-Littlestone dimension.

We first show monotonicity, i.e., if $H' \subseteq H''$ then $\text{WSC-Ldim}(H') \le \text{WSC-Ldim}(H'')$. Any WSC-mistake tree shattered by $H'$ is also shattered by $H''$, since a realizer in $H'$ for a root-to-leaf path lies in $H''$. Hence the supremum defining $\text{WSC-Ldim}(H')$ is taken over a subfamily of the trees available for $H''$.

Next, we show that if both $H^{(t)}[(z,\textsc{yes})]$ and $H^{(t)}[(z,\textsc{no})]$ are nonempty, then
$$\text{WSC-Ldim}(H^{(t)}) \;\ge\; \min\{\, \gamma_s + B,\ \gamma_c + A \,\}.$$
To see this, fix any $W < \min\{\gamma_s + B,\ \gamma_c + A\}$. Choose WSC-mistake trees $T_{\textsc{no}}$ shattered by $H^{(t)}[(z,\textsc{no})]$ and $T_{\textsc{yes}}$ shattered by $H^{(t)}[(z,\textsc{yes})]$ whose root-to-leaf paths all have weight at least $W - \gamma_s$ and $W - \gamma_c$ respectively (possible by definition of $A, B$). Form the tree rooted at $z$ whose straight edge (label $\textsc{no}$, weight $\gamma_s$) leads to $T_{\textsc{no}}$ and whose curvy edge (label $\textsc{yes}$, weight $\gamma_c$) leads to $T_{\textsc{yes}}$. This tree is shattered by $H^{(t)}$. Indeed, a path descending the straight edge is realized by some $h \in H^{(t)}[(z,\textsc{no})] \subseteq H^{(t)}$ with $h(z) = \textsc{no}$ matching the edge label, and symmetrically for the curvy edge. Every root-to-leaf path has weight at least $\min\{\gamma_s + (W-\gamma_s),\ \gamma_c + (W-\gamma_c)\} = W$. Letting $W$ increase up to $\min\{\gamma_s + B,\ \gamma_c + A\}$ gives the desired bound.

We will now establish (\ref{eq:wsc}). If all verifiers in $H^{(t)}$ agree on $z$, \Cref{alg:SCL-soa} predicts their common value. So $\ell(\hat y^{(t)}, y^{(t)}) = 0$ and $H^{(t+1)} = H^{(t)}$ and (\ref{eq:wsc}) holds. Otherwise we apply the above lower bound on $\text{WSC-Ldim}(H^{(t)})$. Consider the two possible cases.

\begin{itemize}
    \item Correct prediction ($\hat y^{(t)} = y^{(t)}$): then $\ell(\hat y^{(t)}, y^{(t)}) = 0$ and $H^{(t+1)} = H^{(t)}[(z, y^{(t)})] \subseteq H^{(t)}$, so by monotonicity $\text{WSC-Ldim}(H^{(t+1)}) \le \text{WSC-Ldim}(H^{(t)})$, giving (\ref{eq:wsc}).
    \item Mistake ($\hat y^{(t)} \ne y^{(t)}$): recall Line 10 predicts $\textsc{no}$ when $m_c \le m_s$ and $\textsc{yes}$ otherwise, where $m_c = \gamma_c + A$ and $m_s = \gamma_s + B$. If $\hat y^{(t)} = \textsc{no}$ then $m_c \le m_s$, and a mistake forces $y^{(t)} = \textsc{yes}$, so $\ell(\hat y^{(t)}, y^{(t)}) = \gamma_c$ and $H^{(t+1)} = H^{(t)}[(z,\textsc{yes})]$. Hence $\ell(\hat y^{(t)}, y^{(t)}) + \text{WSC-Ldim}(H^{(t+1)}) = \gamma_c + A = m_c = \min\{m_c, m_s\}$.
If instead $y^{(t)} = \textsc{yes}$ then $m_s < m_c$, a mistake forces $y^{(t)} = \textsc{no}$, and analogously the left side equals $\gamma_s + B = m_s = \min\{m_c, m_s\}$. In either case the left side of (\ref{eq:wsc}) equals $\min\{m_c, m_s\} = \min\{\gamma_c + A,\ \gamma_s + B\}$, which is at most $\text{WSC-Ldim}(H^{(t)})$.
\end{itemize}
This establishes (\ref{eq:wsc}) in all cases and completes the proof.
\end{proof}
We note that when $\gamma_s=\gamma_c = 1$, the WSC-Littlestone dimension recovers the definition of the Littlestone dimension. Moreover, when $\gamma_c = 1$ and $\gamma_s = d$, any finite class $H$ has $\text{WSC-Ldim}(H) \leq O\left( \frac{d}{\log d} \log(|H|) \right)$. This recovers a previous result by \citet{helmbold:2000} on online binary classification with linear costs and finite verifier classes. We provide a formal argument in Proposition \ref{prop:zerolocation}.\looseness-1

\subsection{Examples: The Weighted Frontier of Concrete Verifier Classes}
\label{sec:wsc-examples}

We will now compute the $\WSCLdim$ for a few concrete examples of verifier classes.
A useful property is the following recursive characterization, which is
immediate from the optimality of Algorithm~\ref{alg:SCL-soa}
(Theorem~\ref{thm:wscl-ub}) and the matching lower bound
(Theorem~\ref{thm:scllb}). Writing $H[(z,y)]:=\{h\in H:h(z)=y\}$, and
setting $\WSCLdim(H)=0$ whenever all $h\in H$ agree on every relevant
example,
\begin{equation*}
\label{eq:wsc-recursion}
\WSCLdim(H)\;=\;\sup_{z\in\text{DIS}(H)}\;\min\!\Big(\gs+\WSCLdim\big(H[(z,\no)]\big),\;\;
\gc+\WSCLdim\big(H[(z,\yes)]\big)\Big),
\end{equation*}
where the supremum ranges over examples $z$ for which both $H[(z,\yes)]$ and
$H[(z,\no)]$ are nonempty. At each such $z$ the learner
incurs cost $\gs$ and restricts the verifier class to $H[(z,\no)]$, or incurs cost $\gc$ and restricts the class to
$H[(z,\yes)]$, taking whichever is reduces WSC-Ldim the most (the $\min$).

We next prove the following two elementary facts.

\begin{lemma}
\label{lem:wsc-boundary}
For any verifier class $H$, if $\min\{\gs,\gc\}=0$ then $\WSCLdim(H)=0$.
\end{lemma}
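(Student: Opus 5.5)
The plan is to sandwich $\WSCLdim(H)$ between $0$ and $0$. The lower bound $\WSCLdim(H)\ge 0$ is immediate from the definition. The empty tree, a single leaf, is shattered whenever $H\neq\varnothing$, and its only root-to-leaf path has weight $0$. When $H=\varnothing$, the value is $0$ by convention. So the content of the lemma is the upper bound $\WSCLdim(H)\le 0$.

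For the upper bound we give an algorithmic argument combined with \Cref{thm:scllb}. Suppose, say, $\gs=0$; the case $\gc=0$ is symmetric. Consider the deterministic learner that always predicts $\yes$. It never makes a completeness mistake, and each soundness mistake costs $\gs=0$, so its cumulative loss on every realizable sequence is $0$. By \Cref{thm:scllb}, for every $W<\WSCLdim(H)$ this learner can be forced to suffer loss at least $W$. If $\WSCLdim(H)>0$, we could take $W\in(0,\WSCLdim(H))$ and obtain a contradiction. Hence $\WSCLdim(H)\le 0$. When $\gc=0$, we use the learner that always predicts $\no$ instead.

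Alternatively, one can argue directly on trees without invoking \Cref{thm:scllb}. In any shattered WSC-mistake tree, follow from the root the edge whose weight is $\min\{\gs,\gc\}=0$ at every internal node, meaning always the straight edge if $\gs=0$ and always the curvy edge otherwise. This gives a root-to-leaf path of total weight $0$. The path must be finite, because trees in the definition have finite root-to-leaf paths, which is implicit since path weights are summed. Therefore no shattered tree has all paths of weight at least any $W>0$, and the supremum is at most $0$.

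The only mild subtlety is this finiteness and well-definedness issue, together with the edge case $H=\varnothing$. I would therefore present the algorithmic argument as the main proof, since it relies only on the already-established \Cref{thm:scllb} and avoids these technicalities.
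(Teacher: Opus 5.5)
Your main argument is correct and is essentially the paper's proof: the learner that always predicts $\yes$ (resp.\ $\no$) incurs zero cost when $\gs=0$ (resp.\ $\gc=0$), so \Cref{thm:scllb} forces $\WSCLdim(H)\le 0$. Your explicit lower bound $\WSCLdim(H)\ge 0$ and the direct tree argument (following zero-weight edges) are also valid; the paper simply does not spell them out.
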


\begin{proof}
If $\gs=0$, the learner that predicts $\yes$ on every example never makes a
completeness mistake. Every mistake it makes is a soundness mistake, of
weight $\gs=0$. Hence its cumulative cost is $0$ on every sequence, so
$\WSCLdim(H)=0$ by Theorem~\ref{thm:scllb}. The case $\gc=0$ follows similarly.
\end{proof}

When soundness errors are at least as costly as
completeness errors and the class is intersection-closed, there is never any
reason to risk a soundness mistake. If the class $H$ admits a sound learner with at most $\Ldim(H)$ completeness mistakes, the weighted SC-Littlestone dimension is
$\gc\,\Ldim(H)$. The interest lies entirely in classes where $\gs<\gc$, or where the
sound learner is far from the Littlestone optimum.

\begin{lemma}[Sandwich bound]
\label{lem:wsc-sandwich}
If $\Ldim(H)=\ell<\infty$ then
$\ell\,\min(\gs,\gc)\le\WSCLdim(H)\le\ell\,\max(\gs,\gc)$.
Moreover, if $H$ admits a sound learner making at most $\ell$ completeness
mistakes and no soundness mistakes (as for the intersection-closed classes
of Example~\ref{example:puzzle}) then for all $\gs\ge\gc$,
\[
\WSCLdim(H)=\gc\,\Ldim(H).
\]
\end{lemma}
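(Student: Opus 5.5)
The plan is to prove the three claims separately, using the two matching characterizations of $\WSCLdim$: the lower bound of Theorem~\ref{thm:scllb}, which caps $\WSCLdim(H)$ by the worst-case cost of any deterministic learner, and the shattered-tree definition.

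\emph{Lower bound.} Since $\Ldim(H)=\ell$, there is a complete binary mistake tree of depth $\ell$ shattered by $H$. I will regard it as a WSC-mistake tree: each straight edge (label $\no$) gets weight $\gs$ and each curvy edge (label $\yes$) gets weight $\gc$. The shattering condition carries over unchanged. Every root-to-leaf path has exactly $\ell$ edges, each of weight at least $\min(\gs,\gc)$, so every path has weight at least $\ell\min(\gs,\gc)$. By the definition of $\WSCLdim$ as a supremum, $\WSCLdim(H)\ge \ell\min(\gs,\gc)$.

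\emph{Upper bound.} I will run the Standard Optimal Algorithm of \cite{littlestone1988learning}. On any realizable sequence it makes at most $\ell$ mistakes in total, and each mistake costs at most $\max(\gs,\gc)$. Its cumulative cost is therefore at most $\ell\max(\gs,\gc)$. If $\WSCLdim(H)$ exceeded this, we could pick $W$ with $\ell\max(\gs,\gc)<W<\WSCLdim(H)$, and Theorem~\ref{thm:scllb} would force SOA to incur cost at least $W$ on some realizable sequence, a contradiction.

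\emph{Equality under the extra hypothesis.} Suppose $\gs\ge\gc$ and $H$ admits a sound learner making at most $\ell$ completeness mistakes and no soundness mistakes. That learner's cost is at most $\ell\gc$, so the same argument via Theorem~\ref{thm:scllb} gives $\WSCLdim(H)\le \ell\gc$. The lower bound already gives $\WSCLdim(H)\ge\ell\min(\gs,\gc)=\ell\gc$, and the two bounds match.

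The main subtlety is applying Theorem~\ref{thm:scllb} to learners that are deterministic but not necessarily proper. SOA and the assumed sound learner are deterministic, so the theorem applies directly. If the sound learner were randomized, I would first fix its coins, or note that the intersection-closed learner of Example~\ref{example:puzzle} is deterministic. The edge case $\ell=0$ is trivial, since both bounds are $0$.
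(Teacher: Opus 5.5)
Your proposal is correct and matches the paper's proof step for step. The depth-$\ell$ Littlestone tree, reread as a WSC-mistake tree, gives the lower bound; the Standard Optimal Algorithm's $\ell$-mistake guarantee, with each mistake costing at most $\max(\gs,\gc)$, gives the upper bound; and the sound learner's cost $\ell\gc$ meets the lower bound $\ell\min(\gs,\gc)=\ell\gc$ when $\gs\ge\gc$. The only difference is that you spell out, via Theorem~\ref{thm:scllb}, why a deterministic learner's worst-case cost upper-bounds $\WSCLdim(H)$, a step the paper leaves implicit.
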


\begin{proof}
A complete mistake tree of depth $\ell$ shattered by $H$ is also a
WSC-mistake tree. Each of its root-to-leaf paths has $\ell$ edges, each of
weight at least $\min(\gs,\gc)$, giving the lower bound. For the upper bound,
the Standard Optimal Algorithm \cite{littlestone1988learning} makes at most $\ell$ mistakes total on any
realizable sequence, each of cost at most $\max(\gs,\gc)$. For the last claim,
the sound learner makes at most $\ell$ completeness mistakes and no soundness
mistakes, so its cost is at most $\gc\ell$. Combined with the lower bound
$\min(\gs,\gc)\ell=\gc\ell$ (valid since $\gs\ge\gc$), this gives equality.
\end{proof}

\paragraph{Singleton rejectors.}
Recall $R_n=\{h_\sigma:\sigma\in D\}$ with $|D|=n\ge2$ and
$h_\sigma(z)=\no\iff z=\sigma$ (Section~\ref{sec:sc-singleton}).  We show that the sharp transition of
Proposition~\ref{prop:rn} (from $n-1$ mistakes down to $1$ for a single unit of soundness
budget) reappears here.

\begin{proposition}
\label{prop:wsc-singleton}
$\displaystyle \WSCLdim(R_n)=\min\big(\gs,\;(n-1)\gc\big).$
\end{proposition}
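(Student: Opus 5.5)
We prove matching upper and lower bounds, using the recursive characterization of $\WSCLdim$ (optimality of Algorithm~\ref{alg:SCL-soa} via Theorem~\ref{thm:wscl-ub} together with Theorem~\ref{thm:scllb}). The key structural fact, already used in the proof of Proposition~\ref{prop:rn}, is that for any sub-class $R'\subseteq R_n$ with $|R'|\ge 2$ and any example $z$ on which $R'$ disagrees, we must have $z\in D$ with $h_z\in R'$. Then $R'[(z,\no)]=\{h_z\}$ is a singleton, so its $\WSCLdim$ is $0$. Also $R'[(z,\yes)]=R'\setminus\{h_z\}$, which is again a singleton-rejector class of size $|R'|-1$. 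Writing $f(j)$ for $\WSCLdim$ of a singleton-rejector class with $j$ members, the recursion becomes
\[
f(1)=0,\qquad f(j)=\min\bigl(\gs,\;\gc+f(j-1)\bigr)\quad (j\ge 2).
\]
The supremum over $z$ disappears because every disagreement point gives the same pair of values.

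We then solve this by induction on $j$, with the claim $f(j)=\min(\gs,(j-1)\gc)$. The base case $j=1$ gives $0$. For the inductive step,
\[
f(j)=\min\bigl(\gs,\;\gc+\min(\gs,(j-2)\gc)\bigr)=\min\bigl(\gs,\;\gc+\gs,\;(j-1)\gc\bigr)=\min\bigl(\gs,(j-1)\gc\bigr),
\]
using $\gc\ge 0$. Setting $j=n$ gives the proposition.

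To avoid relying on the informally stated recursion, we would also verify the two inequalities directly. For the \emph{upper bound}, we exhibit two learners and invoke Theorem~\ref{thm:scllb}'s counterpart, Theorem~\ref{thm:wscl-ub}, or simply bound their costs. The learner that always predicts $\yes$ makes at most one soundness mistake, since the only $\no$ example is $\sigma^*$, and never a completeness mistake, for cost at most $\gs$. The sound learner that rejects whenever some consistent verifier rejects makes only completeness mistakes, at most $n-1$ of them as in Proposition~\ref{thm:finiteHsound}, for cost at most $(n-1)\gc$. Since the optimum is at most the better of the two, $\WSCLdim(R_n)\le\min(\gs,(n-1)\gc)$. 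Strictly, $\WSCLdim$ is bounded by any learner's cost because of the lower bound Theorem~\ref{thm:scllb}.

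For the \emph{lower bound}, we reuse the caterpillar tree from Proposition~\ref{prop:rn}. Its nodes are $\sigma_1,\dots,\sigma_{n-1}$ along the curvy spine, and each straight child is a leaf. This tree is shattered by $R_n$. Its root-to-leaf paths are of the form $\yes^{j-1}\no$ with weight $(j-1)\gc+\gs\ge\gs$, plus the all-$\yes$ path with weight $(n-1)\gc$. Hence every path has weight at least $\min(\gs,(n-1)\gc)$. The only delicate point is the bookkeeping of the edge case $\min(\gs,\gc)=0$, which is consistent with Lemma~\ref{lem:wsc-boundary}, and I expect no real obstacle beyond that.
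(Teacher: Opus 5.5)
Your proposal is correct and contains two proofs. The second one is essentially the paper's argument: the paper uses the same two learners for the upper bound, and for the lower bound an adversary that walks down exactly your caterpillar. Your explicit shattered tree is the slightly cleaner phrasing, since it lower-bounds $\WSCLdim(R_n)$ directly from the definition rather than through the cost of every deterministic learner.

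Your first route is genuinely different. You use the recursion $\WSCLdim(H)=\sup_{z}\min\bigl(\gs+\WSCLdim(H[(z,\no)]),\,\gc+\WSCLdim(H[(z,\yes)])\bigr)$ together with a structural fact: every disagreement point of a singleton-rejector subclass splits it into $\{h_z\}$ and a singleton-rejector class with one fewer member. The supremum then collapses to the scalar recurrence $f(j)=\min(\gs,\gc+f(j-1))$. This derives the formula mechanically rather than guessing the two extremal strategies, and it handles $\min(\gs,\gc)=0$ with no case analysis. The price is reliance on the recursive characterization. Its ``$\ge$'' half is the tree-gluing step in the proof of Theorem~\ref{thm:wscl-ub}. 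Its ``$\le$'' half is most directly justified from the tree definition: the root of a nontrivial shattered tree lies in the disagreement region, and its two subtrees are shattered by $H[(z,\no)]$ and $H[(z,\yes)]$.

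There is one small slip in your direct upper bound. A learner that literally always predicts $\yes$ does not make at most one soundness mistake, because a realizable sequence may present $\sigma^\star$ repeatedly. Use instead the learner that predicts $\yes$ unless the current version space unanimously rejects, which is the paper's ``$\yes$ on every unqueried point''. After its single soundness mistake the version space is $\{h_{\sigma^\star}\}$ and it never errs again. This does not affect your recursion-based argument.
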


\begin{proof}
\emph{Upper bound.} The  learner that predicts $\yes$ on every unqueried
point makes no completeness mistakes and at most one
soundness mistake for total cost at most $\gs$. The learner that predicts $\no$ until $\sigma$ is found makes at most $n-1$ completeness mistakes and no soundness mistakes for
total cost of at most $(n-1)\gc$. Hence $\WSCLdim(R_n)\le\min(\gs,(n-1)\gc)$.

\emph{Lower bound.} When the
learner predicts $\yes$ on a point, the adversary can reveal $\no$, which forces a soundness mistake of cost $\gs$, after
which the class is a singleton and the interaction ends. When the learner
predicts $\no$, the adversary can reveal $\yes$ and force a
completeness mistake of cost $\gc$. This sequence of examples is realizable
by any $h_{\sigma'}$ with $\sigma'$ among the unqueried points. Every
deterministic learner is thus driven along a root-to-leaf path that either
ends after one soundness mistake (weight $\gs$) or accumulates completeness
mistakes until all $n-1$ points are queried (weight
$(n-1)\gc$). Thus, such a path has weight at least $\min(\gs,(n-1)\gc)$.
\end{proof}

\paragraph{Additivity for layered products.}
We now show that the weighted dimension is exactly additive across the layers
of a layered product (Definition~\ref{def:layered-product}), a cleaner statement than
the superadditivity of Proposition~\ref{prop:additive}(ii) for the budgeted
frontier. A layer-$i$ query restricts only the $i$-th factor of the
(product) version space, so the layers never interact. With a shared integer
budget the adversary can spend it in whichever layer is most damaging, which
is why only superadditivity holds for $\SCLdim$. With independent per-mistake
costs each layer is charged separately and the totals simply add.

\begin{restatable}{proposition}{wscadditive}
\label{prop:wsc-additive}
Let $H=H_1\times\cdots\times H_L$ be a layered product on disjoint layers
$X\times\Sigma^L=Z_1\sqcup\cdots\sqcup Z_L$. Then
\[
\WSCLdim(H)=\sum_{i=1}^{L}\WSCLdim(H_i),
\]
where $\WSCLdim(H_i)$ is computed over trees whose nodes lie in layer $Z_i$.
\end{restatable}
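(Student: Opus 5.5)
We prove the two inequalities separately, using Theorems~\ref{thm:scllb} and \ref{thm:wscl-ub}, which identify $\WSCLdim$ with the optimal worst-case cumulative cost of a deterministic learner.

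\emph{Lower bound ($\ge$).} Fix $W_i<\WSCLdim(H_i)$ for each $i$, and pick WSC-mistake trees $\cT_i$ with nodes in $Z_i$, shattered by $H_i$, all of whose root-to-leaf paths have weight at least $W_i$. (If some $\WSCLdim(H_i)=0$, take the empty tree.) Build a concatenated tree: start with $\cT_1$, and attach a copy of $\cT_2$ at every leaf of $\cT_1$; then attach a copy of $\cT_3$ at every leaf of the resulting tree, and so on. A root-to-leaf path $P$ of the big tree splits into segments $P_1,\dots,P_L$, where $P_i$ is a root-to-leaf path of $\cT_i$.
\begin{itemize}
\item \emph{Shattering.} Each segment $P_i$ is realized by some $h_i\in H_i$. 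Since the layers are disjoint and $h$ acts layerwise, the product $h=(h_1,\dots,h_L)\in H$ realizes all of $P$.
\item \emph{Weight.} The weight of $P$ is $\sum_i w(P_i)\ge\sum_i W_i$.
\end{itemize}
Letting each $W_i\uparrow\WSCLdim(H_i)$ gives $\WSCLdim(H)\ge\sum_i\WSCLdim(H_i)$. If some term is infinite, the same construction makes the left side infinite.

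\emph{Upper bound ($\le$).} It suffices to exhibit a deterministic learner with cost at most $\sum_i\WSCLdim(H_i)$ on every realizable sequence, because Theorem~\ref{thm:scllb} shows no learner beats $\WSCLdim(H)$.
\begin{itemize}
\item \emph{The learner.} Run $L$ independent copies $\cA_1,\dots,\cA_L$ of \Cref{alg:SCL-soa}, with $\cA_i$ instantiated on $H_i$. Route each example $z\in Z_i$ to $\cA_i$, predict its output, and pass the revealed label back to $\cA_i$ only.
\item \emph{Realizability per layer.} If the full sequence is realizable by $h=(h_1,\dots,h_L)\in H$, then the subsequence of examples in $Z_i$ is realizable by $h_i\in H_i$.
\item \emph{Cost bound.} By Theorem~\ref{thm:wscl-ub}, $\cA_i$ incurs cost at most $\WSCLdim(H_i)$ on its subsequence. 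Summing over layers bounds the total cost by $\sum_i\WSCLdim(H_i)$.
\end{itemize}

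\emph{Main obstacle.} The step needing care is the product structure of version spaces. The claim is that the version space of $H$ after any consistent history is exactly the product of the per-layer version spaces, so that splitting the learner by layer loses nothing. This holds because $H$ is a full Cartesian product and a label on $z\in Z_i$ constrains only the $i$-th coordinate. The statement "computed over trees whose nodes lie in $Z_i$" matters here: if $\cT_i$ used nodes outside $Z_i$, the functions in $H_i$ would be undefined there. We also handle the degenerate conventions (all $h_i$ agree, or $H_i$ a singleton) by treating $\cT_i$ as empty with weight $0$.
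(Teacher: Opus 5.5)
Your proposal is correct and follows the paper's proof essentially verbatim: you concatenate per-layer shattered trees for the lower bound, and for the upper bound you run independent per-layer copies of \Cref{alg:SCL-soa}, each bounded via \Cref{thm:wscl-ub}. Your version is slightly more careful than the paper's in two places: you take $W_i<\WSCLdim(H_i)$ and pass to the limit instead of assuming optimal trees exist, and you explicitly invoke \Cref{thm:scllb} to turn the learner's cost bound into an upper bound on $\WSCLdim(H)$, a step the paper leaves implicit.
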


\paragraph{The two-layer class.}
Recall $H_\beta=R_{n,b}\times R_{n'}$ with $n\ge2b+1$ and $n'\ge n-2b+2$
(Section~\ref{sec:sc-layered}), the class used for the behavior-cloning
separation of Section~\ref{sec:bc-lower-bound}. Recall also that
$R_{n,b}=\{h_S:S\subseteq D,\;|S|=b\}$ with $h_S(z)=\no\iff z\in S$, where
$|D|=n$. For this class we show the following.

\begin{restatable}{proposition}{wscfixedcard}
\label{prop:wsc-fixed-card}
For $1\le b\le n-1$,
$\displaystyle \WSCLdim(R_{n,b})=\min\big(b\gs,\;(n-b)\gc\big).$
\end{restatable}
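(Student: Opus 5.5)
The plan is to prove the two inequalities separately, following the template of \Cref{prop:wsc-singleton}: exhibit two simple learners for the upper bound, and an adversary that forces the matching cost for the lower bound. The lower bound is then transferred to $\WSCLdim$ through \Cref{thm:wscl-ub} and \Cref{thm:scllb}.

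\emph{Upper bound.} First, consider the learner that predicts $\yes$ on every example on which the version space disagrees. It never makes a completeness mistake. Each soundness mistake reveals a new element of the hidden set $S$, and $|S|=b$, so there are at most $b$ soundness mistakes and the cost is at most $b\gs$. Second, consider the learner that predicts $\no$ on every disputed example. It never makes a soundness mistake. Each completeness mistake reveals a new point of $D\setminus S$, so there are at most $n-b$ of them and the cost is at most $(n-b)\gc$. Once all $n-b$ accepted points are known, the version space is a singleton and no further mistakes occur. Choosing the better of the two learners gives total cost at most $\min(b\gs,(n-b)\gc)$. Since $\WSCLdim(H)$ is the optimal deterministic worst-case cost (by \Cref{thm:scllb}, every deterministic learner can be forced to pay arbitrarily close to it), we get $\WSCLdim(R_{n,b})\le\min(b\gs,(n-b)\gc)$.

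\emph{Lower bound.} I would build the shattered WSC-mistake tree implicitly through an adversary, as in the proof of \Cref{thm:scllb}. The adversary queries distinct points of $D$ one at a time and always reveals the label opposite to the learner's prediction. The key realizability invariant is the following: suppose that so far $r<b$ points have been labeled $\no$ and $a<n-b$ points have been labeled $\yes$. Then the current point can be labeled either way and still be completed to some $S$ with $|S|=b$, because there are $n-r-a$ unqueried points and we need $b-r\ge1$ of them in $S$ and $n-b-a\ge1$ of them outside $S$. The adversary stops exactly when $r=b$ or $a=n-b$. Every step costs the learner $\gs$ or $\gc$, so along every path the learner pays at least $\min(b\gs,(n-b)\gc)$. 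Written as a tree, nodes at depth $j$ are the $j$-th points of $D$, and a leaf is reached once either counter saturates. Every root-to-leaf path has weight $r\gs+a\gc$ with $r=b$ or $a=n-b$, hence weight at least $\min(b\gs,(n-b)\gc)$. This proves the matching lower bound on $\WSCLdim$.

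As a sanity check, and as an alternative route, I would verify the recursion stated just before this proposition. Let $f(p,r)$ be the value for $p$ unqueried points with $r$ of them still to be rejected. Queries outside the unqueried set are agreed upon, and by symmetry all unqueried points are equivalent, so $f(p,r)=\min(\gs+f(p-1,r-1),\gc+f(p-1,r))$ with $f(p,0)=f(p,p)=0$. Induction on $p$ then confirms $f(p,r)=\min(r\gs,(p-r)\gc)$, because the four-term minimum $\min\bigl(r\gs,\ \gs+(p-r)\gc,\ \gc+r\gs,\ (p-r)\gc\bigr)$ collapses to $\min\bigl(r\gs,(p-r)\gc\bigr)$.

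The only delicate step is the realizability bookkeeping, namely the boundary at which the adversary must stop. The hypothesis $1\le b\le n-1$ guarantees that the root itself is disputed, so the tree is nontrivial.
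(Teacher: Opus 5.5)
Your proof is correct and follows the paper's argument: two trivial learners (accept every disputed point, or reject every disputed point) give the upper bound via \Cref{thm:scllb}, and an adversary that flips the learner's prediction until either $b$ points are declared in $S$ or $n-b$ points are declared outside it gives the lower bound. Restricting the trivial learners to disputed examples is slightly more careful than the paper's ``always outputs \yes'' phrasing, which would pay again each time a point of $S$ is re-queried; the recursion check is valid but not needed.
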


At a high level, accepting everything costs one soundness mistake per element of the hidden set $S$, hence at most $b\gs$. Rejecting everything
costs one completeness mistake per correct point, hence at most $(n-b)\gc$. The key structural point is that no other strategy performs better. Note that for $\gs=\gc=1$ this is $\min(b,n-b)=b$ so under the assumption
$n\ge2b+1$ this matched $\Ldim(R_{n,b})=b$ from Proposition~\ref{prop:two-scale-frontier}. We present the full proof in Appendix~\ref{sec:wsc-examples}.

For $H_\beta$, we then have that the two layers contribute additively to the WSC-Ldim. 

\begin{restatable}{proposition}{wschbeta}
\label{prop:wsc-hbeta}
Writing $\lambda=\gs/\gc$, for all $\gs,\gc>0$,
\[
\WSCLdim(H_\beta)=\min\big(b\gs,(n-b)\gc\big)+\min\big(\gs,(n'-1)\gc\big),
\]
which, under the standing assumptions $n\ge2b+1$ and $n'\ge n-2b+2$, has
exactly three regimes:
\[
\WSCLdim(H_\beta)=
\begin{cases}
(b+1)\,\gs, & \lambda\le \dfrac{n-b}{b},\\[2mm]
(n-b)\,\gc+\gs, & \dfrac{n-b}{b}\le\lambda\le n'-1,\\[2mm]
\big((n-b)+(n'-1)\big)\gc, & \lambda\ge n'-1.
\end{cases}
\]
\end{restatable}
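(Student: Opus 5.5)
The plan is to obtain the closed form directly from the additivity of Proposition~\ref{prop:wsc-additive}, and then split it into regimes by elementary case analysis. Since $H_\beta=R_{n,b}\times R_{n'}$ is a layered product on the disjoint layers $Z_1=X\times\Sigma$ and $Z_2=X\times\Sigma'$, Proposition~\ref{prop:wsc-additive} gives
\[
\WSCLdim(H_\beta)=\WSCLdim(R_{n,b})+\WSCLdim(R_{n'}).
\]
The first term equals $\min(b\gs,(n-b)\gc)$ by Proposition~\ref{prop:wsc-fixed-card}, which applies because $1\le b\le n-1$ follows from $n\ge 2b+1$ and $b\ge1$. The second term equals $\min(\gs,(n'-1)\gc)$ by Proposition~\ref{prop:wsc-singleton}, which applies because $n'\ge n-2b+2\ge 3$. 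This proves the first displayed identity for all $\gs,\gc>0$.

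For the three regimes, I divide by $\gc>0$ and write both terms in terms of $\lambda=\gs/\gc$. The first term equals $b\gs$ exactly when $\lambda\le (n-b)/b$, and equals $(n-b)\gc$ otherwise. The second term equals $\gs$ exactly when $\lambda\le n'-1$, and equals $(n'-1)\gc$ otherwise. The key ordering fact is that the two thresholds satisfy $(n-b)/b\le n'-1$, so the three intervals are nested in the stated order. Given this, the three cases follow immediately:
\begin{itemize}
\item for small $\lambda$, both terms take their $\gs$ form, giving $(b+1)\gs$;
\item for intermediate $\lambda$, the first term switches to $(n-b)\gc$ while the second stays $\gs$;
\item for $\lambda\ge n'-1$, both terms take their $\gc$ form.
\end{itemize}
At the boundary values the adjacent formulas coincide, so the closed intervals are consistent.

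The only step needing care is verifying $(n-b)/b\le n'-1$ from the standing assumptions, since the hypothesis $n'\ge n-2b+2$ is not phrased in this form. The assumption gives $n'-1\ge n-2b+1$, so it suffices to show $n-2b+1\ge (n-b)/b$, which is equivalent to $b(n-2b+1)\ge n-b$.

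\begin{itemize}
\item If $b=1$, both sides equal $n-1$, so the inequality holds with equality.
\item If $b\ge2$, rewrite the difference as $b(n-2b+1)-(n-b)=(b-1)n-2b^2+2b=(b-1)(n-2b)$. This is nonnegative because $n\ge 2b+1$.
\end{itemize}

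Finally, I confirm that the regimes are genuinely distinct, which the statement's ``exactly three'' suggests. This holds when the threshold inequality is strict, and otherwise the middle regime degenerates to a single point. Either way the displayed formula remains correct.
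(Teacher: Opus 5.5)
Your proposal is correct and follows the paper's proof essentially step for step. It combines additivity over the two layers (Proposition~\ref{prop:wsc-additive}) with Propositions~\ref{prop:wsc-fixed-card} and~\ref{prop:wsc-singleton}, then checks the threshold ordering $(n-b)/b\le n'-1$ via $n'-1\ge n-2b+1$; your factorization $b(n-2b+1)-(n-b)=(b-1)(n-2b)$ is a cleaner form of the paper's division by $1-b$, and your explicit check that the component propositions apply ($1\le b\le n-1$, $n'\ge 3$) and your remark that the middle regime can collapse to a single point (e.g.\ $b=1$, $n'=n$) are small refinements the paper leaves implicit.
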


Intuitively, we show that the optimal learner switches its strategy on each layer (from accepting unresolved points to rejecting them) at a different cost-ratio threshold $\lambda=\frac{\gamma_s}{\gamma_c}$. On 
$R_{n,b}$ this switch occurs at $\lambda=(n-b)/b$ and on $R_{n'}$ at
$\lambda=n'-1$. The intermediate regime is the interesting one, the learner
handles $R_{n,b}$ soundly (paying $(n-b)\gc$ in completeness mistakes) yet
still gambles on the single hidden point of the huge $\Sigma'$-layer (paying
one soundness mistake $\gs$ rather than up to $(n'-1)\gc$). This reproduces
the three-regime behavior of Proposition~\ref{prop:two-scale-frontier}, but now the
regimes are selected by the continuous cost ratio rather than by the integer
soundness budget $k\in\{0,1,\dots,b{+}1\}$.

\paragraph{Subspace verifiers.}
Recall the subspace verifier class (Section~\ref{sec:sc-subspace}), where for
$d\ge2$, $d'\ge d-1$,
\[
H_{\mathrm{sub}}=\{h_v:v\in\mathbb R^d\setminus\{0\}\},\qquad
h_v(x,\tau_{1:\ell})=\yes \text{ iff }\tau_\ell\in\mathrm{span}(x,v).
\]
In Theorem~\ref{thm:subspace}, we showed that
$\SCLdim(H_{\mathrm{sub}},k)=\infty$ for $0\le k\le d-2$ and
$=d-1$ for $k\ge d-1$. We now show that the WSC-Ldim, by contrast, is finite for all strictly positive costs.

\begin{restatable}{theorem}{wscsubspace}
\label{thm:wsc-subspace}
For $d\ge2$ and $d'\ge d-1$,
\[
\WSCLdim(H_{\mathrm{sub}})=
\begin{cases}
0, & \min(\gs,\gc)=0,\\[1mm]
(d-1)\max(\gs,\gc), & \gs,\gc>0.
\end{cases}
\]
In particular, at $\gs=\gc$ this equals $d-1=\Ldim(H_{\mathrm{sub}})$.
\end{restatable}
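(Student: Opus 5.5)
The case $\min(\gs,\gc)=0$ follows from Lemma~\ref{lem:wsc-boundary}. Assume $\gs,\gc>0$ from now on. For the upper bound I would combine Theorem~\ref{thm:subspace}, which gives $\Ldim(\Hd)=d-1$, with the upper half of the sandwich bound (Lemma~\ref{lem:wsc-sandwich}): the Standard Optimal Algorithm makes at most $d-1$ mistakes, each costing at most $\max(\gs,\gc)$. So the work is the lower bound $\WSCLdim(\Hd)\ge(d-1)\max(\gs,\gc)$, and I would split it by which cost is larger.

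\emph{Case $\gs\ge\gc$.} I would use the budgeted result directly. By Theorem~\ref{thm:subspace}, $\SCLdim(\Hd,d-2)=\infty$, so for every $m$ there is a $(d-2,m)$-difficult SC-mistake tree shattered by $\Hd$. I would view this tree as a WSC-mistake tree. Take any root-to-leaf path.
\begin{itemize}
\item If it has at least $d-1$ straight edges, its weight is at least $(d-1)\gs$.
\item If it has $j\le d-2$ straight edges, its length is at least $m$, so its weight is at least $j\gs+(m-j)\gc\ge (m-d+2)\gc$.
\end{itemize}
Choosing $m\ge (d-1)\gs/\gc+d$ makes every path weigh at least $(d-1)\gs=(d-1)\max(\gs,\gc)$. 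No leaf issues arise, since shattering of the SC tree is exactly shattering of the WSC tree.

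\emph{Case $\gc>\gs$.} Here I need the mirror-image statement: an adversary that forces arbitrarily many \emph{soundness} mistakes unless the learner makes $d-1$ \emph{completeness} mistakes. I would build it from the normal form of Lemma~\ref{lem:normal}, where each query asks whether $v\in T\setminus S$ with $S\subset T$ and $\dim T=\dim S+1$.
\begin{itemize}
\item Take $T$ a proper subspace of the current candidate space $U$, with $\dim T=\dim U-1$. A $\no$ answer only removes $T\setminus S$ from the candidates. The remaining candidates (those outside $T$, or in $S$) still span $U$, so the situation is unchanged up to deleting a measure-zero set.
\item A $\yes$ answer confines $v$ to $T$, lowering the dimension by exactly one. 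This is the inductive step of Lemma~\ref{lem:difficult}, now with the roles of the two edges swapped.
\end{itemize}
So I would build, by induction on $\dim U$, trees in which every node is such a query. Curvy edges descend to the $(\dim U-1)$-dimensional subproblem. Straight edges repeat the same-dimensional construction with a fresh choice of $T$, avoiding the finitely many removed sets so that realizability (nonemptiness of the consistent set of $v$) is preserved. Then every path with at most $d-2$ curvy edges has length at least $m$. The same weight computation as above, with $\gs,\gc$ exchanged, yields weight at least $(d-1)\gc$.

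The main obstacle is this mirrored construction. I must check that queries of the needed form ($T$ a codimension-one subspace of the current candidate space, with $S\subset T$ of codimension one in $T$) are realizable as instances $(x,\tau_{1:\ell})$ with $x\in\R^{d\times d'}$. The assumption $d'\ge d-1$ should allow the columns of $x$ to span $S$ and $\tau_\ell$ to be chosen so that $\spn(x,v)\ni\tau_\ell$ iff $v\in T\setminus S$; I expect this is exactly what Lemma~\ref{lem:normal} supplies. I must also check that after countably many $\no$ answers the candidate set, a subspace minus finitely many proper subspaces, still contains nonzero vectors realizing every path. If the general normal form is awkward, my fallback is to restrict to queries with $S=\{0\}$ inside the current space, so that each query is simply a line.
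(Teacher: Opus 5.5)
Your proposal is correct and is essentially the paper's argument: the case $\gs\ge\gc$ is exactly the paper's use of $(d-2,m)$-difficult trees. Your mirrored tree for $\gc>\gs$ is the paper's dual construction, namely a spine of straight edges at flags $(S_j,W_j)$ with $W_j$ a hyperplane of the current space, and curvy edges descending into $W_j$ with obstacle family $\{S_j\}\cup\{W_i\cap W_j:i<j\}\cup\{A\cap W_j\}$ to keep every path realizable. Your upper bound via $\Ldim(\Hd)=d-1$ and Lemma~\ref{lem:wsc-sandwich} is a valid shortcut for the paper's direct recursion $F(r)\le F(r-1)+\max(\gs,\gc)$.

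Do drop the fallback with $S=\{0\}$, though. A $\yes$ answer to a line query pins $v$ down up to scale, so the curvy child of the root must be a leaf and such trees have a path of weight at most $\gc$, which falls short of $(d-1)\gc$ once $d\ge 3$. The codimension-one flags supplied by Lemmas~\ref{lem:normal} and~\ref{lem:trace} are exactly what the construction needs.
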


At a high level, suppose first that $\gs\ge\gc$. The adversary asks whether $v$ lies outside a hyperplane. If the learner answers $\yes$, the adversary can force a soundness mistake of cost $gs$, after which $v$ is restricted to that hyperplane. If the learner repeatedly answers $\no$, the adversary can instead force enough completeness mistakes to incur the same total cost. Thus, reducing the dimension by one costs at least $\gs$. When $\gc\ge\gs$, a symmetric construction makes each dimension reduction cost at least $\gc$. Since $d-1$ reductions are necessary, this gives the lower bound
$(d-1)\max(\gs,\gc)$.

For the upper bound, every query has an answer that restricts $v$ to a subspace of smaller dimension. This answer costs at most $\max(\gs,\gc)$, and the dimension can decrease at most $d-1$ times. Therefore, when $\gs,\gc>0$, $\operatorname{WSC\text{-}Ldim}(\Hd)
=(d-1)\max(\gs,\gc)$.

\section{Improving Weak Generators with Online Verification}\label{sec:boost}

In this section, we show how we can exponentially amplify the performance of a set of weak generators by training a verifier online using the methods from the previous section.  In particular, we show that we can obtain strong guarantees for the correctness and reliability of the overall system even if the generators adapt their outputs to the history of interaction with the verifier. We then complement this result with a lower bound showing that the interaction used in this reduction is necessary in general: training only on transcripts generated under the perfect verifier can require significantly more samples.

\subsection{Interactive Learning with History-Dependent Generators} \label{sec:interactive-upper-bound}

In this section, we show that an online prefix verification algorithm can be used to exponentially amplify the performance of a set of ``weak'' generators.  

We begin by making explicit the interaction history observed by ${\cal G}$, the set of $N$ generators. Fix a problem instance $x\in X$. An \emph{interaction transcript} (or history) is a finite sequence \[ \theta = \bigl((\mathbf c_1,\mathbf v_1),\ldots,(\mathbf c_\ell, \mathbf v_\ell)\bigr) \in \left(\left(\Sigma\times\{\yes,\no\}\right)^N\right)^*, \]  where $(\mathbf c_j=(c_{1,j},\dots,c_{N,j}),\mathbf v_j=(v_{1,j},\dots,v_{N,j}))$ record the proposed candidate step $c_{i,\cdot}$ by every generator $i\in[N]$ and corresponding verifier verdict $v_{i,\cdot}$. Let $\mathfrak T$ denote the set of such transcripts. We write $\tau(\theta)\in\Sigma^{\leq L}$ for the \emph{accepted prefix} induced by $\theta$.  Whenever $v_{i,j}=\yes$ for some $c_{i,j}$ for $i\in[k]$, then the accepted $c_{i,j}$ with the smallest index $i$ is appended to the accepted prefix. The verdicts in $\theta$ are those of the verifier actually deployed in the interaction. The generator does not observe the oracle's label directly. Thus, verifier mistakes can change the generator's future output distribution. We call a transcript $\theta = \bigl((\mathbf c_1, \mathbf v_1),\ldots,(\mathbf c_\ell, \mathbf v_\ell)\bigr)$ {\it oracle-consistent} for instance $x$ w.r.t.\ oracle $\cal O$, if all recorded verdicts match the oracle, i.e., $v_{i,j}=\mathcal{O}(x, (\tau(\theta_{<j}), c_{i,j}))$ for every round $j\in[\ell]$ and every generator $i\in[N]$, where $\theta_{<j}$ is the accepted prefix just before round $j$.

We assume that transcripts are reset at the beginning of each problem instance. In particular, the interaction on an instance $x$ starts from $\theta=\emptyset$, and the generators carry no state between different problem instances. 
A history-dependent generator is a map $G: X\times\mathfrak T\rightarrow\Delta(\Sigma)$. Thus, $G(x,\theta)$ denotes the distribution of its next candidate step on instance $x$ after transcript $\theta$. A memoryless generator of the form considered in Section~2.3 is the special case in which $G(x,\theta)$ depends on $\theta$ only through the accepted prefix $\tau(\theta)$.

\begin{definition}[$(\alpha,\gamma)$-good history-dependent generators] \label{def:good-history-dependent-generators} A set of history-dependent generators $\mathcal G=\{G_1,\ldots,G_N\}$ is \emph{$\alpha$-good} for an instance $x\in X$ with respect to correctness oracle $\mathcal O$ if, for every \emph{oracle-consistent} transcript $\theta$ whose accepted prefix $\tau(\theta)=\tau_{1:\ell}$ is correct and has length $\ell<L$, there exists an $i\in[N]$ such that \[ \Pr_{c\sim G_i(x,\theta)} \bigl[ \mathcal O(x,\tau(\theta),c)=\yes \bigr] \geq \alpha. \] We say that $\mathcal G$ is $(\alpha,\gamma)$-good for a distribution $D$ if $\Pr_{x\sim D}[\mathcal G\text{ is $\alpha$-good for }x]\geq\gamma$. \end{definition}

Definition~\ref{def:good-history-dependent-generators} quantifies only over oracle-consistent transcripts, i.e., transcripts where every verifier verdict so far has agreed with the oracle. In particular, a generator is free to lose its $\alpha$-goodness once a verifier mistake has driven the interaction off the oracle-consistent path, without affecting any of our guarantees. Our main result gives a bound on the sample complexity of instances and oracle calls needed to learn a verifier that can then be used to amplify the correctness of generators in $\mathcal G$.

\begin{algorithm}[t]
\caption{Constructing $\vhp$}
\label{alg:vhp}

\DontPrintSemicolon
\SetAlgoLined
\LinesNumbered
\SetNoFillComment
\SetKwComment{tcp}{\(\triangleright\) }{}
\SetKwInOut{KwIn}{Input}

\KwIn{ set of $N$ generators ${\cal G}$; values $\alpha,\epsilon,\delta$; step-by-step verification algorithm $V_H$ with soundness mistake-bound $M_s$ and completeness mistake-bound $M_c$.}

\BlankLine

Draw set $S_1$ of $8\!\left(\frac{M_c+M_s}{\epsilon} + \log \frac{2}{\delta}\right)$ random examples from $D$\;

\For{\emph{each} $x \in S_1$}{
  Run \textsc{Process-Example}$(x,{\cal G},\alpha,V_H)$\tcp*[r]{Train $V_H$ on $x$ and ${\cal G}$}\label{step:train}
}

Let $h_1,\ldots,h_M$ be the candidate verifiers produced in Step \ref{step:train}\;

\BlankLine

Draw set $S_2$ of $
O\!\left(
\frac{1}{\epsilon}\cdot
\frac{M_s+M_c}{\min(M_s,M_c)+1}\cdot
\log\!\left(\frac{M_s+M_c}{\delta}\right)
\right)$
examples from $D$\;

\For{\emph{each} $x \in S_2$}{
  \For{\emph{each} $h_i \in \{h_1,\ldots,h_M\}$}{
    Run \textsc{Test-Verifier}$(x, {\cal G} , \alpha, h_i)$}
  }

Choose any $h_i$ with
\Indp
empirical soundness error $\le \frac{3\epsilon}{4}\cdot\frac{M_s}{M_s+M_c}$ and
empirical completeness error $\le \frac{3\epsilon}{4}\cdot\frac{M_c}{M_s+M_c}$. If no such $h_i$ exists, halt with failure.\;
\Indm

Use $h = h_i$ in Algorithm~\ref{alg:weak-to-strong}, creating $\vhp$\;

\end{algorithm}

\begin{theorem}\label{thm:wrapper}
Given an $(\alpha,\gamma)$-good set of $N$ history-dependent generators ${\cal G}$ satisfying Definition~\ref{def:good-history-dependent-generators}, a verification algorithm $V_H$ with stepwise completeness and soundness mistake-bounds $M_c$ and $M_s$ respectively, and given the value $\alpha$, we can learn a generator $\vhp$ using a training sample of $O\left(\frac{M_s+M_c}{\epsilon} + \frac{1}{\epsilon}\left(\frac{M_s+M_c}{\min(M_s,M_c)+1}\right)\log\left(\frac{M_s+M_c}{\delta}\right)\right)$ problem instances $x\sim D$ and making $O\left(\left(\frac{M_s+M_c}{\epsilon}\right)\left(\frac{M_s+M_c}{\min(M_s,M_c)+1}\right)\left(\log\frac{M_s+M_c}{\delta}\right)\left(\frac{LN}{\alpha}\log \frac{LN}{\delta}\right)\right)$ calls to a labeling oracle $\cO=h^*$ such that with probability at least $1-\delta$:
\begin{enumerate}[topsep=2pt,partopsep=1ex,parsep=2ex]\itemsep=-4pt
    \item Given a problem instance $x$, $\vhp$ outputs either ``I don't know'' or a reasoning trace $\vtau$.
    \item The probability over $x\sim D$ that $\vhp$ outputs  ``I don't know'' is at most $(1-\gamma)+\epsilon_c + \epsilon_s + \delta$.
    \item The probability over $x\sim D$ that $\vhp$ outputs an incorrect reasoning trace is at most $\epsilon_s$,
\end{enumerate}
where $\epsilon_c = \frac{\epsilon M_c}{M_c+M_s}$ and $\epsilon_s = \frac{\epsilon M_s}{M_c+M_s}$.  
\end{theorem}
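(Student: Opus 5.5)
The approach is Littlestone's mistake-bound-to-PAC conversion, applied to the stochastic process induced by the generator--verifier interaction instead of to i.i.d.\ labeled points. Throughout, $V_H$ is assumed conservative, so it emits at most $M\le M_s+M_c+1$ distinct verifiers $h_1,\dots,h_M$. For a fixed verifier $h$, deploying $h$ in \Cref{alg:weak-to-strong} on a fresh $x\sim D$ defines two error events. A \emph{soundness error} occurs when the produced trace contains an incorrect accepted step, i.e.\ $h$ accepted some candidate $c$ with $\cO(x,(\tau(\theta),c))=\no$ along an otherwise oracle-consistent transcript. A \emph{completeness error} occurs when $h$ rejected some correct candidate before any soundness error. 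Let $p_s(h)$ and $p_c(h)$ be the probabilities of these events over $x\sim D$ and the generators' randomness. Per-step deviations from the oracle are exactly prefix-verification mistakes, so each such event, observed in \textsc{Process-Example}, gives the oracle a labeled prefix $(x,\tau(\theta),c)$ on which the current verifier errs, with the promise that $\tau(\theta)$ is correct. Feeding it to $V_H$ keeps the sequence realizable by $H$ and consumes one unit of $M_s$ or $M_c$.

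\textbf{Step 1 (the run on $S_1$ produces a good candidate).} Process $x\in S_1$ in order, calling the oracle along the transcript only to locate the first deviation. If the current verifier $h_j$ has $p_s(h_j)+p_c(h_j)\ge \epsilon$, each fresh example triggers an update with probability at least $\epsilon$, independently of the past given $h_j$. This is a martingale/geometric waiting-time argument. The total number of updates is at most $M_s+M_c$, so with $|S_1|=8((M_s+M_c)/\epsilon+\log(2/\delta))$ a Chernoff bound on the sum of geometric waiting times shows that some verifier survives long enough to have small total error. To get the \emph{split} budgets $\epsilon_s,\epsilon_c$, I would argue per type. Soundness updates number at most $M_s$, so if every $h_j$ had $p_s(h_j)>\epsilon_s/2$ or $p_c(h_j)>\epsilon_c/2$, the expected number of updates would exceed the respective budget. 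I expect this to need the same Chernoff count run with thresholds proportional to $M_s/(M_s+M_c)$ and $M_c/(M_s+M_c)$. The adaptivity of the generator is harmless here because the mistake bounds hold for arbitrary realizable sequences, and before the first deviation the transcript is oracle-consistent, so $\alpha$-goodness applies.

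\textbf{Step 2 (testing on $S_2$).} For each $h_i$ and $x\in S_2$, \textsc{Test-Verifier} simulates the interaction and uses the oracle to decide whether a soundness or completeness error occurred, which gives unbiased estimates of $p_s(h_i),p_c(h_i)$. Multiplicative Chernoff bounds at scales $\epsilon_s$ and $\epsilon_c$, with a union bound over the $M$ candidates, show two things: the good candidate from Step 1 passes the thresholds $\tfrac34\epsilon_s$, $\tfrac34\epsilon_c$, and any candidate with $p_s>\epsilon_s$ or $p_c>\epsilon_c$ fails. The required size is governed by the smaller of $\epsilon_s,\epsilon_c$, namely $\epsilon\min(M_s,M_c)/(M_s+M_c)$, which gives the stated $|S_2|$ with the $+1$ in the denominator covering zero budgets. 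Since each example costs $O(\frac{LN}{\alpha}\log\frac{LN}{\delta})$ oracle calls per candidate, multiplying by $M\cdot|S_2|$ gives the oracle count.

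\textbf{Step 3 (inference guarantees).} For the selected $h$, an incorrect output requires a soundness error, so it has probability at most $\epsilon_s$. For ``I don't know'', either $x$ is not $\alpha$-good (probability $1-\gamma$), or a completeness or soundness error occurred (at most $\epsilon_c+\epsilon_s$), or $h$ agreed with $\cO$ throughout yet timed out. In the last case, by $\alpha$-goodness along the oracle-consistent transcript, each round of $N$ draws contains a correct candidate with probability at least $\alpha$, and the verifier accepts it. With $O(\frac1\alpha\log\frac{NL}{\delta})$ rounds per step and a union bound over $L$ steps, a timeout has probability at most $\delta$.

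The main obstacle is Step 1's split guarantee. The update events are not independent across examples: the verifier changes, and the generators respond to it. The two error types also share one run, so one must show that a \emph{single} surviving candidate is simultaneously good for both types. My first attempt would define the potential $\frac{\#\text{sound updates}}{M_s}+\frac{\#\text{comp updates}}{M_c}\le 2$ and show via Freedman's inequality that, while the current $h$ violates either threshold, this potential grows at rate at least $\epsilon/(M_s+M_c)$ per example.
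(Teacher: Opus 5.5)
\textbf{Gap: which runs are charged to the soundness budget.} Your skeleton matches the paper's proof. It uses a conservative $V_H$ and Littlestone's conversion on $S_1$; your potential $\Phi=\#\text{sound}/M_s+\#\text{comp}/M_c\le 2$ is what justifies the paper's terse claim that otherwise one would expect more than $2M_s$ soundness or more than $2M_c$ completeness mistakes. It then validates on $S_2$ at the $\tfrac34$ thresholds and decomposes the ``I don't know'' probability the same way.

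The problem is your accounting. You update only at the \emph{first} deviation, and you define a soundness error as an incorrect acceptance along an otherwise oracle-consistent transcript. So a run in which $h$ first rejects a correct candidate and later accepts an incorrect one is charged to $p_c(h)$. Your Step~3 claim that an incorrect output requires a soundness error is then false. The generators observe the verdicts, and Definition~\ref{def:good-history-dependent-generators} constrains them only on oracle-consistent transcripts, so a false rejection can steer them toward incorrect candidates. Your argument therefore bounds the incorrect-output probability only by $p_s+p_c\le\epsilon_s+\epsilon_c=\epsilon$.

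This is not mere slack. First-deviation training never observes post-rejection states, which is the same blind spot as behavior cloning. Take the $H_\beta$, $G_{\mathrm{fb}}$ instance of Section~\ref{sec:bc-lower-bound} with $M_s=1$, $M_c=n-b$, and the budget-one learner that accepts unseen points of $\Sigma'$. That learner essentially never meets $\sigma^\star$ as a first deviation, so the selected $h$ accepts it. This $h$ has $p_s\approx 0$, yet after any false rejection it outputs $\sigma^\star$ with probability about $\tfrac12$. Its incorrect-output rate is thus a constant fraction of $p_c$, which validation controls only at level $\epsilon_c$, a factor $\Theta(n)$ above $\epsilon_s$. If your \textsc{Test-Verifier} instead scores incorrect outputs, then Step~1 certifies a different event than Step~2 tests, and the procedure may halt with failure.

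\textbf{Fix: the accounting of \textsc{Process-Example} and \textsc{Test-Verifier}.} Do not stop at the first deviation when the wrapper completes a trace.
\begin{itemize}
\item If a trace is returned, query $\mathcal{O}$ on its prefixes. If some step is wrong, update $V_H$ with label \no\ on the \emph{first} incorrect step. Rejected candidates never enter the accepted prefix, so that prefix is correct and the promise holds. This is therefore a genuine soundness mistake even if correct candidates were rejected earlier.
\item Update at the first deviation only when the run times out.
\end{itemize}
Correspondingly, let $p_s(h)$ be the probability that the returned trace is incorrect or that the run times out with a soundness first deviation. Let $p_c(h)$ be the probability that it times out with a completeness first deviation.

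These events are disjoint, and each triggers exactly one valid update of the matching type. Your $\Phi$ argument and the Chernoff bounds on $S_2$ then go through unchanged, and Step~3 holds. An incorrect output lies in the soundness event by definition. An ``I don't know'' on an $\alpha$-good $x$ either involves no deviation, which has probability at most $\delta$, or lies in one of the two events.
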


\begin{proof}
We begin by drawing a set $S_1$ of $8\left(\frac{M_c+M_s}{\epsilon} + \log \frac{2}{\delta}\right)$ random examples from $D$.  We then run Process-Example (Algorithm \ref{alg:process example}) on each example $x\in S_1$.  Each run of Process-Example makes $O(\frac{LN}{\alpha}\log \frac{LN}{\delta})$ calls to the labeling oracle $\cO$ and results in either $V_H$ making a mistake and updating its current verifier $h$ or not.  Let $err^s(h)$ denote the probability that $h$ makes a soundness mistake when running Process-Example$(x)$ on a random $x\sim D$ (probability taken over the draw from $D$ and the randomness in the generators in ${\cal G}$).  Similarly, let $err^c(h)$ denote the probability that $h$ makes a completeness mistake when running Process-Example$(x)$ on a random $x \sim D$.  Notice that if each $h$ produced had $err^s(h)\geq \frac{\epsilon}{2}\frac{M_s}{M_s+M_c}$ or $err^c(h) \geq \frac{\epsilon}{2}\frac{M_c}{M_s+M_c}$, then the expected number of soundness mistakes would be greater than $2M_s$ or the expected number of completeness mistakes would be greater than $2M_c$. By the Martingale Chernoff argument of \citet{littlestone:1989}, we can therefore conclude that with probability $\geq 1-\delta/2$, at least one of the $h$ produced has $err^s(h)\leq \frac{\epsilon}{2}\frac{M_s}{M_s+M_c}$ and $err^c(h) \leq \frac{\epsilon}{2}\frac{M_c}{M_s+M_c}$.

We now draw a fresh set $S_2$ of $O\left(\frac{1}{\epsilon}\left(\frac{M_s+M_c}{\min(M_s,M_c)+1}\right)\log\left(\frac{M_s+M_c}{\delta}\right)\right)$ examples to test the at most $M_s+M_c$ verifiers $h_1,...,h_M$ produced above, by running each of them through Test-Verifier (Algorithm \ref{alg:test hypothesis}), which simulates the use of the verifier at test-time (Algorithm \ref{alg:weak-to-strong}), checking to see if any soundness or completeness mistake is made. Note that if $M_s=0$ then {\em all} $h_i$ will have $err^s(h_i)=0$ and similarly if $M_c=0$ then {\em all} $h_i$ will have $err^c(h_i)=0$, so we may wlog focus on the case that $M_s\geq 1$ and $M_c \geq 1$. Therefore, our sample $S_2$ is large enough that by Chernoff bounds, with probability $\geq 1-\delta/2$, (a) at least one of the verifiers $h_i$ tested will have empirical soundness error at most $\frac{3\epsilon}{4}\frac{M_s}{M_s+M_c}$ and empirical completeness error at most $\frac{3\epsilon}{4}\frac{M_c}{M_s+M_c}$, and (b) all such verifiers $h_i$ will have $err^s(h_i)\leq \epsilon\frac{M_s}{M_s+M_c}$ and $err^c(h_i) \leq \epsilon\frac{M_c}{M_s+M_c}$.

Finally, we choose one such $h_i$ and use it in Algorithm \ref{alg:weak-to-strong}, creating the generator $\vhp$.  The probability that the generator returns ``I don't know'' in Step 12 is at most the probability $1-\gamma$ that ${\cal G}$ is not $\alpha$-good for $x$, plus the probability $\delta$ that the algorithm reaches its timeout even if no mistake is made (\emph{i.e.}, the generator fails to generate a reasoning trace where all $L$ steps are correct), plus the probability $\epsilon_c$ of a completeness mistake plus the probability $\epsilon_s$ of a soundness mistake (that could lead to a state where there is no correct continuation of the trace).  The probability the generator returns an incorrect reasoning trace in Step 13 is the probability $\epsilon_s$ of a soundness mistake, as desired.  The entire process is described formally in Algorithm \ref{alg:vhp}.
\end{proof}

\begin{algorithm}[t]
\caption{\textsc{Process-Example}}
\label{alg:process example}

\DontPrintSemicolon
\SetAlgoLined
\SetCommentSty{smallcommentstyle}
\LinesNumbered
\SetNoFillComment
\SetKwComment{tcp}{\(\triangleright\) }{}
\SetKwInOut{KwIn}{Input}

\KwIn{ example $x$; set of $N$ generators ${\cal G}$; value $\alpha$; step-by-step verification algorithm $V_H$.}

\BlankLine

$\theta\gets \emptyset$ \tcp*[r]{initialize the per-instance transcript}

\For{$\ell = 1$ \KwTo $L$}{
    Set $i \gets 1$
    
  \While{$i \le \left\lceil \frac{1}{\alpha}\ln\frac{NL}{\delta} \right\rceil$\label{step:while}}
  {
    Sample $\tau^{(i,j)}_{\ell} \sim G_j(x,\theta)$ for all $G_j \in {\cal G}$\tcp*[r]{ask each generator for next step}

    Set $v^{(i,j)}_{\ell} = V_H(x,\tau_1,\ldots,\tau_{\ell-1},\tau^{(i,j)}_{\ell}) \in \{\yes,\no\}$\tcp*[r]{run $V_H$}

    $\theta.\text{append}((\tau_\ell^{(i,1)}, \dots, \tau_\ell^{(i,N)}), (v_\ell^{(i,1)}, \dots, v_\ell^{(i,N)}))$\tcp*[r]{append all verdicts to transcript history}}

    Set $y^{(i,j)}_{\ell} = \mathcal{O}(x,\tau_1,\ldots,\tau_{\ell-1},\tau^{(i,j)}_{\ell}) \in \{\yes,\no\}$\tcp*[r]{query labeling oracle}

    \eIf(\tcp*[f]{verifier believes all $\tau^{(i,j)}_{\ell}$ are incorrect}){$v^{(i,j)}_{\ell}=\no$ \emph{for all} $j\in[N]$}{
      Set $i = i+1$\tcp*[r]{discard and try again} 
    }(\tcp*[f]{verifier believes it found a correct next step}){
      Set $\tau_\ell \gets \tau^{(i,j)}_{\ell}$ for the smallest $j\in[N]$ with $v^{(i,j)}_{\ell}=\yes$
      \tcp*[r]{append step}
      Break and continue the for-loop\tcp*[r]{exit while; continue with for-loop}
    }
  }
  
  \uIf(\tcp*[f]{no correct next step found; check all predictions}){$v_{\ell'}^{(i,j)} \neq y_{\ell'}^{(i,j)}$ \emph{for any} $\ell' \le \ell$, \emph{any} $i$, \emph{any} $j \in [N]$}{
    $V_H.\text{{\sc Update}}((x, \tau_{1:\ell'-1}, \tau_{\ell'}^{(i,j)}),\, y_{\ell'}^{(i,j)})$ \tcp*[r]{make progress on learning verifier}

  \Return\tcp*[r]{timeout (no trace found)} 
}

\uIf(\tcp*[f]{trace found; check for any errors}){$\exists\,\ell \le L \emph{ such that } \mathcal{O}(x,\tau_{1:\ell})=\no$ \label{line:all-prefixes-rejected2}}{
  $V_H.\text{{\sc Update}}((x, \tau_{1:\ell}),\, \no)$ \tcp*[r]{learning progress}
}

\end{algorithm}

\begin{remark}
    A standard VC lower-bound construction can be specialized to our reasoning setting. Suppose there are $n$ ``hard'' problems $x_1, \dots, x_n$ and one ``easy'' problem $x_0$. We can choose a uniformly random set $S^\star$ of size $\beta n$. Each $x_i$ has an ``ambiguous'' step $a_i$, which is incorrect exactly when $i\in S^\star$, a ``good'' step $g_i$, which is always correct, and a ``bad'' step $z_i$, which is always incorrect. We assign probability $1/(8m)$ on each $x_i$ and the remaining probability on $x_0$. On $x_i$ the generator first proposes $a_i$ or $g_i$, each with probability $1/2$. If the verifier correctly rejects an incorrect step $a_i$, the generator outputs $g_i$. If the verifier rejects a correct step, the generator enters into ``failure mode'' and outputs $z_i$ forever, so the wrapper must either accept an incorrect step or eventually output $\texttt{"I don't know"}$. Notice that the generator is $(1/2,1)$-good. Then over $m$ training samples, only $n/8$ ``hard'' problems are seen in expectation and with constant probability, at least half of both the correct and incorrect hard problems remain unseen. If $q$ is the fraction of unseen $a_i$'s accepted by the learned verifier, then, conditional on the training sample, the expected numbers of accepted incorrect and rejected correct unseen steps are at least $\beta nq/2$ and $(1-\beta)n(1-q)/2$, respectively. Each corresponding test problem has probability $1/(8m)$ and the generator outputs $a_i$ with probability $1/2$. Hence the expected probability of an incorrect answer or $\texttt{"I don't know"}$ is at least $\Omega(n/m)$.
\end{remark}

Our generator-verifier formulation can also be viewed through the lens of imitation learning. In standard imitation learning, a central limitation of learning strictly from offline data (behavior cloning) is that the learner is trained only on states induced by the expert policy; at test time, if the learner makes an early mistake, this may steer the trajectory away from the expert’s path and on unexplored states, leading to compounding errors \cite{ross2010efficient,ross2011reduction}. 
Similarly here, this issue arises when the verifier makes a completeness mistake, that is, it rejects a correct prefix. Specifically, after a completeness mistake,  we cannot simply stop training on the reasoning trace if we want to understand the downstream effect of this error. Instead, we must continue verifying along the trace until either the verifier makes a soundness mistake (at which point we should {\sc Update} the verifier on that soundness mistake), or the true first faulty step is reached (Lines 17-18 in \Cref{alg:process example}). This accounting is crucial for separating the two ways verifier errors affect the performance of the generator(s)-verifier system: completeness errors may cause unnecessary ``I don't know" outputs, whereas soundness errors may lead to incorrect outputs. 

We will next show that for behavior cloning, where training is performed only on interactions between the generator and the oracle $\cal O$, the sample complexity is necessarily larger.

\begin{algorithm}[t]
\caption{\textsc{Test-Verifier}}
\label{alg:test hypothesis}

\DontPrintSemicolon
\SetAlgoLined
\LinesNumbered
\SetNoFillComment
\SetKwComment{tcp}{\(\triangleright\) }{}
\SetKwInOut{KwIn}{Input}

\KwIn{ example $x$; set of $N$ generators ${\cal G}$; value $\alpha$; verifier $h: X \times \Sigma^* \mapsto \{\yes,\no\}$.}

Run Algorithm~\ref{alg:weak-to-strong}$(x,{\cal G},\alpha,h)$, recording all predictions made by $h$\;

\If{\emph{Algorithm~\ref{alg:weak-to-strong} returned a reasoning trace} $\vtau$}{
  \If(\tcp*[f]{$L$ oracle calls}){$\cO(x,\tau_{1:\ell})=\no$ \emph{for any} $\ell \le L$ \label{line:all-prefixes-rejected}}{
    \Return \texttt{"Soundness Mistake"}\;
  }
}
\Else(\tcp*[f]{returned \texttt{"I don't know"}}){
  \If(\tcp*[f]{$O(\frac{NL}{\alpha}\ln \frac{NL}{\delta})$ oracle calls}){$h$ \emph{made any mistakes}}{
    \Return the type of the first mistake (\texttt{"Completeness Mistake"} or \texttt{"Soundness Mistake"})
  }
}

\Return \texttt{"Correct"}\;
\end{algorithm}

\begin{algorithm}[t]
\caption{Combining set of generators ${\cal G}$ and learned verifier $h$ to create $\vhp$}
\label{alg:weak-to-strong}

\DontPrintSemicolon
\SetAlgoLined
\LinesNumbered
\SetNoFillComment
\SetKwComment{tcp}{\(\triangleright\) }{}
\SetKwInOut{KwIn}{Input}

\KwIn{ prompt $x \in X$; set of $N$ generators ${\cal G}$; value $\alpha$; verifier $h: X \times \Sigma^* \mapsto \{\yes,\no\}$.}

\BlankLine
$\theta\gets \emptyset$ \tcp*[r]{initialize the per-instance transcript}

\For{$\ell = 1$ \KwTo $L$}{ 
    Set $i \gets 1$
    
  \While{$i \le \left\lceil \frac{1}{\alpha}\ln\frac{NL}{\delta} \right\rceil$}{
    Sample $\tau^{(i,j)}_{\ell} \sim G_j(x,\theta)$ for all $G_j \in {\cal G}$\tcp*[r]{ask each generator for next step}

    Set $v^{(i,j)}_{\ell} = h(x,\tau_1,\ldots,\tau_{\ell-1},\tau^{(i,j)}_{\ell}) \in \{\yes,\no\}$\tcp*[r]{run the verifier}
    
    $\theta.\text{append}((\tau_\ell^{(i,1)}, \dots, \tau_\ell^{(i,N)}), (v_\ell^{(i,1)}, \dots, v_\ell^{(i,N)}))$\tcp*[r]{update transcript history}

    \eIf(\tcp*[f]{all sampled steps were rejected}){$v^{(i,j)}_{\ell}=\no$ \emph{for all} $j\in[N]$}{
      $i = i+1$\;
      
      Continue\tcp*[r]{discard and try again}
    }(\tcp*[f]{found an accepted next step}){
      
      Set $\tau_\ell \gets \tau^{(i,j)}_{\ell}$ for the smallest $j\in[N]$ with $v^{(i,j)}_{\ell}=\yes$
      \tcp*[r]{append step}
      
      Break\tcp*[r]{exit while; continue for-loop}
    }
  }

  \Return \texttt{"I don't know"}\tcp*[r]{no correct next step found; timeout}
}

\Return $\tau_1,\ldots,\tau_L$\tcp*[r]{return trace found}

\end{algorithm}

\subsection{Why Interaction Helps: A Lower Bound for Behavior-Cloning}
\label{sec:bc-lower-bound}

Theorem~\ref{thm:wrapper} trains the verifier on transcripts
induced by its own predictions. We now show that this interaction cannot in
general be replaced by behavior cloning from transcripts generated under
the perfect verifier while maintaining a good soundness-completeness tradeoff. 
The high-level reason is that a completeness mistake may change the behavior of a
history-dependent generator and expose the learned verifier to candidate steps
that never appear in the perfect-verifier training distribution, which in
turn could lead to a downstream soundness mistake.

We first introduce the definition of behavior-cloning \citep{pomerleau1991efficient,bain1995framework,10.1145/3054912} for our setting. An
$m$-sample \emph{behavior-cloning learner} observes $m$ interactions
generated by deploying the perfect verifier $\cal O$ alongside $G$. Thus, during
training, whenever $G$ proposes a candidate $c$ after a transcript $\theta$,
the recorded verdict is the oracle label ${\cal O}(c)$ (in our construction the oracle labels each candidate independently of the
accepted prefix, so we write ${\cal O}(c)$ for its verdict on $c$), and consequently
every training transcript is oracle-consistent. The learner outputs a fixed verifier
$\hat h : \Sigma^* \to \{\mathrm{yes},\mathrm{no}\}.$
The learner may be \emph{randomized} and \emph{improper}, meaning that
$\hat h$ need not belong to the target verifier class.
 
At deployment, the generator is instead paired with $\hat h$. Starting from
the empty transcript, each candidate produced by $G$ is labeled by
$\hat h$. The learner
receives no oracle feedback during deployment and $\hat h$ is not updated.
For a fixed verifier $\hat h$, we define its \emph{on-policy} soundness and
completeness errors by
\begin{align*}
\epsilon_s(\hat h) &:=
\Pr\!\left[\,\hat h \text{ accepts an incorrect candidate during deployment}\,\right],
\\
\epsilon_c(\hat h) &:=
\Pr\!\left[\,\hat h \text{ rejects a correct candidate during deployment}\,\right],
\end{align*}
where the probability is over the deployed interaction between $G$ and
$\hat h$. When we reason about the learner's output
(Theorem~\ref{thm:bc-lower-bound}), $\hat h = \hat h(S)$ is itself
random through the training sample $S$ and the learner's internal randomness,
over which the stated guarantees are taken. We now construct the verifier class and generator used in our lower bound.

\paragraph{The verifier class.}
We use the layered-product class
$H_\beta$ from Proposition~\ref{prop:two-scale-frontier}. Recall that
$\Sigma$ and $\Sigma'$ are disjoint, $|\Sigma|=n$, $|\Sigma'|=n'$, and
$b=\beta n$ for a constant $\beta\in(0,1/2)$. Each target
$h_{S^\star,\sigma^\star}$ rejects exactly the set
$S^\star\subseteq\Sigma$, where $|S^\star|=b$, and one point
$\sigma^\star\in\Sigma'$. Note that $\text{VCdim}(H_\beta) = b+1$. Proposition~\ref{prop:two-scale-frontier} gives
$\Ldim(H_\beta)=b+1$ and
$\SCLdim(H_\beta,1)=n-b+1$. In particular, there is an
online verifier learner making at most one soundness mistake and at most $n-b$
completeness mistakes. Applying
Theorem~\ref{thm:wrapper} with $M_s=1$ and $M_c=n-b$ gives,
up to logarithmic and confidence factors, $\epsilon_s=O(1/m)$ and $\epsilon_c=O(n/m)$
after $m$ interactive training instances.

Draw the target according to
\[
    S^\star
    \sim
    \operatorname{Unif}
    \{S\subseteq\Sigma:|S|=b\},
    \qquad
    \sigma^\star\sim\operatorname{Unif}(\Sigma'),
\]
independently, and let $\mathcal O=h_{S^\star,\sigma^\star}$. 

\paragraph{A feedback-dependent generator.} We construct a single history-dependent generator $G_{\mathrm{fb}}$. Fix a training
sample size $m\geq n$ and set $\rho:=n/(8m)$. Define the baseline
distribution
\[
    Q
    :=
    \rho\,\operatorname{Unif}(\Sigma)
    +(1-\rho)\,\operatorname{Unif}(\Sigma').
\] 
\noindent Given a
transcript $\theta$, if $\theta$ contains no completeness mistake, then
$G_{\mathrm{fb}}$ samples its next candidate from $Q$. If $\theta$
contains a completeness mistake---that is, the deployed verifier has rejected
some candidate outside $S^\star\cup\{\sigma^\star\}$---then
$G_{\mathrm{fb}}$ enters a ``failure mode’’ (the generator does not actually receive the oracle labels, only its distribution depends on the transcript and the target). With probability $1/2$ it outputs
$\sigma^\star$ (the unique wrong trace in $\Sigma'$), and with probability $1/2$ it draws a fresh candidate from
$Q$.

\begin{lemma}
\label{lem:feedback-generator-good}
For all $n'\ge 4$, the generator $G_{\mathrm{fb}}$ is
$1/4$-good for any instance $x\in X$.
\end{lemma}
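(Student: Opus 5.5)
By Definition~\ref{def:good-history-dependent-generators}, we must show: for every oracle-consistent transcript $\theta$ whose accepted prefix is correct and has length $\ell<L$, the next candidate $c\sim G_{\mathrm{fb}}(x,\theta)$ satisfies $\Pr[\mathcal O(c)=\yes]\ge 1/4$. Here $N=1$, and the oracle labels each candidate independently of the prefix. The plan is to use that oracle-consistency rules out failure mode, so only the baseline distribution $Q$ has to be analysed.

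\textbf{Step 1 (no failure mode).} On an oracle-consistent transcript, every recorded verdict equals the oracle label. In particular, no candidate outside $S^\star\cup\{\sigma^\star\}$ has been rejected, so $\theta$ contains no completeness mistake. By construction, $G_{\mathrm{fb}}(x,\theta)=Q$ exactly. The failure-mode branch (outputting $\sigma^\star$ with probability $1/2$) is therefore irrelevant to goodness. This is precisely the freedom emphasized after Definition~\ref{def:good-history-dependent-generators}.

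\textbf{Step 2 (bounding the correct mass of $Q$).} Under $Q$, an incorrect candidate is either a point of $S^\star\subseteq\Sigma$ or the point $\sigma^\star\in\Sigma'$. Hence
\[
\Pr_{c\sim Q}[\mathcal O(c)=\no]=\rho\,\frac{b}{n}+(1-\rho)\,\frac{1}{n'}=\rho\beta+\frac{1-\rho}{n'} .
\]
We have $\rho=n/(8m)\le 1/8$ since $m\ge n$, and $\beta<1/2$, so $\rho\beta\le 1/16$. Also $n'\ge 4$ gives $(1-\rho)/n'\le 1/4$. Thus the incorrect mass is at most $1/16+1/4<3/4$, and the correct mass is at least $1/4$, as claimed.

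\textbf{Step 3 (uniformity).} The bound in Step 2 does not depend on $x$, $\theta$, the accepted prefix, or the realization of $(S^\star,\sigma^\star)$. So $G_{\mathrm{fb}}$ is $1/4$-good for every instance, and $(1/4,1)$-good for any $D$.

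The only delicate point is Step 1: checking carefully that ``oracle-consistent'' in the definition implies ``no completeness mistake'' in the generator's trigger condition. This holds because the trigger is defined as a rejection of a candidate outside $S^\star\cup\{\sigma^\star\}$, and such a verdict would disagree with $\mathcal O$. The rest is arithmetic. Even without using $\rho\le 1/8$, the crude bound $\rho\beta+(1-\rho)/n'\le\max(\beta,1/n')<1/2$ would already give $1/2>1/4$.
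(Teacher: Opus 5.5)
Your proof is correct, but it takes a different route from the paper's.

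You use the restriction in Definition~\ref{def:good-history-dependent-generators} to oracle-consistent transcripts to rule out failure mode, so that $G_{\mathrm{fb}}(x,\theta)=Q$ exactly. You then compute the incorrect mass of $Q$ as $\rho\beta+(1-\rho)/n'\le 5/16$.

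The paper never invokes oracle-consistency. It observes that on \emph{every} transcript, failure mode included, $G_{\mathrm{fb}}$ samples from $Q$ with probability at least $1/2$. It then lower-bounds the correct mass by counting only the correct points of $\Sigma'$, which gives $\frac12\cdot\frac78\cdot\frac34=\frac{21}{64}>\frac14$.

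Your route proves exactly what the definition asks for and gives a sharper constant ($11/16$). The paper's route proves the stronger statement that $G_{\mathrm{fb}}$ is $1/4$-good on all transcripts. That stronger statement is what the remark immediately after the lemma, and the closing remark of Section~\ref{sec:bc-lower-bound}, rely on: they claim the behavior-cloning separation holds even for generators that stay good after a verifier mistake.

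Your computation extends to that stronger claim at no extra cost. In failure mode the candidate is drawn from $Q$ with probability $1/2$, so the correct mass is at least $\frac12\bigl(1-\frac{5}{16}\bigr)=\frac{11}{32}>\frac14$.
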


\begin{proof}
For every transcript, including one in failure mode,
$G_{\mathrm{fb}}$ draws from $Q$ with probability at least $1/2$. A draw
from $Q$ lies in $\Sigma'$ with probability $1-\rho\geq7/8$, and a uniform
draw from $\Sigma'$ is correct with probability $1-1/n'$. Hence, for $n'\ge 4$,
\[
    \Pr_{c\sim G_{\mathrm{fb}}(\theta)}
    [\mathcal O(c)=\yes]
    \geq
    \frac12\left(1-\rho\right)
    \left(1-\frac1{n'}\right)
    \geq  \frac12 \cdot \frac78 \cdot \frac34 >\frac14.
\]
Thus, one may take $\alpha=1/4$.
\end{proof}

Note that we actually establish $\alpha$-goodness for every transcript (not just oracle-consistent ones), which is stricter than what is needed by Definition \ref{def:good-history-dependent-generators}. This makes our lower bound more robust. Under the perfect verifier, a completeness mistake never occurs, so the
failure mode is never entered. Consequently, behavior-cloning training
reveals only independent samples from the baseline process $Q$ together with
their oracle labels.

\paragraph{Behavior cloning.}
A behavior-cloning learner receives $m$ oracle-generated interactions
between $G_{\mathrm{fb}}$ and the perfect verifier $\mathcal O$, and outputs
a fixed verifier
$\hat h:\Sigma\cup\Sigma'\rightarrow\{\yes,\no\}$. The learner may be
improper and randomized. At deployment, $\hat h$ interacts with
$G_{\mathrm{fb}}$ but receives no further oracle feedback and is not
updated. Let $\epsilon_s(\hat h)$ and $\epsilon_c(\hat h)$ denote its
on-policy soundness and completeness errors in this deployed process.

\begin{theorem}[Behavior-cloning lower bound]
\label{thm:bc-lower-bound} Fix $\beta\in(0,1/2)$ and let $b=\beta n\in\mathbb N$. Suppose
$m\geq n$, $n'\geq 8m$, and the behavior-cloning learner observes
$m$ oracle-generated interactions. Then, for every possibly randomized
behavior-cloning learner producing a fixed verifier
$\hat h$, either
\[
    \epsilon_c(\hat h)\geq \frac{3}{64}
\qquad\text{ or }\qquad
    \epsilon_s(\hat h)
    \geq
    c_\beta\frac{n}{m}
\]
holds, where
$c_\beta
    :=
    \min\left\{
        \frac{3\beta}{512},
        \frac{3(1-\beta)}{2048}
    \right\}$ is a constant for fixed $\beta$.
\end{theorem}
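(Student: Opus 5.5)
The plan is to prove the statement by Yao's minimax principle. I will fix an arbitrary \emph{deterministic} behavior-cloning learner and average over the random target $(S^\star,\sigma^\star)$ drawn as specified. A bound on the expected error of a deterministic learner against this prior then transfers to randomized, improper learners; the "either/or" form transfers by case-splitting on the learner's coins. The key structural observation is the one already made after \Cref{lem:feedback-generator-good}. Under the perfect verifier the failure mode is never entered, so the training data $S$ consists only of i.i.d.\ draws from $Q$ together with their oracle labels. Since $\hat h$ is a deterministic function of $S$, conditioned on $S$ the unseen part of the target stays distributed according to the prior restricted to consistency with $S$. Concretely, let $U\subseteq\Sigma$ and $U'\subseteq\Sigma'$ be the unseen candidates. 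Then $S^\star\cap U$ is a uniform subset of $U$ of the size left over after the observed rejections, and, on the event that $\sigma^\star$ was not observed, $\sigma^\star$ is uniform on $U'$, independently of $\hat h$.

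\textbf{Step 1: a good event.} Each interaction proposes a bounded number of candidates in expectation, since $\alpha=1/4$ and the per-step acceptance probability under $\mathcal O$ is at least $3/4$. So the expected number of $\Sigma$-draws over all $m$ interactions is $O(\rho m)=O(n/8)$ per step, and Markov's inequality gives $|U|\ge n/2$ with constant probability. In the same way, at most $O(m)$ points of $\Sigma'$ are seen. Because $n'\ge 8m$, the probability that $\sigma^\star$ is among them is at most a small constant, and $|U'|\ge n'/2$. Let $E$ be the intersection of these events; I aim for $\Pr[E]\ge 1/2$. By the hypergeometric posterior, on $E$ the unseen set $U$ has (in expectation) a $\beta$ fraction of incorrect points and a $1-\beta$ fraction of correct points.

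\textbf{Step 2: case split on $\hat h$ restricted to $U'$.} If $\hat h$ rejects at least half of $U'$, then consider the very first candidate of deployment. With probability at least $(1-\rho)\cdot\frac{|U'|}{n'}\cdot\frac12\cdot(1-\frac{1}{|U'|})$ it is a correct rejected point of $\Sigma'$. Multiplying by $\Pr[E]$ and tracking constants, this should give $\epsilon_c(\hat h)\ge 3/64$.

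Otherwise $\hat h$ accepts at least half of $U'$, and so accepts $\sigma^\star$ with posterior probability at least $1/2$. In this case let $q$ be the fraction of $U$ that $\hat h$ accepts, and look at the first candidate. It lands in $U$ with probability at least $\rho/2$. There are two sub-cases. If it is an incorrect point of $U$ and is accepted, we get a direct soundness error, contributing about $\rho\beta q/2$. If it is a correct point of $U$ and is rejected, which happens with probability about $\rho(1-\beta)(1-q)/2$, this is a completeness mistake and the generator enters failure mode. The next candidate is then $\sigma^\star$ with probability $1/2$, and it is accepted with probability at least $1/2$, giving another soundness error. Adding the two contributions gives
\[
\epsilon_s(\hat h)\gtrsim \rho\bigl(\beta q+\tfrac{1-\beta}{4}(1-q)\bigr)\ge\rho\min\{\beta,\tfrac{1-\beta}{4}\},
\]
which after the constant bookkeeping, including $\rho=n/(8m)$ and $\Pr[E]$, yields $c_\beta\, n/m$. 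Two checks are needed here. First, deployment does reach a second candidate after the rejection: it does, because a rejection makes the while-loop of \Cref{alg:weak-to-strong} resample, and the timeout is at least $2$. Second, the two sub-events are disjoint first-candidate events, so their contributions add.

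\textbf{Main obstacle.} I expect the hardest part to be making the posterior-independence argument rigorous inside on-policy probabilities. The deployed process depends on $\hat h$, on $S^\star$ and on $\sigma^\star$, and the failure-mode output is $\sigma^\star$ itself. I would handle this by conditioning on $S$ (hence on $\hat h$) and on $E$. I would restrict attention to the first one or two candidates only, so that each bad event is an explicit product: a $Q$-draw into $U$ or $U'$, times the posterior probability that the point is (in)correct, times $\hat h$'s fixed verdict. The weaker sub-case bounds then combine with the constants $3/64$ and $c_\beta$. I expect the exact constants to need some slack adjustments in Step 1.
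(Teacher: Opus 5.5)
Your outline matches the paper's proof. It uses Yao's principle, the observation that perfect-verifier transcripts never trigger failure mode (so the sample is just oracle-labeled draws from $Q$), and a Markov/union-bound good event. It then splits on the acceptance rate $a$ of $\hat h$ on the unseen part $U'$ of $\Sigma'$, and for $a\ge 1/2$ runs the direct-soundness versus completeness-then-$\sigma^\star$ cascade on the first one or two deployment candidates. Your linear bound $\rho\bigl(\beta q+\tfrac{1-\beta}{4}(1-q)\bigr)$ is a valid alternative to the paper's split at $q=1/2$, since the two first-candidate events are disjoint, and it gains a factor of $2$. Counting resampled candidates within an interaction is also more careful than the paper.

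The first real problem is your good event. It only guarantees $|U|\ge n/2$, and you then use that $U$ is a $\beta$ fraction incorrect ``in expectation''. But you condition on the labeled sample $\mathsf S$, and given $\mathsf S$ the number of unseen incorrect points is the fixed number $b-R$. The learner chooses $q$ after seeing $R$, so the hypergeometric mean cannot be substituted into a bound whose coefficient $q$ depends on $\mathsf S$. Concretely, $R=b$ is possible on your event (as $b<n/2$). There the direct-soundness term $q(b-R)$ vanishes, and a learner accepting all of $U$ also kills the cascade term, so your per-realization bound fails. You need pointwise bounds $b-R\ge b/2$ and $n-b-C\ge (n-b)/2$. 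This is why the paper puts $R\le b/2$ and $C\le (n-b)/2$ into $\mathcal E$ separately, each by Markov and each failing with probability at most $1/4$. Bounding only $|U|$ would also give just $n(1/2-\beta)$ unseen correct points, rather than the $(1-\beta)$ dependence in $c_\beta$.

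The second problem is the final combination. Whether you are in the $a<1/2$ or the $a\ge 1/2$ case depends on the realization of $\mathsf S$, so you cannot ``multiply by $\Pr[E]$'' within each case. The missing step is a pigeonhole over realizations. Let $p:=\Pr[\mathcal E\cap\{a<1/2\}]$. Either $p\ge \Pr[\mathcal E]/2$, in which case $\epsilon_c$ is at least $p$ times your per-realization completeness bound. Or $\Pr[\mathcal E\cap\{a\ge 1/2\}]\ge \Pr[\mathcal E]/2$, and the soundness bound follows. With $\Pr[\mathcal E]\ge 3/8$ and per-case bounds $1/4$ and $\rho\min\{\beta/4,(1-\beta)/16\}$, this is exactly how the paper gets $3/64$ and $c_\beta$.

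For the same reason, transferring the either/or ``by case-splitting on the learner's coins'' loses another factor of $2$: you only get $\epsilon_c\ge\frac12\cdot\frac{3}{64}$ or $\epsilon_s\ge \frac12 c_\beta\frac nm$. Instead, condition on $\mathsf S$ and the coins jointly. The coins are independent of the target, so your posterior facts still hold, and you apply the pigeonhole once.
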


\begin{proof}
By Yao's minimax principle, it suffices to consider a deterministic learner
under a random target. Draw $S^\star$ uniformly from the $b$-subsets of
$\Sigma$, draw $\sigma^\star$ uniformly from $\Sigma'$, independently, and
set $\mathcal O=h_{S^\star,\sigma^\star}$.

Recall that under the perfect verifier the generator never enters its failure
mode. Thus, the behavior-cloning sample consists of $m$ independent draws
from
\[
    Q
    =
    \rho\,\text{Unif}(\Sigma)
    +(1-\rho)\,\text{Unif}(\Sigma'),
    \qquad
    \rho:=\frac{n}{8m},
\]
together with their oracle labels. Let $\mathsf S$ denote the resulting labeled
sample.

Let $R$ and $C$ be the numbers of distinct observed incorrect and correct
points of $\Sigma$, respectively. Since a training draw is an incorrect
$\Sigma$-point with probability $\rho\beta$ and a correct $\Sigma$-point
with probability $\rho(1-\beta)$,
\[
    \mathbb E[R]\leq \frac{b}{8},
    \qquad
    \mathbb E[C]\leq \frac{n-b}{8}.
\]
Therefore, by Markov's inequality,
\[
    \Pr\left[R>\frac b2\right]\leq\frac14,
    \qquad
    \Pr\left[C>\frac{n-b}{2}\right]\leq\frac14.
\]
Moreover, $\Pr[\sigma^\star\in \mathsf S]\leq m/n'\leq1/8$. Hence, by union bound, the ``bad'' event
\[
    \mathcal E
    :=
    \left\{
        R\leq\frac b2,\quad
        C\leq\frac{n-b}{2},\quad
        \sigma^\star\notin \mathsf S
    \right\}
\]
has probability at least $3/8$.

Condition on $\mathsf S$ and $\mathcal E$. Let $U:=\Sigma\setminus \mathsf S$. Since
$R+C\leq n/2$, a fresh uniform point of $\Sigma$ lies in $U$ with
probability at least $1/2$. Furthermore, by exchangeability of the unseen
labels, a uniformly random point of $U$ is incorrect with probability at
least $\beta/2$ and correct with probability at least $(1-\beta)/2$.

Define $U':=\Sigma'\setminus \mathsf S$,
\[
    q
    :=
    \Pr_{\tau\sim\text{Unif}(U)}
        [\hat h(\tau)=\yes].
\qquad
\text{ and, }
\qquad
    a
    :=
    \Pr_{\tau\sim\text{Unif}(U')}
        [\hat h(\tau)=\yes].
\]
Since $\sigma^\star$ is uniform among the unseen points of $\Sigma'$,
$a$ is also the conditional probability that $\hat h$ accepts
$\sigma^\star$. We consider three cases.

\paragraph{Case 1: $a<1/2$.}
The verifier rejects more than half of the unseen points of $\Sigma'$. All
but the single point $\sigma^\star$ are correct. Since $n'\geq8m$, at least
a $7/8$ fraction of $\Sigma'$ is unseen, and therefore the completeness
error ($\hat h$ rejects a correct $\Sigma'$ step) conditioned on $\mathsf S$ and $\mathcal E$ is at least
\[
    (1-\rho)\cdot\frac{(1-a)|U'|-1}{n'} \ \ge\ (1-\rho)\cdot\frac78\Big(\frac12-\frac{1}{|U'|}\Big)\ \ge\ \frac14 .
\]

\paragraph{Case 2: $a\geq1/2$ and $q\geq1/2$.} Let $A:=\{\tau\in U:\hat h(\tau)=\yes\}$ denote the set of unseen (in $\mathsf S$) steps from $\Sigma$ that are accepted by $\hat h$. Since $S^\star\cap U$ is a uniformly random $(b-R)$-subset of $U$, $\mathbb E[|A\cap S^\star|]=q(b-R)$.
A deployment draw lies in $\Sigma$ with probability $\rho$ and then is uniform on $\Sigma$, so it is an accepted incorrect point (soundness-error) with probability at least
\[
    \rho\cdot\frac{\mathbb E[|A\cap S^\star|]}{n}\ge\frac{\rho q(b-R)}{n}\ge \frac{q\beta\rho}{2}
    \ge
    \frac{\beta\rho}{4}.
\]

\paragraph{Case 3: $a\geq1/2$ and $q<1/2$.} Let $B:=\{\tau\in U:\hat h(\tau)=\no\}$ denote the set of unseen (in $\mathsf S$) steps from $\Sigma$ that are rejected by $\hat h$. Similarly to Case 2 above, $\mathbb E[|B\setminus S^\star|]=(1-q)(n-b-C)\ge (n-b)/4$.
A deployment draw is a rejected correct $\Sigma$-point (a completeness mistake) with
probability at least
\[
    \rho\cdot\frac{E[|B\setminus S^\star|]}{n}\ge\rho\cdot\frac{(n-b)/4}{n}
    =
    \frac{(1-\beta)\rho}{4}.
\]
This completeness mistake sends the generator into failure mode. The generator
then outputs $\sigma^\star$ with probability $1/2$, and the verifier accepts
it with probability $a\geq1/2$. Hence the resulting soundness-error
probability is at least
\[
    \frac{(1-\beta)\rho}{4}\cdot\frac12\cdot\frac12
    =
    \frac{(1-\beta)\rho}{16}.
\]

Thus, on every realization in
$\mathcal E\cap\{a\geq1/2\}$, the conditional soundness error is at least
\[
    \rho
    \min\left\{
        \frac{\beta}{4},
        \frac{1-\beta}{16}
    \right\}.
\]

Let $p:=\Pr[\mathcal E\cap\{a<1/2\}]$. If $p\geq3/16$, then Case~1 gives
\[
    \epsilon_c(\hat h)
    \geq
    \frac14 p
    \geq
    \frac{3}{64}.
\]
Otherwise,
$\Pr[\mathcal E\cap\{a\geq1/2\}]\geq3/16$, and Cases~2--3 give
\[
    \epsilon_s(\hat h)
    \geq
    \frac{3}{16}\rho
    \min\left\{
        \frac{\beta}{4},
        \frac{1-\beta}{16}
    \right\}.
\]
Substituting $\rho=n/(8m)$ yields
\[
    \epsilon_s(\hat h)
    \geq
    \frac{n}{m}
    \min\left\{
        \frac{3\beta}{512},
        \frac{3(1-\beta)}{2048}
    \right\}.
\]

\end{proof}

To obtain soundness error at most $\epsilon_s$ without incurring constant
completeness error, Theorem~\ref{thm:bc-lower-bound} requires
$m=\Omega_\beta(n/\epsilon_s)$ behavior-cloning interactions. In contrast,
Theorem~\ref{thm:wrapper}, instantiated with the
soundness-budget-one learner for $H_\beta$, uses
$\widetilde O(1/\epsilon_s+n/\epsilon_c)$ interactive instances. In the
regime $\epsilon_c=\Theta(n\epsilon_s)$, this is an $\Omega(n)$ separation.

\begin{remark}
    Our upper bound in Theorem~\ref{thm:wrapper}, requires $\alpha$-goodness only on oracle-consistent transcripts, which is weaker than requiring it on all transcripts. But our generator in the lower bound construction is actually $\alpha$-good w.r.t.\ all transcripts, including those where a completeness mistake by the verifier sends the generator into a failure mode. This strengthens our separation, in the sense that behavior cloning has the extra sample requirement even for generators that are robust to bad transcripts.
\end{remark}

\section{A Sequential Harnessing Perspective}
\label{sec:harness}

The reductions in this paper are instances of the following more general framework. A complex sequential prediction (system-level) task can often be solved by repeatedly invoking a simpler (base-level) online learner, provided that every system-level mistake can be decoded into a base-level mistake. In Appendix~\ref{app:online-template}, we formalize this principle through a \emph{sequential harnessing} framework. A \emph{harness} specifies how to interact with a base predictor in order to produce a system-level output, while a \emph{decoder} maps a labeled system-level example back to a sequence of base-level examples. When these two objects satisfy a natural consistency condition, any mistake bound guarantee for the base online learning problem transfers directly to the induced system-level problem (Theorem \ref{thm:online-to-online-reduction}).\looseness-1

This gives a clean way to view the equivalence between prefix verification and full chain-of-thought verification. Running a prefix verifier (base-level) sequentially over a reasoning trace induces a full chain-of-thought verifier (system-level), while a full chain-of-thought verifier can be used to answer prefix queries by padding the prefix so that the full-trace label reveals the prefix label. These reductions recover the prefix and full-chain-of-thought verifier equivalence (\Cref{sec:reduction}), albeit not distinguishing between soundness and completeness mistakes. This framework can also recover a version of our improving-weak-generators result (\Cref{sec:boost}) that does not distinguish between soundness and completeness mistakes. The generator-verifier wrapper can be viewed as a system whose output is produced by a sequence of verifier queries. A mistake made by the wrapper corresponds to a mistake by the underlying verifier on one of the queried prefixes. This gives an alternative, imitation-learning-style interpretation of verifier-guided reasoning. 

The analysis recovers the standard imitation-learning intuition that one should train on states induced by the learner's own behavior (Section \ref{subsec:imitation-harnessing}). It can also be applied to online chain-of-thought generation to obtain mistake bounds independent of the chain length (Section \ref{subsec:online-generations}). The sequential harnessing framework is useful for highlighting the common structure behind these reductions, but it is coarser than the analysis presented in the main body of the paper since it treats all mistakes symmetrically. The soundness-completeness Littlestone dimension and the weighted soundness-completeness dimension are therefore still needed to obtain the fine-grained tradeoffs that are central to our results.\looseness-1

\section{Conclusions and Future Research}\label{sec:conclusion}
We develop an online learning theory for chain-of-thought verifiers that explicitly captures the asymmetric roles of soundness and completeness. Our Littlestone-style dimensions tightly characterize the optimal mistake bound under both a soundness budget and a linear asymmetric-cost objective. Exact calculations for concrete verifier classes show that the resulting Pareto frontier can be flat, linear, sharply discontinuous, or even infinite below a critical soundness budget. We further show that these verifier guarantees translate into reliable generation. Online verifiers can amplify collections of weak, history-dependent generators whose candidate distributions adapt to previous verifier feedback. Finally, our behavior-cloning lower bound demonstrates that this interaction is essential in general—attaining comparable soundness and completeness using only perfect-verifier transcripts may require $\Omega(\text{VCdim}(H))$ times (where $H$ is the verifier class) more training samples than our interactive approach.

\paragraph{Open questions.}
A strength of our algorithms is that they tightly characterize online learnability in terms of mistake bounds; however, they are primarily information-theoretic and may not be computationally efficient. An important direction is to identify natural verifier classes or oracle models for which efficient, or oracle-efficient, algorithms are possible, perhaps while sacrificing optimality in the Pareto frontier or linear-cost objectives. Here ideas from online learning with predictable sequences might be useful \cite{pmlr-v30-Rakhlin13}. Another interesting open question is to extend our results beyond the realizability assumption. In practice, human feedback may be noisy or the classes of verifiers may be misspecified, which raises the possibility of agnostic guarantees, such as regret-style or competitive bounds, that separately track unavoidable soundness and completeness mistakes from the additional mistakes incurred by the learner. Such guarantees may connect naturally with ideas from data-driven algorithm design, where the verifier is viewed as an algorithm selected from data \cite{gupta2016pac,Sharma2019LearningPL, balcan2020semi, 8555142, Balcan_2021}. A further direction is to extend the verification model itself. In our work, verifiers output a binary value that captures whether a partial reasoning trace is correct locally and they do not explicitly check for progress made toward the final solution, unless further assumptions are made on the oracle (see Section \ref{sec:boost}). This interpretation of verifier outputs is consistent with the works of \cite{uesato2022solving, lightman2023let}, although in  recent literature PRM outputs provide a real-valued score in $[0,1]$ for each intermediate step meant to represent the probability that a partial reasoning trace could be extended to a fully correct trace, which would be interesting to analyze from a learning-theoretic perspective. Finally, it would be useful to identify weaker or alternative conditions, beyond the $(\alpha,\gamma)$-good generator assumption in Section~\ref{sec:boost}, under which verifiers can be used for ``boosting'' generators. For example, can one obtain similar guarantees when success probabilities are step-dependent, when different generators specialize on different parts of the reasoning trace, or when the generator-verifier interaction is allowed limited backtracking or search? Characterizing the minimal assumptions on the generator needed for learned verifiers to amplify reasoning reliability remains an important open problem.

\section*{Acknowledgements}
This work was supported in part by the National Science Foundation under grants IIS-1901403, ECCS-2216899 and ECCS-2216970, Simons Investigator Award MPS-SICS-00826333, Office of Naval Research MURI Grant N000142412742, an OpenAI Superalignment grant, a DARPA AIQ grant and a Cooperative AI PhD Fellowship from the Cooperative AI Foundation.

 \clearpage

 \bibliographystyle{plainnat}
\bibliography{bibs}

\newpage
\appendix

\begin{appendices}

\section{Omitted Details and Proofs}
\subsection{Additional Related Work}
\label{app:additional-related}

We provide below more detailed background and context of relevant literature, organized by its relationship to our work.

{\bf Expressiveness of chain-of-thought reasoning.} Chain-of-thought prompting~\citep{wei2022chain}, where models produce step-by-step reasoning traces, has become a dominant paradigm for complex reasoning tasks~\citep{nye2021show,kojima2022large}. A growing theoretical literature explains why intermediate reasoning steps expand model capabilities. \citet{li2024chain} show that constant-depth transformers with chain-of-thought can solve inherently serial problems (such as composing permutations) that are impossible without intermediate steps, establishing a provable separation between standard and CoT-augmented transformers. \citet{merrill2024expressive} prove that transformers with polynomial-length chain-of-thought can express any problem solvable in polynomial time, and \citet{feng2024revealing} provide a theoretical perspective on why CoT helps by analyzing the circuit complexity of CoT-augmented computation. From a learning-theoretic perspective, \citet{malach2024auto} show that auto-regressive next-token predictors with intermediate tokens are universal learners. The CoT structure---exposing intermediate reasoning steps---is precisely what makes step-level verification both possible and meaningful, as it provides the sequential structure that our framework exploits.

{\bf Learning chain-of-thought reasoning and verification.} On the learnability side, \citet{joshi2025theory} formalize autoregressive learning with chain-of-thought, analyzing sample and computational complexity when reasoning chains are observed or latent, and \citet{altabaa2025cot} show that learning guarantees can be independent of chain length under suitable assumptions. More broadly, a growing line of work studies the sample efficiency and training dynamics of learning with chain-of-thought: \citet{wen2025sparse} show that CoT reduces sample complexity from exponential to polynomial on parity-type tasks by inducing sparse attention patterns; \citet{li2025training} provide generalization analysis for training nonlinear transformers with CoT; \citet{wang2025compositional} prove statistical query lower bounds for learning compositional functions without curriculum, but show polynomial sample complexity is achievable with curriculum learning; and \citet{kim2025parity} prove that with CoT supervision, parity can be learned in one gradient step, whereas without CoT supervision exponential iterations are needed. \citet{abbe2024globality} introduce the ``globality barrier'' to characterize when transformers can learn from scratchpads, showing that inductive scratchpads can break this barrier and enable out-of-distribution generalization. For verification, \citet{balcan2025learning} give learning-theoretic foundations for chain-of-thought verification from a statistical perspective, and \citet{amit2024models} study self-proving models where the generator generates outputs along with proofs of correctness.

In more detail, \citet{balcan2025learning} show that if the prompt-reasoning pairs are drawn from a fixed but unknown distribution (say chain-of-thoughts from a given trained LLM on random problems of interest), then it is possible to learn a verifier that correctly predicts the location of the first error in the reasoning (if one exists) with high probability, with a small sample complexity (linear in the VC dimension, $\log |H|$ for finite verifier function classes $H$). However, given feedback on mistakes in its reasoning, the LLM may produce a different reasoning which is out-of-distribution and we are no longer able to reliably verify. On the other hand, if we require the verifier to never accept an incorrect reasoning for prompts coming from a fixed distribution, they show a strong lower bound on the sample complexity, linear in the size of the verifier class $H$. Our work aims to develop a model for verification that overcomes this pessimistic lower bound by taking advantage of some weak assumptions on the correctness of LLM reasoning steps, while still allowing the LLM outputs to be adaptive.

{\bf Prover-verifier systems.} Our work connects to a growing line of literature on the intersection of interactive proof systems and machine learning. \citet{interactivegoldwasser} lay the foundation for interactive proof systems for PAC verification, where a verifier with sample access to data interacts with an untrusted prover in order to certify that a proposed hypothesis is approximately correct. \citet{amit2025theory} introduce a  framework complementary to ours and aim to train a prover that produces both an answer and auxiliary information that helps a fixed sound verifier check for correctness. Similar to our results on using verifiers to improve the performance of weak generators, the prover-verifier interaction is used as the training procedure and the resulting guarantees concern the correctness of the model’s behavior on individual inputs rather than on average. The key difference is that \citet{amit2025theory} assume access to a verifier with formal soundness guarantees and train the prover to produce outputs that this fixed verifier can certify, whereas we treat the generator as a fixed model, for which we aim to learn a reliable verifier. \citet{anil2022learning} introduce Prover-Verifier Games (PVGs) as a framework for inducing checkable solutions through the strategic interaction between a powerful prover and a weaker verifier, where both agents are learned. Subsequently, \citet{hammond2025neural} generalized this framework by considering multiple provers and IP protocols that allow for zero-knowledge proofs. A complementary line of work on AI safety via debate \cite{irving2018aisafetydebate, browncohen:debate} studies how interaction among competing provers can make verification easier for a limited verifier. While many works in the intersection of ML and IPs typically study protocols for checking or eliciting complete answers, our work focuses on the autoregressive, step-level setting where a learned verifier evaluates prefixes of a chain of thought as they are generated. 

{\bf Imitation learning and behavior cloning.} Our framework for using a verifier to improve the performance of a weak generator is closely connected to a long line of work on \emph{learning from demonstrations}. Behavior cloning is a form of imitation learning in which an agent learns a policy (here a verifier) directly from expert demonstrations. Given a dataset of observation–action pairs collected from an expert, behavior cloning treats policy learning as a supervised learning problem: the learned policy is trained to predict the expert’s action for each observed state
\citep{pomerleau1991efficient,bain1995framework}. The learner never observes states off this expert-induced
distribution, which is a limitation that on-policy methods such
as \textsc{DAgger} are designed to avoid
\citep{ross2010efficient,ross2011reduction}. Specifically, \citet{ross2010efficient} and \citet{ross2011reduction} showed that imitation learning can be viewed as a reduction from sequential decision making to \emph{online learning}, where the learner must be trained on states induced by its own behavior in order to avoid compounding errors from distribution shift. This perspective is closely aligned with our motivation: in verifier-guided reasoning, the distribution of reasoning prefixes observed by the verifier is not fixed in advance, but is shaped by the interaction between the generator and the verifier’s feedback. However, in our setting, we aim for fine-grained guarantees that capture the asymmetric role of soundness and completeness errors made by the learned verifier to the performance of the generator-verifier system, thus requiring a tailored  analysis.
We make this separation formal in Section~\ref{sec:bc-lower-bound}, by showing that behavior cloning on perfect-verifier transcripts can fail for a feedback-dependent generator on which our interactive learner succeeds. The separation also exposes an asymmetric compounding-error phenomenon---a completeness error may be relatively benign immediately, yet alter the generator's future behavior and thereby cause a later soundness error.

{\bf Theoretical foundations of post-training.} A parallel line of recent theory studies when and how post-training can improve reasoning. The \emph{sharpening} framework~\citep{huang2024sharpening} and the \emph{coverage principle}~\citep{chen2025coverageprinciple} show that sample-and-evaluate approaches (such as best-of-$N$) require the base model to place non-negligible probability mass on entirely correct sequences, and that this \emph{sequence-level} coverage governs the success of post-training and test-time scaling. \citet{tsilivis2025rlafterntp} provide a complementary account of why reinforcement learning after next-token prediction can unlock longer chain-of-thought behaviors, though it still relies on correct trajectories not being exponentially rare. These results operate at the \emph{sequence level}: they analyze selection or reweighting of complete reasoning traces.
In contrast, our work and~\citet{rohatgi2026taming} take a \emph{step-level} perspective. \citet{rohatgi2026taming} show how to use a (given) process verifier to guide backtracking via a random walk on partial reasoning traces. Our work addresses the more fundamental question of how to \emph{learn} such a step-level verifier, with formal guarantees that hold for arbitrary sequences of reasoning traces---whether generated by a fixed generator, an evolving generator undergoing RL training, or an adversary. 

{\bf Online learning and mistake bounds.} The theoretical foundations of online learning are closely tied to combinatorial complexity measures, most notably the Littlestone dimension~\citep{littlestone1988learning}, which characterizes learnability and optimal mistake bounds in adversarial online classification. In his seminal COLT 1989 paper, \citet{littlestone:1989} established an online-to-batch conversion, showing that online learnability implies PAC learnability with sample complexity controlled by the online mistake bound. A key challenge in using Littlestone’s argument in our setting of boosting weak generators is that the reasoning steps depend on the problem and context and are not i.i.d. Recent work has significantly refined this picture: randomized variants of the Littlestone dimension characterize expected mistake bounds for randomized learners \citep{filmus23randomized}, while computability-aware, strategy-aware and effective versions clarify when finite-mistake learning is achievable by constrained learners \citep{hasrati23computable,ahmadi2024strategic,delle-rose25effective}. Another extension related to abstention-error trade-off inspires our formulations for different types of verifier mistakes~\citep{sayedi2010trading,zhang2016extended}. More broadly, unified approaches to online learning connect regret minimization with competitive analysis and other sequential decision-making frameworks \citep{buchbinder2012unified}. Parallel to these developments, the online learning framework has been extended to sequential prediction problems with temporal structure \citep{pmlr-v30-Rakhlin13}, where the adversary's behavior is partially forecastable, e.g., given by a noisy autoregressive function.

A complementary line of work studies how the information available to an online learner affects learnability and mistake bounds. \citet{bendavid1997online} considered variants of the mistake-bound model in which the learner receives different degrees of advance information about the instance sequence, revealing fundamental distinctions between online and offline learning. More recently, \citet{chase2025optimal} characterized the value of unlabeled information in transductive online learning in terms of Littlestone dimension, establishing tight mistake bounds and further demonstrating how the interaction protocol can alter online learnability. At the other end of the information spectrum, the apple-tasting model studies online classification with one-sided feedback \citep{helmbold2000apple}. \citet{chase2025deterministic} recently established deterministic mistake bounds for this setting in terms of Littlestone dimension. These works are closely related to our analysis of verifier learning, where the information revealed to the learner is likewise determined by the interaction protocol. Our generator-verifier perspective is also conceptually related to interactive verification, where a verifier interacts with a more powerful prover to certify a computation \citep{goldwasser2015delegating,cormode2011verifying}. In our setting, however, the verifier must itself be learned, and the sequence of reasoning steps is induced by its interaction with the generator, making it necessary to control verifier mistakes along adaptively generated reasoning traces.

{\bf Process Reward Models and empirical work.} A substantial body of empirical work has developed   \emph{process reward models} (PRMs) for chain-of-thought verification. This literature pursues two distinct approaches. The first trains PRMs on \emph{human annotations} of step-level logical correctness. This includes \citet{uesato2022solving}  and \citet{lightman2023let}, who demonstrate that PRMs significantly outperform outcome reward models (supervision with only final answers), motivating the need to develop and study chain-of-thought verifiers. {More recent empirical work further studies step-level supervision, step-search, and error localization in PRM-style systems and benchmarks~\citep{zhang2025lessons,zheng2024processbench,ma2023letsreward,khalifa2025thinkprm,zhao2025genprm,zhang2024genrm,jacovi-etal-2024-chain}.} These human-annotated PRMs effectively approximate ground-truth verifiers. The second approach avoids step-by-step human annotation by training PRMs via \emph{Monte Carlo estimation}~\citep{wang2024mathshepherd,luo2024omegaprm}, where a step’s label is the fraction of sampled completions that reach the correct answer, or via \emph{implicit reward derivation} from outcome models~\citep{cui2025prime}. These produce generator-specific value functions rather than ground-truth verifiers. For instance, a logically flawed step may score highly if the generator can recover from it. PRMs are widely used to improve chain-of-thought by best-of-$N$ reranking and tree search~\citep{lightman2023let,snell2024scaling}, and as dense reward signals for reinforcement learning~\citep{setlur2025rewarding,cui2025prime}. Our work complements the growing empirical evidence supporting the power of verifier-guided reasoning by providing formal learnability guarantees, analyzing the soundness-completeness tradeoff and showing provable amplification of correctness for weak reasoners using learned verifiers.

{\bf Learning with abstention and selective classification.}
A long line of work equips classifiers with the option to abstain rather than risk a mistake, trading error against coverage
~\citep{chow1970optimum,rivest1988learning,elyaniv2010foundations,cortes2016learning}. Our soundness--completeness formulation is most
directly related to the \emph{online} branch of this literature \citep{sayedi2010trading,zhang2016extended}, where given a
budget on classification mistakes the player seeks to minimize abstentions. At a high-level this seems similar to parts of our work, where given a
budget on soundness mistakes the player seeks to minimize completeness mistakes. However, our chain-of-thought verification problem is fundamentally different in structure than binary classification with abstentions and it is only through our two-way online to online reduction from chain-of-thought to prefix verification that we can establish a connection between the two settings. We also extend our results way beyond just characterizing the trade-off between the two types of mistakes by studying a distinct linear-cost objective not captured by the above budgeted abstention analogy. For arbitrary costs on soundness and completeness mistakes, we introduce the WSC Littlestone dimemsion and prove that it exactly characterizes the optimal cumulative loss. 
We also study the downstream effects of the soundness-completeness trade-off to the  reliable generation of reasoning traces.

Several works on robust and reliable prediction are also close in spirit.
\citet{goldwasser2020beyond} output a selective classifier that guarantees low
error on arbitrary, potentially adversarial test examples while controlling its
rejection rate on the training distribution. Similarly,
\citet{balcan2022robustly} introduce \emph{robustly-reliable learning} under
data-poisoning attacks, where the learner predicts only when it can certify that its label is correct under a specified corruption budget, and otherwise abstains.
They characterize the region on which such predictions can be certified,
including against instance-targeted attacks. \citet{balcan2023reliable} extend
the reliable-learning perspective to challenging test-time environments,
including adversarial test-time attacks and natural distribution shifts. These works share with ours the goal of separately controlling harmful
incorrect predictions and the frequency with which the learner declines to
make a useful prediction. \citet{balcan2023analysis} make non-Lipschitz networks robust by learning to abstain on inputs perturbed within random low-dimensional subspaces and give data-driven guarantees for tuning the resulting accuracy--abstention trade-off. This is related in spirit to our asymmetric treatment of soundness and completeness, but differs from our online minimax formulation. In Section~\ref{sec:lscmistakes}, for arbitrary costs on soundness and completeness mistakes, we introduce the WSC-Littlestone dimension and show that it exactly characterizes the optimal cumulative loss. 

Abstention also appears in imitation learning, although often through stopping, expert intervention, or deferral rather than a conventional ``don't-know'' label. Most directly, \citet{goel2026learning} study \emph{selective imitation learning} under arbitrary dynamics shift, where a policy may stop when it can no longer act reliably. Their goal is to act well before stopping while rarely stopping in the training environment. Related interactive-imitation methods such as SafeDAgger \citep{zhang2016query}, Expert Intervention Learning \citep{spencer2022expert}, and ThriftyDAgger \citep{hoque2022thriftydagger} selectively query an expert or transfer control when the learner is uncertain or likely to act unsafely.

\subsection{Background: Littlestone Dimension}\label{app:mistake-tree}
\noindent We give an overview of the concepts and techniques from online classification \citep{littlestone1988learning, daniely2011multiclass} that are relevant to our work. We begin with the notion of \emph{shattered mistake trees}.
\paragraph{Mistake tree.} A mistake tree is a rooted binary tree, where each internal node corresponds to examples in some domain set $X$ and each edge corresponds to a label in a a set $Y$ (for example in binary classification $Y = \{-1, 1\}$ and in multiclass classification $Y = [L]$), such that the outgoing edges from a single node have different labels. We say that a mistake tree is \emph{shattered} by a hypotheis class $H$ if for any root-to-leaf path that traverses internal nodes $x_1, x_2, \dots, x_d$ there exists a hypothesis $h \in H$ such that the label of edge $(x_i, x_{i+1})$ is $h(x_i)$ for all $i \leq d$.

Intuitively, a mistake tree describes the strategy of an adversary against a deterministic learning algorithm. Internal nodes correspond to the examples that the adversary will present to the learner starting from the root. The pair of outgoing edges corresponds to the adversary's ability to induce a mistake no matter what the learner predicts. For example, if $Y = \{-1, 1\}$ and the learner predicts $-1$, the adversary can reveal label $+1$ and follow the corresponding downward edge to the next example. Therefore, the interaction between learner and adversary corresponds to a root-to-leaf path and  shattering ensures that the sequence of examples presented to the learner is realizable by the hypothesis class $H$.

The Littlestone dimension captures the intuition that the more complex a hypothesis class is, the deeper the mistake trees it should be able to shatter.

\begin{definition}[Littlestone dimension]
    The Littlestone dimension of a hypothesis class $H$, denoted as $\ldim(H)$, is the maximum integer $d$ such that there exists a complete mistake tree of depth $d$ that is shattered by $H$.
\end{definition}

With the preceding understanding of a shattered mistake tree, it immediately follows that the Littlestone dimension lower bounds the number of mistakes of any deterministic learning algorithm. \citet{littlestone1988learning} showed that there exists an online learning algorithm, the Standard Optimal Algorithm (SOA), that achieves this lower bound. These two results completely characterize the optimal mistake bounds in online classification. Later, \citet{daniely2011multiclass} extended these results to online multiclass classification.

\subsection{Summary of Algorithms for Chain-of-Thought Verification}
\SetAlgoNoEnd
\begin{algorithm}[H]
\caption{Chain-of-Thought verification with soundness budget $k$}
\label{alg:cot-sc-soa}

\DontPrintSemicolon
\SetAlgoLined
\LinesNumbered
\SetNoFillComment
\SetKwComment{tcp}{\(\triangleright\) }{}
\SetKwInOut{KwIn}{Input}

\KwIn{class of verifiers $H$; soundness mistake budget $k$.}

$H^{(1)} \gets H$\tcp*[r]{initialize version space}

\For{$t \in [T]$}{

  Observe example $z^{(t)} = (x^{(t)},\tau^{(t)} =(\tau^{(t)}_1,\ldots,\tau^{(t)}_L)) \in X \times \Sigma^L$
  
  \uIf{$k=0$}{
    Predict $\hat{y}^{(t)} \leftarrow \min \{\ell \in [L] \; \text{such that} \; 
    \exists h \in H^{(t)} : h(x^{(t)},\tau_1,\ldots,\tau_\ell)=\no \}$\;
  }
  \Else{
    \For{$\ell \in [L]$}{
      Compute $m_c \gets \mathrm{SC\mbox{-}Ldim}\!\left(
      \{\, h \in H^{(t)} : h(x,\tau_1,\ldots,\tau_\ell)=\yes \,\},\, k
      \right)$
      
      Compute $m_s \gets \mathrm{SC\mbox{-}Ldim}\!\left(
      \{\, h \in H^{(t)} : h(x,\tau_1,\ldots,\tau_\ell)=\no \,\},\, k-1
      \right)$
      
      \If(\tcp*[f]{includes the case where all $h\in H^{(t)}$ predict $\no$}){$m_c \le m_s$}{ 
        Predict $\hat{y}^{(t)} \gets \ell$\;
        \textbf{break}\;
      }
      
      \If{$\ell = L$}{
      Predict $\hat{y}^{(t)} \gets \infty$\;
    }
  }
  }

  Receive correct label $\hat{y}^{(t)} \in [L] \cup \{ \infty\}$\;

  \uIf(\tcp*[f]{soundness mistake}){$y^{(t)} < \hat{y}^{(t)}$}{
    {\sc Update} $H^{(t+1)} \gets \{\, h \in H^{(t)} : h(x^{(t)},\tau^{(t)}_1,\ldots,\tau^{(t)}_{y^{(t)}})=\no \,\}$\;
    Set $k \gets k-1$ \tcp*[r]{update soundness mistake budget}
  }
  \ElseIf(\tcp*[f]{completeness mistake}){$y^{(t)} > \hat{y}^{(t)}$}{
     Update $H^{(t+1)} \gets \{\, h \in H^{(t)} : h(x^{(t)},\tau_1,\ldots,\tau_{\hat{y}^{(t)}})=\yes \,\}$\;
  }
}
\end{algorithm}

\SetAlgoNoEnd
\begin{algorithm}[t]
\caption{Chain-of-Thought verification with linear cost objective}
\label{alg:cot-wsc-soa}

\DontPrintSemicolon
\SetAlgoLined
\LinesNumbered
\SetNoFillComment
\SetKwComment{tcp}{\(\triangleright\) }{}
\SetKwInOut{KwIn}{Input}

\KwIn{class of verifiers $H$; mistake costs $\gamma_s, \gamma_c \in \R_{\geq 0}$.}

$H^{(1)} \gets H$\tcp*[r]{initialize version space}

\For{$t \in [T]$}{

  Observe example $z^{(t)} = (x^{(t)},\tau^{(t)} =(\tau^{(t)}_1,\ldots,\tau^{(t)}_L)) \in X \times \Sigma^L$
  
 \For{$\ell \in [L]$}{
  
    Compute $\cY^{(t)} = \{ h(x^{(t)}, \tau^{(t)}_{1:\ell}) \; | \; h \in H^{(t)} \}$\tcp*[r]{set of possible outputs}
    
    \eIf(\tcp*[f]{all consistent verifiers reject}){$\cY^{(t)} = \{ \no \}$}{
       Predict $\hat{y}^{(t)} \gets \ell$\;
        \textbf{break}\;
    }{
        Compute $m_c = \gamma_c + \mathrm{WSC\text{-}Ldim}\bigl(H^{(t)}[((x^{(t)}, \tau^{(t)}_{1:\ell}), \yes)]\bigr)$\;

        Compute $m_s = \gamma_s + \mathrm{WSC\text{-}Ldim}\bigl(H^{(t)}[((x^{(t)}, \tau^{(t)}_{1:\ell}), \no)]\bigr)$\;

        \If(\tcp*[f]{minimize WSC-Ldim of future version space}){$m_c \le m_s$}{
            Predict $\hat{y}^{(t)} \gets \ell $\;
            \textbf{break}
            }
        \If{$\ell = L$}{
        Predict $\hat{y}^{(t)} \gets \infty$\;
}
        }
    }
    Receive correct label $y^{(t)} \in [L] \cup \{ \infty\}$\;

  \uIf(\tcp*[f]{soundness mistake}){$y^{(t)} < \hat{y}^{(t)}$}{
    Update $H^{(t+1)} \gets \{\, h \in H^{(t)} : h(x^{(t)},\tau^{(t)}_1,\ldots,\tau^{(t)}_{y^{(t)}})=\no \,\}$
  }
  \ElseIf(\tcp*[f]{completeness mistake}){$y^{(t)} > \hat{y}^{(t)}$}{
    Update  $H^{(t+1)} \gets \{\, h \in H^{(t)} : h(x^{(t)},\tau_1,\ldots,\tau_{\hat{y}^{(t)}})=\yes \,\}$\;
  }
  }
\end{algorithm}

\subsection{Ommitted Proofs and Additional Examples for SC-Littlestone dimension (Section~\ref{sec:frontier-examples})}\label{app:examples}

\subsubsection{Linear Threshold Verifiers with a Margin}

Below we present an example of an infinite class of verifiers that can be learned with a finite mistake bound. This is an adaptation of a similar example presented in prior work \cite{joshi2025theory, balcan2025learning}.

    Let $X, \Sigma = \{x \in \R^d: \|x\|_2 \leq r \}$ and consider the class of $d$-dimensional linear threshold verifiers:
    \begin{align*}
        H = \{h_{w_0, w}(x, \tau) \mapsto \mathbbm{1}\{w_0x + w[-l:]\tau[-l:]
         \geq 0 \} \; | \; w_0
          \in \R, w \in \R^d, l = \min \{d-1, |\tau| \} \}
    \end{align*}
    Now for a margin $\gamma>0$, we define a sequence 
    $(x^{(1)}, \tau_{1:L}^{(1)}, y^{(t)}), \dots, (x^{(T)}, \tau_1:L^{(T)}, y^{(T)}) \in X \times \Sigma^L \times [L] \cup \{ \infty \}$ to be ``margin-$\gamma$ realizable by $H$'' if there exists a single $h_{w_0^*, w^*} \in H$ such that
    \begin{align*}
        y^{(t)} = \min \{l: w^*_0x^{(t)} + w^*[-l:]\tau^{(t)}[-l:] < 0\} \;\;  \text{and} \;\; |\langle 
        (w^*_0, w^*), (x^{(t)}, \tau^{(t)}) \rangle| \geq \gamma.
    \end{align*}
     By classical results  on learning linear thresholds with a margin \cite{rosenblatt1958perceptron}, it follows that if we restrict mistake trees to only contain nodes in $\{(x, \tau) : |(w^*_0, w^*), (x, \tau)| \geq \gamma\}$ then $\ldim(H) \leq O(R^2/\gamma^2)$.

\subsubsection{Layered Product Verifiers}

\twoscalefrontier*

\begin{proof}
For $R_{n,b}$, we have $\Ldim(R_{n,b})=b$. For any root-to-leaf path in a mistake tree with at most $b$ distinct points, there exists a function $h_S \in R_{n,b}$, where $S$ contains all negative points along the path and none of the positive ones. Thus a depth-$b$ mistake tree can be shattered by $R_{n,b}$. For the upper bound, a learner that predicts $\yes$ on each new point makes at most $b$ mistakes, so $\Ldim(R_{n,b})=b$. Similarly for the singleton-rejector class, $\Ldim( R_{n'})=1$. Since these two classes act on disjoint layers, Proposition~\ref{prop:additive}(i) gives $\Ldim(H_\beta)=b+1$.

\medskip \noindent \emph{Zero soundness budget.} A sound learner for $R_{n,b}$ must reject every point that the learner is uncertain about. An adversary can therefore reveal all $n-b$ correct points before revealing a member of $S$, so $\SCLdim(R_{n,b},0)=n-b$. Similarly, Proposition~\ref{prop:rn} gives $\SCLdim( R_{n'},0)=n'-1$. Proposition \ref{prop:additive}(iii) therefore yields $\SCLdim(H_\beta,0)=(n-b)+(n'-1)$. 

\medskip \noindent \emph{Budgets $1\leq k\leq b$.} For the upper bound, a learner can reject every new point in $\Sigma$ and accept every new point in $\Sigma'$. The first layer then causes at most $n-b$ completeness mistakes, while the second causes at most one soundness mistake, namely when $\sigma^\star$ is first encountered. Thus this learner uses only one unit of soundness budget and makes at most $(n-b)+1$ mistakes, so $\SCLdim(H_\beta,k)\leq(n-b)+1$ for every $k\geq1$.

For the matching lower bound, the adversary first plays only on the $\Sigma$ layer, querying previously unseen points. Whenever the learner predicts $\no$, the adversary may reveal $\yes$, forcing a completeness mistake and declaring that point to lie outside $S$. Whenever the learner predicts $\yes$, the adversary may reveal $\no$, forcing a soundness mistake and declaring that point to lie in $S$. This remains realizable until either the learner has made $n-b$ completeness mistakes or it has made $b$ soundness mistakes.

If $k<b$, the latter event cannot occur while also respecting the budget constraint, so the adversary forces $n-b$ completeness mistakes on the first layer. It then queries a new point of $\Sigma'$. No matter what the learner predicts, the adversary can force one more mistake for a total of $(n-b)+1$. It remains to consider $k=b$. If the adversary has already forced $n-b$ completeness mistakes on $\Sigma$, one query in $\Sigma'$ again gives the desired total. Otherwise, the learner has made $b$ soundness mistakes on $\Sigma$ and exhausted its budget. The adversary can then present the $n'-1$ correct points of $\Sigma'$ before $\sigma^\star$, forcing $n'-1$ additional completeness mistakes. The resulting total is $b+n'-1$, which is at least $(n-b)+1$ by the assumption $n'\geq n-2b+2$. Hence $\SCLdim(H_\beta,k)\geq(n-b)+1$ throughout $1\leq k\leq b$.

Notice that Proposition~\ref{prop:additive}(ii) alone gives only the general superadditive lower bound obtained by assigning the same budget to both layers. The sharper middle regime above comes from the fact that the single global unit of soundness budget can be spent entirely on the second layer, while the first layer is learned soundly.  

\medskip \noindent \emph{Budgets $k\geq b+1$.} Apply Lemma \ref{lem:general-shape}(iii) with $\Ldim(H_\beta)=b+1$.
\end{proof}
\subsubsection{Subspace Verifiers}\label{sec:subspace}

\subspace*

Call a pair of
subspaces $(S,T)$ of $\R^d$ with $S\subset T$ and $\dm T=\dm S+1$ a
\emph{flag} (a term from algebraic geometry for referring to nested subspaces), and let $\chi_{S,T}(v)=\mathbf{1}[v\in T\setminus S]$.

\begin{lemma}\label{lem:normal}
Let $z=(x,\tau_1)$, $S=\spn(x)$, $T=\spn(x,\tau_1)$. If $\tau_1\in S$ then
$h_v(z)=\yes$ for every $v$. If $\tau_1\notin S$ then $(S,T)$ is a flag and
$h_v(z)=\chi_{S,T}(v)$ for every $v$. Conversely, every flag $(S,T)$ of
$\R^{d}$ with $\dm S\le d'$ is realized by some example; since
$d'\ge d-1$, this covers all flags of $\R^{d}$.
\end{lemma}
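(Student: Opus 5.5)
The plan is to argue directly from the definition $h_v(x,\tau_{1:\ell})=\yes\iff\tau_\ell\in\spn(x,v)$, specialized to a single-step example $z=(x,\tau_1)$. We keep $S=\spn(x)$, the column span of $x$, and $T=\spn(x,\tau_1)$ as in the statement. For longer prefixes only the last step matters, so the same argument applies with $\tau_1$ replaced by $\tau_\ell$.

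\emph{First case: $\tau_1\in S$.} Since $S\subseteq\spn(x,v)$ for every $v$, we get $\tau_1\in\spn(x,v)$, so $h_v(z)=\yes$ uniformly.

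\emph{Second case: $\tau_1\notin S$.} Then $T=S+\spn(\tau_1)$ has $\dm T=\dm S+1$, so $(S,T)$ is a flag. It remains to show $\tau_1\in\spn(x,v)\iff v\in T\setminus S$, which we do in three steps.
\begin{itemize}
\item If $v\in S$, then $\spn(x,v)=S\not\ni\tau_1$, so $h_v(z)=\no$.
\item If $v\notin S$ and $\tau_1\in S+\spn(v)$, write $\tau_1=s+cv$. Here $c\neq0$ because $\tau_1\notin S$, so $v=c^{-1}(\tau_1-s)\in T$.
\item If $v\in T\setminus S$, write $v=s+c\tau_1$ with $c\ne0$, since $v\notin S$. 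Then $\tau_1=c^{-1}(v-s)\in\spn(x,v)$.
\end{itemize}
Together these give $h_v(z)=\chi_{S,T}(v)$. The $\no$ outcome for $v\in T^c$ also follows, because $\tau_1\in\spn(x,v)$ forces $v\in T$.

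\emph{Converse (realizability).} Given a flag $(S,T)$ with $\dm S=s\le d'$, choose a basis $u_1,\dots,u_s$ of $S$ and let $x\in\R^{d\times d'}$ have columns $u_1,\dots,u_s$ followed by $d'-s$ zero columns, so that $\spn(x)=S$. Pick any $\tau_1\in T\setminus S$. Then $\spn(x,\tau_1)=T$, and the previous case shows this example realizes $\chi_{S,T}$.

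Finally, every flag of $\R^d$ has $\dm S\le d-1\le d'$, which gives the last sentence of the statement.

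The only step needing care is the equivalence in the second case, specifically the coefficient $c\neq0$ argument in both directions. Beyond that, the converse just requires noting that zero columns of $x$ do not change the span.
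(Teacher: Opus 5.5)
Your proposal is correct and follows essentially the same argument as the paper. Both use the split $v\in S$, $v\in T\setminus S$, $v\notin T$ with the nonzero-coefficient argument (you phrase the last case as its contrapositive), and the same converse of choosing $x$ with columns spanning $S$ and $\tau_1\in T\setminus S$, where your zero-column padding makes explicit what the paper leaves implicit.
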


\begin{proof}
If $\tau_1\in S$ then $\tau_1\in S\subseteq\spn(x,v)$ for every $v$.
Otherwise $T=S\oplus\langle \tau_1\rangle$. If $v\in S$ then
$\spn(x,v)=S\not\ni \tau_1$, so $h_v(z)=\no$. If $v\in T\setminus S$,
write $v=s+c\tau_1$ with $s\in S$ and $c\neq 0$; then
$\tau_1=(v-s)/c\in\spn(x,v)$, so $h_v(z)=\yes$. If $v\notin T$ and
$x_1=s+bv$ for some $s\in S$, $b\in\R$, then $b\neq 0$ since
$\tau_1\notin S$, hence $v=(\tau_1-s)/b\in T$, a contradiction; so
$h_v(z)=\no$. For the converse, take $x$ with columns spanning $S$ and
$\tau_1\in T\setminus S$.
\end{proof}

Constant examples cannot label internal nodes of shattered trees, so all
internal nodes below are flag examples. We also record how flags trace on
a subspace.

\begin{lemma}\label{lem:trace}
Let $V\subseteq\R^d$ be a subspace and $(S,T)$ a flag of $\R^d$. On
$V\setminus\{0\}$, either $\dm(T\cap V)=\dm(S\cap V)$ and $\chi_{S,T}$
restricts to the constant $\no$, or $\dm(T\cap V)=\dm(S\cap V)+1$ and it
restricts to $\chi_{S\cap V,\,T\cap V}$, the classifier of a flag of
$V$. Conversely, every flag of $V$ is the restriction of an ambient flag
example.
\end{lemma}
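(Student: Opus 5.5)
The statement is a linear-algebra fact: it concerns the restriction to $V\setminus\{0\}$ of the indicator $\chi_{S,T}(v)=\mathbf 1[v\in T\setminus S]$. I would start by observing that for $v\in V$ we have $v\in T\setminus S$ if and only if $v\in (T\cap V)\setminus(S\cap V)$. Hence $\chi_{S,T}|_V=\mathbf 1[\,\cdot\in (T\cap V)\setminus(S\cap V)\,]$ on $V\setminus\{0\}$, and everything reduces to comparing the two subspaces $S\cap V\subseteq T\cap V$.

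The key step is the dimension bound $\dm(T\cap V)-\dm(S\cap V)\in\{0,1\}$. I would prove it via the linear map $T\cap V\to T/S$ induced by the quotient. Its kernel is $(T\cap V)\cap S=S\cap V$, and its target has dimension $\dm T-\dm S=1$. So the image has dimension $0$ or $1$, which gives the bound. The two cases then follow.
\begin{enumerate}
\item If the dimensions are equal, then $S\cap V=T\cap V$, so $(T\cap V)\setminus(S\cap V)=\varnothing$ and the restriction is constantly $\no$.
\item If they differ by one, then $(S\cap V,T\cap V)$ is by definition a flag of $V$, and the restriction is exactly its classifier $\chi_{S\cap V,T\cap V}$.
\end{enumerate}

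For the converse, take a flag $(S_0,T_0)$ of $V$, so $S_0\subset T_0\subseteq V$ with $\dm T_0=\dm S_0+1$. This pair is already a flag of $\R^d$, and $S_0\cap V=S_0$ and $T_0\cap V=T_0$. So $\chi_{S_0,T_0}$ restricted to $V$ is the flag classifier of $V$ itself. By Lemma~\ref{lem:normal}, which uses $\dm S_0\le d-1\le d'$, the ambient flag $(S_0,T_0)$ is realized by some example $z=(x,\tau_1)$ with $h_v(z)=\chi_{S_0,T_0}(v)$. This gives the required ambient flag example.

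I expect no real obstacle here. The only point needing care is the kernel computation, namely checking that $T\cap V\cap S=S\cap V$ because $S\subseteq T$, so that the quotient map argument yields the dimension gap of at most one.
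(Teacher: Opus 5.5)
Your proposal is correct and follows essentially the same route as the paper: the identity $(T\setminus S)\cap V=(T\cap V)\setminus(S\cap V)$, the embedding of $(T\cap V)/(S\cap V)$ into the one-dimensional $T/S$ (which you phrase as the kernel of the quotient map), and the converse via $S_0\cap V=S_0$, $T_0\cap V=T_0$ together with Lemma~\ref{lem:normal}. Your explicit check that $\dm S_0\le d-1\le d'$ makes precise the hypothesis the paper's converse relies on.
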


\begin{proof}
Set-theoretically $(T\setminus S)\cap V=(T\cap V)\setminus(S\cap V)$,
and $(T\cap V)/(S\cap V)$ embeds into the one-dimensional $T/S$, so the
dimension gap is $0$ or $1$; the two cases follow. For the converse, a
flag $(S',T')$ of $V$ satisfies $S'\cap V=S'$ and $T'\cap V=T'$, so the
ambient example realizing it (Lemma~\ref{lem:normal}) restricts to
$\chi_{S',T'}$.
\end{proof}

\begin{lemma}\label{lem:ldim-ub}
$\Ldim(\Hd)\le d-1$.
\end{lemma}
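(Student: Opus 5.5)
We show that every shattered complete mistake tree for $\Hd$ has depth at most $d-1$. The argument inducts on the ambient dimension of the space in which the unknown direction $v$ may lie. To set this up, we work with the family of classes $\Hd|_V=\{h_v: v\in V\setminus\{0\}\}$ for a subspace $V\subseteq\R^d$. We claim that $\Ldim(\Hd|_V)\le \dm V-1$, and taking $V=\R^d$ gives the lemma. By Lemma~\ref{lem:normal}, every non-constant example acts on $v$ as a flag classifier $\chi_{S,T}$. By Lemma~\ref{lem:trace}, restricting such an example to $V$ gives either the constant $\no$ or a flag classifier $\chi_{S\cap V,T\cap V}$ of $V$. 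It therefore suffices to show that for every flag $(S,T)$ of $V$, one of the two answers confines $v$ to a subspace of strictly smaller dimension.

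\textbf{Base case.} If $\dm V=1$, then $V\setminus\{0\}$ is a single line. Since $h_v=h_{cv}$, all of its elements define the same verifier, so the class is a singleton and its Littlestone dimension is $0$.

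\textbf{Inductive step.} Let $\dm V=r\ge 2$ and suppose the root of a shattered complete tree is the flag example $(S,T)$ of $V$; the root cannot be a constant example, since such an example cannot label an internal node. There are two cases.
\begin{enumerate}
\item If $T\neq V$, then the $\yes$ answer gives $v\in T\setminus S\subseteq T$, a proper subspace of $V$.
\item If $T=V$, then the $\no$ answer gives $v\in S$, a hyperplane of $V$.
\end{enumerate}
In either case one of the two root subtrees, which is a complete tree of depth one less, is shattered by the restriction of the class to a proper subspace $V'\subsetneq V$.

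This step needs care in case 1 on the $\yes$ branch, because the class there is $\{h_v: v\in T\setminus S\}$ rather than all of $T\setminus\{0\}$. Since this set is contained in $T\setminus\{0\}$, the subtree is shattered by the larger class $\Hd|_T$, and monotonicity of $\Ldim$ under inclusion lets us bound by $\Ldim(\Hd|_T)$. In both cases $\dm V'\le r-1$, so the induction hypothesis bounds the subtree depth by $r-2$. The full tree therefore has depth at most $r-1$.

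We expect the main obstacle to be making the restriction bookkeeping rigorous, and we plan to handle it as follows. First, we apply Lemma~\ref{lem:trace} at every level, so that examples deeper in the tree, which are ambient flags, act on $V'$ either as constants or as flags of $V'$. Second, constant nodes restricted to $V'$ cannot be shattered internal nodes. If such a node occurs, the subtree would need both labels realized within $V'$, which is impossible, so the adversary gains nothing from it. Third, we need the induction statement to quantify over all subsets of $V\setminus\{0\}$ and not only over full punctured subspaces. This is why we phrase the claim as ``any subclass with directions contained in $V$ has $\Ldim\le\dm V-1$,'' and then the same argument goes through verbatim.
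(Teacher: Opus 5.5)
Your proposal is correct and matches the paper's proof: the same strong induction on $\dm V$ over classes $\{h_v : v\in P\}$ with $P\subseteq V\setminus\{0\}$. As in the paper, you use Lemmas~\ref{lem:normal} and~\ref{lem:trace} to reduce the root to a flag $(S',T')$ of $V$, then split on whether $T'=V$ (the $\no$-branch confines $v$ to the hyperplane $S'$) or $T'\subsetneq V$ (the $\yes$-branch confines $v$ to $T'$); your monotonicity remark already handles the subset bookkeeping, which the paper does by quantifying over all nonempty $P$ directly.
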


\begin{proof}
We prove by strong induction on $r$ that, for every $r$-dimensional subspace
$V$ and every nonempty $P\subseteq V\setminus\{0\}$, $\Ldim\bigl(\{h_v:v\in P\}\bigr)\le r-1$.

When $r=1$, all nonzero vectors in $V$ are scalar multiples of one
another. By scale invariance, they define the same hypothesis, so the
Littlestone dimension is zero. Now let $r\ge 2$, and suppose that $\{h_v:v\in P\}$ shatters a complete
tree of depth $m$. By Lemmas~\ref{lem:normal} and~\ref{lem:trace}, the
root restricts to the classifier of a flag $(S',T')$ in $V$. If $T'=V$, then the parameters of all hypotheses taking the $\no$-branch
lie in $S'$, which has dimension $r-1$. If $T'\subsetneq V$, then the
parameters of all hypotheses taking the $\yes$-branch lie in $T'$, whose
dimension is at most $r-1$. Thus, in either case, one of the two subtrees has depth $m-1$ and is
shattered by hypotheses whose parameters lie in a subspace of dimension
at most $r-1$. The induction hypothesis gives $m-1\le r-2$,
and hence $m\le r-1$. Taking $V=\R^d$ and $P=\R^d\setminus\{0\}$ proves the result.
\end{proof}

For the lower bound, we use the standard fact that a vector space over an infinite field cannot be covered by finitely many proper subspaces.

\begin{lemma}\label{lem:obstacle}
$\Ldim(\Hd)\ge d-1$.
\end{lemma}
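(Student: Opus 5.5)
We will build a complete mistake tree of depth $d-1$ shattered by $\Hd$ by recursion on dimension, using the flag examples of Lemma~\ref{lem:normal}. The inductive claim is: for every $r\ge 1$ and every $r$-dimensional subspace $V\subseteq\R^d$, the class $\{h_v: v\in V\setminus\{0\}\}$ shatters a complete tree of depth $r-1$. In addition, every root-to-leaf path is realized by some $v\in V$, and that $v$ avoids any prescribed finite collection of proper subspaces of $V$. The case $r=d$ gives the lemma. For $r=1$ the claim is trivial: the tree has depth $0$, and $V\setminus\{0\}$ is nonempty.

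For the inductive step with $r\ge 2$, label the root with a flag example whose flag is $(S,V)$, where $S\subset V$ is a hyperplane of $V$. Such an example exists by Lemma~\ref{lem:normal} for $V=\R^d$, and by Lemma~\ref{lem:trace} for a general $V$, since the ambient flag example restricts on $V$ to $\chi_{S,V}$. On the \no-branch, the parameters are $v\in S$, and $S$ has dimension $r-1$. Here we attach the tree given by the induction hypothesis for $S$; its internal nodes are ambient examples whose restrictions to $S$ are flags of $S$. These nodes are also valid examples for the whole tree.

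The \yes-branch is the subtle part. Its parameters are $v\in V\setminus S$, which is not a subspace. Our plan is to pick another $(r-1)$-dimensional subspace $S'\subset V$ with $S'\neq S$, and to use the induction hypothesis on $S'$ to obtain a depth-$(r-2)$ tree. Each path of that tree is realized by some $v\in S'$ that avoids the proper subspace $S\cap S'$ of $S'$. This is exactly why we carry the avoidance clause in the induction. Such a $v$ lies in $V\setminus S$, so it answers \yes\ at the root and follows the chosen path. The avoidance clause also propagates through the recursion. To realize a path in the $S'$-tree while avoiding finitely many proper subspaces, we recurse and add their intersections with the relevant smaller subspace to the list. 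At the base case we need a nonzero vector of a line avoiding finitely many proper subspaces, i.e. avoiding $\{0\}$, which is fine. At intermediate stages we use the standard fact that a vector space over $\R$ is not a finite union of proper subspaces.

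The main obstacle is bookkeeping: we must check that at each recursion level the set of consistent parameters in the relevant subspace $W$ is $W$ minus finitely many proper subspaces, and that the node chosen there still splits this set nontrivially. We handle this by always choosing the node's flag to have top space equal to the current subspace $W$. Then the \no\ side is a hyperplane $S_W$ of $W$, and the \yes\ side contains $W\setminus(S_W\cup \text{finitely many proper subspaces})$. That set is nonempty by the covering fact, and it is the source of the generic vectors realizing the \yes\ paths.
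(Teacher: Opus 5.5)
Your proposal is the paper's proof: the same induction on $\dim V$ carrying a finite family of proper subspaces to avoid. The root is the flag $(W,V)$, the \no-subtree is built inside $W$, and the \yes-subtree is built inside a second hyperplane $W'\neq W$ with $W\cap W'$ added to the avoided family. The one thing to state explicitly is that at every level the chosen hyperplanes must lie outside the current avoided family, which is possible because $V$ has infinitely many hyperplanes. Otherwise some $A\cap W$ equals $W$, the family passed down is not proper, and that node's \no-side has no admissible realizer; you checked only the \yes-side of your ``splits nontrivially'' condition.
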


\begin{proof}
Let $V\subseteq\R^d$ be a subspace with $\dm V=r\ge 1$ and let
$\mathcal{A}$ be a finite family of proper subspaces of $V$. We will show that there
is a complete mistake tree of depth $r-1$with the following properties: (1) every internal node corresponds to a flag $(S', T')$ with $T'\subseteq V$, (2) the tree is shattered by $\Hd$, and (3) every
root-to-leaf path is consistent with some $h_v$ satisfying
$v\in V\setminus(\{0\}\cup\bigcup_{A\in\mathcal{A}}A)$. Then taking $V=\R^d$, $\mathcal{A}=\emptyset$ will give the desired statement.

We proceed by induction on $r$. If $r=1$, every proper subspace of $V$
is $\{0\}$. Thus a tree consisting of a single leaf has the required
depth, and its unique path is realized by any $v\in V\setminus\{0\}$.

Now suppose that $r\ge 2$. Choose distinct hyperplanes $W,W'$ of $V$
such that neither belongs to $\mathcal{A}$. This is possible because
$V$ has infinitely many hyperplanes, whereas $\mathcal{A}$ is finite.
Place at the root an example realizing the flag $(W,V)$. By
Lemma~\ref{lem:normal}, its classifier on $V$ is $v\longmapsto \mathbf{1}[v\notin W]$. For each $A\in\mathcal{A}$, the intersection $A\cap W$ is a proper
subspace of $W$. Indeed, if $W\subseteq A$, then the fact that $W$ is a
hyperplane and $A$ is a proper subspace of $V$ would imply $A=W$,
contrary to our choice of $W$. We may therefore apply the induction
hypothesis in $W$ with the family $\{A\cap W:A\in\mathcal{A}\}$.
Attach the resulting tree below the $\no$-edge of the root. Every path
in this subtree is realized by some $v\in
    W\setminus
    \left(
        \{0\}\cup
        \bigcup_{A\in\mathcal{A}}(A\cap W)
    \right)$.
Since $v\in W$, it receives label $\no$ at the root; and since
$v\notin A\cap W$ for every $A\in\mathcal{A}$, it also avoids every
member of $\mathcal{A}$. Similarly, every $A\cap W'$ is a proper subspace of $W'$, and
$W\cap W'$ is proper in $W'$ because $W\neq W'$. Apply the induction
hypothesis in $W'$ with the family $\{A\cap W':A\in\mathcal{A}\}\cup\{W\cap W'\}$,
and attach the resulting tree below the $\yes$-edge of the root. Every
path in this subtree is realized by some $v\in
    W'\setminus
    \left(
        \{0\}\cup
        \bigcup_{A\in\mathcal{A}}(A\cap W')
        \cup (W\cap W')
    \right)$.
Such a vector avoids every member of $\mathcal{A}$. Moreover,
$v\in W'$ and $v\notin W\cap W'$ imply $v\notin W$, so it receives
label $\yes$ at the root. By induction, both subtrees are complete of depth $r-2$, and all their
flag examples satisfy $T'\subseteq V$. The resulting tree is therefore
complete of depth $r-1$ and has all the required properties. Taking $V=\R^d$ and $\mathcal{A}=\emptyset$ gives $\Ldim(\Hd)\ge d-1$.
\end{proof}

\begin{lemma}\label{lem:difficult}
Let $V\subseteq\R^d$ be a subspace with $\dm V=r$, let $k\ge 0$ satisfy
$r\ge k+2$, let $\mathcal{A}$ be a finite family of proper subspaces of
$V$, and let $m\in\N$. Then there is a $(k,m)$-difficult SC-mistake tree, such that (1) every internal node corresponds to a flag $(S', T')$ with  $T'\subseteq V$, (2) the tree is shattered by $\Hd$, and (3) every path is consistent with some $h_v$ satisfying $v \in V\setminus(\{0\}\cup\bigcup_{A\in\mathcal{A}}A)$.
\end{lemma}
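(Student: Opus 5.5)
We build the tree by a double induction, on $k$ (outer) and on $m$ (inner). The recursion mirrors the intuition after Theorem~\ref{thm:subspace}. The adversary repeatedly asks ``is $v$ outside a hyperplane $W$ of $V$?'' via the flag $(W,V)$. A $\yes$ (curvy) edge keeps $v$ in the full space $V$, so the dimension does not drop. A $\no$ (straight) edge forces $v\in W$ and drops the dimension by one. The finite family $\mathcal{A}$ records the hyperplanes already answered $\yes$, so realizers on curvy branches can be kept off them. As in Lemma~\ref{lem:obstacle}, we repeatedly use that a vector space over $\R$ is not a finite union of proper subspaces. All internal nodes are flag examples $(W,V)$ with $W\subset V$; these are realizable by Lemma~\ref{lem:normal}, since $d'\ge d-1$. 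By Lemma~\ref{lem:trace}, such a node acts on candidates $v\in V$ as $v\mapsto \mathbf 1[v\notin W]$.

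\emph{Construction.} For $m=0$ (any $k$), the tree is a single leaf. Its unique path is realized by any $v\in V\setminus(\{0\}\cup\bigcup_{A\in\mathcal A}A)$, which exists because $\dm V\ge 2$ and $\mathcal{A}$ is finite. For $m\ge1$:
\begin{itemize}
\item Choose a hyperplane $W$ of $V$ with $W\notin\mathcal A$; this is possible since $V$ has infinitely many hyperplanes. Place the flag $(W,V)$ at the root.
\item Below the curvy ($\yes$) edge, attach the tree given by the inner induction hypothesis for $(V,k,m-1,\mathcal A\cup\{W\})$. This is legitimate because $W$ is proper in $V$ and $\dm V=r$ is unchanged.
\item Below the straight ($\no$) edge:
 \begin{itemize}
 \item if $k\ge1$, attach the tree given by the outer induction hypothesis for $(W,k-1,m,\{A\cap W: A\in\mathcal A\})$;
 \item if $k=0$, attach a leaf realized by some $v\in W\setminus(\{0\}\cup\bigcup_{A}(A\cap W))$.
 \end{itemize}
\end{itemize}
The straight-edge recursion is legitimate for three reasons. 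First, $\dm W=r-1\ge (k-1)+2$. Second, $\dm W\ge 1$ when $k=0$. Third, each $A\cap W$ is proper in $W$: if $W\subseteq A$ with $A$ proper in $V$, then $A=W$, contradicting $W\notin\mathcal A$.

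\emph{Verification.} Shattering and avoidance (properties (2) and (3)) follow as in Lemma~\ref{lem:obstacle}.
\begin{itemize}
\item Straight-subtree realizers lie in $W$, so they get $\no$ at the root, and they avoid every $A\in\mathcal A$ because they avoid each $A\cap W$.
\item Curvy-subtree realizers avoid $W$, so they get $\yes$ at the root, and they avoid $\mathcal A$.
\item Property (1) holds inductively, since every node's $T'$ lies in $V$ or in $W\subset V$.
\end{itemize}
For $(k,m)$-difficulty, take a root-to-leaf path with at most $k$ straight edges.
\begin{itemize}
\item If it starts with the curvy edge, the rest is a path in a $(k,m-1)$-difficult tree with at most $k$ straight edges. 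Its length is therefore at least $1+(m-1)=m$.
\item If it starts with the straight edge, then $k\ge1$, since for $k=0$ no such path counts. The rest has at most $k-1$ straight edges in a $(k-1,m)$-difficult tree, so its length is at least $m+1\ge m$.
\end{itemize}

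The main obstacle is keeping the realizer sets nonempty and correctly labeled through an unbounded number of curvy steps at a fixed dimension. Without the growing family $\mathcal A$, a curvy-branch realizer might fall back into an earlier hyperplane and receive the wrong label at an ancestor. Carrying $\mathcal A$, and relying on the finite-union-of-proper-subspaces fact, is exactly what resolves this. With the lemma in hand, taking $V=\R^d$, $\mathcal A=\emptyset$ and $r=d\ge k+2$ gives $\SCLdim(\Hd,k)=\infty$ for $k\le d-2$.
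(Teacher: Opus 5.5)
Your proof is correct and is essentially the paper's construction. Unrolling your inner induction on $m$ yields exactly the paper's curvy spine of distinct hyperplanes $W_1,\dots,W_m\notin\mathcal A$, and your growing family $\mathcal A\cup\{W\}$ reproduces the paper's obstacle family $\{W_i\cap W_j:i<j\}\cup\{A\cap W_j:A\in\mathcal A\}$ below each straight edge; the only difference is that you attach $(k-1,m-j+1)$-difficult subtrees where the paper attaches $(k-1,m-j)$-difficult ones, which is harmless.
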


\begin{proof}
The claim is immediate when $m=0$, so assume that $m\ge 1$. We proceed
by induction on $k$.

Choose pairwise distinct hyperplanes $W_1,\ldots,W_m$ of $V$, none of
which belongs to $\mathcal{A}$. For each $j$, let $z_j$ be an example
realizing the flag $(W_j,V)$. Arrange these examples along a path of
curvy edges $z_1\rightarrow z_2\rightarrow\cdots\rightarrow z_m$,
and let the curvy child of $z_m$ be a leaf. By
Lemma~\ref{lem:normal}, the classifier at $z_j$ is $v\longmapsto \mathbf{1}[v\notin W_j]$. Thus, the curvy edge corresponds to the label $\yes$, and the straight
edge corresponds to the label $\no$.

Suppose first that $k=0$. Make the straight child of every $z_j$ a
leaf. The path consisting entirely of curvy edges is consistent with an $h_v$ where $v\in
    V\setminus
    \left(
        \{0\}\cup
        \bigcup_{j=1}^m W_j\cup
        \bigcup_{A\in\mathcal{A}}A
    \right)$.
Such a vector exists because a finite union of proper subspaces cannot
cover $V$. The path that takes curvy edges at $z_1,\ldots,z_{j-1}$ and the straight
edge at $z_j$ is realized by any $h_v$ such that $v\in
    W_j\setminus
    \left(
        \{0\}\cup
        \bigcup_{i<j}(W_i\cap W_j)\cup
        \bigcup_{A\in\mathcal{A}}(A\cap W_j)
    \right)$.
Each subspace in this union is proper in $W_j$. Indeed,
$W_i\cap W_j$ is proper because $W_i\neq W_j$, and $A\cap W_j$ is
proper because $W_j\notin\mathcal{A}$. Hence such a vector exists. It
lies outside $W_i$ for every $i<j$, so it takes the preceding curvy
edges, and it lies in $W_j$, so it takes the straight edge at $z_j$.
It also avoids every $A\in\mathcal{A}$. Thus the tree is shattered.
Moreover, the only path with no straight edges has length $m$, so the
tree is $(0,m)$-difficult.

Now suppose that $k\ge 1$. For each $j$, attach below the straight edge
at $z_j$ the tree given by the induction hypothesis for $\left(
        W_j,\;
        k-1,\;
        m-j,\;
        \{W_i\cap W_j:i<j\}
        \cup
        \{A\cap W_j:A\in\mathcal{A}\}
    \right)$.
The induction hypothesis applies because $\dm W_j=r-1\ge (k-1)+2$,
and all the listed subspaces are proper in $W_j$.

The path consisting entirely of curvy edges is realized as in the base
case. Consider any other path, and suppose that its first straight edge
occurs at $z_j$. By the induction hypothesis, the remainder of the path
is realized by some $h_v$ with $v\in W_j$ such that $v\notin W_i\cap W_j$ for $ i<j$ and $v\notin A\cap W_j$ for $A\in\mathcal{A}$.
Since $v\in W_j$, it takes the straight edge at $z_j$. Moreover,
$v\notin W_i\cap W_j$ implies $v\notin W_i$ for every $i<j$, so it
takes all preceding curvy edges. Similarly,
$v\notin A\cap W_j$ implies $v\notin A$ for every
$A\in\mathcal{A}$. Therefore, every path is realized by a hypothesis
with the required parameter, and the tree is shattered.

Finally, consider a path containing at most $k$ straight edges. If it
follows the entire curvy path, then its length is $m$. Otherwise, let
$z_j$ be the first node at which it takes a straight edge. The path has
length $j$ upon entering the attached subtree and contains at most
$k-1$ further straight edges. Since that subtree is
$(k-1,m-j)$-difficult, the remaining part of the path has length at
least $m-j$. Hence the total length is at least $j+(m-j)=m$. Thus the tree is $(k,m)$-difficult.
\end{proof}

\begin{proof}[Proof of Theorem~\ref{thm:subspace}]
For $0\le k\le d-2$, apply Lemma~\ref{lem:difficult} with $V=\R^d$,
$\mathcal{A}=\emptyset$, and arbitrary $m$. For $k\ge d-1=\Ldim(\Hd)$
(Lemmas~\ref{lem:ldim-ub} and~\ref{lem:obstacle}), apply
Lemma~\ref{lem:general-shape}(c).
\end{proof}

\begin{remark}\label{rem:b5}
 We  note
that the two-sided learner given in Example~B.5 of \cite{balcan2025learning} in fact achieves $d$
total mistakes rather than $d+1$ by slightly tightening their argument  (once its maintained subspace
$S^*\ni h^*$ has dimension $1$, all of its branches predict perfectly),
and its remaining one-mistake gap to the optimum $d-1$ stems from its
opening rule of predicting $\no$ until the first mistake, which lets the
adversary collect one uninformative completeness mistake on an example
with $\spn(x,\tau_1)=\R^d$. The algorithm of Theorem~\ref{thm:scub} does not pay it.
\end{remark}
\subsection{Omitted Proofs for Weighted SC-Littlestone dimension (Section~\ref{sec:wsc-examples})}\label{sec:omitted-proofs}

\begin{proposition}\label{prop:zerolocation}
    For $\gamma_c = 1$, $\gamma_s = d$, and any finite class $H$, we have that $\text{WSC-Ldim}(H) \leq O\left( \frac{d}{\log d} \log(|H|) \right)$.
\end{proposition}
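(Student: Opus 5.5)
The plan is to upper bound $\text{WSC-Ldim}(H)$ by the worst-case cumulative cost of one explicit deterministic prefix-verification algorithm. By Theorem~\ref{thm:scllb}, every deterministic learner can be forced to incur cost at least $W$ for every $W<\text{WSC-Ldim}(H)$. So if some deterministic learner has cost at most $C$ on every realizable sequence, then $\text{WSC-Ldim}(H)\le C$. It therefore suffices to exhibit a learner with cost $O\!\left(\frac{d}{\log d}\log|H|\right)$ when $\gamma_c=1$ and $\gamma_s=d$.

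\textbf{The learner.} I would use a biased (thresholded) halving algorithm, in the spirit of the majority-vote argument in Proposition~\ref{thm:finiteclassbounds}.
\begin{itemize}
\item Maintain the version space $H^{(t)}$ of verifiers consistent with the history so far.
\item On example $z$, let $f$ be the fraction of $H^{(t)}$ that outputs $\no$ on $z$.
\item Predict $\yes$ iff $f<p$, and predict $\no$ otherwise, for a threshold $p\in(0,1/2]$ chosen below.
\item After the label is revealed, discard all inconsistent verifiers.
\end{itemize}
By realizability the target $h^*$ is never discarded, so $|H^{(t)}|\ge 1$ throughout.

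\textbf{Potential argument.} I would track $\Phi=\ln|H^{(t)}|$, which starts at $\ln|H|$ and never drops below $0$.
\begin{itemize}
\item A soundness mistake happens when we predicted $\yes$ (so $f<p$) and the truth is $\no$. The surviving set is the $\no$-voters, which have size less than $p|H^{(t)}|$. So $\Phi$ drops by at least $\ln(1/p)$, at cost $d$.
\item A completeness mistake happens when we predicted $\no$ (so $f\ge p$) and the truth is $\yes$. The survivors are at most a $(1-p)$ fraction, so $\Phi$ drops by at least $-\ln(1-p)\ge p$, at cost $1$.
\end{itemize}
Hence the cost per unit of potential decrease is at most $\max\{d/\ln(1/p),\,1/p\}$. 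The total cost is therefore at most $\max\{d/\ln(1/p),\,1/p\}\cdot\ln|H|$.

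\textbf{Choice of $p$.} I would take $p=\min\{1/2,\ \ln d/d\}$. For $d\ge 3$ this gives $1/p=d/\ln d$. It also gives $\ln(1/p)=\ln d-\ln\ln d\ge \tfrac12\ln d$ for $d$ beyond a small constant, so $d/\ln(1/p)\le 2d/\ln d$. Both terms are then $O(d/\log d)$, which yields the claimed bound.

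\textbf{Main obstacle.} The step needing the most care is balancing the two exchange rates while keeping $p$ valid. For small $d$ (including $d\le e$, where $\ln d/d$ is not a sensible threshold) I would simply use $p=1/2$, i.e., plain halving. Each mistake then costs at most $\max(d,1)=O(1)$ and halves the version space, giving $O(\log|H|)$, which is consistent with the $O(\cdot)$ statement. I would also note that the case $d<1$ is trivial by the sandwich bound of Lemma~\ref{lem:wsc-sandwich} together with $\Ldim(H)\le\log_2|H|$.
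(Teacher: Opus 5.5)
Your argument is correct, and it takes a different route from the paper.

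\textbf{The paper's route.} The paper bounds the dimension directly through tree combinatorics. Distinct leaves of a shattered tree need distinct realizers, so a tree shattered by $H$ has at most $|H|$ leaves. The minimum number of leaves $L(w)$ of a tree whose root-to-leaf paths all have weight at least $w$ obeys $L(w)=L(w-1)+L(w-d)$, so $L(w)=\Theta(r^w)$ where $r^{-1}+r^{-d}=1$. This gives $\text{WSC-Ldim}(H)\lesssim \log|H|/\log r$. A somewhat heuristic Lambert-$W$ estimate then shows $\ln r=\Theta(\ln d/d)$.

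\textbf{Your route.} You instead exhibit a concrete learner (biased halving) and transfer its worst-case cost to the dimension via \Cref{thm:scllb}. The analysis is a two-rate potential argument. It needs neither the leaf-counting fact nor any asymptotics of a characteristic root. It yields fully explicit constants: cost at most $\frac{2d}{\ln d}\ln|H|$ for $d\ge 3$. As a by-product, it gives a learner that needs only vote counts rather than evaluations of WSC-Ldim.

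\textbf{The two arguments are dual.} Take $p=r^{-d}=1-1/r$, and keep the exact decrement $-\ln(1-p)=\ln r$ instead of the bound $-\ln(1-p)\ge p$. Then $\ln(1/p)=d\ln r$, so both exchange rates equal $1/\ln r$. Your learner then incurs cost at most exactly $\log_r|H|$, which is the paper's bound.

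\textbf{A minor point.} Your aside about $d<1$ is moot. There $d/\log d<0$, so the statement only makes sense for $d>1$, which is the regime the paper intends. Also, $\ln(1/p)\ge\tfrac12\ln d$ with $p=\ln d/d$ holds for every $d>1$, not just for $d$ beyond a small constant.
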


\begin{proof}
    For a fixed size of a verifier class, $H$, we will show an upper bound on the minimum weighted path among all WSC-mistake trees with at most $|H|$ leaves. For simplicity, we only concern ourselves with the structure of the binary tree, as we can populate it with internal node instances and show that for each path there exist a consistent verifier ad hoc.

    To maximize the weight of the minimum path, we construct the tree as follows. Starting from the root node, we greedily split the node, $v$ with the minimum root-to-$v$ cumulative weight. Each node is split by adding an s-edge with weight $d>1$ and a c-edge with weight $1$.
    Letting $L(w) \in \N$ be the minimum number of leaf nodes needed to construct a tree with minimum path weight at least $w \in \R_{\geq 0}$, we get the following linear recurrence relation $L(w) = L(w-1) + L(w-d)$
    where $L(w) = 1$ if $w \leq 0$.
    We will now show that $L(w) = \Theta(r^w)$, where
    $r$ is the solution to 
    \begin{align}
        r^{-d} + r^{-1} = 1. \label{eq:r}
    \end{align}
    We start by showing that there exists some constant $C_1$ such that for all $w \in \R_{\geq 0}$ $L(w) \leq C_1 r^w$. We proceed by induction. For the base case, we choose $C_1 \geq \max_{0\leq t \leq d} L(t) r^{-t}$, so the statement holds trivially.
    Assuming that the statement holds for all $t \leq T-1$, we get
    \begin{align*}
        L(T) &= L(T-1) +L(T-d) \\ 
        & \leq C_1 r^{T-1} +C_1 r^{T-d} \\
        &= C_1 r^T(r^{-1} +r^{-d}) \\
        & =C_1 r^T \tag{from \Cref{eq:r}}.
    \end{align*}
    A similar argument shows that there exists $C_2$ such that for all $w \in \R_{\geq 0}$ $L(w) \geq C_2 r^w$. 
    We let $W^* = \sup \{T: L(T) \leq |H|\}$. Using $L(T) = \Theta(r^T)$,
    \begin{align*}
        L(T) \leq |H| \implies T \leq   \frac{\log |H|}{\log r}.
    \end{align*}
    Therefore, $W^* = \frac{\log |H|}{\log r} +O(1)$.
    We will now show that $\frac{1}{\log r} = \Theta(\frac{d}{\log d})$. We have that 
    $r^{-d}+r^{-1} = 1$. For $r= 1/u$, we get $u^d +u=1$. So for large $d$, the solution is of the form $u = 1-\delta$ for small $\delta>0$. We have that $(1-\delta)^d \approx e^{-d\delta}$. So we get $e^{-d\delta} \approx \delta$ or $ye^{y} \approx d$ for $y = d\delta$. Equivalently, $W(d) = y \iff y \approx \ln d -\ln\ln d + o(1)$, where $W$ is the Lambert $W$ function. So we get that $\delta = \frac{\ln d}{d}$ and $u = \frac{1}{r} = 1-\delta \implies \ln r \approx \delta = \frac{\ln d}{d}$. So we get that $W^* = \Theta(\frac{d}{\log d} \log |H|)$, which concludes the proof.
    
    \end{proof}

\additive*
\begin{proof}
All three lower bounds use \emph{concatenation}. Given trees
$\mathcal{T}_1,\dots,\mathcal{T}_L$, where the nodes of $\mathcal{T}_i$
lie in $Z_i$, replace every leaf of $\mathcal{T}_i$ by a copy
of $\mathcal{T}_{i+1}$. If each $\mathcal{T}_i$ is shattered by $H_i$,
the concatenation is shattered by $H$. A root-to-leaf path decomposes
into segments, segment $i$ being a root-to-leaf path of $\mathcal{T}_i$,
and if $h_i\in H_i$ realizes segment $i$ then $(h_1,\dots,h_L)$ realizes
the whole path, since layer-$i$ nodes are labeled by $h_i$ alone.

(i) \emph{Lower bound:} concatenating complete shattered trees of depths
$\Ldim(H_i)$ gives a complete shattered tree of depth
$\sum_i\Ldim(H_i)$. \emph{Upper bound:} let $\mathcal{T}$ be a complete
tree of depth $m$ shattered by $H$. Version spaces along any path remain
products $V=V_1\times\cdots\times V_L$, since a layer-$i$ node restricts
only the $i$-th factor. Define the potential $\Phi(V)=\sum_i\Ldim(V_i)$.
At any internal node $z$ of layer $i$
at least one of the two version spaces $V_i[(z,y)]$ has Littlestone dimension at
most $\Ldim(V_i)-1$ (otherwise, joining two complete shattered trees of
depth $\Ldim(V_i)$ below a root labeled $z$ would contradict the
definition of $\Ldim(V_i)$). Following such an edge at every step, the
potential strictly decreases and within $\Phi(H)$ steps it
reaches $0$. At this point the version space is a single function and such a node must be a leaf. In a complete tree of depth $m$,
every root-to-leaf path has length $m$, so
$m\le\Phi(H)=\sum_i\Ldim(H_i)$.

(ii) For each $i$, let $\mathcal{T}_i$ be a
$(k,\mu_i)$-difficult shattered tree with $\mu_i=\SCLdim(H_i,k)$ (or with
$\mu_i$ arbitrarily large if the latter is infinite), and concatenate. A
root-to-leaf path with at most $k$ straight edges has at most $k$
straight edges \emph{within each segment}. By $(k,\mu_i)$-difficulty,
segment $i$ has length at least $\mu_i$, so the path has length at least
$\sum_i\mu_i$, and the concatenation is
$(k,\sum_i\mu_i)$-difficult.

(iii) The lower bound is (ii) at $k=0$. For the upper bound, let
$\mathcal{T}$ be $(0,m)$-difficult and shattered by $H$, and let
$z_1,\dots,z_m$ be the first $m$ internal nodes of its all-$\yes$ path.
As in the proof of Proposition~\ref{prop:rn}, for each $t$ there is a
function  $h^{(t)}=(h^{(t)}_1,\dots,h^{(t)}_L)\in H$ that realizes the straight
edge at $z_t$, so $h^{(t)}(z_t)=\no$ and $h^{(t)}(z_s)=\yes$ for
$s<t$, and there is a function $h^{(\infty)}$ that realizes the all-$\yes$ path.
Fix a layer $i$ and let $z_{t_1},\dots,z_{t_{m_i}}$ (in path order) be
the layer-$i$ nodes among $z_1,\dots,z_m$. The caterpillar with spine
$z_{t_1},\dots,z_{t_{m_i}}$ is $(0,m_i)$-difficult by construction, and
it is shattered by $H_i$: the path $\yes^{\,j-1}\no$ is realized by
$h^{(t_j)}_i$, which rejects $z_{t_j}$ and accepts all earlier layer-$i$
nodes (these being among the $z_s$ with $s<t_j$), and the all-$\yes$
path is realized by $h^{(\infty)}_i$. Hence $m_i\le\SCLdim(H_i,0)$.
Since every node lies in exactly one layer,
$m=\sum_im_i\le\sum_i\SCLdim(H_i,0)$.
\end{proof}

\wscfixedcard*
\begin{proof}
\emph{Upper bound.} The learner that always outputs $\yes$ makes a soundness mistake
only on the $b$ points of $S$ (cost $\le b\gs$). The learner that always outputs $\no$
 makes a completeness mistake only on the $n-b$ correct points (cost
$\le(n-b)\gc$).

\emph{Lower bound.} If the learner
predicts $\yes$, the adversary can reveal $\no$ (declaring the point to lie in
$S$), which forces a soundness mistake of weight $\gs$, provided fewer than $b$ points
have so far been declared in $S$. If the learner predicts $\no$, the adversary can
reveal $\yes$ (declaring the point outside $S$), a completeness mistake of
weight $\gc$, provided fewer than $n-b$ points have been declared outside.
At any stage with $s'<b$ known points in $S$ and $c'<n-b$ known points outside $S$ both
labels are realizable by some $b$-subset consistent with the history, so the
adversary may always force a mistake. A path terminates only once $s'=b$ or $c'=n-b$. Its weight is then at least $b\gs$ or at least
$(n-b)\gc$, respectively. The minimum-weight root-to-leaf path has weight
$\min(b\gs,(n-b)\gc)$.
\end{proof}

\wscadditive*

\begin{proof}
\emph{Lower bound.} Concatenate optimal WSC-mistake trees $T_1,\dots,T_L$ by
attaching a fresh copy of $T_{i+1}$ at each leaf of $T_i$. Since layer-$i$
nodes are labeled by $h_i$ alone, a verifier $(h_1,\dots,h_L)$ realizes a
root-to-leaf path iff each $h_i$ realizes the path's $i$-th segment. Hence the
concatenation is shattered by $H$. Every root-to-leaf path is a concatenation
of one path per layer, so its weight is the sum of the per-segment weights,
which is at least $\sum_i\WSCLdim(H_i)$.

\emph{Upper bound.} Run an independent copy of the optimal weighted learner
(Algorithm~\ref{alg:SCL-soa}) on each layer,  querying for each query the
learner for the corresponding layer. A realizable sequence for $H$ restricts on each
layer to a sequence realizable by $H_i$. By
Theorem~\ref{thm:wscl-ub} the cumulative cost incurred on layer $i$ is at
most $\WSCLdim(H_i)$. Summing over the disjoint layers gives cumulative cost
at most $\sum_i\WSCLdim(H_i)$.
\end{proof}

\wschbeta*

\begin{proof}
The closed form is Proposition~\ref{prop:wsc-additive} applied to
$\WSCLdim(R_{n,b})=\min(b\gs,(n-b)\gc)$
(Proposition~\ref{prop:wsc-fixed-card}) and
$\WSCLdim(R_{n'})=\min(\gs,(n'-1)\gc)$
(Proposition~\ref{prop:wsc-singleton}). The first summand equals $b\gs$ when
$\lambda\le(n-b)/b$ and $(n-b)\gc$ otherwise. The second equals $\gs$ when
$\lambda\le n'-1$ and $(n'-1)\gc$ otherwise. It remains to check that the two
thresholds are ordered, i.e.\ $(n-b)/b\le n'-1$, so that the three regimes
appear in the stated order and the middle regime is nonempty. From
$n'\ge n-2b+2$ we have $n'-1\ge n-2b+1$, so it suffices that
$(n-b)/b\le n-2b+1$, i.e.\ $n-b\le b(n-2b+1)$, i.e.\
$n(1-b)\le 2b(1-b)$. For $b=1$ both sides vanish. For $b\ge2$ dividing by
$1-b<0$ gives $n\ge2b$, which holds since $n\ge2b+1$. Substituting the
regime values yields the piecewise formula (in the first regime
$b\gs+\gs=(b+1)\gs$, in the third $(n-b)\gc+(n'-1)\gc$).
\end{proof}

\wscsubspace*

\begin{proof}
The boundary case is Lemma~\ref{lem:wsc-boundary}. Assume $\gs,\gc>0$.

\medskip
\noindent\emph{Upper bound: $(d-1)\max(\gs,\gc)$.}
For $r\ge1$ let
\[
F(r)=\sup\big\{\,\WSCLdim(\{h_v:v\in P\})\;:\;
V\le\mathbb R^d,\ \dim V=r,\ \emptyset\ne P\subseteq V\setminus\{0\}\,\big\},
\]
where trees are taken with nodes labeled by examples that trace, on $V$, to
flags of $V$. Scale invariance gives $F(1)=0$ (a one-dimensional version
space induces a single function). Let $T$ be any WSC-mistake tree shattered
by $\{h_v:v\in P\}$ with $\dim V=r\ge2$. By Lemmas~\ref{lem:normal}
and~\ref{lem:trace}, its root traces on $V$ to a nonconstant flag classifier
$\chi_{S,T'}(v)=\mathbf 1[v\in T'\setminus S]$ with $S\subset T'\subseteq V$,
$\dim T'=\dim S+1$. Two cases:
\begin{itemize}
\item If $T'=V$, the straight/$\no$\ edge confines $v$ to $S$, of dimension
$r-1$, so $T$ has a root-to-leaf path of weight at most $\gs+F(r-1)$.
\item If $T'\subsetneq V$, the curvy/$\yes$ edge confines $v$ to
$T'\setminus S\subseteq T'$, of dimension at most $r-1$, so $T$ has a
root-to-leaf path of weight at most $\gc+F(r-1)$.
\end{itemize}
In either case the minimum path weight of $T$ is at most
$\max(\gs,\gc)+F(r-1)$. Taking the supremum over $T$ gives
$F(r)\le F(r-1)+\max(\gs,\gc)$, hence
$\WSCLdim(H_{\mathrm{sub}})\le F(d)\le(d-1)\max(\gs,\gc)$. This is the
weighted analogue of the dimension-drop argument of
Lemma~\ref{lem:ldim-ub}.

\medskip
\noindent\emph{Lower bound when $\gs\ge\gc$.}
Apply Lemma~\ref{lem:obstacle}  with $V=\mathbb R^d$, empty obstacle family,
$k=d-2$ (so $r=d\ge k+2$), and any
$m\ge\lceil(d-1)\gs/\gc\rceil$. This yields a $(d-2,m)$-difficult
SC-mistake tree shattered by $H_{\mathrm{sub}}$, viewed as a WSC-mistake tree.
Consider any root-to-leaf path. If it has at least $d-1$ straight edges, its
weight is at least $(d-1)\gs$. Otherwise it has at most $d-2$ straight edges,
so by $(d-2,m)$-difficulty its length is at least $m$. Since every edge has
weight at least $\gc$ (using $\gs\ge\gc$), its weight is at least
$m\gc\ge(d-1)\gs$. Hence the tree has minimum path weight at least
$(d-1)\gs=(d-1)\max(\gs,\gc)$.

\medskip
\noindent\emph{Lower bound when $\gc\ge\gs$.}
We prove, by induction on $r$, the following dual statement: for every
subspace $V$ with $\dim V=r$ and every finite family $\mathcal A$ of proper
subspaces of $V$, there is a WSC-mistake tree with nodes tracing to flags of
$V$, shattered by $H_{\mathrm{sub}}$, whose minimum path weight is at least
$(r-1)\gc$ and each of whose root-to-leaf paths admits a realizer $h_v$ with
$v\in V\setminus\big(\{0\}\cup\bigcup_{A\in\mathcal A}A\big)$. Taking
$V=\mathbb R^d$ and $\mathcal A=\emptyset$ then gives
$\WSCLdim(H_{\mathrm{sub}})\ge(d-1)\gc=(d-1)\max(\gs,\gc)$.

The base case $r=1$ is a single leaf: the required realizer exists because a
finite union of proper subspaces cannot cover the vector space $V$ over an
infinite field. For $r\ge2$, set $m=\lceil(r-1)\gc/\gs\rceil$ and choose
distinct hyperplanes $W_1,\dots,W_m$ of $V$, none in $\mathcal A$ (possible
as there are infinitely many hyperplanes and finitely many exclusions). For
each $j$ fix a hyperplane $S_j\subset W_j$ and let node $z_j$ trace to the
flag $(S_j,W_j)$, so that $\chi_{S_j,W_j}(v)=\mathbf 1[v\in W_j\setminus S_j]$
and $\dim W_j=r-1$. Arrange $z_1,\dots,z_m$ as a spine of \emph{straight/$\no$}
edges, with a leaf below the straight edge of $z_m$. Below the
\emph{curvy/$\yes$} edge at each $z_j$ (which corresponds to
$v\in W_j\setminus S_j$) attach the inductively constructed tree inside
$W_j$, with augmented obstacle family
\[
\{S_j\}\;\cup\;\{W_i\cap W_j:i<j\}\;\cup\;\{A\cap W_j:A\in\mathcal A\},
\]
each member of which is proper in $W_j$ (in particular $W_i\cap W_j\subsetneq
W_j$ since $W_i\ne W_j$).

\emph{Shattering.} A subtree realizer $v$ for the branch below $z_j$ lies in
$W_j\setminus S_j$, hence takes the curvy edge at $z_j$. It avoids
$W_i\cap W_j$ for $i<j$, and being in $W_j$ this forces $v\notin W_i$, hence
$v\notin W_i\setminus S_i$, so it takes the straight edge at each earlier
$z_i$. Also, it avoids every $A\in\mathcal A$. The all-straight path has a
realizer outside $\bigcup_j W_j$ (and off the obstacles), which is nonempty
for the same union-of-subspaces reason. Ambient realizability of every flag
$(S_j,W_j)$ with $\dim W_j=r-1\le d-1\le d'$ is guaranteed by
Lemma~\ref{lem:normal}, and traces agree by Lemma~\ref{lem:trace}.

\emph{Minimum path weight.} The all-straight (spine) path has weight
$m\gs\ge(r-1)\gc$ by the choice of $m$. A path that first turns curvy at
$z_j$ has weight at least
\[
(j-1)\gs+\gc+(r-2)\gc\;\ge\;\gc+(r-2)\gc\;=\;(r-1)\gc,
\]
using the inductive bound $(r-2)\gc$ on the attached $W_j$-subtree (with
$\dim W_j=r-1$). Hence every root-to-leaf path has weight at least
$(r-1)\gc$, completing the induction.
\end{proof}

\section{Alternate Mistake Model}

In this section, we analyze learnability under an alternate model for the learner's types of mistakes. We are interested in the trade-off between soundness and completeness mistakes (at the sequence level this time) via a linear cost objective. Specifically, the learner incurs a cost of $\gamma_s \in \R_{\geq 0}$ for every sequence-level soundness mistake and cost $\gamma_c \in \R_{\geq 0}$ for every sequence-level completeness mistake. We additionally introduce the notion of a ``location'' mistake, for which the learner incurs cost $\gamma_l \in \R_{\geq 0}$. Formally, we distinguish between the following three types of mistakes:
\begin{enumerate}
    \item \textbf{(Sequence-level) soundness mistake}, which occurs when the learner accepts an incorrect reasoning trace.
    Formally, the learner makes a soundness mistake in round~$t$ if $ \hat{y}^{(t)} = \infty$ but $y^{(t)} \in [L]$.\looseness-1
    \item \textbf{(Sequence-level) completeness mistake},  which occurs when the learner indicates error in a correct reasoning trace. Formally, the learner makes a 
    completeness mistake in round~$t$ if $ \hat{y}^{(t)} \in [L]$ but $y^{(t)} = \infty$.\looseness-1
    \item \textbf{Location mistake}, which occurs when the learner rejects an incorrect reasoning trace, but mispredicts the location of the first faulty step. Formally, the learner makes a location mistake in round~$t$ if $ \hat{y}^{(t)}, y^{(t)} \in [L]$ but $\hat{y}^{(t)} \neq y^{(t)}$.
\end{enumerate}

For convenience, we drop the term ``sequence-level'' in the remainder of this section. The learner's objective is to minimize the linear cumulative cost $\gamma_s M_s + \gamma_c M_c + \gamma_l M_l$, where $M_s, M_c$, and $M_l$ is the number of soundness, completeness, and location mistakes that the learner makes over $T$ timesteps. Here we assume that $\gamma_s \geq \gamma_c \geq \gamma_l$.

As in \Cref{sec:tradingsoundnesscompleteness}, our goal here is to define the right notion of a mistake tree that will capture the optimal strategy of an adversary against any online verification algorithm. However, here the optimal strategy of the adversary depends on the learner's prediction in a way that might not come down to a binary decision. Specifically, if a learner predicts the first faulty step of a reasoning trace to be $\hat{y}^{(t)} \in [L]$, the adversary could induce either a completeness mistake or a location mistake and for the latter the optimal label $y^{(t)} \in [L]$ for the adversary to choose might differ depending on the learner's prediction. Moreover, as is typical in mistake trees, edges are labeled with an element in the label set and in this setting a label in $[L]$ could be used to induce either a location or a completeness mistake, so the tree structure must be expressive enough to distinguish between the two. Another challenge is the real-valued cost for each mistake, which must be accounted for in the tree structure.

We now introduce the notion of an SCL-mistake tree, which has weighted edges of different types designed to handle the complexity of our online verification setting. 

\paragraph{SCL-mistake tree.} An SCL-mistake tree is a rooted binary tree, where every internal node is labeled with an example in $X \times \Sigma^L$ and every edge is labeled in $[L] \cup \{ \infty \}$, such that the outgoing edges from a single node have different labels. There are 3 possible types of edges: an edge labeled with $i \in [L]$ may be either an \emph{s-edge} or an \emph{l-edge}. An edge labeled $\infty$ is always a \emph{c-edge}. For every node, exactly one of the following must hold: (a) the node has two outgoing l-edges, or (b) the node has one outgoing s-edge and one outgoing c-edge.
Each edge $e$ has a corresponding weight $w(e)$, which is equal to $\gamma_s, \gamma_c, \gamma_l$ if $e$ is an s-, c-, and l-edge, respectively. 

We say that an SCL mistake tree is \emph{shattered} by a verifier class $H$ if for any root-to-leaf path that traverses internal nodes $z_1, z_2, \dots, z_d$ there exists a verifier $h \in H$ such that  the label of edge $(z_i, z_{i+1})$ is $h^{\text{Ver-CoT}}(z_i)$ for all $i \leq d$. 

We highlight that the SCL mistake tree is a binary tree, despite the adversary potentially needing more than two labels to play optimally (which would therefore require more than two children per node). We find that this subtlety is resolved when $\gamma_s \geq \gamma_c \geq \gamma_l$, which is a mild assumption in our online verification setting.

\begin{definition}[SCL-Littlestone dimension]
    For a class of verifiers, $H$, the SCL-Littlestone dimension, $\text{SCL-Ldim}(H)$, is defined as the supremum $w \in \R$ such that there exists an SCL mistake tree, where all root-to-leaf paths have cumulative weight at least $w$.
\end{definition}
If $H = \emptyset$, then $\text{SCL-Ldim}(H) = 0$. If for all $w \in \R_{\geq 0}$ there exists an SCL-mistake tree shattered by $H$, where all root-to-leaf paths have weight at least $w$, then we say that $\text{SCL-Ldim}(H) = \infty$. Next we show that any deterministic verification algorithm using $H$ must suffer cumulative loss at least $\text{SCL-Ldim}(H)$. 

\begin{theorem}\label{thm:oldscllb}
For any verifier class, $\cH$, and any deterministic learning algorithm, there exists a realizable sequence such that the learner must suffer cumulative loss at least $W$ for every $W < \text{SCL-Ldim}(H)$. 
\end{theorem}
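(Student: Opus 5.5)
The plan mirrors the proofs of \Cref{thm:sclb} and \Cref{thm:scllb}: turn a shattered SCL-mistake tree into an adaptive adversary strategy. Fix $W<\text{SCL-Ldim}(H)$. By definition of the supremum there is an SCL-mistake tree $\cT$ shattered by $H$ in which every root-to-leaf path has cumulative weight at least $W$. Fix a deterministic learner $V_H$. Starting at the root, the adversary presents the example $z=(x,\tau)\in X\times\Sigma^L$ labeling the current internal node and observes the prediction $\hat y\in[L]\cup\{\infty\}$. It then chooses an outgoing edge whose label differs from $\hat y$, reveals that edge's label as the true label, and moves to the corresponding child. It repeats until it reaches a leaf.

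The main work is to show that at every node some outgoing edge produces a mistake whose cost is at least the weight of that edge. I will split on the two node types allowed by the definition.

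In case (b), the node has an s-edge labeled some $i\in[L]$ and a c-edge labeled $\infty$.
\begin{enumerate}
\item If $\hat y=\infty$, the adversary follows the s-edge. This is a sequence-level soundness mistake, costing $\gamma_s=w(e)$.
\item If $\hat y\in[L]$, the adversary follows the c-edge with true label $\infty$. This is a completeness mistake, costing $\gamma_c=w(e)$.
\end{enumerate}

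In case (a), the node has two l-edges with distinct labels $i\neq j$ in $[L]$.
\begin{enumerate}
\item If $\hat y\in[L]$, at least one of $i,j$ differs from $\hat y$. The adversary follows that edge, which is a location mistake costing $\gamma_l=w(e)$.
\item If $\hat y=\infty$, the adversary follows either l-edge. The learner has then accepted an incorrect trace, which is a soundness mistake costing $\gamma_s\ge\gamma_l=w(e)$.
\end{enumerate}

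This last subcase is where the standing assumption $\gamma_s\ge\gamma_c\ge\gamma_l$ is used. It is also the step I expect to need the most care: the loss actually incurred can exceed the edge weight, and we must confirm this is harmless. It is, because we only need a lower bound on the loss.

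Finally, since $V_H$ is deterministic, the interaction traces a single root-to-leaf path $z_1,\dots,z_d$ of $\cT$. At each step the learner's loss is at least the weight of the traversed edge, so the cumulative loss is at least the path weight, which is at least $W$. For realizability, shattering gives some $h\in H$ with $h^{\text{Ver-CoT}}(z_i)$ equal to the revealed label for every $i\le d$. Hence the sequence of examples presented is consistent with a single verifier in $H$. If the tree is infinite or has unbounded paths, the game can be truncated once the accumulated weight reaches $W$; the resulting prefix sequence is still realizable.
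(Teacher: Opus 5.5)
Your proof is correct and takes the same approach as the paper, which omits this proof and says it follows the adversary-walks-down-a-shattered-tree argument of Theorem~\ref{thm:scllb}. The one check specific to SCL-trees is that a prediction $\hat y=\infty$ at a node with two l-edges still costs $\gamma_s\ge\gamma_l$, and you handle it correctly. Your closing remark about truncating infinite trees is unnecessary, since the mistake trees here are implicitly finite.
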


The proof of \Cref{thm:oldscllb} follows similarly to that of \Cref{thm:scllb}, so we omit it.

 We next analyze the performance of \Cref{alg:oldSCL-soa}. Her, we denote the immediate loss for a learner's prediction $\hat{y}$ and a true prediction $y$ as $\ell(\hat{y}, y) \in \{0, \gamma_s, \gamma_c, \gamma_l\}$.

\begin{algorithm}[t]
\caption{SCL-SOA}\label{alg:oldSCL-soa}
\DontPrintSemicolon
\LinesNumbered

\KwIn{class of verifiers $H$, mistake costs $\gamma_s, \gamma_l, \gamma_c \in \mathbb{R}_{\ge 0}$.}

$H^{(1)} \leftarrow H$\tcp*[r]{set initial version space}

\For{$t \in [T]$}{
    Observe example $z^{(t)} = \bigl(x^{(t)}, \tau^{(t)} = (\tau_1^{(t)}, \ldots, \tau_L^{(t)})\bigr)$\;

    Compute $\cY^{(t)} = \{i \in [L] \cup \{ \infty \}\; | \; \exists h\in H^{(t)} \text{ such that } h^{\text{Ver}-L}(z^{(t)}) = i \}$\tcp*[r]{All possible first faulty steps}
    
    \eIf(\tcp*[f]{all consistent verifiers agree}){$|\cY^{(t)}|=1$}{
        Predict $\hat{y}^{(t)} \in \cY^{(t)}$\;
    }{
        \For{\emph{each} $i \in \cY^{(t)}$}{
            Compute
            $m_i = \max_{j \in \cY^{(t)}} \Bigl[\ell(i,j) + \mathrm{SCL\text{-}Ldim}\bigl(H^{(t)}[(z^{(t)}, j)]\bigr)\Bigr]$\;
        }

        Predict $\hat{y}^{(t)} = \arg\min_{i \in \cY^{(t)}} m_i$\;
    }

    Receive correct label $y^{(t)} \in \cY$\;
    
    Set $H^{(t+1)} \leftarrow H^{(t)}[(z^{(t)}, y^{(t)})]$\tcp*[r]{Update version space}
}
\end{algorithm}

\begin{theorem}\label{thm:oldsclsoaub}
    For any class of verifiers, $H$, SCL-SOA (\Cref{alg:SCL-soa}) achieves cumulative loss at most $\text{SCL-Ldim}(H)$ on any realizable sequence of examples.
\end{theorem}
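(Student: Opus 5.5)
We follow the template of the proof of \Cref{thm:wscl-ub}: show that for every round $t$,
\[
\ell(\hat y^{(t)},y^{(t)}) \le \text{SCL-Ldim}(H^{(t)})-\text{SCL-Ldim}(H^{(t+1)}),
\]
and then telescope over $t$. Two structural facts are needed first. The first is monotonicity of $\text{SCL-Ldim}$ under inclusion. It holds because a tree shattered by a subclass is shattered by the superclass, exactly as for WSC-Ldim. This fact handles rounds where all consistent verifiers agree, and rounds where the prediction is correct, since then the loss is $0$ and $H^{(t+1)}\subseteq H^{(t)}$.

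The main step is a recursive lower bound: whenever $|\cY^{(t)}|\ge 2$,
\[
\text{SCL-Ldim}(H^{(t)}) \ge \min_{i\in\cY^{(t)}} m_i,
\]
where $m_i=\max_{j\in\cY^{(t)}}[\ell(i,j)+\text{SCL-Ldim}(H^{(t)}[(z,j)])]$ as in \Cref{alg:oldSCL-soa}. Given this bound, the mistake case follows immediately. If the algorithm predicts $\hat y=\arg\min_i m_i$ and the truth is $y$, then
\[
\ell(\hat y,y)+\text{SCL-Ldim}(H^{(t+1)})\le m_{\hat y}=\min_i m_i\le \text{SCL-Ldim}(H^{(t)}).
\]

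To prove the recursive bound, fix $W<\min_i m_i$. We build an SCL-tree rooted at $z$ with exactly two children whose edge types are legal and whose continuations give weight at least $W$. This is where the binary restriction and the assumption $\gamma_s\ge\gamma_c\ge\gamma_l$ enter, and it is the main obstacle. Let $A_j=\text{SCL-Ldim}(H^{(t)}[(z,j)])$. We split into two cases.

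\emph{Case 1: $\infty\in\cY^{(t)}$.} Consider the candidate prediction $i=\infty$. Since $W<m_\infty$, some $j^\star\in[L]$ satisfies $\gamma_s+A_{j^\star}>W$. Now consider predictions $i\in[L]$. We have $m_i>W$ for each, and the maximizing $j$ is either $\infty$ (contributing $\gamma_c+A_\infty$) or some $j\ne i$ in $[L]$ (contributing $\gamma_l+A_j$). We would like an s-edge labeled $j^\star$ together with a c-edge labeled $\infty$. This works if $\gamma_c+A_\infty>W$. If instead $\gamma_c+A_\infty\le W$, then for every $i\in[L]$ there is some $j\ne i$ with $\gamma_l+A_j>W$. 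Taking $i$ to be the top-ranked label yields two distinct labels $j_1,j_2\in[L]$ with $\gamma_l+A_{j_\cdot}>W$, so we use two l-edges. Here we use $\gamma_s\ge\gamma_l$ (or a tie-breaking argument) to ensure that such a pair exists.

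\emph{Case 2: $\infty\notin\cY^{(t)}$.} Only location mistakes are possible, so $\ell(i,j)=\gamma_l$ for all $i\ne j$. The condition $\min_i m_i>W$ again forces at least two labels $j$ with $\gamma_l+A_j>W$: otherwise, predicting the unique good label would give $m_i\le W$. We therefore use two l-edges.

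In every case we attach near-optimal subtrees for the corresponding restricted classes below the two edges. Shattering is inherited from the subtrees, and every root-to-leaf path has weight greater than $W$. Letting $W\uparrow\min_i m_i$ proves the recursive bound.

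The delicate point is the case analysis in Case 1 that certifies the two required edges. We will carefully check there how the ordering $\gamma_s\ge\gamma_c\ge\gamma_l$ rules out configurations in which the adversary's best responses would require three children. For example, one candidate prediction might be answered best by an s-edge while another is answered best only by l-edges to distinct labels. Summing the per-round inequality then gives a total loss of at most $\text{SCL-Ldim}(H)$.
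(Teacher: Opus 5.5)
\textbf{The gap: the recursive bound is true, but your proof of it fails.} Your template matches the paper's: the per-round inequality $\ell(\hat y^{(t)},y^{(t)})\le \text{SCL-Ldim}(H^{(t)})-\text{SCL-Ldim}(H^{(t+1)})$, monotonicity for rounds without a mistake, and the recursive bound $\text{SCL-Ldim}(H^{(t)})\ge\min_i m_i$ for the mistake case. The problem is that throughout your case analysis you drop the diagonal term $j=i$ of $m_i=\max_{j\in\cY^{(t)}}[\ell(i,j)+A_j]$, which contributes $\ell(i,i)+A_i=A_i$. Consequences:
\begin{itemize}
\item $W<m_\infty$ does not yield a $j^\star\in[L]$ with $\gamma_s+A_{j^\star}>W$; possibly only $A_\infty>W$.
\item In Case~1, a label $i\in[L]$ may have $m_i>W$ only because $A_i>W$.
\item In Case~2, the ``unique good label'' $i$ still has $m_i\ge A_i>W$.
\end{itemize}
So $\min_i m_i$ is in general not certified by any legal two-child tree rooted at $z$. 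With $\cY^{(t)}=\{1,2\}$, $A_1=10$, $A_2=0$, $\gamma_l=1$, you get $\min_i m_i=m_1=\max\{10,1\}=10$. The only legal root (l-edges $1$ and $2$) certifies only $\gamma_l+A_2=1$. Similarly, with $\cY^{(t)}=\{1,\infty\}$, $A_1=0$, $A_\infty=10$ and unit costs, $\min_i m_i=m_\infty=10$. The only legal root (s-edge $1$, c-edge $\infty$) certifies only $1$.

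\textbf{The fix: use monotonicity inside the recursive bound.} Show $\text{SCL-Ldim}(H^{(t)})\ge\max\{\max_j A_j,\,V\}$, where $V$ is the best value of a legal root from your gluing construction. Then verify $\min_i m_i\le\max\{\max_j A_j,V\}$. Let $a\ge b$ be the two largest $A_j$ over $j\in[L]\cap\cY^{(t)}$, attained at $j_1\ne j_2$ (omit $b$ if there is only one such label), and let $c=A_\infty$.
\begin{itemize}
\item If $\infty\notin\cY^{(t)}$, then $m_{j_1}=\max\{a,\gamma_l+b\}$.
\item Otherwise $m_\infty=\max\{c,\gamma_s+a\}$ and $m_{j_1}=\max\{a,\gamma_c+c,\gamma_l+b\}$.
\item If moreover $\gamma_s+a\le\gamma_c+c$, then $m_\infty=\max\{c,\min(\gamma_s+a,\gamma_c+c)\}$.
\item If instead $\gamma_s+a>\gamma_c+c$, then $m_{j_1}=\max\{a,\min(\gamma_s+a,\gamma_c+c),\gamma_l+b\}$.
\end{itemize}
In each case every term is either some $A_j$ or the value of a legal root (s-edge $j_1$ with c-edge $\infty$, or l-edges $j_1,j_2$).

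No ordering of the costs is used here; $\gamma_s\ge\gamma_c\ge\gamma_l$ is relevant only to the lower bound. So the ``delicate point'' you defer is not where the difficulty lies.

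The paper's proof makes the same reduction, comparing $\min_i\max_j$ with the best legal root, and has the same blind spot. In its Case~1 with $\hat i=\infty$, the relevant quantity is $\max\{A_\infty,\gamma_s+A_{i^*}\}$, not a minimum. The same monotonicity term repairs it.
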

\begin{proof}
We will show that for every timestep $t$, $\ell(\hat{y}^{(t)}, y^{(t)}) \leq \text{SCL-Ldim}(H^{(t)}) - \text{SCL-Ldim}(H^{(t+1)})$.
We consider the prediction $\hat{y}^{(t)}$ of \Cref{alg:SCL-soa} in timestep $t$ and get 
    \begin{align*}
        \ell(\hat{y}^{(t)}, y^{(t)}) + \text{SCL-Ldim}(H^{(t+1)}) &\leq \max_{j \in \cY^{(t)}} \left[ \ell(\hat{y}^{(t)}, j) + \text{SCL-Ldim}(H^{(t)}[(z^{(t)}, j)]) \right] \\
        & = \min_{i \in \cY^{(t)}} \max_{j \in \cY^{(t)}} \left[ \ell(i, j) + \text{SCL-Ldim}(H^{(t)}[(z^{(t)}, j)]) \right]. 
    \end{align*}
    From the definition of the SCL-Littlestone dimension, we also  know that
    \begin{align*}
        \text{SCL-Ldim}(H^{(t)})  \geq \max_{\{k_1, k_2\} \subseteq \cY^{(t)}} \min_{i \in \{k_1, k_2\} }\left[ \gamma_i + \text{SCL-Ldim}(H^{(t)}[(z^{(t)}, i)]) \right]
    \end{align*}
    where $k_1, k_2$ are the edges of the root node, $x^{(t)}$, in an SCL mistake tree. So it suffices to show that
    \begin{align*}
        \min_{i \in \cY^{(t)}} \max_{j \in \cY^{(t)}} \left[ \ell(i, j) + \text{SCL-Ldim}(H^{(t)}[(z^{(t)}, j)]) \right] \leq \max_{\{k_1, k_2\} \subseteq \cY^{(t)}} \min_{i \in \{k_1, k_2\} }\left[ \gamma_i + \text{SCL-Ldim}(H^{(t)}[(z^{(t)}, i)]) \right]
    \end{align*}
    First, we observe that if $|\cY^{(t)}| \leq 1$ we are done.
    Otherwise we consider the following three cases.
    \paragraph{Case 1:} 
    \begin{align*}
        \gamma_c + \text{SCL-Ldim}(H^{(t)}[(z^{(t)}, \infty)]) \geq \max_{i \in \cY^{(t)}} \left[ \gamma_l + \text{SCL-Ldim}(H^{(t)}[(z^{(t)}, i)]) \right]
    \end{align*}
    where we will denote with $i^*$ the $\argmax$ of the RHS. Given that $\gamma_s \geq \gamma_l$, it follows that $x^{(t)}$ must have one outgoing s-edge labeled $i^*$ and one outgoing c-edge labeled $\infty$, where
    \begin{align}
        & \max_{\{k_1, k_2\} \subseteq \cY^{(t)}} \min_{i \in \{k_1, k_2\} }\left[ \gamma_i + \text{SCL-Ldim}(H^{(t)}[(z^{(t)}, i)]) \right] \nonumber \\
        = & \min \left\{  \gamma_s + \text{SCL-Ldim}(H^{(t)}[(z^{(t)}, i^*)]), \gamma_c + \text{SCL-Ldim}(H^{(t)}[(z^{(t)}, \infty)])\right\} \label{eq:case1}
    \end{align}
    
If $\hat{i} \in [L]$, then $ \max_{j \in \cY^{(t)}} \left[ \ell(\hat{i}, j) + \text{SCL-Ldim}(H^{(t)}[(z^{(t)}, j)])\right]  =  \gamma_c + \text{SCL-Ldim}(H^{(t)}[(z^{(t)}, \infty)]) $ which is at most equal to (\ref{eq:case1}), or $\hat{i}$ is suboptimal.
    
    If $\hat{i} = \infty$, then
    \begin{align*}
        &\max_{j \in \cY^{(t)}} \left[ \ell(\hat{i}, j) + \text{SCL-Ldim}(H^{(t)}[(z^{(t)}, j)])\right] \\
        =  & \min \left\{ \text{SCL-Ldim}(H^{(t)}[(z^{(t)}, \infty)]) , \gamma_s +\text{SCL-Ldim}(H^{(t)}[(z^{(t)}, i^*)])  \right\}
    \end{align*}
    which again is at most equal to (\ref{eq:case1}).

    \paragraph{Case 2:}
    \begin{align*}
        \gamma_c + \text{SCL-Ldim}(H^{(t)}[(z^{(t)}, \infty)]) \leq \max_{i \in \cY^{(t)}\setminus \{\max \cY^{(t)}\}} \left[ \gamma_l + \text{SCL-Ldim}(H^{(t)}[(z^{(t)}, i)]) \right]
    \end{align*}
    where we will denote the maximizing and the second maximizing element of the RHS $i_1^*, i_2^* \in [L]$, respectively. Then $x^{(t)}$ must have two distinct outgoing l-edges $i_1^*$ and $i_2^*$, such that
    \begin{align}
        \max_{\{k_1, k_2\} \subseteq \cY^{(t)}} \min_{i \in \{k_1, k_2\} }\left[ \gamma_i + \text{SCL-Ldim}(H^{(t)}[(z^{(t)}, i)]) \right]
        =  \gamma_l + \text{SCL-Ldim}(H^{(t)}[(z^{(t)}, i_2^*)]) \label{eq:case2}
    \end{align}
    
     If $\hat{i} =i_1^*$, then $ \max_{j \in \cY^{(t)}} \left[ \ell(\hat{i}, j) + \text{SCL-Ldim}(H^{(t)}[(z^{(t)}, j)])\right]  = \gamma_l + \text{SCL-Ldim}(H^{(t)}[(z^{(t)}, i_2^*)])$, which is exactly (\ref{eq:case2}).
    
    If $\hat{i} \in [L], $but $ \hat{i} \neq i_1^*$, then \[ \max_{j \in \cY^{(t)}} \left[ \ell(\hat{i}, j) + \text{SCL-Ldim}(H^{(t)}[(z^{(t)}, j)])\right]  = \gamma_l \allowbreak+ \text{SCL-Ldim}(H^{(t)}[(z^{(t)}, i_1^*)]),\] which means that $\hat{i}$ is suboptimal.
    
    If $\hat{i} = \infty$, then $ \max_{j \in \cY^{(t)}} \left[ \ell(\hat{i}, j) + \text{SCL-Ldim}(H^{(t)}[(z^{(t)}, j)])\right]  = \gamma_s + \text{SCL-Ldim}(H^{(t)}[(z^{(t)}, i_1^*)])$, which means that $\hat{i}$ is suboptimal.

    \paragraph{Case 3:}
    \begin{align*}
         \max_{i \in \cY^{(t)}\setminus \{\max \cY^{(t)}\}} \left[ \gamma_l + \text{SCL-Ldim}(H^{(t)}[(z^{(t)}, i)]) \right] &\leq \gamma_c + \text{SCL-Ldim}(H^{(t)}[(z^{(t)}, \infty)]) \\
         & \leq \max_{i \in \cY^{(t)}} \left[ \gamma_l + \text{SCL-Ldim}(H^{(t)}[(z^{(t)}, i)]) \right]
    \end{align*}
    where we will denote the maximizing and the second maximizing element of the RHS $i_1^*, i_2^* \in [L]$, respectively. Then $x^{(t)}$ must have one outgoing s-edge $i_1^*$ and one outgoing c-edge $\infty$, where
    \begin{align}
         \max_{\{k_1, k_2\} \subseteq \cY^{(t)}} \min_{i \in \{k_1, k_2\} }\left[ \gamma_i + \text{SCL-Ldim}(H^{(t)}[(z^{(t)}, i)]) \right] 
        =  \gamma_c + \text{SCL-Ldim}(H^{(t)}[(z^{(t)}, \infty)]) \label{eq:case3}
    \end{align}

    If $\hat{i} =i_1^*$, then $ \max_{j \in \cY^{(t)}} \left[ \ell(\hat{i}, j) + \text{SCL-Ldim}(H^{(t)}[(z^{(t)}, j)])\right]  = \gamma_c + \text{SCL-Ldim}(H^{(t)}[(z^{(t)}, \infty)])$, which is equal to (\ref{eq:case3}).
    
    If $\hat{i} \in [L]$ but $ \hat{i} \neq i_1^*$, then \[ \max_{j \in \cY^{(t)}} \left[ \ell(\hat{i}, j) + \text{SCL-Ldim}(H^{(t)}[(z^{(t)}, j)])\right] \\  = \gamma_l + \text{SCL-Ldim}(H^{(t)}[(z^{(t)}, i_1^*)]),\] which means that $\hat{i}$ is suboptimal.
    
    If $\hat{i} = \infty$, then $ \max_{j \in \cY^{(t)}} \left[ \ell(\hat{i}, j) + \text{SCL-Ldim}(H^{(t)}[(z^{(t)}, j)])\right]  = \gamma_s + \text{SCL-Ldim}(H^{(t)}[(z^{(t)}, i_1^*)])$, which means that $\hat{i}$ is suboptimal since $\gamma_s \geq \gamma_l$.
\end{proof}

We note that when $\gamma_s=\gamma_c=\gamma_l = 1$, the SCL-Littlestone dimension recovers the definition of the Littlestone dimension. We now present an example that shows an $O(L)$ separation between the setting, where $\gamma_l=0$ and $\gamma_l=1$. This highlights the added difficulty of learning verifiers that correctly predict the location of the first faulty step for a given reasoning trace.

\begin{example}
    Consider an example where correct reasoning traces correspond to satisfying assignments of an unknown conjunction of size $L$. Let $x_1, x_2, 
    \dots, x_L$ be Boolean variables, and let $H = \Big\{ h_S : \{0,1\}^L \to \{0,1\} \;\Big|\;  h_S(x) = \bigwedge_{i \in S} x_i \;\wedge\; \bigwedge_{j \in T} \neg x_j,\; S,T \subseteq [L],\ |S|+|T| = L \Big\}$
    be the class of verifiers. Let $h_{S^*}$ be the unknown target conjunction. A reasoning trace $\tau \in \{0, 1\}^L$ is a Boolean assignment and is correct if and only if $h_{S^*}(\tau) = 1$. Let $\gamma_s = \gamma_c =1$. If $\gamma_l = 0$, then the learner can learn $h_{S^*}$ with at most 1 completeness mistake by rejecting every reasoning trace. A single correct proof will indicate exactly which literals are negated. On the other hand, if $ \gamma_l >0$, the learner can be forced to incur loss $ \gamma_l \cdot L/2$. To see this, notice that an adversary can ask the learner to verify an assignment at timestep $t$ and reveal the true incorrect step to be either step $2t-1$ or $2t$, depending on which step the learner did not flag as incorrect. Thus, the adversary can guarantee a location mistake for at least $L/2$ timesteps.
\end{example}

\section{Sequential Harnessing: Connections with Imitation Learning and Online Chain-of-Thought Generation}\label{app:online-template}

Many of the results in this paper involve reductions between different kinds of learning problems.
In this section, we provide a generic sequential-learning reduction framework for reductions of this type. While this framework is not able to capture the asymmetry and trade-off
between soundness and completeness mistakes, we show it can capture several
important interactive learning settings, including imitation learning,
chain-of-thought verification,
verifier-guided reasoning, and
online chain-of-thought generation. 
First, we place our framework in a broader sequential-learning context by identifying 
 a common
``harnessing”\footnote{The term ``harnessing" or ``scaffolding"~\citep{suzgun2024meta} has been recently coined to describe complementary models or tools that augment and orchestrate the capabilities of LLMs.} argument that appears implicitly in several online reductions, especially in
imitation learning.
Then we show how this argument directly yields baseline online learnability guarantees for reasoning and verification systems.
It also yields a clean online chain-of-thought generation result, while eliminating the $O(\log L)$ chain length factor from prior work~\citep{joshi2025theory}.

\paragraph{Notation.} We first recall the usual online learning terminology.
An online prediction problem 
$\mathsf P$ is defined over an instance (input) space $I$, a label (output) space $Y$, and predictors 
$H\subseteq Y^I$.
A labeled sequence $G_T=((i^{(1)},y^{(1)}),\ldots,(i^{(T)},y^{(T)}))\in \Gamma := (I\times Y)^*$ is \emph{realizable} for $\mathsf P$ if there is a function $h^\star\in H$ such that $y^{(t)}=h^\star(i^{(t)})$ for every $t$. A (deterministic) online learner for $\mathsf P$ is a map from labeled histories to predictors $A:\Gamma\rightarrow Y^I$.
The predictor $A(G_T)$ need not be in $H$. On round $t$, after seeing the history
$G_{t-1}:=\bigl((i^{(1)},y^{(1)}),\ldots,(i^{(t-1)},y^{(t-1)})\bigr)$,
the learner sets $\hat h_t:=A(G_{t-1})$, predicts
$\hat y^{(t)}=\hat h_t(i^{(t)})$, 
and then observes $y^{(t)}$. Its mistake count on the sequence is $\sum_t \mathbf{1}\{\hat y^{(t)}\neq y^{(t)}\}$. We write $\operatorname{MB}(A,\mathsf P)$ for the smallest $M$ such that $A$ makes at most $M$ mistakes on every realizable sequence for $\mathsf P$, and
$\operatorname{MB}(\mathsf P):=\inf_A \operatorname{MB}(A,\mathsf P)$
for the optimal realizable mistake bound of the problem.

We now define some new definitions for the sequential harnessing framework. For the remainder of this section, fix two  online learning problems:
$\mathsf P_{\mathrm{base}}$ over $(I_{\mathrm{base}},Y_{\mathrm{base}},H_{\mathrm{base}}),$ and $ 
\mathsf P_{\mathrm{sys}}$ over $(I_{\mathrm{sys}},Y_{\mathrm{sys}},H_{\mathrm{sys}})$. The base problem would correspond to the task of training an  LLM in the context of LLM harnessing, although we will see that the abstraction is useful in a variety of settings.

\begin{definition}[Harness]
Fix a {\it base} instance-label space pair $(I_{\mathrm{base}},Y_{\mathrm{base}})$ and a {\it system} instance-label space pair $(I_{\mathrm{sys}},Y_{\mathrm{sys}})$.  Let
$\Gamma_{\mathrm{base}}:=(I_{\mathrm{base}}\times Y_{\mathrm{base}})^\ast$
be the set of finite labeled base \emph{transcripts}. A \emph{harness from $(I_{\mathrm{base}},Y_{\mathrm{base}})$ to $(I_{\mathrm{sys}},Y_{\mathrm{sys}})$} is a deterministic map
\[
S:I_{\mathrm{sys}}\times \Gamma_{\mathrm{base}}
\to
I_{\mathrm{base}}\sqcup Y_{\mathrm{sys}},
\]
where the disjoint union denotes two types of harness actions, a base query or a  final system output.
As shown below, this captures a procedure that turns any base predictor $h:I_{\mathrm{base}}\to Y_{\mathrm{base}}$ into a system predictor by adaptively querying base examples and then returning one system output.
\end{definition}

In a typical usage, the input to the harness corresponding to the base transcript is generated autoregressively. The harness induces a map from base to system predictors 
$S[\cdot]:Y_{\mathrm{base}}^{I_{\mathrm{base}}}
\to
Y_{\mathrm{sys}}^{I_{\mathrm{sys}}}$,
as follows.
For $h:I_{\mathrm{base}}\to Y_{\mathrm{base}}$ and $i\in I_{\mathrm{sys}}$, initialize the base  transcript $\gamma_0=\emptyset$. Let $m$ denote the first value of $t$ for which $S$ outputs a system label for the pair $h,i$ in the following recursive procedure. If $S(i,\gamma_{t-1})=q_t\in I_{\mathrm{base}}$, set $\gamma_t=(\gamma_{t-1},(q_t,h(q_t)))$; if $S(i,\gamma_m)=y\in Y_{\mathrm{sys}}$, halt and set the induced harness mapping $S[h](i)=y$ and  $\operatorname{Transcript}_{S}^{h}(i)=\gamma_m$,  the base transcript generated by $S$. We assume this recursion  halts after finitely many base queries for every $h$ and $i$.  
 We will also need the notion of a decoder. 

 \begin{definition}[Decoder and harness-decoder consistency]
     A \emph{decoder} is a function $D:I_{\mathrm{sys}}\times Y_{\mathrm{sys}}\to\Gamma_{\mathrm{base}}$ that outputs a base transcript given a system input and output. A harness-decoder pair $(S, D)$ is \emph{consistent}, if for every system predictor there exists a base predictor for which the harness $S$ matches its behavior and the decoder recovers its trace. That is, for every $g\in H_{\mathrm{sys}}$ there exists $h_g\in H_{\mathrm{base}}$, such that for every $i\in I_{\mathrm{sys}}$,
$g(i)=S[h_g](i)
\text{ and }
D(i,g(i))=\operatorname{Transcript}_{S}^{h_g}(i)$.
 \end{definition}

 We show that given a consistent harness-decoder pair, it is possible to construct a system learner that preserves the mistake bounds of the base predictor (Algorithm~\ref{alg:trace-decodable-reduction}). This result is based on a simple inductive argument over the base transcript. The idea appears implicitly in prior work on online-to-online reductions related to imitation learning~\citep{ross2011reduction}. The reduction implies that in order to obtain good system performance, it is sufficient to construct a consistent harness-decoder pair for the system and  base problems.

\begin{theorem}[Online-to-online reduction]\label{thm:online-to-online-reduction}
Let $\mathsf P_{\mathrm{base}}$ and $\mathsf P_{\mathrm{sys}}$ be  two   online learning problems as defined above.
Suppose there is a harness $S$ consistent with a decoder $D$.
Let $R_{S,D}$ be the online-to-online reduction defined in \Cref{alg:trace-decodable-reduction}.
Then, for every base learner $A_{\mathrm{base}}:\Gamma_{\mathrm{base}}\to Y_{\mathrm{base}}^{I_{\mathrm{base}}}$,
\[
\operatorname{MB}(R_{S,D}(A_{\mathrm{base}}),\mathsf P_{\mathrm{sys}})
\le
\operatorname{MB}(A_{\mathrm{base}},\mathsf P_{\mathrm{base}}).
\]
In particular, $\operatorname{MB}(\mathsf P_{\mathrm{sys}})\le \operatorname{MB}(\mathsf P_{\mathrm{base}})$.
\end{theorem}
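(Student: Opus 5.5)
The plan is to simulate $A_{\mathrm{base}}$ inside the system learner, in the same way \Cref{alg:red1} simulates $\cA_{\mathrm{pfx}}$. At each system round $t$, given input $i^{(t)}$, the learner $R_{S,D}(A_{\mathrm{base}})$ forms the current base predictor $\hat h_t := A_{\mathrm{base}}(G^{\mathrm{base}}_{t-1})$ from its stored base history. It then predicts $\hat y^{(t)} := S[\hat h_t](i^{(t)})$ by running the harness. After the true label $y^{(t)}$ is revealed, it computes the decoded transcript $D(i^{(t)},y^{(t)}) = ((q_1,y_1),\ldots,(q_r,y_r))$. It compares this with the transcript $\operatorname{Transcript}_S^{\hat h_t}(i^{(t)})$ produced while predicting. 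If $\hat y^{(t)}\neq y^{(t)}$, it appends to the base history exactly the labeled pair at the first index $j$ where $\hat h_t(q_j)\neq y_j$. Otherwise it leaves the history unchanged.

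\textbf{Step 1: realizability.} Since the system sequence is realizable, some $g\in H_{\mathrm{sys}}$ has $y^{(t)}=g(i^{(t)})$ for all $t$. Consistency gives $h_g\in H_{\mathrm{base}}$ with $D(i^{(t)},y^{(t)}) = \operatorname{Transcript}_S^{h_g}(i^{(t)})$. So every pair $(q_j,y_j)$ in a decoded transcript satisfies $y_j = h_g(q_j)$. The base sequence fed to $A_{\mathrm{base}}$ is therefore realizable by the single fixed $h_g$, and $A_{\mathrm{base}}$ makes at most $\operatorname{MB}(A_{\mathrm{base}},\mathsf P_{\mathrm{base}})$ mistakes on it.

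\textbf{Step 2: key inductive argument, the expected crux.} I would show that each system mistake yields exactly one base mistake on the appended example. Since $S$ is deterministic, I would induct on the query index $j$. Suppose $\hat h_t$ agrees with $h_g$ on $q_1,\ldots,q_{j-1}$. Then the partial transcripts $\gamma_{j-1}$ under $\hat h_t$ and under $h_g$ coincide. Hence $S(i^{(t)},\gamma_{j-1})$ is the same action in both runs: either the same query $q_j$ or the same final output.

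If the runs agreed on every query in $D(i^{(t)},y^{(t)})$, then both would halt at the same point with the same output. This gives $\hat y^{(t)} = S[h_g](i^{(t)}) = g(i^{(t)}) = y^{(t)}$, contradicting the assumption of a mistake. So on a mistake there is a first disagreement index $j\le r$. At that index the query $q_j$ is genuinely issued by the harness under $\hat h_t$, and $\hat h_t(q_j)\neq y_j$. Feeding $(q_j,y_j)$ to $A_{\mathrm{base}}$ is then a real base mistake: its current predictor $\hat h_t$ errs on that round.

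\textbf{Step 3: counting.} A subtlety is that the base learner's history must consist only of rounds it actually played. This holds because we feed exactly one example per system mistake and none otherwise, and $\hat h_t$ is $A_{\mathrm{base}}$ evaluated on exactly that history. The number of system mistakes therefore equals the number of base rounds, each of which is a base mistake on a realizable sequence. This bounds the system mistakes by $\operatorname{MB}(A_{\mathrm{base}},\mathsf P_{\mathrm{base}})$. Taking the infimum over $A_{\mathrm{base}}$ gives $\operatorname{MB}(\mathsf P_{\mathrm{sys}})\le \operatorname{MB}(\mathsf P_{\mathrm{base}})$.
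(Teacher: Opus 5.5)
Your proposal is correct and follows essentially the same route as the paper's proof. Both arguments obtain the base witness from harness--decoder consistency and use determinism of $S$ to locate a first label disagreement on the decoded transcript; feeding exactly that one example per system mistake gives a realizable base sequence on which $A_{\mathrm{base}}$ errs on every example.
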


\begin{algorithm}[h]
\caption{Online-to-online reduction $R_{S,D}$}
\label{alg:trace-decodable-reduction}
\DontPrintSemicolon
\LinesNumbered
\SetNoFillComment
\SetKwComment{tcp}{\(\triangleright\) }{}
\SetKwProg{Fn}{Function}{:}{}
\KwIn{base learner $A_{\mathrm{base}}:\Gamma_{\mathrm{base}}\to Y_{\mathrm{base}}^{I_{\mathrm{base}}}$; harness $S:I_{\mathrm{sys}}\times\Gamma_{\mathrm{base}}\to I_{\mathrm{base}}\sqcup Y_{\mathrm{sys}}$; decoder $D:I_{\mathrm{sys}}\times Y_{\mathrm{sys}}\to\Gamma_{\mathrm{base}}$}
\KwOut{system learner $A_{\mathrm{sys}}:\Gamma_{\mathrm{sys}}\to Y_{\mathrm{sys}}^{I_{\mathrm{sys}}}$}
\Fn{\(A_{\mathrm{sys}}(G=((i_1,y_1),\ldots,(i_m,y_m)))\)}{
    Initialize base transcript $B_0:=\emptyset$\;
    \For{$t=1$ \KwTo $m$}{
        Define the current base predictor $\hat h_t:=A_{\mathrm{base}}(B_{t-1})$\;
        Let $\hat T_t:=\operatorname{Transcript}_{S}^{\hat h_t}(i_t)$ and $\hat y_t:=S[\hat h_t](i_t)$ (transcript and output from the harness)\;
        \If{$\hat y_t\neq y_t$}{
            Let $T_t:=D(i_t,y_t)$ be the decoded base transcript\;
            Let $(z_t,a_t)$ be the first base query-label pair where $T_t$ disagrees with $\hat T_t$, with $a_t$ taken from $T_t$\footnotemark\;
            Define $B_t:=(B_{t-1},(z_t,a_t))$\;
        }
        \Else{
            Define $B_t:=B_{t-1}$\;
        }
    }
    Define the final base predictor $\hat h_{m+1}:=A_{\mathrm{base}}(B_m)$\;
    \Return \(S[\hat h_{m+1}]\)\;
}
\BlankLine
\Return \(A_{\mathrm{sys}}\)\;
\end{algorithm}
\footnotetext{For realizable system histories covered by \Cref{thm:online-to-online-reduction}, the proof of \Cref{thm:online-to-online-reduction} shows that this disagreement exists whenever $\hat y_t\neq y_t$.}

\begin{proof}[Proof of \Cref{thm:online-to-online-reduction}]
Let $A_{\mathrm{sys}}:=R_{S,D}(A_{\mathrm{base}})$. Fix a realizable system input-label sequence 
\[
G_T^{\mathrm{sys}}=((i^{(1)},y^{(1)}),\ldots,(i^{(T)},y^{(T)})),
\]
and, by system realizability, choose  a witness $g^\star\in H_{\mathrm{sys}}$  such that $g^\star(i^{(t)}) = y^{(t)}$ for all $t$. By the harness-decoder consistency assumption, there exists $h^\star\in H_{\mathrm{base}}$ such that, for every $i\in I_{\mathrm{sys}}$,
\[
g^\star(i)=S[h^\star](i),
\qquad
D(i,g^\star(i))=\operatorname{Transcript}_{S}^{h^\star}(i).
\]
The first identity is used to identify the true system label with the harness output
\[
y^{(t)}=g^\star(i^{(t)})=S[h^\star](i^{(t)}).
\]
Let $B_t$ be the base transcript history  produced by \Cref{alg:trace-decodable-reduction} after processing the first $t$ labeled system examples. We prove by induction that $B_t$ is realizable by $h^\star$, that $|B_t|$ is the number of system mistakes in the first $t$ rounds, and that $A_{\mathrm{base}}$ makes a mistake on every example of $B_t$ in order.

The claim is trivial for $B_0=\emptyset$. Suppose it holds through round $t-1$, and let
\[
\hat h_t:=A_{\mathrm{base}}(B_{t-1}),\qquad
\hat y^{(t)}:=S[\hat h_t](i^{(t)}).
\]
If $\hat y^{(t)}=y^{(t)}$, Algorithm~\ref{alg:trace-decodable-reduction} sets $B_t=B_{t-1}$, so the induction claim is unchanged.

Now suppose $\hat y^{(t)}\neq y^{(t)}$. The decoded transcript is
\[
T^\circ
:=
D(i^{(t)},y^{(t)})
=
D(i^{(t)},g^\star(i^{(t)}))
=
\operatorname{Transcript}_{S}^{h^\star}(i^{(t)}).
\]
Here we used the second decoder-consistency identity.  $S[\hat h_t](i^{(t)})$ and $S[h^\star](i^{(t)})$ have different final outputs. $S$ is a deterministic map of the current system input and labeled base transcript, so two executions with the same transcript history take the same next action and cannot halt with different final outputs. Hence there is a first base-label disagreement between $\operatorname{Transcript}_{S}^{\hat h_t}(i^{(t)})$ and $T^\circ$. Write this disagreement as $(z,h^\star(z))$  on the base transcript. Then
\[
\hat h_t(z)\neq h^\star(z),
\]
so the appended example $(z,h^\star(z))$ is both realizable by the base-class witness $h^\star\in H_{\mathrm{base}}$ and a genuine mistake of $A_{\mathrm{base}}$ on the current base history $B_{t-1}$. \Cref{alg:trace-decodable-reduction} appends exactly this one example, hence $|B_t|=|B_{t-1}|+1$ exactly when the system learner makes a mistake.

Thus the final base sequence $B_T$ is realizable and $A_{\mathrm{base}}$ makes exactly $|B_T|$ mistakes on it, where $|B_T|$ is the number of system mistakes. Taking the worst case over realizable system sequences gives
\[
\operatorname{MB}(R_{S,D}(A_{\mathrm{base}}),\mathsf P_{\mathrm{sys}})
\le
\operatorname{MB}(A_{\mathrm{base}},\mathsf P_{\mathrm{base}}).
\]
Taking the infimum over $A_{\mathrm{base}}$ gives $\operatorname{MB}(\mathsf P_{\mathrm{sys}})\le \operatorname{MB}(\mathsf P_{\mathrm{base}})$.
\end{proof}

Note that this template  tracks only a single total mistake objective in contrast with the rest of the paper, where the main technical challenge is to separately track soundness and completeness mistakes. 
We now illustrate the versatility of this framework by applying it to previously studied problems as well as other settings in this work.

\subsection{Imitation Learning as Sequential Harnessing}
\label{subsec:imitation-harnessing}

{Our sequential harnessing framework is closely related to
classical imitation learning reductions such as \cite{ross2010efficient} and refinements like DAgger
\cite{ross2011reduction}. These works show how sequential decision-making problems can be reduced to online learning by training on states induced by the learner's own policy. Subsequent works have studied stochastic experts and policies, including minimax rates when the expert follows an arbitrary stochastic policy~\citep{rajaraman2020toward}, practical approaches that learn stochastic policies~\citep{ho2016generative}, and variance-dependent sample complexity analysis for stochastic policies~\citep{foster2024behavior}}. Intuitively, the base predictor corresponds to
a local policy that predicts the next action from the current state,
while the system predictor corresponds to the induced trajectory-level
policy.

\paragraph{Imitation learning setup.}
Let  $X$ denote a set of contexts/tasks,
 $\mathcal S$ denote a state space,
 $\mathcal A$ denote an action space,
 $L$ denote the horizon length, and
 $U : X \times \mathcal A^{<L} \to \mathcal S$
    denote the deterministic state-transition map.
Given a context $x \in X$ and an action sequence
$a_{1:\ell-1} \in \mathcal A^{\ell-1}$,
the induced state at time $\ell$ is
$s_\ell = U(x,a_{1:\ell-1})$. We use a deterministic abstraction for the reduction, and the randomness in tasks or state transitions can be encoded within the context $x$. Also, unlike the standard Markov formulation, we allow the next state to depend on the entire action history instead of just the current state and the last action.

Let $\Pi \subseteq \mathcal A^{\mathcal S}$ denote a policy class.
Each policy $\pi \in \Pi$ induces a trajectory predictor
$g_\pi : X \to \mathcal A^L$
defined  by
\[
a_\ell = \pi(s_\ell),
\qquad
s_\ell = U(x,a_{1:\ell-1}),
\qquad g_\pi(x) = (a_1,\ldots,a_L).
\]

\paragraph{Base and system prediction problems.}
Define the base prediction problem by
$I_{\mathrm{base}} = \mathcal S,
Y_{\mathrm{base}} = \mathcal A,
H_{\mathrm{base}} = \Pi$.
Thus, the base learner predicts a local action from a state.
Define the system prediction problem by
$I_{\mathrm{sys}} = X,
Y_{\mathrm{sys}} = \mathcal A^L$,
and
$H_{\mathrm{sys}}
=
\{g_\pi : \pi \in \Pi\}$.
Thus, the system learner predicts an entire trajectory.

\paragraph{Harness construction.}

Let
$\Gamma_{\mathrm{base}}
=
(\mathcal S \times \mathcal A)^*$.
For
$x \in X,
\gamma=((s_1,a_1),\ldots,(s_m,a_m))
\in \Gamma_{\mathrm{base}}$,
define the harness
$S_{\mathrm{IL}}
:
X \times \Gamma_{\mathrm{base}}
\to
I_{\mathrm{base}} \sqcup Y_{\mathrm{sys}}$
by
\[
S_{\mathrm{IL}}(x,\gamma)
=
\begin{cases}
U(x,a_{1:m}),
&
m<L,
\\[4mm]
(a_1,\ldots,a_L),
&
m=L.
\end{cases}
\]
That is, the harness repeatedly queries the current state,
receives an action prediction from the base learner,
and after $L$ steps outputs the induced trajectory.
For any policy $\pi \in \Pi$,
the induced system predictor satisfies
\[
S_{\mathrm{IL}}[\pi](x)
=
g_\pi(x).
\]

\paragraph{Decoder construction.}

Given a trajectory label
$y=(a_1,\ldots,a_L)\in\mathcal A^L$,
define the decoder
$D_{\mathrm{IL}}
:
X \times \mathcal A^L
\to
(\mathcal S \times \mathcal A)^L$
by
$D_{\mathrm{IL}}(x,y)
=
\bigl(
(s_1,a_1),
\ldots,
(s_L,a_L)
\bigr)$,
where
$s_\ell
=
U(x,a_{1:\ell-1})$.
Thus, the decoder converts an expert trajectory into the corresponding
sequence of state-action supervision examples.

\paragraph{Consistency of the harness-decoder pair.} It is straightforward to verify the consistency condition of the sequential harnessing
framework.
Fix any
$g \in H_{\mathrm{sys}}$.
By definition of $H_{\mathrm{sys}}$, there exists a policy
$\pi \in \Pi$ such that
$g=g_\pi$.
Let
$h_g := \pi$.
Then for every $x \in X$,
\[
S_{\mathrm{IL}}[h_g](x)
=
S_{\mathrm{IL}}[\pi](x)
=
g_\pi(x)
=
g(x).
\]
Moreover, if
$g(x)=(a_1,\ldots,a_L)$,
then the rollout trace generated by the harness equals
\[
\operatorname{Transcript}^{\pi}_{S_{\mathrm{IL}}}(x)
=
\bigl(
(U(x,\emptyset),a_1),
(U(x,a_1),a_2),
\ldots,
(U(x,a_{1:L-1}),a_L)
\bigr),
\]
which is exactly
$D_{\mathrm{IL}}(x,g(x))$.
Hence,
$(S_{\mathrm{IL}},D_{\mathrm{IL}})$
is a consistent harness-decoder pair.

Applying the sequential harnessing theorem now yields an online-to-online
reduction in the imitation learning setting, and the argument appears implicitly in prior work~\citep{ross2010efficient}. 
This imitation-learning perspective also explains why the sequential-harnessing abstraction
does not separately track soundness and completeness mistakes. When training only from expert
trajectories, a deviation of the learner from the expert trajectory  changes the future state distribution; analogously, if a
verifier makes a completeness mistake and the interaction halts at that prefix,
then later prefixes are never observed. Hence an early false rejection can mask later false
acceptances that would have occurred on incorrect continuations of the trace. The harnessing
reduction therefore controls only the induced trajectory-level mistake, not the separate numbers
of soundness and completeness mistakes. The latter distinction requires the more refined
soundness-completeness analysis developed in the main verifier-learning sections.

\subsection{Prefix and Full CoT Verifier Equivalence as Sequential Harnessing}

We now show that the two online-to-online reductions in Section~\ref{sec:reduction} can
be viewed as special cases of the sequential harnessing framework. In
this subsection we ignore the distinction between soundness and
completeness mistakes and instead use the standard $0$--$1$ mistake
model, where a learner makes a mistake whenever its predicted label
differs from the true label.
{
There is one technical subtlety: in prefix verification the verifier is only required to label a prefix when all
previous steps are correct. We formalize this using partial concept
classes, following the framework of \citet{alon2021partialconcepts}.
A partial online prediction problem is a triple
$\mathsf P=(I,Y,H)$ with $H\subseteq (Y\cup\{\star\})^I$, where
$\star$ denotes an undefined label. A labeled sequence
$((i^{(1)},y^{(1)}),\ldots,(i^{(T)},y^{(T)}))\in(I\times Y)^*$
is realizable if there exists $h^\star\in H$ such that
$h^\star(i^{(t)})=y^{(t)}\neq\star$ for every $t$. A learner
outputs total predictors in $Y^I$, and
$\operatorname{MB}(A,\mathsf P)$ is the worst-case number of mistakes
over realizable labeled sequences in this partial sense.

\paragraph{The two online problems.}

Let the prefix-verification problem be
$P_{\mathrm{pfx}} =
(I_{\mathrm{pfx}},Y_{\mathrm{pfx}},H_{\mathrm{pfx}})$,
where $I_{\mathrm{pfx}} = X \times \Sigma^{\le L}$,
$Y_{\mathrm{pfx}} = \{\yes,\no\}$, and
$H_{\mathrm{pfx}} = H\subseteq (Y_{\mathrm{pfx}}\cup\{\star\})^{I_{\mathrm{pfx}}}$.
For a prefix verifier $h\in H$, the label
$h(x,\tau_{1:\ell})$ is defined exactly on the promised prefixes:
\[
h(x,\tau_{1:\ell})\neq\star
\quad\Longleftrightarrow\quad
h(x,\tau_{1:j})=\yes\ \text{for every }j<\ell.
\]
On such prefixes, $h$ predicts whether the last step is valid. Thus,
realizable prefix-verification sequences only contain prefixes satisfying
the target-dependent promise, but the learner is not given this domain.
Let the full chain-of-thought verification problem be
$P_{\mathrm{cot}} =
(I_{\mathrm{cot}},Y_{\mathrm{cot}},H_{\mathrm{cot}})$,
where $I_{\mathrm{cot}} = X \times \Sigma^L$ and
$Y_{\mathrm{cot}} = [L]\cup\{\infty\}$. Each
$h \in H$ induces a full-trace verifier
$g_h(x,\tau_{1:L})
=
\min(\{\ell \in [L] : h(x,\tau_{1:\ell})=\no\}\cup\{\infty\})$.
Thus,
$H_{\mathrm{cot}}=\{g_h : h \in H\}$, which is a total class.

Section~\ref{sec:reduction} proves reductions in both directions between these two
online problems.
We now show that both reductions can be viewed as harness-decoder
constructions, in the 0-1 mistake setting.

The harness itself is unchanged in the partial setting:
Algorithm~\ref{alg:trace-decodable-reduction} applies $S$ only to the
total predictors output by the base learner. Partiality only changes which
labeled sequences are realizable, and which decoded base transcripts 
are valid supervision for the base learner.

\begin{theorem}[Promised online-to-online reduction]\label{thm:partial-online-to-online-reduction}
Let $\mathsf P_{\mathrm{base}}$ be a partial and $\mathsf P_{\mathrm{sys}}$ be a total
online prediction problem. Suppose there is a harness $S$ consistent with
a decoder $D$ on the promise in the following sense: for every
$g\in H_{\mathrm{sys}}$ there exists $h_g\in H_{\mathrm{base}}$ such that
for every system input $i$ with $g(i)\neq\star$, writing
\[
D(i,g(i))=((q_1,a_1),\ldots,(q_m,a_m)),
\]
we have $h_g(q_r)=a_r\neq\star$ for every $r\le m$, and the decoded
transcript is exactly a valid execution transcript for the harness:
\[
S(i,((q_1,a_1),\ldots,(q_{r-1},a_{r-1})))=q_r
\quad\text{for every }r\le m,
\qquad
S(i,D(i,g(i)))=g(i).
\]
Let $R_{S,D}$ be the online-to-online reduction defined in
Algorithm~\ref{alg:trace-decodable-reduction}. Then, for every
base learner $A_{\mathrm{base}}:\Gamma_{\mathrm{base}}\to
Y_{\mathrm{base}}^{I_{\mathrm{base}}}$,
\[
\operatorname{MB}(R_{S,D}(A_{\mathrm{base}}),\mathsf P_{\mathrm{sys}})
\le
\operatorname{MB}(A_{\mathrm{base}},\mathsf P_{\mathrm{base}}).
\]
In particular, $\operatorname{MB}(\mathsf P_{\mathrm{sys}})
\le \operatorname{MB}(\mathsf P_{\mathrm{base}})$.
\end{theorem}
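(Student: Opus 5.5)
The plan is to mirror the proof of \Cref{thm:online-to-online-reduction} almost line by line. The only changes concern where realizability of the base transcript comes from, and where the existence of a first disagreement comes from. Both were previously supplied by the total consistency identity $D(i,g(i))=\operatorname{Transcript}_S^{h_g}(i)$. Here they will instead come from the promise-restricted conditions in the statement.

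Fix a realizable system sequence $((i^{(t)},y^{(t)}))_{t\le T}$ with witness $g^\star\in H_{\mathrm{sys}}$. Since $\mathsf P_{\mathrm{sys}}$ is total, $g^\star(i^{(t)})=y^{(t)}\neq\star$ for every $t$. Let $h^\star:=h_{g^\star}\in H_{\mathrm{base}}$ be the base predictor given by the hypothesis. I will prove by induction on $t$ that the base history $B_t$ built by \Cref{alg:trace-decodable-reduction} satisfies three properties:
\begin{enumerate}[label=(\roman*)]
\item every pair $(z,a)\in B_t$ has $h^\star(z)=a\neq\star$, so $B_t$ is realizable in the partial sense;
\item $|B_t|$ equals the number of system mistakes in the first $t$ rounds;
\item $A_{\mathrm{base}}$ errs on each element of $B_t$ when that element is presented in order.
\end{enumerate}
Rounds without a system mistake leave $B_t=B_{t-1}$, so nothing needs to be checked there.

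On a mistake round, write $T^\circ:=D(i^{(t)},y^{(t)})=((q_1,a_1),\ldots,(q_m,a_m))$ and let $\hat T:=\operatorname{Transcript}_S^{\hat h_t}(i^{(t)})$, where $\hat h_t:=A_{\mathrm{base}}(B_{t-1})$ is total. The key step is to show that some $r\le m$ has $\hat h_t(q_r)\neq a_r$. Suppose instead that $\hat h_t(q_r)=a_r$ for all $r\le m$. We run the harness with $\hat h_t$ and argue by induction on $r$, using the hypothesis $S(i,(q_1,a_1),\ldots,(q_{r-1},a_{r-1}))=q_r$. This shows that the $r$-th query of the harness is $q_r$ and the recorded answer is $a_r$. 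After $m$ steps the harness history equals $T^\circ$, and $S(i^{(t)},T^\circ)=g^\star(i^{(t)})=y^{(t)}$, so the harness halts with output $y^{(t)}$. That contradicts $\hat y_t\neq y_t$. Two details need care here. First, the harness must not halt early; it cannot, because at each intermediate history $S$ returns a query $q_r\in I_{\mathrm{base}}$, and the disjoint union rules out a system output. Second, $\hat T$ must agree with $T^\circ$ up to step $r$.

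Now take the least such $r$. The prefixes of $\hat T$ and $T^\circ$ coincide through step $r-1$, so $q_r$ is also the $r$-th query in $\hat T$. This makes $(q_r,a_r)$ exactly the first disagreement pair that the algorithm selects. By hypothesis $h^\star(q_r)=a_r\neq\star$, which gives (i), and $\hat h_t(q_r)\neq a_r$, which gives (iii). Exactly one pair is appended, which gives (ii).

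Summing over rounds, $A_{\mathrm{base}}$ makes $|B_T|$ mistakes on the partially realizable sequence $B_T$. Hence the number of system mistakes is at most $\operatorname{MB}(A_{\mathrm{base}},\mathsf P_{\mathrm{base}})$. Taking the worst case over system sequences, and then the infimum over base learners, gives the claim.

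I expect the main obstacle to be the second paragraph's step: showing that a disagreement exists and is located where the algorithm looks. In the total setting this step used determinism of $S$ together with $T^\circ$ being an execution transcript of $h^\star$. Here $h^\star$ may be undefined off the promise, so the argument must rely only on the stated step-by-step validity of $D$'s output. The simulation must also be run with the total predictor $\hat h_t$, never with $h^\star$.
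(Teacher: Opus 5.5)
Your proposal is correct and follows essentially the same induction as the paper's proof: you fix $h^\star = h_{g^\star}$ and maintain that every appended base example is defined and labeled by $h^\star$ and is a mistake of $A_{\mathrm{base}}$. Your explicit argument that the first disagreement with the decoded transcript $T^\circ$ is a label disagreement on a common query $q_r$, using the step-by-step validity of $D$'s output, just spells out what the paper compresses into ``determinism of the harness implies a first base-label disagreement.''
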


\begin{proof}
The proof is the same induction as in
\Cref{thm:online-to-online-reduction}, with the quantifiers restricted to
defined labels. Fix a realizable system sequence
\[
G_T^{\mathrm{sys}}=((i^{(1)},y^{(1)}),\ldots,(i^{(T)},y^{(T)}))
\]
and choose a partial witness $g^\star\in H_{\mathrm{sys}}$ such that
$g^\star(i^{(t)})=y^{(t)}\neq\star$ for every $t$. By
the promise-consistency assumption, there is $h^\star\in H_{\mathrm{base}}$
such that for each observed system input $i^{(t)}$, the decoded transcript
\[
T^{(t)}:=D(i^{(t)},y^{(t)})
\]
is a valid harness transcript ending in $y^{(t)}$, and every labeled
example in $T^{(t)}$ is defined and labeled by $h^\star$.
Let $B_t$ be the base history produced by
Algorithm~\ref{alg:trace-decodable-reduction} after the first $t$ system
examples. We prove by induction that every labeled example in $B_t$ is
defined and labeled by $h^\star$, and that $A_{\mathrm{base}}$ made a
mistake on each such example.

The claim is trivial at $t=0$. Suppose it holds through $t-1$ and let
$\hat h_t=A_{\mathrm{base}}(B_{t-1})$. If
$S[\hat h_t](i^{(t)})=y^{(t)}$, the algorithm appends nothing. Otherwise,
the decoded transcript
\[
T^\circ:=D(i^{(t)},y^{(t)})
\]
is a valid harness transcript ending in $y^{(t)}$ and is defined under
$h^\star$. Since the learned harness execution ends in the different
output $S[\hat h_t](i^{(t)})$, determinism of the harness implies a first
base-label disagreement between
$\operatorname{Transcript}_{S}^{\hat h_t}(i^{(t)})$ and $T^\circ$. If this
disagreement is $(z,a)$ on the decoded base transcript, then
$h^\star(z)=a\neq\star$ and $\hat h_t(z)\neq a$. Thus the algorithm
appends a defined base example on which the current base predictor errs.

Therefore the number of system mistakes is at most the number of mistakes
made by $A_{\mathrm{base}}$ on the realizable partial base sequence
$B_T$. Taking the worst case over realizable partial system sequences
gives the first inequality, and taking the infimum over base learners gives
the second.
\end{proof}
}

\subsubsection{From prefix verification to full CoT verification}

Here the base problem is prefix verification and the system problem is
full CoT verification,
$P_{\mathrm{base}} = P_{\mathrm{pfx}}$ and
$P_{\mathrm{sys}} = P_{\mathrm{cot}}$.

\paragraph{Harness.}

A base transcript is an element of
$\Gamma_{\mathrm{pfx}}
=
(I_{\mathrm{pfx}}\times Y_{\mathrm{pfx}})^*$.
Given a full trace
$z=(x,\tau_{1:L}) \in I_{\mathrm{cot}}$
and transcript
$\gamma=((q_1,b_1),\ldots,(q_m,b_m))$, 
define the harness
$S_{\mathrm{pfx}\to\mathrm{cot}}
:
I_{\mathrm{cot}} \times \Gamma_{\mathrm{pfx}}
\to
I_{\mathrm{pfx}} \sqcup Y_{\mathrm{cot}}$
as follows.
If $m=0$, the harness queries the first prefix,
$S_{\mathrm{pfx}\to\mathrm{cot}}(z,\emptyset)
=
(x,\tau_{1:1})$.
Suppose now that
$q_j=(x,\tau_{1:j})$ for all $j\le m$.
If some previous answer equals $\no$, let
$j^\star=\min\{j\le m : b_j=\no\}$.
The harness halts and outputs $j^\star$.
If all previous answers are $\yes$ and $m<L$, the harness queries the
next prefix,
$S_{\mathrm{pfx}\to\mathrm{cot}}(z,\gamma)
=
(x,\tau_{1:m+1})$.
Finally, if all previous answers are $\yes$ and $m=L$, the harness
halts and outputs $\infty$.

{
Equivalently, for any total prefix predictor
$\bar h:I_{\mathrm{pfx}}\to Y_{\mathrm{pfx}}$,
\[
S_{\mathrm{pfx}\to\mathrm{cot}}[\bar h](x,\tau_{1:L})
=
\min(
\{\ell\in[L] : \bar h(x,\tau_{1:\ell})=\no\}
\cup
\{\infty\}
).
\]
This is the only way the reduction uses the harness at inference time:
it is always run with the total predictor produced by the base learner.
}

\paragraph{Decoder.}

For a full-trace label
$y\in [L]\cup\{\infty\}$,
define the decoder
$D_{\mathrm{pfx}\to\mathrm{cot}}
:
I_{\mathrm{cot}}\times Y_{\mathrm{cot}}
\to
\Gamma_{\mathrm{pfx}}$
as follows.
If $y\in[L]$, define
\[
D_{\mathrm{pfx}\to\mathrm{cot}}
((x,\tau_{1:L}),y)
=
\bigl(
((x,\tau_{1:1}),\yes),
\ldots,
((x,\tau_{1:y-1}),\yes),
((x,\tau_{1:y}),\no)
\bigr).
\]
If $y=\infty$, define
\[
D_{\mathrm{pfx}\to\mathrm{cot}}
((x,\tau_{1:L}),\infty)
=
\bigl(
((x,\tau_{1:1}),\yes),
\ldots,
((x,\tau_{1:L}),\yes)
\bigr).
\]
Thus, the decoder reconstructs the sequence of prefix-verification
interactions corresponding to the true location of the first error.

{
\paragraph{Consistency.}

Fix any system target $g\in H_{\mathrm{cot}}$. By definition of
$H_{\mathrm{cot}}$, there exists a prefix verifier
$h\in H_{\mathrm{pfx}}$ such that $g=g_h$; this $h$ is the base witness.
For every full trace $z=(x,\tau_{1:L})$, let
$W(z)=D_{\mathrm{pfx}\to\mathrm{cot}}(z,g(z))$. If $g(z)=y\in[L]$,
then $W(z)$ consists of the labels $\yes$ on prefixes $1,\ldots,y-1$ and
$\no$ on prefix $y$, all of which are defined by the partial prefix
verifier $h$. If $g(z)=\infty$, then $W(z)$ consists of the labels $\yes$
on all prefixes, again all defined by $h$. In both cases, running
$W(z)$ through $S_{\mathrm{pfx}\to\mathrm{cot}}$ makes the harness query
exactly those prefixes and halt with output $g(z)$. Thus
$(S_{\mathrm{pfx}\to\mathrm{cot}},
D_{\mathrm{pfx}\to\mathrm{cot}})$ satisfies the promise-consistency
condition of \Cref{thm:partial-online-to-online-reduction}.
Applying \Cref{thm:partial-online-to-online-reduction} therefore gives
$\operatorname{MB}(P_{\mathrm{cot}})
\le
\operatorname{MB}(P_{\mathrm{pfx}})$,
recovering the single-error version our result (Theorem \ref{thm:red1}).
}

\subsubsection{From full CoT verification to prefix verification}

We now recover the reverse reduction. Here the base problem is full CoT
verification and the system problem is prefix verification:
$P_{\mathrm{base}} = P_{\mathrm{cot}}$ and
$P_{\mathrm{sys}} = P_{\mathrm{pfx}}$.
As in Theorem~\ref{thm:red2}, assume there exists a special token
$F\in\Sigma$ such that every verifier rejects $F$ after every correct
prefix:
$h(x,(\tau_{1:\ell-1},F))
=
\no$
for every $h\in H$, every $x\in X$, and every correct prefix
$\tau_{1:\ell-1}$.

\paragraph{Harness.}
A base transcript is an element of
$\Gamma_{\mathrm{cot}}
=
(I_{\mathrm{cot}}\times Y_{\mathrm{cot}})^*$.
Given a prefix instance
$z_{\mathrm{pfx}}=(x,\tau_{1:\ell}) \in I_{\mathrm{pfx}}$,
define its padded full trace by
$\operatorname{pad}_F(x,\tau_{1:\ell})
=
(x,\tau_1,\ldots,\tau_\ell,F,\ldots,F)
\in X\times \Sigma^L$.
Define the harness
$S_{\mathrm{cot}\to\mathrm{pfx}}
:
I_{\mathrm{pfx}}\times \Gamma_{\mathrm{cot}}
\to
I_{\mathrm{cot}}\sqcup Y_{\mathrm{pfx}}$
as follows.
Initially, the harness queries the padded trace:
$S_{\mathrm{cot}\to\mathrm{pfx}}(z_{\mathrm{pfx}},\emptyset)
=
\operatorname{pad}_F(z_{\mathrm{pfx}})$.
After receiving a full-trace label
$\hat y\in[L]\cup\{\infty\}$,
the harness halts and outputs
\[
S_{\mathrm{cot}\to\mathrm{pfx}}
(
z_{\mathrm{pfx}},
((\operatorname{pad}_F(z_{\mathrm{pfx}}),\hat y))
)
=
\begin{cases}
\no,
&
\hat y \le \ell,
\\
\yes,
&
\hat y > \ell.
\end{cases}
\]
Thus, the harness accepts the original prefix iff the padded full trace
survives strictly beyond position $\ell$.
{
For any $h\in H$, let $g_h$ denote its induced full-trace verifier.
Then, on any promised prefix instance, equivalently on any
$(x,\tau_{1:\ell})$ with $h(x,\tau_{1:\ell})\neq\star$,
$S_{\mathrm{cot}\to\mathrm{pfx}}[g_h](x,\tau_{1:\ell})
=
h(x,\tau_{1:\ell})$.
}

\paragraph{Decoder.}
Define
$D_{\mathrm{cot}\to\mathrm{pfx}}
:
I_{\mathrm{pfx}}\times Y_{\mathrm{pfx}}
\to
\Gamma_{\mathrm{cot}}$
as follows.
If the correct prefix label is $\no$, define
$D_{\mathrm{cot}\to\mathrm{pfx}}
((x,\tau_{1:\ell}),\no)
=
(
(\operatorname{pad}_F(x,\tau_{1:\ell}),\ell)
)$.
If the correct prefix label is $\yes$ and $\ell<L$, define
$D_{\mathrm{cot}\to\mathrm{pfx}}
((x,\tau_{1:\ell}),\yes)
=
(
(\operatorname{pad}_F(x,\tau_{1:\ell}),\ell+1)
)$.
Finally, if $\ell=L$, define
$D_{\mathrm{cot}\to\mathrm{pfx}}
((x,\tau_{1:L}),\yes)
=
(
(\operatorname{pad}_F(x,\tau_{1:L}),\infty)
)$.

{
\paragraph{Consistency.}
Fix any system target $h\in H_{\mathrm{pfx}}$ and let $g_h$ denote its
induced full verifier; this $g_h$ is the base witness. Consider any
promised prefix instance $(x,\tau_{1:\ell})$, equivalently any prefix
with $h(x,\tau_{1:\ell})\neq\star$. The decoded transcript contains a
single full-trace example, and its label is defined because $g_h$ is a
total full CoT verifier. If $h(x,\tau_{1:\ell})=\no$, the promise implies
that the padded trace first fails at position $\ell$, so the decoded label
is $\ell$. If $h(x,\tau_{1:\ell})=\yes$ and $\ell<L$,
token $F$ induces a first faulty step at position $\ell+1$. If
$h(x,\tau_{1:L})=\yes$, the padded trace is the original full trace and
the decoded label is $\infty$. In each case, running the decoded
transcript through $S_{\mathrm{cot}\to\mathrm{pfx}}$ makes the harness halt
with the prefix label $h(x,\tau_{1:\ell})$. Thus
$(S_{\mathrm{cot}\to\mathrm{pfx}},
D_{\mathrm{cot}\to\mathrm{pfx}})$ satisfies the promise-consistency
condition of \Cref{thm:partial-online-to-online-reduction}, assuming the
existence of token $F \in \Sigma$.
Applying \Cref{thm:partial-online-to-online-reduction} yields
$\operatorname{MB}(P_{\mathrm{pfx}})
\le
\operatorname{MB}(P_{\mathrm{cot}})$,
recovering the single-error version of our result  (Theorem \ref{thm:red2}).
}

\subsection{Chain-of-Thought Generation via Sequential Harnessing}\label{subsec:online-generations}

We now show how the sequential harnessing framework can be used to
derive generalization guarantees for autoregressive Chain-of-Thought
(CoT) generation. The key idea is that a mistake in a generated
trajectory must correspond to a mistake in next-token prediction. Sequential harnessing therefore converts
online learnability of next-token prediction into online learnability of
entire reasoning trajectories.

\paragraph{Setup.}

Let $\Sigma$ be a finite token alphabet and let
$F \subseteq \Sigma^{\Sigma^\ast}$ be a class of next-token predictors.
For a predictor $f \in F$ and prompt $x \in \Sigma^\ast$, define the
autoregressive trajectory generated by $f$ recursively as follows.
Starting from prompt $x$, define $y_1 = f(x)$ and, for $s \ge 2$,
define $y_s = f(x,y_{1:s-1})$.
The induced length-$T$ chain-of-thought generator is the map
$g_f : \Sigma^\ast \to \Sigma^T$ defined by
$g_f(x) = (y_1,\ldots,y_T)$.
We denote the induced trajectory class by
$F^{\mathrm{CoT}-T} = \{g_f : f \in F\}$.
Our goal is to learn the trajectory class
$F^{\mathrm{CoT}-T}$.

\paragraph{Base and system prediction problems.}
Define the base prediction problem
$P_{\mathrm{tok}}
=
(I_{\mathrm{tok}},Y_{\mathrm{tok}},H_{\mathrm{tok}})$
by
$I_{\mathrm{tok}} = \Sigma^\ast$,
$Y_{\mathrm{tok}} = \Sigma$,
and
$H_{\mathrm{tok}} = F$.
Thus, a base instance is a token prefix and the label is the next token.
Define the system prediction problem
$P_{\mathrm{CoT}}
=
(I_{\mathrm{CoT}},Y_{\mathrm{CoT}},H_{\mathrm{CoT}})$
by
$I_{\mathrm{CoT}} = \Sigma^\ast$,
$Y_{\mathrm{CoT}} = \Sigma^T$,
and
$H_{\mathrm{CoT}} = F^{\mathrm{CoT}-T}$.
Thus, a system predictor outputs an entire length-$T$ reasoning
trajectory.

\paragraph{Harness construction.}
Let
$\Gamma_{\mathrm{tok}}
=
(I_{\mathrm{tok}}\times Y_{\mathrm{tok}})^\ast$.
Given a prompt $x$ and transcript
$\gamma
=
((q_1,a_1),\ldots,(q_m,a_m))
\in
\Gamma_{\mathrm{tok}}$,
define the harness
$S_{\mathrm{CoT}}
:
I_{\mathrm{CoT}}
\times
\Gamma_{\mathrm{tok}}
\to
I_{\mathrm{tok}}
\sqcup
Y_{\mathrm{CoT}}$
as follows.
If $m<T$, the harness outputs the next-token prediction query
$S_{\mathrm{CoT}}(x,\gamma)
=
(x,a_{1:m})$.
If $m=T$, the harness halts and outputs the trajectory
$S_{\mathrm{CoT}}(x,\gamma)
=
(a_1,\ldots,a_T)$.
Thus, the harness repeatedly queries the current prefix, appends the
predicted next token, and after $T$ rounds outputs the full generated
trajectory.
For every $f \in F$, the induced system predictor satisfies
$S_{\mathrm{CoT}}[f](x)
=
g_f(x)$.

\paragraph{Decoder construction.}
Given a target trajectory
$y=(y_1,\ldots,y_T)\in\Sigma^T$,
define the decoder
$D_{\mathrm{CoT}}
:
I_{\mathrm{CoT}}
\times
Y_{\mathrm{CoT}}
\to
\Gamma_{\mathrm{tok}}$
by
$D_{\mathrm{CoT}}(x,y)
=
\bigl(
((x),y_1),
((x,y_1),y_2),
\ldots,
((x,y_{1:T-1}),y_T)
\bigr)$.
Thus, the decoder converts a supervised chain-of-thought trajectory into
the corresponding sequence of next-token supervision examples.

\paragraph{Consistency.}
Fix any system predictor
$g_f \in H_{\mathrm{CoT}}$ induced by some
$f \in F$. 
Then for every prompt $x$,
$S_{\mathrm{CoT}}[f](x)
=
g_f(x)$.
Moreover, the transcript generated by the harness while interacting with
$f$ on prompt $x$ is exactly
$D_{\mathrm{CoT}}(x,g_f(x))$.
Hence
$(S_{\mathrm{CoT}},D_{\mathrm{CoT}})$
is a consistent harness-decoder pair.
Applying the sequential harnessing theorem therefore yields the
following result.

\begin{theorem}[Online CoT generation via harnessing]\label{thm:cot-generation}
Let $F \subseteq \Sigma^{\Sigma^\ast}$ be a next-token prediction class
and let
$F^{\mathrm{CoT}-T}$
denote the induced length-$T$ autoregressive trajectory class.
Then
$MB(F^{\mathrm{CoT}-T})
\le
MB(F)$.
In particular, if $\ldim(F)=d<\infty$, then there exists an online
learner for
$F^{\mathrm{CoT}-T}$
that makes at most $d$ mistakes, independent of the trajectory length
$T$.
\end{theorem}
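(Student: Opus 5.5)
The plan is to obtain \Cref{thm:cot-generation} as a direct instance of \Cref{thm:online-to-online-reduction}, with $\mathsf P_{\mathrm{base}}=P_{\mathrm{tok}}$ and $\mathsf P_{\mathrm{sys}}=P_{\mathrm{CoT}}$. The harness $S_{\mathrm{CoT}}$ and the decoder $D_{\mathrm{CoT}}$ have already been constructed above. So the remaining work is three steps: verify harness--decoder consistency carefully, invoke the reduction, and plug in Littlestone's SOA for the quantitative statement.

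\textbf{Step 1: consistency.} Fix $g\in H_{\mathrm{CoT}}$ and choose $f\in F$ with $g=g_f$; take $h_g:=f$. I will prove by induction on $m\le T$ that running the harness with $f$ on prompt $x$ produces the transcript prefix
\[
\gamma_m=\bigl(((x),y_1),\ldots,((x,y_{1:m-1}),y_m)\bigr),
\]
where $y_s$ are the autoregressive tokens defining $g_f(x)$.
\begin{itemize}
\item The base case $m=0$ is the empty transcript.
\item For the inductive step, the harness queries $(x,a_{1:m})=(x,y_{1:m})$ and receives $f(x,y_{1:m})=y_{m+1}$.
\end{itemize}
At $m=T$ the harness halts and outputs $(y_1,\ldots,y_T)=g_f(x)$. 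This gives both $S_{\mathrm{CoT}}[f](x)=g(x)$ and $\operatorname{Transcript}^{f}_{S_{\mathrm{CoT}}}(x)=\gamma_T=D_{\mathrm{CoT}}(x,g(x))$. Termination after exactly $T$ queries is immediate, so the standing finiteness assumption on the harness holds.

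\textbf{Step 2: apply the reduction.} For every base learner $A_{\mathrm{base}}$, \Cref{thm:online-to-online-reduction} gives
\[
\operatorname{MB}(R_{S_{\mathrm{CoT}},D_{\mathrm{CoT}}}(A_{\mathrm{base}}),P_{\mathrm{CoT}})\le \operatorname{MB}(A_{\mathrm{base}},P_{\mathrm{tok}}).
\]
Taking the infimum over $A_{\mathrm{base}}$ yields $\operatorname{MB}(F^{\mathrm{CoT}-T})\le \operatorname{MB}(F)$.

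\textbf{Step 3: the Littlestone bound.} Here $P_{\mathrm{tok}}$ is a multiclass problem with label set $\Sigma$. If $\ldim(F)=d$, understood as the multiclass Littlestone dimension of \citet{daniely2011multiclass} recalled in Appendix~\ref{app:mistake-tree}, then the multiclass SOA makes at most $d$ mistakes on any realizable sequence. Feeding this learner into $R_{S_{\mathrm{CoT}},D_{\mathrm{CoT}}}$ gives a trajectory learner with at most $d$ mistakes. Nothing in the bound depends on $T$, since each system mistake is charged to exactly one base mistake, namely the first token at which the learned rollout deviates from the target rollout.

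\textbf{Main obstacle.} The only delicate point is interpretive. We must ensure that $\ldim$ here means the multiclass Littlestone dimension with SOA mistake bound equal to it. We must also check that the realizable base sequence produced by the reduction is realizable by $f^\star$ on the \emph{target's} prefixes. This holds because the appended example $((x,y_{1:s-1}),y_s)$ comes from the decoded true trajectory, not from the learner's rollout: the first disagreement occurs at a query whose prefix is shared by both rollouts, so the label comes from $f^\star$. Everything else is bookkeeping already handled by \Cref{thm:online-to-online-reduction}.
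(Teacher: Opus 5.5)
Your proposal is correct and follows essentially the same route as the paper. The paper also invokes the sequential harnessing theorem through the consistency of $(S_{\mathrm{CoT}},D_{\mathrm{CoT}})$, which you verify more explicitly by induction on the rollout; it charges each trajectory mistake to the next-token mistake on the shared prefix $(x,y_{1:s^\star-1})$ at the first diverging position; and it finishes with the multiclass SOA bound of $d$.
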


\begin{proof}
The mistake bound follows immediately from the sequential harnessing
theorem and the consistency of
$(S_{\mathrm{CoT}},D_{\mathrm{CoT}})$.
Concretely, suppose the learner predicts trajectory
$\hat y=(\hat y_1,\ldots,\hat y_T)$
while the target trajectory is
$y=(y_1,\ldots,y_T)$.
If $\hat y \neq y$, let
$s^\star
=
\min\{s : \hat y_s \neq y_s\}$.
By definition of $s^\star$, the prefixes agree before time $s^\star$:
$\hat y_{1:s^\star-1}
=
y_{1:s^\star-1}$.

Therefore, the trajectory mistake exposes a next-token
prediction mistake on the prefix
$(x,y_{1:s^\star-1})$:
$\hat f(x,y_{1:s^\star-1})
=
\hat y_{s^\star}
\neq
y_{s^\star}
=
f^\star(x,y_{1:s^\star-1})$.
Thus, every trajectory-level mistake induces a base-level next-token
mistake, or,
$MB(F^{\mathrm{CoT}-T})
\le
MB(F)$.

Finally, if $\ldim(F)=d$, the Standard Optimal Algorithm for online
realizable multiclass prediction makes at most $d$ mistakes on $F$,
implying the same mistake bound for
$F^{\mathrm{CoT}-T}$.
\end{proof}

We now obtain a PAC-learning consequence via the standard
online-to-batch conversion.

\begin{corollary}[PAC learnability of CoT generation]\label{cor:generation}
Suppose $\ldim(F)=d<\infty$. Then the autoregressive trajectory class
$F^{\mathrm{CoT}-T}$ is PAC learnable with sample complexity
$O\!\left(
\frac{
d\log(1/\epsilon)
+
\log(1/\delta)
}{
\epsilon
}
\right)$,
independent of the trajectory length $T$.
\end{corollary}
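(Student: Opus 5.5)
Having \Cref{thm:cot-generation}, I would combine the length-independent mistake bound with a standard online-to-batch conversion, applied at the trajectory level, i.e.\ to the system problem $P_{\mathrm{CoT}}$ directly. The trajectory class $F^{\mathrm{CoT}-T}$ is a multiclass class with label space $\Sigma^T$, possibly huge, and the whole point is to avoid any dependence on $T$ or $|\Sigma|^T$. The online-to-batch conversion of \citet{littlestone:1989} depends only on the mistake bound, not on the label space, so it fits. By \Cref{thm:cot-generation}, the learner $A_{\mathrm{sys}}=R_{S_{\mathrm{CoT}},D_{\mathrm{CoT}}}(\mathrm{SOA})$ makes at most $d$ mistakes on every sequence realizable by $F^{\mathrm{CoT}-T}$. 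We may also take it to be conservative: it changes its predictor only on mistakes, since \Cref{alg:trace-decodable-reduction} appends to $B_t$ only on system mistakes.

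The conversion runs in two stages. First, draw i.i.d.\ prompts $x\sim D$ labeled by the target $g_{f^\star}(x)$; realizability holds because every labeled sample comes from a single $g_{f^\star}$. Feed these to $A_{\mathrm{sys}}$ and record the at most $d+1$ distinct hypotheses $\hat g_0,\dots,\hat g_d$ it passes through. Second, use the mistake-driven argument. Any hypothesis with error above $\epsilon$ survives $k$ consecutive fresh examples with probability at most $(1-\epsilon)^k$, and the hypothesis sequence can change at most $d$ times. So after $O(d\log(1/\epsilon)/\epsilon+\log(1/\delta)/\epsilon)$ examples, some hypothesis has been retained for a run of length $k=\lceil \ln((d+1)/\delta)/\epsilon\rceil$ with no mistake. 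A union bound over the $\le d+1$ hypotheses then shows that such a hypothesis has error at most $\epsilon$ with probability $1-\delta$.

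To reach exactly the stated rate $O((d\log(1/\epsilon)+\log(1/\delta))/\epsilon)$ rather than $O(d\log(d/\delta)/\epsilon)$, I would instead use the longest-survivor/hold-out variant. Run on $O(d/\epsilon)$ samples, collect the $\le d+1$ candidates, and select on a held-out set of size $O(\log(d/\delta)/\epsilon)$ via Chernoff, as in \Cref{thm:wrapper}. The $\log(1/\epsilon)$ factor absorbs the $\log d$ term, using $d/\epsilon\ge d$. Alternatively, one can appeal to the compression-scheme bound: a conservative mistake-bounded learner yields a sample compression scheme of size $d$, giving error $O((d\log(m)+\log(1/\delta))/m)$, which inverts to the stated bound.

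The main obstacle is bookkeeping, namely getting the exact logarithmic form without a spurious $\log d$ or $T$ factor. I expect the compression-scheme route to be the cleanest. The compression set is the set of at most $d$ system examples on which $A_{\mathrm{sys}}$ erred, and rerunning on those alone reproduces the final hypothesis by conservativeness. The standard Littlestone--Warmuth bound then gives the claimed complexity. The reconstruction must make no reference to $T$, and it does not, since the hypothesis is $S_{\mathrm{CoT}}[\hat h]$.
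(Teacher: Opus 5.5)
Your proposal follows the paper's argument: take the $T$-independent bound $\operatorname{MB}(F^{\mathrm{CoT}-T})\le d$ from \Cref{thm:cot-generation} and apply a standard realizable online-to-batch conversion, which the paper invokes without naming one. Your Littlestone hold-out variant is a correct instantiation, provided the first stage uses $O((d+\log(1/\delta))/\epsilon)$ samples rather than $O(d/\epsilon)$, since $O(d/\epsilon)$ alone does not give confidence $1-\delta$. With that fix it yields $O((d+\log(1/\delta))/\epsilon)$, which already implies the stated rate.

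The compression route is less clean than you expect, for two reasons. First, a single pass need not produce a hypothesis consistent with the whole sample, so you must cycle through the sample until a pass has no mistakes. Second, replaying mistakes through $R_{S_{\mathrm{CoT}},D_{\mathrm{CoT}}}$ is order-dependent, so the Littlestone--Warmuth union bound runs over ordered sequences. Its $d\log m$ term then only inverts to $O((d\log(d/\epsilon)+\log(1/\delta))/\epsilon)$, not the stated bound. You can recover the stated bound with an order-free reconstruction, for example running SOA on the version space cut out by the compressed set.
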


\begin{proof}
By the previous theorem,
$MB(F^{\mathrm{CoT}-T}) \le d$.
Applying the standard realizable online-to-batch conversion for finite
mistake-bound classes yields the desired sample complexity bound.
\end{proof}

\noindent\emph{Comparison with prior and concurrent work.}
The work of \citet{joshi2025theory} introduced the autoregressive CoT learning model and gave general PAC guarantees based on ERM (consistency in the realizable setting). 
After the standard online-to-batch conversion, \Cref{cor:generation} gives the analogous CoT-supervised guarantee with \(\ldim(F)\) in place of \(\ldim(F)\log T\), which appears in their corresponding sample complexity bound. Notably, our algorithm is not ERM-based and instead runs a realizable online learner for \(F\) which updates only on the first wrong next-token prediction. Length-independent CoT bounds were also obtained concurrently by \citet{doronarad2026theoryonlinelearningautoregressive} in their online learning framework, as well as by \citet{hanneke2026samplecomplexityautoregressive} via a stable sample compression approach.

\subsection{Improving Weak Generators as Sequential Harnessing}
We now reinterpret the verifier-assisted generator framework of
Section~\ref{sec:boost} in the language of sequential harnessing.
The
base learner is now a verifier rather than a predictor of the next
reasoning step. The harness uses the verifier interactively in order to
search for a correct reasoning trajectory generated by one or more weak
generators.

Let $X$ denote a space of problem instances and let $\Sigma$ denote a
space of reasoning steps. Fix a horizon $L$. Let
$H \subseteq \{\yes,\no\}^{X\times\Sigma^\ast}$ denote a class of
prefix verifiers. Recall that a verifier
$h(x,\tau_{1:\ell})$ predicts whether the last reasoning step
$\tau_\ell$ is valid assuming the preceding prefix is correct.
Suppose we are also given a collection of generators
$\mathcal G=\{G_1,\ldots,G_N\}$. Given a problem instance $x$ and a
correct prefix $\tau_{1:\ell}$, generator $G_i$ generates a distribution
over candidate next reasoning steps.
The key assumption is that the generator collection is
$\alpha$-good, i.e., for every correct prefix, at least one generator produces a
correct next step with probability at least $\alpha$.
The goal is now to construct a strong generator, given access to a ``perfect'' prefix verifier.

\paragraph{Base and system problems.}

The base problem is prefix verification with instance
space  \(I_{\mathrm{ver}}=X\times\Sigma^{\le L}\), label space
\(Y_{\mathrm{ver}}=\{\yes,\no\}\), and verifier class
\(H_{\mathrm{ver}}=H\subseteq
(Y_{\mathrm{ver}}\cup\{\star\})^{I_{\mathrm{ver}}}\). Let
\(\mathsf P_{\mathrm{ver}}=(I_{\mathrm{ver}},Y_{\mathrm{ver}},
H_{\mathrm{ver}})\).
Let
\(\Gamma_{\mathrm{ver}}=(I_{\mathrm{ver}}\times
Y_{\mathrm{ver}})^\ast\) be the verifier-transcript space. The system
problem is the trace-generation interaction viewed through this
transcript: \(I_{\mathrm{gen}}=X\times\Omega\) and
\(Y_{\mathrm{gen}}=\Gamma_{\mathrm{ver}}\). Here
\(\omega\in\Omega\) is a random seed fixing the generator samples for every
possible prefix, generator index, and iteration counter that
Algorithm~\ref{alg:weak-to-strong} could request. Thus, after
conditioning on \(\omega\), the generator wrapper is deterministic for
every verifier transcript, not only for the transcript generated by the
perfect verifier.

\paragraph{Harness construction.}

Fix the set of generators \(\mathcal G\) and the order in which
Algorithm~\ref{alg:weak-to-strong} queries the generators. The harness
\(S_{\mathrm{gen}}:I_{\mathrm{gen}}\times\Gamma_{\mathrm{ver}}\to
I_{\mathrm{ver}}\sqcup Y_{\mathrm{gen}}\) simulates the deterministic execution of 
Algorithm~\ref{alg:weak-to-strong} with random choices fixed by \(\omega\).
Given a system input \((x,\omega)\) and a partial labeled verifier
transcript \(\gamma\), the harness the harness runs this simulation from the beginning, supplying the verifier responses recorded in
\(\gamma\). If the next action is a verifier query
\(q\in I_{\mathrm{ver}}\), the harness outputs \(q\). If the simulation
halts after consuming \(\gamma\), the harness outputs the completed
transcript \(\gamma\). On inconsistent transcripts, define the harness
arbitrarily. This definition is unchanged by partiality: at inference
time the harness is run with the total predictors output by the base
learner.

For every target verifier \(h\in H_{\mathrm{ver}}\), the target execution
only queries promised prefixes: rejected candidates are tested after the
current accepted prefix, and accepted candidates become the next correct
prefix. Hence the induced target transcript predictor
\(g_h:I_{\mathrm{gen}}\to Y_{\mathrm{gen}}\) is well-defined by
\(g_h(x,\omega)=
\operatorname{Transcript}_{S_{\mathrm{gen}}}^{h}(x,\omega)\). Define
\(H_{\mathrm{gen}}:=\{g_h:h\in H_{\mathrm{ver}}\}\) and
\(\mathsf P_{\mathrm{gen}}:=(I_{\mathrm{gen}},Y_{\mathrm{gen}},
H_{\mathrm{gen}})\).

\paragraph{Decoder construction.}

Since the system label is already the labeled verifier transcript, the
decoder \(D_{\mathrm{gen}}:I_{\mathrm{gen}}\times Y_{\mathrm{gen}}\to
\Gamma_{\mathrm{ver}}\) is simply
\(D_{\mathrm{gen}}((x,\omega),\gamma)=\gamma\). Here the transcript contains both the candidate steps accepted by the
verifier and the candidate steps rejected by the verifier, so it gives
the base learner the full transcript needed after a system-level
mistake.

\paragraph{Consistency.}

Fix any system target \(g\in H_{\mathrm{gen}}\). By definition of
\(H_{\mathrm{gen}}\), there exists a verifier
\(h_g\in H_{\mathrm{ver}}\) such that \(g=g_{h_g}\). For every system
input \((x,\omega)\), the identity decoder gives
\(D_{\mathrm{gen}}((x,\omega),g(x,\omega))
=g(x,\omega)
=\operatorname{Transcript}_{S_{\mathrm{gen}}}^{h_g}(x,\omega)\).
Every query in this decoded transcript is defined and labeled by
\(h_g\), so \((S_{\mathrm{gen}},D_{\mathrm{gen}})\) satisfies the
promise-consistency condition. Applying
\Cref{thm:partial-online-to-online-reduction}
gives \(\operatorname{MB}(\mathsf P_{\mathrm{gen}})\le
\operatorname{MB}(\mathsf P_{\mathrm{ver}})\).
Equivalently, whenever the induced transcript predictor makes a mistake,
the two deterministic executions have a first verifier query on which the
current verifier predictor disagrees with the target verifier.

\paragraph{Batch transcript learning algorithm.}

Let \(h^\star\) denote the perfect verifier and write
\(g^\star:=g_{h^\star}\). Let \(\mu_\Omega\) be the seed distribution
for the seeded execution underlying \(S_{\mathrm{gen}}\): a seed
\(\omega\sim\mu_\Omega\) fixes all generator samples used by the wrapper.
For each pair \((x,\omega)\), the transcript label is
\(g^\star(x,\omega)\). Operationally, this label is obtained before
learning by running the seeded scaffold on \((x,\omega)\), using
\(h^\star\) to answer verifier queries, and recording the resulting
labeled verifier transcript. This is well-defined even when the wrapper
later times out; the label is the transcript, not a successful reasoning trace.
Let \(\operatorname{Post}:Y_{\mathrm{gen}}\to
\Sigma^L\cup\{\textsf{idk}\}\) denote the deterministic map that reads a
completed verifier transcript and returns the final trace or abstention
produced by the corresponding wrapper execution.

Thus the batch data are ordinary supervised transcript examples. Write
\[
\mathcal S_{\mathrm{tr}}
=
\bigl(((x_s,\omega_s),y_s)\bigr)_{s=1}^{n_1},
\qquad
\mathcal V_{\mathrm{tr}}
=
\bigl(((x'_r,\omega'_r),y'_r)\bigr)_{r=1}^{n_2},
\]
where \((x_s,\omega_s)\) and \((x'_r,\omega'_r)\) are independent draws
from \(D\times\mu_\Omega\), and
\(y_s=g^\star(x_s,\omega_s)\),
\(y'_r=g^\star(x'_r,\omega'_r)\). The algorithm below receives only
these labeled datasets, not \(h^\star\).

\begin{algorithm}[H]
\caption{Batch imitation-learning transcript selection}
\label{alg:batch-transcript-proof-generation}
\DontPrintSemicolon
\LinesNumbered
\SetNoFillComment
\KwIn{conservative online transcript learner \(A_{\mathrm{gen}}\);
training transcript sample \(\mathcal S_{\mathrm{tr}}\); validation
transcript sample \(\mathcal V_{\mathrm{tr}}\)}
\KwOut{seeded generator \(\hat G\)}
Set \(n_1:=|\mathcal S_{\mathrm{tr}}|\)\;
For each \(s\in[n_1+1]\), let
\(\hat g_s:=A_{\mathrm{gen}}(\mathcal S_{\mathrm{tr},<s})\), where
\(\mathcal S_{\mathrm{tr},<s}\) is the prefix of
\(\mathcal S_{\mathrm{tr}}\) of length \(s-1\)\;
Let \(\hat s\in[n_1+1]\) minimize the empirical trace error of
\(\hat g_s\) on \(\mathcal V_{\mathrm{tr}}\)\;
\Return the seeded generator \(\hat G\) defined by
\(\hat G(x,\omega):=
\operatorname{Post}(\hat g_{\hat s}(x,\omega))\)\;
\end{algorithm}

\paragraph{Imitation-learning interpretation.}

The setting of \Cref{alg:batch-transcript-proof-generation} is an
imitation-learning version of the transcript-level system problem, in the
same spirit as \Cref{subsec:imitation-harnessing}. The training data are
completed ground-truth transcripts generated before learning, rather than
on-policy supervision collected from the verifier learner's own
interaction. This makes the data assumption simpler than in the main
generator-boosting result: the learner only needs supervised transcript
examples \(((x,\omega),g^\star(x,\omega))\). The price is that the
guarantee is weaker. It controls the probability that the learned
transcript predictor differs from the ground-truth transcript predictor
\(g^\star\), and hence gives a single aggregate end-to-end failure bound.
It does not separate soundness and completeness errors, as the main result
in \Cref{sec:boost} does.

\begin{theorem}[End-to-end batch guarantee for transcript harnessing]
\label{thm:batch-transcript-proof-generation}
Suppose \(h^\star\in H_{\mathrm{ver}}\) is a perfect verifier and the
generator collection \(\mathcal G\) is \((\alpha,\gamma)\)-good for
distribution \(D\). Let \(A_{\mathrm{ver}}\) be a conservative online
verifier learner with
\(\operatorname{MB}(A_{\mathrm{ver}},\mathsf P_{\mathrm{ver}})\le
\bar M\), and let
\(A_{\mathrm{gen}}:=R_{S_{\mathrm{gen}},D_{\mathrm{gen}}}
(A_{\mathrm{ver}})\) be the induced online transcript learner. Let
\(\mathcal S_{\mathrm{tr}}\) and \(\mathcal V_{\mathrm{tr}}\) be
independent transcript datasets as defined above, with
\[
|\mathcal S_{\mathrm{tr}}|
=
\Theta\!\left(
\frac{\bar M+\log(1/\delta_{\mathrm{tr}})}
{\epsilon_{\mathrm{tr}}}
\right),
\qquad
|\mathcal V_{\mathrm{tr}}|
=
\Theta\!\left(
\frac{1}{\epsilon_{\mathrm{tr}}}
\log\frac{\bar M+1}{\delta_{\mathrm{tr}}}
\right).
\]
Run Algorithm~\ref{alg:batch-transcript-proof-generation} with
\(A_{\mathrm{gen}},\mathcal S_{\mathrm{tr}},\mathcal V_{\mathrm{tr}}\),
with the constants hidden above chosen sufficiently large, and with
\[
N=\left\lceil
\frac{1}{\alpha}\log\frac{NL}{\delta}
\right\rceil
\]
rounds at each step, where each round samples all \(N\) generators.
Then, with probability at least \(1-\delta_{\mathrm{tr}}\) over the
draw of \(\mathcal S_{\mathrm{tr}}\) and \(\mathcal V_{\mathrm{tr}}\),
the transcript predictor \(\hat g_{\hat s}\) selected by the algorithm
has trace error at most \(\epsilon_{\mathrm{tr}}\):
\[
\Pr_{x\sim D,\omega\sim\mu_\Omega}
\left[
\hat g_{\hat s}(x,\omega)\neq g^\star(x,\omega)
\right]
\le
\epsilon_{\mathrm{tr}}.
\]
Consequently, with the same probability over the draw of
\(\mathcal S_{\mathrm{tr}}\) and \(\mathcal V_{\mathrm{tr}}\),
\[
\Pr_{x\sim D,\omega\sim\mu_\Omega}
\left[
\hat G(x,\omega)\text{ is not a correct reasoning trace}
\right]
\le
\epsilon_{\mathrm{tr}}+(1-\gamma)+\delta.
\]
\end{theorem}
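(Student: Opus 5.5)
The proof splits into two parts. The first is a standard mistake-bound-to-PAC conversion for the transcript learner $A_{\mathrm{gen}}$. The second is an error decomposition linking transcript error to trace failure.

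\textbf{Mistake bound and candidate selection.} By the promise-consistency of $(S_{\mathrm{gen}},D_{\mathrm{gen}})$ and \Cref{thm:partial-online-to-online-reduction}, $A_{\mathrm{gen}}=R_{S_{\mathrm{gen}},D_{\mathrm{gen}}}(A_{\mathrm{ver}})$ makes at most $\bar M$ mistakes on any sequence realizable by $g^\star$. Since $A_{\mathrm{ver}}$ is conservative and $R_{S,D}$ only appends base examples on system mistakes, $A_{\mathrm{gen}}$ is conservative too. Hence the sequence $\hat g_1,\dots,\hat g_{n_1+1}$ contains at most $\bar M+1$ distinct predictors, and each one persists for a run of consecutive training examples.

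\textbf{Existence of a good candidate.} Apply Littlestone's martingale argument to the i.i.d. training stream $\mathcal S_{\mathrm{tr}}$. Suppose every $\hat g_s$ had true error $\Pr_{D\times\mu_\Omega}[\hat g_s\neq g^\star]>\epsilon_{\mathrm{tr}}/2$. The indicators $\mathbf 1\{\hat g_s(x_s,\omega_s)\neq y_s\}$ have conditional means above $\epsilon_{\mathrm{tr}}/2$, given the past, because $\hat g_s$ depends only on $\mathcal S_{\mathrm{tr},<s}$. With $n_1=\Theta((\bar M+\log(1/\delta_{\mathrm{tr}}))/\epsilon_{\mathrm{tr}})$, a Freedman/Chernoff bound for martingales gives more than $\bar M$ mistakes with probability $1-\delta_{\mathrm{tr}}/2$. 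That contradicts the mistake bound, so some $\hat g_s$ has error at most $\epsilon_{\mathrm{tr}}/2$.

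\textbf{Validation step.} On the fresh sample $\mathcal V_{\mathrm{tr}}$, use multiplicative Chernoff bounds with a union bound over at most $\bar M+1$ distinct candidates. Then, with probability $1-\delta_{\mathrm{tr}}/2$, every candidate with error $>\epsilon_{\mathrm{tr}}$ has empirical error $>3\epsilon_{\mathrm{tr}}/4$. Likewise the good candidate has empirical error $\le 3\epsilon_{\mathrm{tr}}/4$. So the empirical minimizer $\hat g_{\hat s}$ has error $\le\epsilon_{\mathrm{tr}}$. Here $|\mathcal V_{\mathrm{tr}}|=\Theta(\epsilon_{\mathrm{tr}}^{-1}\log((\bar M+1)/\delta_{\mathrm{tr}}))$ suffices. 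I expect the main obstacle to be the martingale step, since the predictors are data dependent. I would handle it exactly as in \citet{littlestone:1989}, conditioning on the prefix.

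\textbf{Transcript error to trace failure.} Since $\hat G=\operatorname{Post}\circ\hat g_{\hat s}$, the generator fails only if one of two events happens. The first is $\hat g_{\hat s}(x,\omega)\neq g^\star(x,\omega)$, which has probability $\le\epsilon_{\mathrm{tr}}$. The second is that $\operatorname{Post}(g^\star(x,\omega))$, the wrapper run with the perfect verifier $h^\star$, fails to return a correct trace. With $h^\star$, every accepted step is correct, so any failure is a timeout. A timeout requires either $x$ outside the $\alpha$-good set, which has mass $\le 1-\gamma$, or, for $\alpha$-good $x$, some step $\ell$ at which all $\lceil\alpha^{-1}\ln(NL/\delta)\rceil$ rounds fail. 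All verdicts come from $h^\star$, so every transcript along the way is oracle-consistent and $\alpha$-goodness applies at every step. Each round therefore succeeds with probability $\ge\alpha$, independently over the seed, and a step fails with probability $\le(1-\alpha)^{\alpha^{-1}\ln(NL/\delta)}\le\delta/(NL)$. A union bound over the $L$ steps gives at most $\delta$. Summing the three terms yields $\epsilon_{\mathrm{tr}}+(1-\gamma)+\delta$.
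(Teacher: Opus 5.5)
Your proposal is correct and takes the same route as the paper's proof. It uses the mistake bound $\bar M$ for $A_{\mathrm{gen}}$ from the promised online-to-online reduction, then Littlestone's mistake-bound-to-PAC conversion with validation over the at most $\bar M+1$ distinct conservative candidates, and finally splits a failure of $\hat G$ into a transcript mismatch or a timeout of the perfect-verifier wrapper, bounded by $(1-\gamma)+\delta$ using $\alpha$-goodness on oracle-consistent transcripts and a union bound over the $L$ steps. You spell out the martingale and Chernoff details that the paper cites as standard. One phrase should be tightened: each round succeeds with probability at least $\alpha$ conditionally on the preceding rounds (the transcript so far), not independently, and that conditional bound is all the product estimate $(1-\alpha)^{\lceil \alpha^{-1}\ln(NL/\delta)\rceil}$ needs.
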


\begin{proof}
The labeled transcript examples are realizable by the system target
\(g^\star\). By the promised online-to-online reduction above, the
online transcript learner \(A_{\mathrm{gen}}\) has mistake bound at most
\(\bar M\) on every realizable transcript sequence.
The standard mistake-bound-to-PAC conversion, followed by validation over
the candidate sequence \(\hat g_1,\ldots,\hat g_{n_1+1}\), which has at
most \(\bar M+1\) distinct elements by conservativeness, gives with
probability at least \(1-\delta_{\mathrm{tr}}\) the displayed trace-error
bound for \(\hat g_{\hat s}\).

It remains to translate transcript accuracy into reasoning trace accuracy. If
\(\hat g_{\hat s}(x,\omega)=g^\star(x,\omega)\), then the
learned transcript predictor and the perfect-verifier transcript have the
same postprocessed final output. Thus a learned wrapper failure is
contained in the union of the trace-error event and
the event that the perfect-verifier wrapper itself does not produce a
correct reasoning trace.

On an \(\alpha\)-good instance, a fixed sampling round fails to include a
correct continuation with probability at most \(1-\alpha\). With the
choice of \(N\) above, a fixed step times out with probability at
most \((1-\alpha)^N\le \delta/(NL)\). A union bound over the \(L\)
steps gives failure probability at most \(\delta/N\le
\delta\) on the good set, while the non-good set has probability at
most \(1-\gamma\). Therefore the perfect-verifier wrapper fails with
probability at most \((1-\gamma)+\delta\). Combining this with the
trace-error bound gives the end-to-end guarantee.
\end{proof}

\paragraph{Connection to Section~\ref{sec:boost}.}

This example is closely related to the verifier-assisted generator
construction in Section~\ref{sec:boost}. 
However, there are several important conceptual differences.

First, the present harnessing formulation does not directly learn a
verifier from generator interaction. Instead, it assumes access to a correct
prefix verifier (oracle, or human expert) and uses it to improve generation through interaction
with weak generators.
In contrast, Section~\ref{sec:boost} explicitly studies how a verifier itself can be
learned online from oracle feedback while interacting with adaptive
generators. 

Second, the sequential harnessing abstraction only tracks ordinary
trajectory-level mistakes. It does not distinguish between soundness and
completeness errors.
The central contribution of Section~\ref{sec:boost} is precisely that these two error
types matter differently for generator amplification. In particular,
soundness errors correspond to accepting incorrect reasoning steps and
therefore directly control the probability that the wrapped generator
outputs an incorrect reasoning trace. Completeness errors instead lead primarily
to abstention or rejection of correct candidate steps. 

Thus, sequential harnessing captures the structural online reduction
underlying verifier-guided generation, while Section~\ref{sec:boost} develops the much
more refined analysis needed to reason about asymmetric verifier errors,
distribution shift induced by generator-verifier interaction, and PAC-style
generalization guarantees for the resulting wrapped generator.

\end{appendices}

\end{document}